\documentclass[10pt]{article}

\usepackage[preprint]{tmlr}

\usepackage{amsmath,amssymb,mathtools}
\usepackage{amsthm}
\usepackage{bm}
\usepackage{algorithm}
\usepackage{algpseudocode}
\usepackage{booktabs}
\usepackage{array}
\usepackage{longtable}
\usepackage{graphicx}
\usepackage{adjustbox}
\usepackage{float}
\usepackage{needspace}
\usepackage{subcaption}
\usepackage[section]{placeins}
\usepackage{url}
\usepackage{hyperref}
\hypersetup{hidelinks}

\newcommand{\States}{\mathcal U}
\newcommand{\Plans}{\Pi}

\newcommand{\Controllers}{\mathcal D}
\newcommand{\RepSpace}{\mathcal Z}
\newcommand{\Loss}{\ell}

\newcommand{\Dinfo}{\mathfrak D^{\mathrm{info}}}

\newcommand{\Rfull}{R_{\mathrm{full}}}
\newcommand{\Rinterface}{R_{Z}^{\star}}
\newcommand{\Rrestricted}{R_{\mathcal D}}
\newcommand{\Rlearn}{R_{\widehat{\delta}}}

\newcommand{\Dinterface}{\Delta_{\mathrm{info}}}
\newcommand{\Drule}{\Delta_{\mathrm{access}}}
\newcommand{\Dtrain}{\Delta_{\mathrm{train}}}

\newcommand{\AcceptSet}{\mathcal G}
\newcommand{\ConflictHypergraph}{\mathcal H_{\varepsilon}}
\newcommand{\ConflictEdges}{\mathcal E_{\varepsilon}}

\newcommand{\HyperChromatic}{\chi_{\mathrm{w}}}
\newcommand{\InterfaceFamily}{\mathfrak K}
\newcommand{\FiniteAlphabet}{\mathcal Z_{\mathrm f}}
\newcommand{\AuditDomain}{\mathcal U_{\mathrm{aud}}}

\newcommand{\ConditionalConflict}{\kappa}
\newcommand{\CapacityCurve}{\mathsf M^{\star}}
\newcommand{\RegretCurve}{\varepsilon^{\star}}
\newcommand{\Modulus}{\omega}
\newcommand{\LossMatrix}{\mathbf L}
\newcommand{\RegretMatrix}{\mathbf E}
\newcommand{\Indicator}{\mathbf 1}

\newtheorem{assumption}{Assumption}[section]
\newtheorem{definition}[assumption]{Definition}
\newtheorem{proposition}[assumption]{Proposition}

\newtheorem{theorem}[assumption]{Theorem}
\newtheorem{corollary}[assumption]{Corollary}
\theoremstyle{remark}

\newtheorem{example}[assumption]{Example}

\DeclareMathOperator*{\argmin}{arg\,min}
\DeclareMathOperator*{\argmax}{arg\,max}

\DeclareMathOperator{\range}{range}

\algrenewcommand\algorithmicrequire{\textbf{Input:}}
\algrenewcommand\algorithmicensure{\textbf{Output:}}
\floatname{algorithm}{Algorithm}

\newcommand{\auditgraphic}[2]{%
  \IfFileExists{#1}{\includegraphics[width=\linewidth]{#1}}{%
    \fbox{\parbox[c][1.35in][c]{0.94\linewidth}{\centering
      Figure file not found at compile time.\\[2pt]
      \texttt{\detokenize{#1}}\\[4pt]
      \small #2}}}}


\newcommand{\PaperTitle}{Decision Sufficiency of Learned Representation Interfaces for Safe Generative Control}
\newcommand{\FirstAuthor}{Jipeng Li}
\newcommand{\SecondAuthor}{Yanning Shen}
\newcommand{\AuthorAffiliation}{University of California, Irvine}

\hypersetup{
  pdftitle={\PaperTitle},
  pdfauthor={\FirstAuthor; \SecondAuthor},
  pdfdisplaydoctitle=true,
  pdflang={en-US}
}

\title{\PaperTitle}
\author{\name \FirstAuthor \qquad \SecondAuthor\\
\addr \AuthorAffiliation}
\date{}

\begin{document}
\maketitle

\begin{abstract}
A controller can guide a language model toward safe and useful responses by
choosing when to assist, ask for clarification, or refuse.
When the controller uses selected internal features, poor performance alone
cannot distinguish missing information from limitations of its decision
rules or training.
We study \emph{decision sufficiency}: whether the best controller using a
representation can match the minimum average loss attainable with full state
information, under the same strategies and evaluation criteria.
For interfaces with finitely many outputs, we compute the optimal loss
exactly and separate the three sources of excess loss.
Pairwise checks can miss information loss: states with the same controller
input may share an optimal strategy in every pair but not across the whole
group.
We bound the number of states needed to reveal a conflict and characterize
the fewest outputs an unrestricted interface needs to avoid all conflicts.
We audit selected features from sparse autoencoders, which encode model
activations using relatively few active features, in Qwen2.5, SmolLM2, and Phi.
On SmolLM2 responses scored by two human raters, restricting the controller
to two selected on/off features increases its minimum average loss by
$0.0345$--$0.1166$ relative to full state information, even with the governing
policy supplied.
Both independently trained autoencoders show the increase on both
evaluation sets.
The audit identifies decision failures that require changing the
controller's input, not merely its architecture or training.
\end{abstract}

\section{Introduction}
\label{sec:introduction}

A language model may need to assist with a request, ask for missing
information, or refuse. The appropriate choice depends on the facts
available about the request and the policy governing allowed behavior.
Internal activations, the numerical vectors computed inside a model,
can inform monitoring and steering
\citep{li2023inference,zou2023representation,karnik2025reachability}.
We study a controller that uses selected activation features to choose
a response strategy, which the model then executes.
A \emph{representation interface} specifies the controller's input.
We ask whether the input supports appropriate choices, independently of
how well a particular controller is trained.

Suppose a policy calls for assistance when authorization is verified and
clarification when evidence is missing. If both cases give the controller
identical inputs, it must use the same choice or distribution over choices.
Refusing both may avoid unauthorized assistance but also withholds permitted
help. The challenge is to distinguish missing information from a
controller's failure to use information that remains available.

A sparse autoencoder (SAE) learns to reconstruct an activation using
relatively few active features
\citep{cunningham2024sparse,gao2024scaling,lieberum2024gemmascope}.
A controller may receive only selected feature values or on/off indicators.
Accurate reconstruction from the full code does not establish what the
selected input preserves. A \emph{probe}, a predictor fitted to the
selected features, can fail because information is missing, its allowed
rules are too restrictive, or its fitting is imperfect.

We compare the best controller using the interface with the best controller
using the \emph{full audited state}: the situation, governing policy, and
recorded evidence. Both select from a fixed library of
\emph{continuations}, meaning response strategies that may react to later
observations. The execution protocol, loss function, and state
distribution are identical in each comparison.
The loss evaluates safety and response quality; \emph{risk} is its
average under the fixed state distribution.
An interface is \emph{decision-sufficient} when its minimum risk equals
the minimum with full state information.
The comparison builds on statistical decision theory and state abstraction
\citep{blackwell1953equivalent,lecam1964sufficiency,abel2020abstractions}.

Let $U$ denote the full audited state and $V$ the complete controller input:
\[
\underbrace{U}_{\text{full audited state}}
\longrightarrow
\underbrace{V}_{\text{complete controller input}}
\longrightarrow
\text{choice of continuation}.
\]
Let $Z$ denote the selected representation features and $S$ the governing
policy text. The complete input is $V=Z$ for features alone and $V=(Z,S)$
when the policy is also supplied. For a deterministic interface, states
with identical complete inputs $V$ must share a choice or distribution
over choices.
Supplying policy text $S$ can distinguish states that selected features $Z$
alone merge. Auditing features alone would then evaluate a different
information restriction.

We compare four risk levels: the minimum with full state information,
the minimum with the interface, the minimum within an allowed family of
rules using the interface, and the risk of the fitted rule.
The successive differences measure information loss, rule restrictions,
and fitting error, including estimation and optimization error.
All three terms can be positive simultaneously.
For a finite deterministic interface, states with identical input form a
\emph{cell}. Choosing the continuation with the smallest conditional average
loss in each cell computes the minimum over all interface rules.
The \emph{information gap} is the best interface risk minus the minimum
using the full state.

To explain the average information gap, we examine which states cannot
share a decision.
Consider three states with the same controller input whose optimal
strategies are, respectively, clarification or assistance, clarification
or refusal, and assistance or refusal.
Every pair shares an optimum, but the group does not.
Randomization cannot attain all three optima unless every strategy it
selects with positive probability is optimal at every state.
More generally, \emph{regret} is loss above a state's minimum.
A group is \emph{compatible} when one distribution over continuations
keeps regret within a specified nonnegative tolerance at every state.
A \emph{minimal conflict} is an incompatible group that becomes compatible
whenever one state is removed.
To make every cell compatible, an interface must split each conflicting
group across at least two cells.

We audit selected SAE features in Qwen2.5, SmolLM2, and Phi, with generation
and scoring protocols specified for each model.
With the policy supplied, the best controller using two selected on/off
features still has higher human rating loss than the full state reference.
On 96 SmolLM2 states from policy areas excluded from fitting, the gaps are
$0.1166$ and $0.1082$ for two independently trained SAEs.
The original 80-state sample gives gaps of $0.0345$ and $0.0353$ but
includes fitting areas.
Qwen2.5 comparisons vary the features, rule family, and fitting data
separately; Phi retains positive gaps with policy supplied under two
automatic evaluators.
A controlled example realizes a conflict missed by all pairwise checks,
whereas the searched natural language example does not replicate.
The empirical optima concern the recorded responses and scores, not
unknown losses on future responses.

\Needspace{7\baselineskip}
\paragraph{Contributions.}
\begin{enumerate}
 \item \textbf{Exact diagnosis of decision loss.}
 We compute the minimum risk for finite interfaces and separate missing
 information from limitations of the allowed rules and their fitting.

 \item \textbf{Group conflicts and interface requirements.}
 A hypergraph records states as vertices and minimal incompatible groups
 as edges. For a library of $m$ continuations with randomization allowed,
 every minimal conflict has at most $m$ states, and the bound is tight.
 We connect group incompatibility to risk and characterize the fewest
 outputs needed by an unrestricted deterministic interface, with global
 validation of every proposed cell.

 \item \textbf{Evidence across inputs and outcome evaluations.}
 SmolLM2 human ratings and Phi automatic scores show that providing the
 policy need not remove the selected interface's information gap.
 Qwen2.5 comparisons separately examine rule restrictions and fitting.
\end{enumerate}

Sections~\ref{sec:problem-formulation} and~\ref{sec:conflict-structure}
develop the risk and conflict analyses.
Sections~\ref{sec:empirical-audit} and~\ref{sec:results} present the
experiments, followed by related work, discussion, and conclusions.
Appendices give conditions for estimated losses, conflict selection after
search, and continuous observations.

Better rules or fitting can reduce loss above the interface's optimum.
Closing a remaining information gap requires changing what the controller
can observe.
\section{Decision Sufficiency and Exact Risk Analysis}
\label{sec:problem-formulation}

We compare optimal risks under the same losses and continuation library,
changing only the information available for selection. For a finite
interface, the comparison reduces to separate optimizations at each output.

\subsection{States, continuations, and permitted information}

Fix a context $x$, a policy specification $\sigma$, and a probability
distribution $\mu$ on the state space $\States:=\States_x$.
Let $U\sim\mu$ be the full audited state. The continuation library is
$\Plans:=\Plans_x=\{\pi_1,\ldots,\pi_m\}$, with $m\ge2$ and
$[m]:=\{1,\ldots,m\}$. Each continuation $\pi_j$ is a response policy
that may react to later observations. The governing specification $\sigma$
defines allowed behavior; it is not itself a continuation.

The expected loss of continuation $\pi_j$ at state $u$ is
$\ell_j(u):=\Loss_{\sigma,x}(u,\pi_j)\in[0,1]$. The expectation includes
randomness in its execution. Define the minimum loss at that state and
the regret of continuation $\pi_j$ by
\[
 \ell^\star(u):=\min_{j\in[m]}\ell_j(u),\qquad
 e_j(u):=\ell_j(u)-\ell^\star(u).
\]
Regret is the nonnegative excess above the best available continuation.
Zero regret need not mean zero loss.

Let $\Delta_m:=\{q\in[0,1]^m:\sum_{j=1}^m q_j=1\}$ be the probability
simplex over continuations. A randomized choice $q\in\Delta_m$ selects
$\pi_j$ with probability $q_j$, then executes that policy. Its loss and
regret are
\[
 \ell(u,q):=\sum_jq_j\ell_j(u),\qquad
 e(u,q):=\ell(u,q)-\ell^\star(u)=\sum_jq_je_j(u).
\]
Randomization mixes complete policies, not their internal actions. Once
the losses $\ell_j(u)$ are specified, the analysis applies to any finite
library; no additional temporal assumption is used in the risk formulas.

Let $\RepSpace$ be the observation space. An interface is a Markov kernel
$K(dz\mid u)$, the conditional distribution of observation $Z$ given state
$u$. A controller observes $Z\sim K(\cdot\mid U)$ and uses a measurable
rule $\delta:\RepSpace\to\Delta_m$. The component $\delta_j(z)$ is the
probability of selecting continuation $\pi_j$ at input $z$. Write
$\mathfrak D_Z$ for all such rules, $\Controllers\subseteq\mathfrak D_Z$
for a nonempty allowed family, and $\widehat\delta\in\Controllers$ for
the fitted rule.
\begin{assumption}[Audit setting]
\label{ass:basic-audit-setting}
The state space $\States$ and observation space $\RepSpace$ are standard
Borel spaces. The $m$ continuation losses are measurable and bounded in
$[0,1]$, and $K$ is a Markov kernel. The nonempty family
$\Controllers\subseteq\mathfrak D_Z$ contains the fitted rule
$\widehat\delta$ on the entire declared observation domain.
\end{assumption}

Conditional on state $u$ and selection of $\pi_j$, the expected execution
loss is $\ell_j(u)$; it has no further dependence on the selector or its
observation randomness. All risks use the joint law $\mu(du)K(dz\mid u)$ and the same execution
protocol for each continuation. Evaluation of $\widehat\delta$ holds its
training data and randomness fixed. The policy specification $\sigma$ is
initially fixed; Section~\ref{sec:observation-contract} handles varying
policy text that is supplied to the controller.

\subsection{Four risks and three excess losses}

The four risks compare access to the full state, unrestricted use of the
interface, a restricted family of rules, and the fitted rule:
\begin{align}
 \Rfull&:=\mathbb E[\ell^\star(U)],\label{eq:full-state-risk}\\
 \Rinterface&:=\inf_{\delta\in\mathfrak D_Z}\mathbb E[\ell(U,\delta(Z))],
     \label{eq:best-interface-risk}\\
 \Rrestricted&:=\inf_{\delta\in\Controllers}\mathbb E[\ell(U,\delta(Z))],
     \label{eq:restricted-risk}\\
 \Rlearn&:=\mathbb E[\ell(U,\widehat\delta(Z))].\label{eq:fitted-risk}
\end{align}
Define the information gap $\Dinterface:=\Rinterface-\Rfull$, the
accessibility gap $\Drule:=\Rrestricted-\Rinterface$, and the fitting gap
$\Dtrain:=\Rlearn-\Rrestricted$. When the interface and distribution need
to be explicit, write
\[
 \Dinfo_\sigma(K;\mu):=\Dinterface
   =\inf_{\delta\in\mathfrak D_Z}\mathbb E[e(U,\delta(Z))].
\]
The interface has \emph{decision sufficiency under $\mu$} when its
information gap $\Dinfo_\sigma(K;\mu)$ is zero.
\begin{theorem}[Risk ordering and decomposition]
\label{thm:nested-risks}
Under Assumption~\ref{ass:basic-audit-setting},
\[
 \Rfull\le\Rinterface\le\Rrestricted\le\Rlearn,
\]
and
\begin{equation}
 \Rlearn-\Rfull
 =\underbrace{\Rinterface-\Rfull}_{\Dinterface}
 +\underbrace{\Rrestricted-\Rinterface}_{\Drule}
 +\underbrace{\Rlearn-\Rrestricted}_{\Dtrain}.
 \label{eq:risk-decomposition}
\end{equation}
\end{theorem}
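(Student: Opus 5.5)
The plan is to prove the three inequalities one at a time, and then obtain the decomposition \eqref{eq:risk-decomposition} as a telescoping identity. Before any inequality I will check that all four risks are well defined and finite. By Assumption~\ref{ass:basic-audit-setting}, each $\ell_j$ is measurable with values in $[0,1]$. Each rule $\delta\in\mathfrak D_Z$ is measurable with values in $\Delta_m$. Hence $(u,z)\mapsto\ell(u,\delta(z))=\sum_j\delta_j(z)\ell_j(u)$ is jointly measurable with values in $[0,1]$, and its expectation under $\mu(du)K(dz\mid u)$ exists in $[0,1]$. The same holds for $\ell^\star=\min_j\ell_j$, a finite minimum of measurable functions.

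Finiteness matters for the decomposition. Once all four quantities are known to lie in $[0,1]$, \eqref{eq:risk-decomposition} follows by adding $\Dinterface+\Drule+\Dtrain$: the intermediate terms $\Rinterface$ and $\Rrestricted$ cancel, leaving $\Rlearn-\Rfull$. There are no $\infty-\infty$ issues.

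For $\Rfull\le\Rinterface$, I use a pointwise bound. For every $u$ and every $q\in\Delta_m$,
\[
\ell(u,q)=\sum_jq_j\ell_j(u)\ge\sum_jq_j\ell^\star(u)=\ell^\star(u),
\]
because $q_j\ge0$ and $\sum_jq_j=1$. Substituting $q=\delta(z)$ and integrating against the joint law gives
\[
\mathbb E[\ell(U,\delta(Z))]\ge\mathbb E[\ell^\star(U)]=\Rfull
\]
for every $\delta\in\mathfrak D_Z$. Here I use that the $U$-marginal of $\mu(du)K(dz\mid u)$ is $\mu$. Taking the infimum over $\delta$ gives $\Rinterface\ge\Rfull$. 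The same computation also shows $\Dinterface=\inf_\delta\mathbb E[e(U,\delta(Z))]$, which matches the displayed formula for $\Dinfo_\sigma(K;\mu)$.

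For $\Rinterface\le\Rrestricted$, monotonicity of the infimum under set inclusion suffices. Since $\Controllers\subseteq\mathfrak D_Z$, an infimum over the larger set is no larger. The family $\Controllers$ is nonempty, so $\Rrestricted$ is an infimum over a nonempty set of numbers in $[0,1]$ and is therefore finite.

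For $\Rrestricted\le\Rlearn$, I again use the infimum. Assumption~\ref{ass:basic-audit-setting} places $\widehat\delta$ in $\Controllers$, so $\Rlearn$ is one of the values over which $\Rrestricted$ takes its infimum. For this, the evaluation of $\widehat\delta$ must use the same joint law. The protocol guarantees this: training data and randomness are held fixed, so $\widehat\delta$ is a fixed measurable rule defined on the whole observation domain.

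No step is a real obstacle. The only point needing care is measurability and well-definedness, namely that $\ell(U,\delta(Z))$ is a bona fide bounded random variable under the joint law. The standard Borel and Markov kernel hypotheses supply this, and I would state it explicitly at the outset rather than leave it implicit.
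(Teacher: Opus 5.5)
Your proposal is correct and matches the paper's proof step for step. The paper also integrates and then takes an infimum of the pointwise bound $\ell^\star(u)\le\ell(u,q)$, derives the next two inequalities from the inclusions $\widehat\delta\in\Controllers\subseteq\mathfrak D_Z$, and obtains the decomposition by adding and subtracting the intermediate risks; your explicit check that all four risks are finite numbers in $[0,1]$ is a careful addition the paper leaves implicit.
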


The information gap remains even for the best rule on the interface.
The accessibility gap measures the extra cost of restricting that rule;
the fitting gap measures the remaining error within the allowed family.
The last gap can include both estimation and optimization error.
The identity separates these comparisons, but does not by itself predict
how more training data will change them. All comparisons fix the recorded
state, continuation library, and loss.

\subsection{Exact optimization for a finite interface}
\label{sec:finite-interface-theory}

Let $\AuditDomain=\{u_1,\ldots,u_n\}$ be a finite audit domain, with
$n\ge1$ and state weights $\mu_i\ge0$ summing to one. Let
$\FiniteAlphabet=\{z_1,\ldots,z_L\}$, with $L\ge1$, be the complete
observation alphabet. Define the observation probabilities, loss entries,
and joint costs by
\[
 P_{ir}:=K(\{z_r\}\mid u_i),\qquad L_{ij}:=\ell_j(u_i),\qquad
 C_{rj}:=\sum_{i=1}^n\mu_iP_{ir}L_{ij}.
\]
Each row of $P$ sums to one. The loss matrix $\LossMatrix=(L_{ij})$
contains one entry per state and continuation. If
$p_r:=\sum_i\mu_iP_{ir}>0$, then
$C_{rj}=p_r\mathbb E[\ell_j(U)\mid Z=z_r]$.
Thus $C_{rj}$ already includes the probability of output $z_r$.
A rule assigning $q_r=\delta(z_r)$ has risk $\sum_{r,j}C_{rj}q_{rj}$.
\begin{theorem}[Optimum for a finite interface]
\label{thm:finite-interface-optimum}
For the finite audit distribution and observation channel,
\[
 \Rfull=\sum_{i=1}^n\mu_i\min_{j\in[m]}L_{ij},
 \qquad
 \Rinterface=\sum_{r=1}^L\min_{j\in[m]}C_{rj}.
\]
At each output $z_r$, every distribution supported on $\argmin_jC_{rj}$
is optimal. In particular, selecting one minimizing continuation suffices.
Outputs of probability zero can be assigned arbitrarily.
\end{theorem}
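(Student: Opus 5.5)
The plan is to evaluate both infima directly by expanding the risk of an arbitrary rule as a finite linear form, and then minimizing it coordinatewise over a product of simplices. For $\Rfull$ there is nothing to optimize: by definition \eqref{eq:full-state-risk}, $\Rfull=\mathbb E[\ell^\star(U)]$. Since $U\sim\mu$ is supported on $\AuditDomain$ with weights $\mu_i$, this equals $\sum_i\mu_i\ell^\star(u_i)=\sum_i\mu_i\min_jL_{ij}$.

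For $\Rinterface$, I first record the risk formula for an arbitrary rule $\delta\in\mathfrak D_Z$. Because the alphabet $\FiniteAlphabet$ is finite, every map $\FiniteAlphabet\to\Delta_m$ is measurable. A rule is therefore exactly a tuple $(q_1,\ldots,q_L)\in\Delta_m^L$ with $q_r=\delta(z_r)$. Using the joint law $\mu(du)K(dz\mid u)$ and linearity of $\ell(u,q)$ in $q$,
\[
 \mathbb E[\ell(U,\delta(Z))]=\sum_{i=1}^n\mu_i\sum_{r=1}^LP_{ir}\sum_{j=1}^m q_{rj}L_{ij}=\sum_{r=1}^L\sum_{j=1}^m C_{rj}q_{rj}.
\]
Exchanging the finite sums is immediate. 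If the observation space $\RepSpace$ is larger than $\FiniteAlphabet$, values of $\delta$ off the alphabet carry zero probability, since each row of $P$ sums to one, and so they do not affect the risk.

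The objective separates across outputs: it is $\sum_r\langle C_{r\cdot},q_r\rangle$ with independent constraints $q_r\in\Delta_m$. Therefore
\[
 \Rinterface=\inf_{(q_r)\in\Delta_m^L}\sum_r\langle C_{r\cdot},q_r\rangle=\sum_r\min_{q\in\Delta_m}\langle C_{r\cdot},q\rangle.
\]
Next I use the elementary fact that a linear function on the simplex attains its minimum $\min_jC_{rj}$. For any $q\in\Delta_m$, we have $\langle C_{r\cdot},q\rangle-\min_jC_{rj}=\sum_jq_j(C_{rj}-\min_kC_{rk})$. This is a sum of nonnegative terms, so it vanishes if and only if $q$ is supported on $\argmin_jC_{rj}$. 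This gives the formula for $\Rinterface$, shows that the infimum is attained, and characterizes the optimal choices at each output. It also shows that a point mass on one minimizer suffices.

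For the statement about zero-probability outputs: if $p_r=0$, then $\mu_iP_{ir}=0$ for all $i$, so $C_{r\cdot}\equiv0$. Every $q_r$ then contributes zero, and every $q_r$ is a minimizer.

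The only mildly delicate point is the identification of $\mathfrak D_Z$ with $\Delta_m^L$, that is, checking measurability and that the risk depends on $\delta$ only through its values on the alphabet. I expect this step to need the most care. Everything else is a finite linear-programming separation.
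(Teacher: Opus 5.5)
Your proposal is correct and follows the paper's proof. The paper also expands the risk as $\sum_{r,j}C_{rj}q_{rj}$, minimizes each simplex-constrained linear term separately, and notes that zero-probability outputs have identically zero cost rows. Two of your additions fill in details the paper leaves implicit: the nonnegative-sum identity that characterizes the optimal supports, and the identification of $\mathfrak D_Z$ with $\Delta_m^L$ (measurability, and the irrelevance of values off the alphabet).
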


The choices at different outputs have separate simplex constraints, and
the objective is linear in each choice. Forming the dense cost table takes
$O(nLm)$ operations; selecting the row minima takes $O(Lm)$ comparisons
(Algorithm~\ref{alg:finite-interface-diagnosis}). For a deterministic
map $f:\AuditDomain\to\FiniteAlphabet$, the nonempty sets
$C_r:=\{u_i:f(u_i)=z_r\}$ are its \emph{cells}. Every state in a cell
receives the same continuation distribution. With strictly positive state
weights, the gap is zero exactly when each cell has a common optimal
continuation (Corollary~\ref{cor:finite-cell-zero-gap}).
Appendix~\ref{app:proofs-risk-interface} gives the proofs and the
corresponding posterior characterization for a general channel.

\subsection{Policy information and an observed collision}
\label{sec:observation-contract}

When policies vary across states, let $S=s(U)$ denote the governing policy
text. Each loss $\ell_j(u)$ is evaluated under the policy in state $u$;
the continuation library and execution protocols remain fixed. Let $V$
be the complete controller input: $V=Z$ for features alone and $V=(Z,S)$
when policy is also supplied. Define
\[
 R_V^\star:=\inf_\delta\mathbb E[\ell(U,\delta(V))],\qquad
 \Delta_V:=R_V^\star-\Rfull,
\]
where the infimum ranges over all measurable rules using $V$.
For the finite deterministic audit, the cells with policy supplied are
$C_{z,s}:=\{u:f(u)=z,\ s(u)=s\}$. Let $\Indicator\{A\}$ denote the
indicator of event $A$. Applying the finite optimum gives
\begin{equation}
 \begin{aligned}
 R_{Z,S}^\star&=\sum_{z,s}\min_j
    \mathbb E[\Indicator\{Z=z,S=s\}\ell_j(U)],\\
 0&\le\Delta_{Z,S}\le\Delta_Z.
 \end{aligned}
 \label{eq:policy-aware-risk}
\end{equation}
The sum runs over observed code and policy pairs. A rule that receives
policy text $S$ can ignore it, which proves the inequality. A positive
gap using features $Z$ alone need not remain when policy is supplied.

\Needspace{12\baselineskip}
\paragraph{An example from the human ratings.}
Table~\ref{tab:human-collision-main} gives two SmolLM2 states with the same
policy and the same two SAE activation indicators, \texttt{00}. The
three continuations ask for clarification, refuse, or assist. Entries are
rating losses in $[0,1]$, with lower values preferred; the scoring rule is
defined in Section~\ref{sec:human-calibration}.
\begin{table}[H]
\centering
\caption{An observed policy-aware collision under the SmolLM2 seed-314159, two-bit interface. Both requests have the same policy and code \texttt{00}. Entries are final human composite losses; smaller is better. The pair is a post hoc illustration, not an independent or higher-order test. Exact text and state IDs are given in Appendix~\ref{sec:real-collision-record}.}
\label{tab:human-collision-main}
\small
\begin{tabular}{@{}p{0.57\linewidth}rrr@{}}
\toprule
Request summary & \textsc{ask} & \textsc{decline} & \textsc{assist} \\
\midrule
Acknowledgment, accessible meeting, and referral & 0.350 & 0.525 & 0.000 \\
Fabricated supporting documentation & 0.500 & 0.100 & 0.800 \\
\bottomrule
\end{tabular}
\end{table}

With equal state weights, access to the full state gives mean loss
$(0+0.100)/2=0.050$. A common choice has column means
$(0.425,0.3125,0.400)$, so refusal is optimal with loss $0.3125$.
The two-state gap is $0.2625$. The pair was selected after inspecting the
data to illustrate an existing collision, not as an independent test or
a conflict missed by pairwise checks. Its full record is in
Appendix~\ref{sec:real-collision-record}.

\subsection{Restricted rules and estimated losses}

The optimum within a restricted rule family requires a separate calculation.
Our affine family maps visible features to continuation probabilities, with
simplex constraints on the entire declared observation alphabet. Its optimum
is a linear program; fitting changes the costs, not the feasible domain
(Appendix~\ref{app:restricted-affine}).

An exact optimum for a supplied loss table need not equal the optimum for
unknown expected losses. Taking a minimum of noisy estimates can favor
negative estimation errors. Proposition~\ref{prop:loss-table-perturbation}
controls the resulting risk changes under a uniform bound on loss errors.
For continuous observations, a finite set of distinct vectors can be
memorized by an unrestricted rule, so probe failure alone does not identify
missing information. Quantization also changes the observed input.
Appendix~\ref{app:continuous-certificates} gives the conditions needed to
transfer a calculation on quantized data to the original observation and
to obtain bounds from estimated losses.

\section{Group Conflicts and Interface Requirements}
\label{sec:conflict-structure}
\label{sec:higher-order-conflicts}
\label{sec:capacity-certification}
\label{sec:end-to-end-pipeline}

The average risk gap does not identify which states cannot share a decision.
We describe those groups and determine how many outputs are needed to keep
every group compatible. The grouping results concern deterministic interfaces;
the risk analysis in Section~\ref{sec:problem-formulation} also permits
stochastic channels.

\subsection{Shared choices and average risk}

Fix a nonempty finite domain $\AuditDomain=\{u_1,\ldots,u_n\}$, the
$m\ge2$ continuations from Section~\ref{sec:problem-formulation}, and
their regrets $e_j(u)=\ell_j(u)-\ell^\star(u)$. A complete observation
map $f:\AuditDomain\to[M]$, where $M\ge1$ and $[M]:=\{1,\ldots,M\}$, has nonempty
cells $C_c=f^{-1}(c)$ for $c\in\range(f)$. Every state in a cell
receives the same continuation distribution $q\in\Delta_m$. If policy
text is supplied, it is part of $f$; states with different policy text
cannot belong to the same cell.

For an allowed regret $\varepsilon\ge0$, the acceptable choices at state
$u$ form the set
\[
 \AcceptSet^\varepsilon(u):=\{q\in\Delta_m:e(u,q)\le\varepsilon\}.
\]
Each set is nonempty, closed, and convex: a pure optimum is feasible, and
regret is linear in the choice probabilities. For a nonempty group
$W\subseteq\AuditDomain$, define its \emph{margin} as the smallest
possible maximum regret across its states:
\begin{equation}
 \Gamma(W):=\min_{q\in\Delta_m}\max_{u\in W}e(u,q).
 \label{eq:shared-continuation-margin}
\end{equation}
The minimum is attained by compactness of $\Delta_m$ and continuity of
the objective. A group is $\varepsilon$-compatible exactly when
\begin{equation}
 \Gamma(W)\le\varepsilon
 \quad\Longleftrightarrow\quad
 \bigcap_{u\in W}\AcceptSet^\varepsilon(u)\ne\varnothing.
 \label{eq:group-compatibility}
\end{equation}
Otherwise the group is incompatible. Removing states cannot increase the
margin: $\Gamma(V)\le\Gamma(W)$ for nonempty $V\subseteq W$.

Let $\Delta(W):=\{w=(w_u)_{u\in W}:w_u\ge0,\ \sum_{u\in W}w_u=1\}$ be the distributions
over states in $W$. For fixed state weights $w\in\Delta(W)$, define
\[
 \ConditionalConflict(W,w):=
 \min_{q\in\Delta_m}\sum_{u\in W}w_ue(u,q)
 =\min_{j\in[m]}\sum_{u\in W}w_ue_j(u).
\]
The quantity $\ConditionalConflict(W,w)$ minimizes average regret, whereas
$\Gamma(W)$ minimizes the largest regret. The weights $w$ describe states;
the probabilities $q$ describe choices of continuation.
\begin{theorem}[Relation between average and maximum regret]
\label{thm:average-robust-bridge}
For every nonempty finite group $W$ and $w\in\Delta(W)$, let
$\beta(W,w):=\min_{u\in W}w_u$. Then
\[
 \beta(W,w)\Gamma(W)\le\ConditionalConflict(W,w)\le\Gamma(W).
\]
\end{theorem}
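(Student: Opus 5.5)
Both inequalities follow from comparing the weighted-average objective defining $\ConditionalConflict(W,w)$ with the max objective defining $\Gamma(W)$, pointwise in $q\in\Delta_m$, and then taking minima. I would first record the identity $\min_{q\in\Delta_m}\sum_u w_u e(u,q)=\min_j\sum_u w_u e_j(u)$ stated with the definition. It holds because the objective is linear in $q$, so its minimum over the simplex is attained at a vertex. In what follows only the $q$-form of $\ConditionalConflict$ is needed.

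For the upper bound, fix any $q\in\Delta_m$. Since $w\in\Delta(W)$ is a probability vector and regrets are nonnegative, $\sum_{u\in W}w_u e(u,q)\le\max_{u\in W}e(u,q)$. Taking the minimum over $q$ on both sides gives $\ConditionalConflict(W,w)\le\Gamma(W)$. Alternatively, plug a minimizer $q^\star$ of \eqref{eq:shared-continuation-margin}, which exists by compactness, into the average.

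For the lower bound, fix $q\in\Delta_m$ and let $u_0\in W$ attain $\max_{u\in W}e(u,q)$. Because every term $w_u e(u,q)$ is nonnegative (regret $e_j(u)\ge0$ and $q_j\ge0$), we can drop all terms except $u_0$:
\[
 \sum_{u\in W}w_u e(u,q)\ \ge\ w_{u_0}e(u_0,q)\ \ge\ \beta(W,w)\max_{u\in W}e(u,q)\ \ge\ \beta(W,w)\,\Gamma(W).
\]
Minimizing the left side over $q$ yields $\ConditionalConflict(W,w)\ge\beta(W,w)\Gamma(W)$. If some $w_u=0$, then $\beta=0$ and the bound is trivial; the argument covers this case anyway.

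There is no real obstacle. The only points to state carefully are the nonnegativity of regret, which justifies dropping terms, and the attainment of both minima. Attainment follows from compactness of $\Delta_m$ and continuity of the objective, as noted after \eqref{eq:shared-continuation-margin}.
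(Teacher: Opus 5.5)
Your proposal is correct and follows the paper's proof: both establish the pointwise chain $\beta(W,w)\max_{u\in W}e(u,q)\le w_{u_q}e(u_q,q)\le\sum_{u\in W}w_ue(u,q)\le\max_{u\in W}e(u,q)$, where $u_q$ is a maximizing state and the steps use nonnegative regrets, and then minimize over $q\in\Delta_m$. A minor point is that nonnegativity is needed only for the lower bound; the upper bound holds for any convex combination of the regrets.
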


Let $\mu$ be a probability distribution on the finite audit domain. The
induced channel is $K_f(\{c\}\mid u)=\Indicator\{f(u)=c\}$. For each cell of positive
mass $p_c:=\sum_{u\in C_c}\mu(u)$, set $w_u^{(c)}:=\mu(u)/p_c$.
The information gap satisfies
\begin{equation}
 \Dinfo_\sigma(K_f;\mu)
 =\sum_{c:p_c>0}p_c\ConditionalConflict(C_c,w^{(c)})
 \le\sum_{c:p_c>0}p_c\Gamma(C_c).
 \label{eq:cell-average-gap}
\end{equation}
Hence a gap exceeding $\varepsilon$ implies that some cell has margin
exceeding $\varepsilon$. Positivity alone does not imply failure at an
arbitrary positive tolerance.
\label{rem:average-vs-uniform}
With positive state weights, zero average regret and zero maximum regret
are equivalent. At positive tolerance, a rare state can prevent a uniform
guarantee while contributing little to average loss.

\subsection{Conflicts missed by pairwise checks}

\begin{example}[Every pair shares an optimum, but the group does not]
\label{ex:shared-optimum-triple}
Consider three states with losses
\[
 \begin{array}{c|ccc}
 &\textsc{ask}&\textsc{decline}&\textsc{assist}\\\hline
 u_1&0&1&0\\
 u_2&0&0&1\\
 u_3&1&0&0
 \end{array}
\]
Each state has minimum loss zero, so losses equal regrets. The pairs
$\{u_1,u_2\}$, $\{u_1,u_3\}$, and $\{u_2,u_3\}$ can choose
\textsc{ask}, \textsc{assist}, and \textsc{decline}, respectively, with
zero regret. For a common choice
$q=(q_{\rm ask},q_{\rm decline},q_{\rm assist})$, the three regrets are
$q_{\rm decline}$, $q_{\rm assist}$, and $q_{\rm ask}$. Their maximum
is at least $1/3$; the uniform distribution attains $1/3$. The triple
is therefore incompatible for $0\le\varepsilon<1/3$, while every pair
is compatible.
\end{example}

A \emph{minimal conflict} is an incompatible group for which every proper
subset is compatible. Minimality refers to inclusion, not the smallest
cardinality among all conflicts.
\begin{definition}[Conflict hypergraph]
\label{def:conflict-hypergraph}
Let $\ConflictHypergraph=(\AuditDomain,\ConflictEdges)$, where a nonempty
group $W$ belongs to $\ConflictEdges$ if
\[
 \bigcap_{u\in W}\AcceptSet^\varepsilon(u)=\varnothing,
 \qquad
 \bigcap_{u\in V}\AcceptSet^\varepsilon(u)\ne\varnothing
 \quad\text{for every proper subset }V\subsetneq W.
\]
The intersection over no states is defined as $\Delta_m$. Vertices are audit states, and edges
are minimal incompatible groups.
\end{definition}

The global edge set depends on the losses, library, states, and tolerance,
not on the current interface. Its subset
\[
 \mathcal E_\varepsilon(f):=
 \{W\in\ConflictEdges:W\subseteq C_c\text{ for some }c\}
\]
contains conflicts hidden within the cells of $f$. A new interface may
merge states from different old cells, so redesign must consider global
conflicts rather than only $\mathcal E_\varepsilon(f)$.
\begin{theorem}[Maximum size of a minimal conflict]
\label{thm:tight-witness-rank}
Every minimal conflict $W\in\ConflictEdges$ satisfies $2\le|W|\le m$.
For each $m\ge2$, some bounded loss table and tolerance have a minimal
conflict of exactly $m$ states.
\end{theorem}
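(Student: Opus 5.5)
The plan has three parts: the lower bound, the upper bound via Helly's theorem, and a tightness construction.

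\textbf{Lower bound $|W|\ge2$.} For a single state $u$, the acceptable set $\AcceptSet^\varepsilon(u)$ contains the point mass on a minimizing continuation, since its regret is $0\le\varepsilon$. Hence every singleton is compatible. The empty group is excluded by definition, because the empty intersection is $\Delta_m\neq\varnothing$. Therefore an incompatible group has at least two states.

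\textbf{Upper bound $|W|\le m$.} Each $\AcceptSet^\varepsilon(u)$ is a nonempty closed convex subset of the simplex $\Delta_m$, whose affine hull has dimension $m-1$. I will apply Helly's theorem in dimension $d=m-1$: if every $d+1=m$ of finitely many convex sets intersect, then all of them intersect.
\begin{itemize}
\item Suppose $W$ is a minimal conflict with $|W|\ge m+1$.
\item By minimality, every proper subset of $W$ is compatible, so in particular every $m$-element subset has nonempty intersection.
\item Helly's theorem then gives $\bigcap_{u\in W}\AcceptSet^\varepsilon(u)\ne\varnothing$, contradicting incompatibility.
\end{itemize}
The only care needed is to apply Helly's theorem within the $(m-1)$-dimensional affine hull, for instance by identifying it with $\mathbb R^{m-1}$ through the first $m-1$ coordinates. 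Finiteness of $W$ means no compactness argument is required.

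\textbf{Tightness.} I will generalize Example~\ref{ex:shared-optimum-triple}. Take states $u_1,\ldots,u_m$ with losses $L_{ij}=\Indicator\{j=i\}$, so state $u_i$ has loss $1$ only on continuation $\pi_i$. Each state then has minimum loss $0$ (using $m\ge2$), and its regret at a choice $q$ is $q_i$.
\begin{itemize}
\item \emph{The whole group is incompatible.} A common $q$ has maximum regret $\max_i q_i\ge1/m$, so $\Gamma(W)=1/m$. Any tolerance $0\le\varepsilon<1/m$, say $\varepsilon=0$, makes $W$ incompatible.
\item \emph{Every proper subset is compatible.} If some index $k$ is omitted, then the pure choice $\pi_k$ gives regret $0$ at every remaining state.
\end{itemize}
So $W$ is a minimal conflict with exactly $m$ states. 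For $m=2$ this table is the two-state disagreement, which fits the same pattern.

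The main obstacle is the Helly step, specifically stating the correct dimension $m-1$ rather than $m$. Using $m$ would give only the weaker bound $m+1$.
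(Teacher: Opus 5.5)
Your proposal is correct and follows the paper's proof step for step. It uses singleton compatibility for $|W|\ge2$, Helly's theorem in the $(m-1)$-dimensional affine hull of $\Delta_m$ for $|W|\le m$, and the table $\ell_j(u_i)=\Indicator\{j=i\}$ at $\varepsilon=0$ for tightness, with $\pi_k$ serving any group that omits $u_k$ (your computation $\Gamma(W)=1/m$ additionally shows that every tolerance $0\le\varepsilon<1/m$ works).
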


The upper bound follows from Helly's theorem in the $(m-1)$-dimensional
simplex of continuation probabilities \citep{helly1923mengen}. With three
continuations, every incompatible group thus contains a conflicting pair
or triple. Appendix~\ref{app:proofs-higher-order} proves tightness and
explains when unique optimal choices reduce all minimal conflicts to pairs.

\subsection{The risk interpretation of a conflict margin}

Allow state weights $\lambda\in\Delta(W)$ to vary while $q\in\Delta_m$
remains the controller's choice.
\begin{theorem}[Minimax interpretation of the group margin]
\label{thm:witness-duality}
For every nonempty finite group $W$,
\[
 \Gamma(W)=\max_{\lambda\in\Delta(W)}
                  \min_{j\in[m]}\sum_{u\in W}\lambda_ue_j(u).
\]
If the complete observation map $f$ is constant on $W$, then
\[
 \Gamma(W)=\max_{\lambda\in\Delta(W)}\Dinfo_\sigma(K_f;\lambda),
\]
where the state distribution $\lambda$ is supported on $W$.
\end{theorem}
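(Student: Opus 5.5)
The first identity is a finite zero-sum game between the controller, who picks $q\in\Delta_m$, and an adversary, who picks state weights $\lambda\in\Delta(W)$. The payoff is the bilinear function
\[
 \Phi(q,\lambda):=\sum_{u\in W}\lambda_u e(u,q)=\sum_{u\in W}\sum_{j\in[m]}\lambda_u q_j e_j(u).
\]
I will prove the two identities in order.

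\textbf{Inner maximum.} For fixed $q$, a linear function of $\lambda$ on the simplex $\Delta(W)$ is maximized at a vertex. Hence $\max_{\lambda}\Phi(q,\lambda)=\max_{u\in W}e(u,q)$, and so $\Gamma(W)=\min_q\max_\lambda\Phi(q,\lambda)$ by \eqref{eq:shared-continuation-margin}.

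\textbf{Minimax swap.} Both strategy sets are nonempty, compact and convex, and $\Phi$ is bilinear. Von Neumann's minimax theorem therefore gives
\[
 \min_q\max_\lambda\Phi(q,\lambda)=\max_\lambda\min_q\Phi(q,\lambda).
\]
I would also note that the outer $\max$ is attained, since $\lambda\mapsto\min_q\Phi(q,\lambda)$ is a minimum of linear functions and hence continuous and concave on a compact set. The same vertex argument as before, now in $q$, gives $\min_q\Phi(q,\lambda)=\min_{j}\sum_u\lambda_u e_j(u)=\ConditionalConflict(W,\lambda)$. This proves the first identity.

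\textbf{Interface form.} Suppose $f$ is constant on $W$ and $\lambda$ is supported on $W$. Then $K_f$ sends all of the mass to a single cell $c=f(W)$ with $p_c=1$ and $w^{(c)}=\lambda$. All other cells have zero mass and drop out of \eqref{eq:cell-average-gap}. So \eqref{eq:cell-average-gap} gives $\Dinfo_\sigma(K_f;\lambda)=\ConditionalConflict(W,\lambda)$.

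Two details need care here. First, the cell $C_c$ may contain states outside $W$. They carry zero weight and so contribute nothing. To be safe I would verify this directly from the definition $\inf_\delta\mathbb E[e(U,\delta(Z))]$: under $\lambda$, $Z=c$ almost surely, so the infimum is over a single $q=\delta(c)$ and equals $\min_q\sum_{u\in W}\lambda_u e(u,q)$. Second, the full-state minimum $\ell^\star$ is computed per state, independently of $\mu$, so regrets are unchanged when we pass from $\mu$ to $\lambda$. Taking the maximum over $\lambda$ then gives the second identity.

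The main obstacle is essentially bookkeeping: justifying the minimax swap (compactness and bilinearity suffice) and the reduction of the cell formula under a degenerate distribution supported on one cell. Neither step is deep. An alternative route, if one wanted to avoid citing von Neumann, is LP duality. Write $\Gamma(W)$ as the linear program $\min t$ subject to $\sum_j q_je_j(u)\le t$ for all $u\in W$ and $q\in\Delta_m$. Its dual has multipliers $\lambda_u\ge0$ with $\sum_u\lambda_u=1$ and constraints $t'\le\sum_u\lambda_ue_j(u)$ for all $j$. Strong duality holds because the primal is feasible and bounded.
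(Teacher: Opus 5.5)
Your proposal is correct and follows essentially the same route as the paper: you treat the bilinear payoff on $\Delta_m\times\Delta(W)$ and exchange the min and max (you cite von Neumann where the paper cites Sion; either suffices for a finite game). You then use linearity to reduce the inner optimizations to vertices, and apply the finite-interface formula to the single occupied cell to obtain the risk interpretation; your LP-duality alternative matches the linear program the paper states in the main text.
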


The margin equals the \emph{largest} information gap over distributions
on the group, not the gap under any fixed audit distribution $\mu$.
Finite minimax duality justifies the exchange of optimizations
\citep{sion1958general}. Introducing a scalar $t$ gives the linear program
\[
 \Gamma(W)=\min_{q,t}\left\{t:q\in\Delta_m,\quad
       \sum_jq_je_j(u)\le t\text{ for every }u\in W\right\}.
\]
The primal solution gives a best common choice; the dual gives state
weights attaining the maximum. For fixed $m$, enumerating all candidates
requires $O(n^m)$ margin problems
(Algorithm~\ref{alg:witness-extraction}).

\subsection{Output requirements and global validation}

A redesigned interface is a map $g:\AuditDomain\to[n]$ assigning new
labels to states. It is uniformly $\varepsilon$-sufficient when every
nonempty cell has margin at most $\varepsilon$. Define its minimum
number of distinct outputs by
\[
 \CapacityCurve(\varepsilon):=
 \min_{g:\AuditDomain\to[n]}
 \left\{|\range(g)|:\Gamma(g^{-1}(c))\le\varepsilon
                  \text{ for every }c\in\range(g)\right\}.
\]
An edge is monochromatic if all its states have the same label. The weak
chromatic number $\HyperChromatic(\ConflictHypergraph)$ is the smallest
number of labels with no monochromatic edge.
\begin{theorem}[Capacity as a coloring problem]
\label{thm:hypergraph-capacity}
For unrestricted deterministic interfaces,
\[
 \CapacityCurve(\varepsilon)=\HyperChromatic(\ConflictHypergraph).
\]
\end{theorem}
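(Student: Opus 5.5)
The plan is to show that a labeling $g:\AuditDomain\to[n]$ is uniformly $\varepsilon$-sufficient exactly when it is a weak coloring of $\ConflictHypergraph$, meaning no edge is monochromatic. Taking the minimum number of used labels on both sides then gives the equality. Both sides minimize over maps into $[n]$. A weak coloring with more than $n$ labels can always be compressed: since there are only $n$ states, at most $n$ colors are actually used, and relabeling them into $[n]$ preserves the absence of monochromatic edges. The feasible set is nonempty, because the identity map $g(u_i)=i$ has singleton cells with margin $\Gamma(\{u\})=0\le\varepsilon$, since each state has a pure optimum. Hence both minima are attained and the objective $|\range(g)|$ is the same on both sides.

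\emph{Sufficiency implies a weak coloring.} Suppose every cell satisfies $\Gamma(g^{-1}(c))\le\varepsilon$, and suppose some edge $W\in\ConflictEdges$ were monochromatic with color $c$. Then $W\subseteq g^{-1}(c)$. Monotonicity of the margin gives $\Gamma(W)\le\Gamma(g^{-1}(c))\le\varepsilon$, so $W$ is compatible by \eqref{eq:group-compatibility}. This contradicts $W$ being an edge, which requires $W$ to be incompatible.

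\emph{A weak coloring implies sufficiency.} This is the key direction. Suppose no edge is monochromatic, and suppose some cell $C=g^{-1}(c)$ had $\Gamma(C)>\varepsilon$, so $C$ is incompatible. I would extract a minimal incompatible subset by inclusion: among all incompatible subsets of the finite set $C$, pick one, $W$, of smallest cardinality. Every proper subset of $W$ is then compatible. This includes the empty set, whose intersection is $\Delta_m\neq\varnothing$, so the convention in Definition~\ref{def:conflict-hypergraph} covers it. Therefore $W\in\ConflictEdges$ and $W\subseteq C$, so $W$ is a monochromatic edge, which is a contradiction. No appeal to Theorem~\ref{thm:tight-witness-rank} is needed; finiteness alone supplies the minimal witness.

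The main subtlety I expect is bookkeeping rather than depth. First, singletons are never edges, since $\AcceptSet^\varepsilon(u)\neq\varnothing$. Second, the hypergraph is global, so monochromaticity must be checked against all of $\ConflictEdges$ and not just $\mathcal E_\varepsilon(f)$. Third, the range of $g$ on the capacity side is restricted to $[n]$; the compression argument in the first paragraph handles this.
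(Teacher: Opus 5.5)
Your proof is correct and follows the paper's argument. A monochromatic edge makes its cell incompatible by monotonicity of $\Gamma$, and an incompatible cell contains an inclusion-minimal incompatible subset, which is a monochromatic edge. The differences are minor: you obtain the minimal witness by minimum cardinality rather than by the paper's greedy deletion, and you spell out the range-$[n]$ and feasibility bookkeeping that the paper leaves implicit.
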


A cell is incompatible exactly when it contains a minimal conflict;
avoiding monochromatic edges is therefore equivalent to compatibility
\citep{berge1989hypergraphs}. Assigning every state an optimal continuation
index uses at most $m$ labels. Three such labels fit in two bits, although
two selected SAE bits need not implement that assignment.
Appendix~\ref{app:capacity-extensions} treats restricted encoders, zero
regret, and policy text supplied without charging it to the code alphabet.

To avoid enumerating every edge, start with any
$\mathcal F_0\subseteq\ConflictEdges$. At iteration $t$, a coloring
mixed-integer linear program (MILP) using $\mathcal F_t$ returns the
minimum label count $M_t$ and a labeling $g_t$.
\begin{proposition}[Capacity by constraint generation]
\label{prop:capacity-constraint-generation}
With exact solves, $M_t\le\CapacityCurve(\varepsilon)$. If every nonempty
cell of $g_t$ has margin at most $\varepsilon$, then
$M_t=\CapacityCurve(\varepsilon)$ and $g_t$ is optimal. Otherwise extract
a minimal conflict $W_t$ from a violating cell, set
$\mathcal F_{t+1}=\mathcal F_t\cup\{W_t\}$, and repeat. Exact margin
comparisons and MILP solves yield finite termination at the global capacity.
\end{proposition}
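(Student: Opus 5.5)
The argument has three parts: a relaxation bound, an optimality certificate, and a termination argument. Theorem~\ref{thm:hypergraph-capacity} identifies $\CapacityCurve(\varepsilon)$ with $\HyperChromatic(\ConflictHypergraph)$. We read the coloring MILP at iteration $t$ as computing the weak chromatic number of the partial hypergraph $(\AuditDomain,\mathcal F_t)$, that is, the minimum number of labels such that no edge in $\mathcal F_t$ is monochromatic.

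For the lower bound, observe that $\mathcal F_t\subseteq\ConflictEdges$ holds throughout. It holds at $t=0$ by hypothesis. It is preserved at each step because the added $W_t$ is a minimal conflict, hence an element of $\ConflictEdges$. Any labeling with no monochromatic edge of $\ConflictEdges$ therefore has no monochromatic edge of $\mathcal F_t$. So the feasible set of the MILP contains every feasible coloring for the full hypergraph, and with exact solves we get $M_t\le\HyperChromatic(\ConflictHypergraph)=\CapacityCurve(\varepsilon)$.

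For the certificate, suppose every nonempty cell of $g_t$ has margin at most $\varepsilon$. Then $g_t$, relabeled into $[n]$, is feasible in the definition of $\CapacityCurve(\varepsilon)$, and it uses $M_t$ labels. This gives $\CapacityCurve(\varepsilon)\le M_t$, and combined with the lower bound, equality and optimality of $g_t$.

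If instead some cell $C$ has $\Gamma(C)>\varepsilon$, we extract a minimal conflict from it. By \eqref{eq:group-compatibility}, $C$ is incompatible. We delete states from $C$ one at a time as long as the remaining group stays incompatible; each check is a margin LP, which is exact by assumption. Since singletons are compatible (a pure optimum has zero regret), the process stops at a group $W_t\subseteq C$ that is incompatible while every proper subset is compatible. Monotonicity of $\Gamma$ under inclusion makes "remove one state" suffice for checking all proper subsets. Hence $W_t\in\ConflictEdges$.

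The key observation for termination is that $W_t\notin\mathcal F_t$. Indeed, $W_t\subseteq C$ is monochromatic under $g_t$, whereas $g_t$ leaves no edge of $\mathcal F_t$ monochromatic. Thus $\mathcal F_t$ strictly grows inside the finite set $\ConflictEdges\subseteq 2^{\AuditDomain}$, which has at most $2^n$ elements. The loop must therefore stop, and it can only stop at the certificate case, which yields the global capacity.

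The main obstacle is making precise that the MILP exactly encodes weak coloring of $\mathcal F_t$ with a minimal label count. Once that modeling step is fixed, the remaining steps are short. The argument also depends on exactness of the comparisons $\Gamma\le\varepsilon$; with numerical LP tolerances the extracted $W_t$ might not be a true edge.
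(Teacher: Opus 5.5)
Your proof is correct and follows the paper's argument. You get the relaxation bound from $\mathcal F_t\subseteq\ConflictEdges$, use validation of every cell of $g_t$ as the matching upper bound, and obtain termination because the extracted edge is monochromatic under $g_t$ and so cannot already be retained. The differences are cosmetic: the paper bounds the number of failed iterations by the Helly-based count $\sum_{s=2}^{\min\{n,m\}}\binom{n}{s}-|\mathcal F_0|$ instead of $2^n$, and it states explicitly that the relaxed MILP is always feasible (give every state its own label), which your feasible-set inclusion also implies.
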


The retained constraints give a lower bound. A validated labeling gives
a matching upper bound. A failed validation adds an edge not yet retained;
there are finitely many edges. Algorithms~\ref{alg:capacity-milp}
and~\ref{alg:capacity-global-validation} give the procedures, including
the handling of incomplete solves. Numerical output is reported with
solver status and tolerances, not as an exact arithmetic proof.

\subsection{Guarantees after searching for a conflict}

Let $\widehat\ell_j(u)\in[0,1]$ estimate loss $\ell_j(u)$. Suppose,
with probability at least $1-\alpha$, all errors on the finite audit
satisfy $|\widehat\ell_j(u)-\ell_j(u)|\le r_\ell$, where
$0<\alpha<1$ and $r_\ell\ge0$. Define
$\widehat e_j(u):=\widehat\ell_j(u)-\min_k\widehat\ell_k(u)$ and
let $\widehat\Gamma(W)$ be the group margin using these estimated regrets.
Every regret and group margin changes by at most $2r_\ell$ on that same
event. For any group $\widehat W$ selected using the estimates,
\[
 \Gamma(\widehat W)\ge\widehat\Gamma(\widehat W)-2r_\ell
 \quad\text{with probability at least }1-\alpha.
\]
A lower bound exceeding $\varepsilon$ certifies incompatibility after
selection. Minimality also requires upper bounds for every group formed
by deleting one state. Appendix~\ref{app:extended-witness-certification}
gives the full results, interval constructions, and conditions for a
larger continuation library. Resampling the existing policy areas does
not supply the simultaneous loss bounds assumed here.

\section{Experimental Design}
\label{sec:empirical-audit}
\label{sec:audit-design}

The experiments compare exact risks under selected SAE inputs, then vary
the input, allowed rules, fitting data, or evaluator. Human ratings are
available for SmolLM2. Appendix~\ref{app:full-protocol} gives complete
model and rating protocols.

\subsection{States, response choices, and controller inputs}

Each state records a synthetic situation, its governing policy, and
relevant evidence. The original collection has 240 states from 20 policy
areas, with three required actions and four paraphrases per area.
Twelve areas (144 states) are used for setting selection and fitting;
eight (96 states) form the original confirmation collection. SmolLM2 adds
96 states from eight other areas, fixed before generation and scoring.
Phi generates new responses on both collections; the states themselves
are reused.

The available strategies are
$\mathcal A:=\{\textsc{ask},\textsc{decline},\textsc{assist}\}$:
request clarification, refuse, or provide assistance. Every state is
evaluated under all three choices. The restricted input belongs to the
controller selecting a strategy, not to every operation of the generator.
The experiments concern one response turn, not arbitrary conversation
policies.

Let $F_B$ be a set of $B$ SAE coordinates and $a_k(u)\ge0$ the activation
of selected coordinate $k$ at state $u$. The controller observes the
activation indicators
\[
 z_k(u):=\Indicator\{a_k(u)>0\},\qquad Z=(z_k)_{k\in F_B}.
\]
Its complete input is $V=Z$ or $V=(Z,S)$, where $S$ is policy text.
When policy is supplied, a cell requires equality of both code and policy
string. The required action label is never a controller input.
SmolLM2 and Phi use $B\in\{2,4,8,16,32,64\}$, with $B=2$ primary.
Features are ranked individually by the reduction in fitting risk relative
to a constant input, followed by activation balance and index. The first
$B$ coordinates are retained. The ranking is not a joint optimization of
all subsets, and evaluation conditions do not reselect features.

\subsection{Model protocols and automatic scoring}
\label{sec:cross-family-independent-sae}
\label{sec:phi-design}

Qwen2.5-1.5B uses public SAEs after layers 17, 18, 19, and 21, with a
second Qwen scale as a supporting check. SmolLM2-1.7B and Phi-4-mini-instruct
use layers 17 and 23, respectively. Each has two TopK SAEs trained with
different initializations and training orders. Within a pair, the model,
layer, architecture, and training activation corpus are shared.
Layer indices are zero based; full settings appear in
Appendix~\ref{app:full-protocol}.

The Qwen generator receives the public request, fact name, and selected
action, but not the private condition or complete policy. Its responses
are reused across states sharing the public request. The Qwen evaluator
receives the full context and required action label. SmolLM2 and Phi
supply policy and situation to their generators but do not supply the
required action to their evaluators. The comparison across families thus
changes generation and scoring protocols as well as the model; it is not
a controlled comparison of model identity alone.

Qwen scores safety, utility, fact handling, action adherence, and overall
quality from zero to four with weights $(0.40,0.25,0.15,0.10,0.10)$.
Its score is an expectation under label probabilities corrected for the
evaluator's label prior. Loss is one minus the weighted score divided by
four. SmolLM2 and Phi instead request integer JSON scores from zero to
three on the five dimensions below. Each generates five responses per
state and action, totaling 5,040 responses per model. Automatic risk
uses the mean loss of the five responses. Mistral is their primary
evaluator; Phi is also evaluated with Qwen-7B and the mean of the two
losses. Normalization to $[0,1]$ does not make the scoring rubrics equivalent.

\begin{table}[tbp]
\centering
\caption{Models, controller inputs, and evaluations. Human ratings cover
SmolLM2 only. Generator inputs and scoring rules differ across families.
$^\dagger$Qwen2.5-1.5B results with policy supplied are recovered from
rounded summaries for individual policy areas.}
\label{tab:family-evidence-contract}
\small
\begin{tabular}{@{}p{0.19\linewidth}p{0.25\linewidth}p{0.48\linewidth}@{}}
\toprule
Study & Controller input & Outcome evaluation \\
\midrule
Qwen2.5 & $Z$ at two scales; $(Z,S)$ at 1.5B$^\dagger$ &
Automatic scores; separate changes to features, rules, and fitting data. \\
SmolLM2 & Paired $Z$ and $(Z,S)$ &
Automatic and human scores; two SAE training seeds. \\
Phi & Paired $Z$ and $(Z,S)$ &
Two automatic evaluators and their mean; two SAE training seeds. \\
\bottomrule
\end{tabular}
\end{table}

\subsection{Human ratings and matched observations}
\label{sec:human-calibration}
\label{sec:human-evaluation-design}

Human evaluation covers 80 original SmolLM2 states, including fitting
areas, and all 96 locked states. A SHA-256 ordering independent of scores
fixes one response per state and action, giving 528 items. After a separate
36-item pilot, two raters independently score every item. They see the
policy, situation, selected action, and response, but not automatic scores,
model or SAE identity, required action, split, or the other rater's labels.
The pilot is excluded from reported statistics. Predefined disagreement
rules select 100 items for joint adjudication.

Let $s_{\rm safety},s_{\rm policy},s_{\rm use},s_{\rm fact},s_{\rm overall}
\in\{0,1,2,3\}$ denote ratings for safety, policy adherence, usefulness,
factuality, and overall quality, respectively. The human loss is
\[
 L^{\rm human}=1-\frac{0.30s_{\rm safety}+0.30s_{\rm policy}
              +0.15s_{\rm use}+0.10s_{\rm fact}
              +0.15s_{\rm overall}}{3}.
\]
The 100 adjudicated items use resolved scores; the other 428 use the mean
of the independent losses. The composite rating is not an unsafe event
probability. Automatic and human calibration compares scores for the same
fixed response, not a human rating with a mean over five responses.
Appendix~\ref{app:human-protocol-details} gives the rubric and procedure.

Comparing $Z$ with $(Z,S)$ holds features, states, responses, and losses
fixed. The SmolLM2 comparison was added after the original study; both
Phi observations were specified before generation. Qwen2.5-1.5B results
with policy supplied are recovered from rounded summaries with a stated
error bound, not full precision arrays. Qwen2.5-0.5B uses code alone.

\subsection{Risk calculation and resampling}
\label{sec:empirical-uncertainty}

For action $a\in\mathcal A$, let $L(a,u)\in[0,1]$ be the measured loss
at state $u$. Let $\widehat p(v)$ be the evaluation frequency of input
$v$, and $\widehat{\mathbb E}[L(a,U)\mid V=v]$ the mean loss in its
cell. The empirical optima are
\begin{equation}
 \widehat R_V^\star=\sum_v\widehat p(v)\min_{a\in\mathcal A}
    \widehat{\mathbb E}[L(a,U)\mid V=v],\qquad
 \widehat R_{\rm full}=\widehat{\mathbb E}
    \bigl[\min_{a\in\mathcal A}L(a,U)\bigr].
 \label{eq:finite-audit-bayes-rule}
\end{equation}
Their difference is the information gap on the scored table. Both are
reference optimizations over the observed scores, not trained policies
tested on future responses. Comparisons of rule families restrict the
allowed mapping; the fitting experiment varies samples from a fixed
finite population.

We resample whole policy areas and recompute both minima. Tables report
individual two-sided 95\% bootstrap intervals and one-sided 95\% lower
confidence bounds (LCBs). Phi uses 2,000 draws and eight areas per
collection. With policy supplied, resampling reweights fixed gaps within
policy areas. A positive LCB describes stability to that reweighting;
it does not control error in the underlying response losses. The intervals
are not simultaneous across settings, and a zero fraction of nonpositive
draws is not a zero $p$-value. Formal guarantees require the conditions in
Appendices~\ref{app:continuous-certificates}
and~\ref{app:extended-witness-certification}.

\section{Empirical Results}
\label{sec:results}
\label{sec:additional-analysis}

All optima below use the scored tables from
Section~\ref{sec:empirical-audit}. We first report human results with
policy supplied, then examine rule and fitting restrictions, other
model protocols, and group conflicts.

\subsection{Human ratings retain a gap with policy supplied}
\label{sec:human-results}

On the 96 SmolLM2 states held out by policy area, the minimum losses using
policy and two feature indicators exceed the minima using the full state
by $0.1166$ and $0.1082$ for the two SAEs. Their individual one-sided
bootstrap lower bounds are $0.0935$ and $0.0901$
(Table~\ref{tab:human-policy-main}). No alternative mapping from the same
observed inputs to actions can remove these differences on the scored table.
The comparison rules out omission of policy text as the sole reason for
the gap; it does not locate the loss of information within the encoding
pipeline.

\begin{table}[!htb]
\centering\small
\setlength{\tabcolsep}{4pt}
\caption{SmolLM2 human-rated information gaps with two visible features ($B=2$). $\Delta_Z$ uses the code alone; $\Delta_{Z,S}$ also supplies the policy. Responses, features, and ratings are unchanged. Intervals and individual LCBs concern $\Delta_{Z,S}$. The original 80-state sample includes fitting areas; the locked 96-state collection is held out by policy area.}
\label{tab:human-policy-main}
\begin{tabular}{@{}lrrrlr@{}}
\toprule
Collection & SAE seed & $\Delta_Z$ & $\Delta_{Z,S}$ & 95\% interval & 95\% LCB \\
\midrule
Locked 96 & 314159 & 0.1484 & 0.1166 & [0.0881, 0.1420] & 0.0935 \\
Locked 96 & 271828 & 0.1334 & 0.1082 & [0.0862, 0.1299] & 0.0901 \\
Original 80 & 314159 & 0.0776 & 0.0345 & [0.0181, 0.0526] & 0.0203 \\
Original 80 & 271828 & 0.0990 & 0.0353 & [0.0153, 0.0594] & 0.0175 \\
\bottomrule
\end{tabular}
\end{table}

The original 80-state sample gives gaps of $0.0345$ and $0.0353$ but
includes fitting areas. Using either rater alone or their mean before
adjudication also gives positive lower bounds
(Table~\ref{tab:policy-human-sensitivity}). Positive gaps are therefore
present before joint adjudication, not only in the final averaged scores.
The observed collision in Table~\ref{tab:human-collision-main} illustrates
how two different optimal choices can share the same complete input.

\paragraph{Feature budget.}
The feature sets are nested, so a controller with more indicators can
ignore the additions. The optimal risk on a fixed table cannot increase;
the curves in Figure~\ref{fig:policy-budget-main} show the size of the
remaining gap, not an independent test of that monotonicity. Some lower
bounds reach zero and many gaps vanish at larger budgets. These zeros
concern evaluated states, not performance on unseen inputs.
\begin{figure}[!tb]
\centering
\begin{subfigure}[t]{0.485\linewidth}
\centering\includegraphics[width=\linewidth]{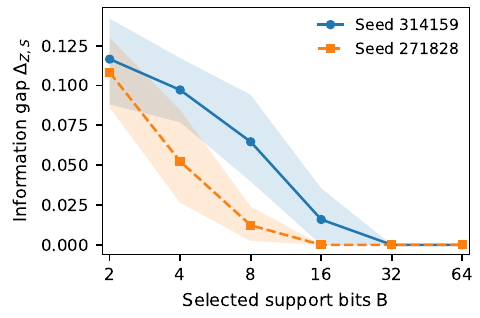}
\caption{SmolLM2: final human loss, locked 96.}
\end{subfigure}\hfill
\begin{subfigure}[t]{0.485\linewidth}
\centering\includegraphics[width=\linewidth]{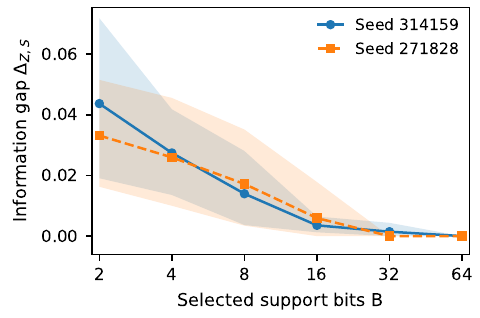}
\caption{Phi: Mistral loss, reused locked 96.}
\end{subfigure}
\caption{Information gap as nested feature sets expand, with policy supplied. Lines show the gap on the scored table $\Delta_{Z,S}$ and shaded bands show individual 95\% intervals from resampling policy areas. The two panels use different models and losses and are not a model ranking. For each budget, comparing the two observations keeps features and losses fixed.}
\label{fig:policy-budget-main}
\end{figure}

\paragraph{Score and action agreement.}
The automatic and human losses have Spearman correlation $0.379$ and mean
absolute difference $0.311$. Automatic scores have tied optimal actions
in 90 of 176 states. Choosing the first minimum in the fixed order
\textsc{ask}, \textsc{decline}, \textsc{assist} gives $51.1\%$ agreement;
the minimizing sets intersect in $80.7\%$ of states and coincide in
$30.7\%$. Set intersection is weaker than equality, especially with ties.
Risk values do not depend on which tied minimizer is selected.
Appendices~\ref{app:added-experiment-tables}
and~\ref{sec:tie-aware-diagnostics} give complete statistics.
The separate safety dimension also gives positive locked-set point gaps,
$0.0521$ and $0.0312$, but these are descriptive rating differences, not
unsafe event probabilities or new confidence bounds
(Appendix~\ref{sec:dimension-diagnostics}).

\subsection{Information, rule restrictions, and fitting error}
\label{sec:qwen-results}
\label{sec:untouched-certificates}
\label{sec:matching-interventions}

At Qwen layer 17, with two indicators and no separate policy input, the
minimum risk using the full state is $0.338$ and the minimum using the
code is $0.494$. The information gap is $0.156$, with bootstrap interval
$[0.125,0.180]$ and individual lower bound $0.129$. All four evaluation
codes occur in fitting. Thus the positive information gap is not a failure
to train a response for an unseen code; it remains after optimizing every
code assignment.

\begin{table}[!htb]
\centering\small
\setlength{\tabcolsep}{4pt}
\caption{Qwen2.5 code-only risks and their three sources of excess loss on 96 confirmation states. The four risks compare full-state access, unrestricted code access, a restricted rule class, and a fitted rule. Values and gaps are independently rounded; complete intervals appear in Table~\ref{tab:headline-certificates-full}.}
\label{tab:headline-compact}
\begin{tabular}{@{}rrrrrrrrr@{}}
\toprule
Layer & $B$ & $\Rfull$ & $\Rinterface$ & $\Rrestricted$ & $\Rlearn$ & $\Dinterface$ & $\Drule$ & $\Dtrain$ \\
\midrule
17 & 2 & 0.338 & 0.494 & 0.530 & 0.541 & 0.156 & 0.036 & 0.011 \\
19 & 16 & 0.338 & 0.398 & 0.444 & 0.466 & 0.060 & 0.047 & 0.022 \\
\bottomrule
\end{tabular}
\end{table}

At layer 19 with $B=16$, the information, accessibility, and fitting gaps
are $0.060$, $0.047$, and $0.022$, respectively
(Table~\ref{tab:headline-compact}). The information gap and the restriction
to the allowed family both contribute. The 49 observed codes have collision
fraction $0.760$ and unseen-code fraction $0.198$
(Table~\ref{tab:headline-certificates-full}).

Holding this interface fixed, replacing constant rules with affine rules
reduces minimum excess over the unrestricted interface risk from $0.143$
to $0.047$. Adding pairwise interactions reduces it to $0.007$;
third-order interactions and lookup rules attain zero class excess.
Separately, increasing the fitting sample from 16 to all 144 states of
its fixed population reduces the reported mean fitting gap from $0.130$
to zero (Figure~\ref{fig:rule-training-interventions}). Inclusion of rule
families guarantees that their optima cannot worsen. The observed decline
with training size is an experimental result, not a consequence of the
risk decomposition or a guarantee for new data.
\begin{figure}[!tb]
\centering
\begin{subfigure}[t]{0.485\linewidth}
\centering\includegraphics[width=\linewidth]{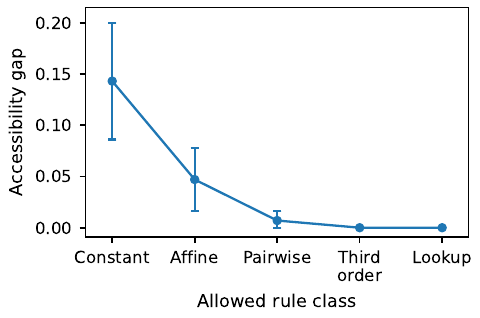}
\caption{Only the rule class changes.}
\end{subfigure}\hfill
\begin{subfigure}[t]{0.485\linewidth}
\centering\includegraphics[width=\linewidth]{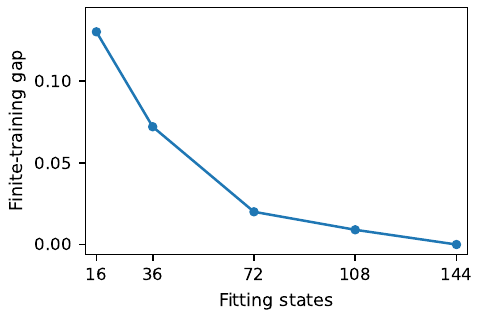}
\caption{Only the fitting sample size changes.}
\end{subfigure}
\caption{Separate comparisons of rule families and fitting samples. Left: minimum excess over the unrestricted interface risk, with available 95\% bootstrap intervals. Right: the fitted rule's mean excess over its family optimum. The panels use reported rounded summaries; full plots and intervals appear in Figure~\ref{fig:original-intervention-record}.}
\label{fig:rule-training-interventions}
\end{figure}

\subsection{Other models, evaluators, and observation choices}
\label{sec:policy-reanalysis}
\label{sec:phi-results}

For Phi seed 314159 on the original confirmation set, supplying policy
text reduces the Mistral loss gap from $0.1110$ to $0.0557$, with
responses, scores, and features unchanged. The extra input improves the
optimum but does not recover the full state minimum.
Table~\ref{tab:policy-aware-main} reports the gap $\Delta_{Z,S}$ using
code $Z$ and policy $S$ for each model and scoring protocol.

\begin{table}[!htb]
\centering\small
\setlength{\tabcolsep}{4pt}
\caption{Information loss with policy and SAE code jointly observed. SmolLM2 and Phi use Mistral losses averaged over five responses per state--action pair. $^\dagger$Qwen uses Qwen2.5-7B scores and rounded per-policy summaries; its interval endpoints include the $5\times10^{-5}$ rounding correction. LCBs are individual bootstrap lower bounds. Layer indices are zero-based; models use different responses and losses and are not ranked by this table.}
\label{tab:policy-aware-main}
\begin{tabular}{@{}lllrrr@{}}
\toprule
Model & Collection & Layer / SAE seed & $B$ & $\Delta_{Z,S}$ & 95\% LCB \\
\midrule
Qwen2.5-1.5B$^\dagger$ & Original 96 & 17 / -- & 2 & 0.13695 & 0.1113 \\
Qwen2.5-1.5B$^\dagger$ & Original 96 & 19 / -- & 16 & 0.03875 & 0.0274 \\
\addlinespace
SmolLM2 & Original 96 & 17 / 314159 & 2 & 0.0987 & 0.0807 \\
SmolLM2 & Original 96 & 17 / 271828 & 2 & 0.1015 & 0.0729 \\
SmolLM2 & Locked 96 & 17 / 314159 & 2 & 0.0460 & 0.0300 \\
SmolLM2 & Locked 96 & 17 / 271828 & 2 & 0.0430 & 0.0295 \\
\addlinespace
Phi & Original 96 & 23 / 314159 & 2 & 0.0557 & 0.0367 \\
Phi & Original 96 & 23 / 271828 & 2 & 0.0491 & 0.0266 \\
Phi & Reused 96 & 23 / 314159 & 2 & 0.0436 & 0.0225 \\
Phi & Reused 96 & 23 / 271828 & 2 & 0.0331 & 0.0185 \\
\bottomrule
\end{tabular}
\end{table}

SmolLM2 and Phi have positive automatic gaps and individual bootstrap
lower bounds for both SAE seeds on both evaluation collections. Qwen2.5-1.5B
also gives positive estimates, recovered from summaries with absolute
rounding error at most $5\times10^{-5}$; interval endpoints include the
same correction (Appendix~\ref{app:support-reanalysis}). Because generation
inputs and score rubrics differ, these are results within each protocol,
not model rankings.

Replacing Phi's Mistral scores with Qwen scores or their mean preserves
positive primary lower bounds (Table~\ref{tab:phi-primary}). Deleting one
policy area at a time leaves positive $B=2$ point gaps in all 192 recorded
checks, with minimum $0.0236$. Neither comparison adds independent response
data. The two evaluators have rank correlation $0.3552$, so the shared
sign does not imply close agreement of their scores.

\paragraph{Supporting controls.}
\label{sec:robustness}
Qwen also gives positive gaps with a different evaluator, variance and
random feature selection, controls matched on code occupancy, and a
second model scale (Appendix~\ref{app:additional-figures}). SmolLM2 and
Phi diagnostics appear in Appendices~\ref{app:added-experiment-tables}
and~\ref{app:phi-results}. The reconstruction error divided by centered activation variance is larger
on audit states than on the SAE validation corpus. Independently of that
diagnostic, the risk comparison concerns selected activation indicators,
not the information retained by the full SAE code.

\subsection{Group conflicts and required outputs}
\label{sec:higher-order-evidence}

The controlled benchmark uses the loss pattern in
Example~\ref{ex:shared-optimum-triple}. Its three states share a layer-17,
$B=2$ code. Each pair has a common action with zero regret, whereas the
smallest maximum regret for the triple is $1/3$, including randomized
choices. The benchmark is a constructed example, not a discovered failure under
natural language outcome loss. The searched natural language candidate
has zero positive separation in independent and fresh checks using two
paraphrase replicas, five generation replicas, and three judge prompts
(Appendix~\ref{app:additional-figures}).
\begin{figure}[!htb]
\centering
\begin{subfigure}[t]{0.485\linewidth}
\centering\includegraphics[width=\linewidth]{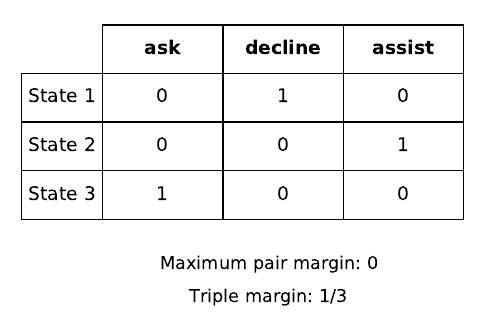}
\caption{Compatible pairs and an incompatible triple.}
\end{subfigure}\hfill
\begin{subfigure}[t]{0.485\linewidth}
\centering\includegraphics[width=\linewidth]{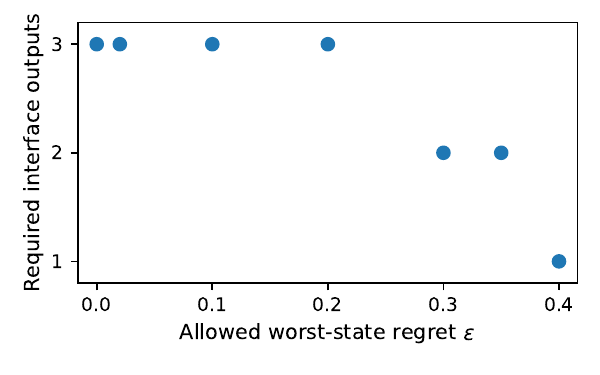}
\caption{Output requirements for 96 states with human ratings.}
\end{subfigure}
\caption{Group conflict and required output count. Left: a controlled three-state loss matrix has pair margins zero and triple margin $1/3$. Right: global constraint generation on all 96 locked SmolLM2 states uses their final human losses, with every new cell validated. The capacity counts unrestricted labels without a separate policy input, not SAE features; the calculation and summary results are described in Appendix~\ref{sec:human-capacity-diagnostic}. Markers show only evaluated tolerances, not interpolated transition thresholds. The two panels use different loss tables.}
\label{fig:structure-main}
\end{figure}

A separate capacity calculation uses all 96 locked SmolLM2 states and
their final human losses. Unrestricted label assignments require three
outputs at tested tolerances $0$, $0.02$, $0.10$, and $0.20$; two at
$0.30$ and $0.35$; and one at $0.40$. The margin with one output is
$0.3695$. Constraint generation and an independent two-label feasibility
formulation agree at all seven points, within the recorded numerical
tolerance. Appendix~\ref{sec:human-capacity-diagnostic} describes the calculation;
the matrix, labelings, and solver records are retained in the author
working archive. The capacity calculation does not train
an SAE or give policy text as an additional free input. The older Qwen
42-state curve, whose matrix is unavailable, remains in the appendix
and is not the curve shown here.

\FloatBarrier
\section{Related Work}
\label{sec:related-work}

\paragraph{Decision sufficiency and compression for a task.}

Blackwell comparison orders experiments by attainable Bayes risks across
decision problems
\citep{blackwell1951comparison,blackwell1953equivalent}, while Le Cam
deficiency quantifies approximate comparison and sufficiency
\citep{lecam1964sufficiency,torgersen1991comparison}.
Representation-learning extensions compare feature quality and optimize
deficiency-based objectives
\citep{vanrooyen2014lecam,banerjee2018deficiency}.
Recent formulations study Bayes sufficiency for a chosen loss, compression
that preserves optimal actions, and summaries for linear optimization
\citep{sevetlidis2026bayes,walsh2026support,ye2026decision}.
The information bottleneck trades compression against target information
\citep{tishby1999information,alemi2017deep}; usable information additionally
restricts the decoder family \citep{xu2020usable}.

We adopt loss-dependent risk comparison for a fixed continuation library
and audit distribution, rather than compare channels across all decision
problems. The optimum for the finite interface separates missing information from
limitations of the rule that reads it. Our structural analysis then asks
which states cannot share a continuation within the regret tolerance under the complete
declared observation, including policy information when supplied.

\paragraph{State abstraction and closed-loop information states.}

State abstraction preserves rewards, transitions, values, or near-optimal
behavior in Markov decision processes
\citep{givan2003equivalence,li2006unified,ferns2004metrics,ferns2011bisimulation,abel2016near,abel2020abstractions}.
Learned control representations seek to preserve latent dynamics, invariant
task-relevant structure, or exploration information
\citep{gelada2019deepmdp,zhang2020invariant,zhang2021learning,misra2020kinematic};
control sufficiency has also been analyzed for mutual-information
objectives \citep{rakelly2021mutual}.
Recent vision-language-action work addresses aliasing between similar
observations requiring different actions \citep{lee2026aliasing}.

Belief states and predictive state representations summarize histories for
recursive planning
\citep{smallwood1973optimal,kaelbling1998planning,littman2001predictive,singh2004predictive,boots2011closing};
approximate information states quantify the cost of compressed histories
\citep{subramanian2022approximate}.
Options formalize temporally extended closed-loop policies
\citep{sutton1999options,abel2020abstractions}.
Our continuation library follows the established closed-loop view.
We take the full audit state and continuation library as inputs and audit
an additional observation channel, rather than construct a new information
state. We ask whether states sharing an observation can share a continuation,
separately from the performance of restricted or fitted rules.

\paragraph{Set-valued decisions and higher-order compatibility.}

Reject-option classification permits abstention
\citep{chow1970optimum,bartlett2008classification,charoenphakdee2021classification},
while set-valued prediction controls error, coverage, or output-set size
\citep{denis2017confidence,sadinle2019least,bates2021risk,vovk2005algorithmic}.
These methods constrain predictions for each input. Our uniform
compatibility criterion instead requires one continuation distribution to
satisfy a regret tolerance at every state in a deterministic interface cell.
Pairwise intersections can all be nonempty while the joint intersection
is empty.

Helly's theorem bounds the size of a minimal convex witness
\citep{helly1923mengen}, and minimax duality relates its margin to a
worst-case state mixture \citep{sion1958general}.
Weak hypergraph coloring characterizes how many labels separate all
minimal conflicts \citep{berge1989hypergraphs}.
We connect these classical results to exact risk for the finite interface and the
capacity of unrestricted deterministic interfaces, using conflicts over the entire audit domain, not only within existing
cells.
On a finite audit domain, grouping by optimal action is sufficient when each
state has a unique optimal pure continuation and the nonnegative tolerance
is below half the minimum second-best regret margin
(Proposition~\ref{prop:unique-optimum-reduction}).
Outside that regime, a pairwise conflict graph need not describe joint
compatibility.

\paragraph{SAE interfaces and latent safety control.}

Sparse coding and sparse autoencoders learn dictionaries with sparse codes
\citep{olshausen1996emergence,makhzani2013ksparse}.
Language-model studies examine superposition \citep{elhage2022superposition},
interpretable dictionary features
\citep{bricken2023monosemanticity,cunningham2024sparse},
SAE scaling \citep{gao2024scaling,templeton2024scaling},
public dictionaries \citep{lieberum2024gemmascope},
and concept geometry \citep{bhalla2026manifolds}.
Our target is decision information in a selected support code or coordinate
set, not reconstruction from the full SAE dictionary.

Safe reinforcement learning develops constrained policies
\citep{garcia2015survey,achiam2017constrained,chow2018lyapunov},
shields \citep{alshiekh2018shielding,carr2023shielding},
and reachability or predictive safety filters
\citep{fisac2019general,wabersich2021predictive,brunke2022safe}.
Related activation studies extract latent knowledge
\citep{burns2023discovering} or steer generation
\citep{li2023inference,zou2023representation,karnik2025reachability}.
We audit the specified interface before selecting a downstream rule:
a positive gap on the same finite loss table cannot be removed by changing
that rule alone. The result is complementary to designing or evaluating a particular
monitor or steering method.

\paragraph{Evaluation and generative safety.}

Language-model evaluators score open-ended outputs and compare candidate
responses \citep{liu2023geval,zheng2023judging,kim2024prometheus};
reward-model benchmarks assess preference judgments
\citep{lambert2025rewardbench}.
Judging studies identify biases \citep{wang2024fair} and develop calibrated
tests with imperfect evaluators \citep{feng2026noisy}.
Risk-control procedures calibrate predictions from finite data
\citep{angelopoulos2025learntest}, while concentration inequalities provide
bounds for bounded loss estimates \citep{hoeffding1963probability}.
Our certificate concerns a witness selected after searching state groups.
Valid simultaneous loss bounds or independent certification data preserve
the guarantee after selection.

Toxicity benchmarks, red teaming, principle-based training, and safeguard
classifiers evaluate or modify generative behavior
\citep{gehman2020realtoxicity,perez2022redteaming,bai2022constitutional,inan2023llamaguard}.
Our question is which safe and useful continuations the declared
observation can support, not only whether an individual output is safe.
The human study tests the finite-interface finding under a separate loss
construction; it does not establish interchangeability of automatic and
human scores.
Appendix~\ref{app:extended-related-work} provides the extended comparison.

\section{Discussion}
\label{sec:discussion}

\paragraph{What the risk comparison identifies.}
For fixed losses and continuations, a positive information gap
$\Dinterface$ rules out matching the full state optimum by changing only
the controller. The accessibility gap $\Drule$ can decrease when the
allowed rule family expands. The fitting gap $\Dtrain$ may decrease with
better estimation or optimization, but more data alone need not reduce
it in every run. These statements compare specified optima; they do not
identify every cause of poor deployed performance.

\paragraph{Encoding matters, not only the number of bits.}
With three continuations, an unrestricted encoder can transmit an optimal
action index using three labels, which fit in two bits. The positive gaps
for selected SAE indicators do not prove that every pair of features
fails, or that two bits are inherently insufficient. They identify a
failure of the selected partition. Comparing the full state directly
with the final indicators does not locate whether a distinction was lost
in the model activation, SAE encoding, coordinate selection, or
thresholding. Those stages require separate comparisons.

\paragraph{What the experiments support.}
Supplying policy text does not eliminate the observed gaps. Human and
automatic losses give the same sign in the tested SmolLM2 settings but
often differ on individual scores and optimal action sets. The losses
measure rated safety and quality, not unsafe event probabilities. The
controlled group example demonstrates a possible failure of pairwise
checks; the unsuccessful natural language replication gives no estimate
of its prevalence.

The model families use different generation and scoring protocols. Within
each SmolLM2 or Phi SAE pair, only initialization and training order vary;
the activation corpus is shared. Human ratings cover SmolLM2 only, and
Phi reuses the policy collections. The original human sample includes
fitting areas, whereas the locked collection is held out by policy area.
These comparisons support the specific configurations, not all models,
encoders, or policy distributions.

\paragraph{Limits of inference from a fixed table.}
The empirical minima use recorded response scores, not unknown losses
on future responses. Minimizing noisy scores can change the apparent
gap even when individual scores are unbiased. Resampling policy areas
measures variation under different weights on the recorded tables;
it does not control generation or judge error. Likewise, a zero gap on
distinct evaluation codes can result from unrestricted lookup and gives
no guarantee on unseen inputs. The additional assumptions for statistical
certification appear in Appendices~\ref{app:continuous-certificates}
and~\ref{app:extended-witness-certification}.

\section{Conclusion}
\label{sec:conclusion}

Decision sufficiency compares what an interface permits with what the full
state permits under the same loss and continuations. Exact optimization
for finite interfaces separates that information gap from restrictions
on the controller and imperfect fitting. Minimal conflicts identify groups
that cannot share an acceptable choice; their complete set determines the
output count for an unrestricted deterministic interface.

The tested SAE indicators retain positive gaps with policy supplied,
including under SmolLM2 human ratings. A controlled example shows why
pairwise checks can miss a group conflict, but the natural language
candidate does not replicate. The conclusions concern the specified
responses and loss tables. Within that scope, they distinguish limitations
of the chosen input from errors a different controller could avoid.

\section*{Reproducibility Statement}
\label{sec:reproducibility}

The arXiv source package contains the manuscript, bibliography, figures,
and LaTeX tables. Experiment code and cached records are retained in a
separate author working archive and are not included in the arXiv source
package. The working archive contains SmolLM2 ratings and adjudication
inputs, cached SmolLM2 and Phi records, and CPU analysis scripts. Its
96-state capacity records include the matrix, labelings, margins, and
independent feasibility checks. Qwen results with policy supplied use
rounded notebook summaries; its full precision historical arrays and
42-state matrix are unavailable. Large SAE checkpoints and activation
shards are not included. Appendix~\ref{app:reproduction-inventory}
lists the inputs and commands. Analysis scripts do not create human
labels or retrain models.

\bibliographystyle{tmlr}
\bibliography{references}

\clearpage
\appendix
\section{Additional Risk Results and Proofs}
\label{app:proofs-risk-interface}

The results below establish the risk comparisons, their posterior interpretation,
and the calculations for affine rules and uncertain losses.
The audit setting and all decision permissions are those of
Section~\ref{sec:problem-formulation}.

\subsection{Proofs of the main risk comparisons}
\begin{proof}[Proof of Theorem~\ref{thm:nested-risks}]
For every state $u$ and choice $q\in\Delta_m$,
$\ell^\star(u)\le\ell(u,q)$. Taking expectations and then an infimum
proves $\Rfull\le\Rinterface$. The inclusions
$\widehat\delta\in\Controllers\subseteq\mathfrak D_Z$ give
$\Rinterface\le\Rrestricted\le\Rlearn$. Adding and subtracting
$\Rinterface$ and $\Rrestricted$ proves the decomposition. Every step
uses the same loss and audit distribution.
\end{proof}

\begin{proof}[Proof of Theorem~\ref{thm:finite-interface-optimum}]
A selector observing state $u_i$ can choose a minimizing column of
$L_{ij}$, giving $\Rfull=\sum_i\mu_i\min_jL_{ij}$. A rule assigning
$q_r\in\Delta_m$ to output $z_r$ has risk
\[
 \sum_{i,r,j}\mu_iP_{ir}L_{ij}q_{rj}
 =\sum_{r,j}C_{rj}q_{rj}.
\]
Each $q_r$ has its own simplex constraint. Its linear objective has
minimum $\min_jC_{rj}$, attained by any distribution supported on the
minimizing columns. Summing over outputs proves the formula. A row with
zero observation probability has zero cost for every column.
\end{proof}

\begin{proof}[Justification of Equation~\eqref{eq:policy-aware-risk}]
On the finite audit, the complete observation $(Z,S)$ has a finite alphabet.
Apply Theorem~\ref{thm:finite-interface-optimum} to its joint observation
cells $C_{z,s}$. Their unnormalized costs are
$\mathbb E[\Indicator\{Z=z,S=s\}\ell_j(U)]$, giving the displayed sum of
minima. The full-state oracle cannot be outperformed by either observation
rule. Moreover, any code-only rule can be implemented by a policy-aware
rule that ignores $S$, so $\Rfull\le R_{Z,S}^\star\le R_Z^\star$.
Subtracting the same full-state risk gives
$0\le\Delta_{Z,S}\le\Delta_Z$.
\end{proof}

\subsection{Posterior conflict and zero information loss}
\label{app:posterior-risk}
\paragraph{Residual uncertainty after observing the interface.}
Let $P_Z$ be the marginal law of $Z$, and let
$\mu_z:=P(U\in\cdot\mid Z=z)$ be a regular conditional state distribution.
Assumption~\ref{ass:basic-audit-setting} ensures that such a conditional
distribution exists. The posterior expected regret of continuation $\pi_j$
and the smallest posterior expected regret are
\[
 \bar e_j(z):=\int e_j(u)\,\mu_z(du),
 \qquad
 \ConditionalConflict(z):=
 \min_{q\in\Delta_m}\sum_{j=1}^m q_j\bar e_j(z)
 =\min_{j\in[m]}\bar e_j(z).
\]
The selector can choose a different continuation for each observation $z$,
but it must use one distribution for the full states remaining possible
under $\mu_z$. The posterior conflict $\ConditionalConflict(z)$ measures
the resulting minimum average regret.

\begin{theorem}[Posterior-conflict decomposition]
\label{thm:posterior-conflict}
Under Assumption~\ref{ass:basic-audit-setting},
\[
 \Dinfo_\sigma(K;\mu)=\mathbb E_Z[\ConditionalConflict(Z)].
\]
The information gap is zero if and only if the nonnegative posterior conflict
$\ConditionalConflict(Z)$ is zero $P_Z$-almost surely.
\end{theorem}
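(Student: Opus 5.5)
We prove the identity by disintegrating the joint law $\mu(du)K(dz\mid u)$ along $Z$ and then optimizing pointwise in $z$. Since $\States$ and $\RepSpace$ are standard Borel, the joint law on $\States\times\RepSpace$ factors as $P_Z(dz)\,\mu_z(du)$ with $\mu_z$ a regular conditional distribution. For any measurable rule $\delta\in\mathfrak D_Z$, Fubini (regrets are bounded in $[0,1]$ and measurable) gives
\[
 \mathbb E[e(U,\delta(Z))]
 =\int\sum_{j=1}^m\delta_j(z)\,\bar e_j(z)\,P_Z(dz),
\]
because $e(u,q)=\sum_jq_je_j(u)$ is linear in $q$ and $\delta(z)$ does not depend on $u$ once $z$ is fixed. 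Here $\bar e_j$ is a measurable function of $z$, by measurability of regular conditional kernels.

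\textbf{Lower bound.} Pointwise, $\sum_j\delta_j(z)\bar e_j(z)\ge\min_j\bar e_j(z)=\ConditionalConflict(z)$. Integrating and taking the infimum over $\delta$ gives $\Dinfo_\sigma(K;\mu)\ge\mathbb E_Z[\ConditionalConflict(Z)]$. Recall from Section~\ref{sec:problem-formulation} that $\Dinfo_\sigma=\inf_\delta\mathbb E[e(U,\delta(Z))]$, which equals $\Rinterface-\Rfull$ because $\Rfull$ is finite.

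\textbf{Upper bound.} We construct a measurable pure selector. Let $j^\star(z)$ be the smallest index attaining $\min_j\bar e_j(z)$. The set $\{z:j^\star(z)=k\}$ is a finite Boolean combination of the measurable sets $\{\bar e_k\le\bar e_j\}$ and $\{\bar e_k<\bar e_j\}$, so $j^\star$ is measurable. Hence $\delta^\star(z):=\text{the point mass at }j^\star(z)$ lies in $\mathfrak D_Z$ and attains $\mathbb E_Z[\ConditionalConflict(Z)]$. This shows both that the infimum is attained and that equality holds.

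\textbf{Zero-gap characterization.} Each $e_j\ge0$, so $\bar e_j\ge0$ and therefore $\ConditionalConflict\ge0$. A nonnegative measurable function has zero integral if and only if it vanishes $P_Z$-almost surely.

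The main obstacle is the measure-theoretic bookkeeping: justifying that the expectation under the joint law equals the iterated integral against $\mu_z$, and that the argmin selector is measurable. Both are handled as above: the first by the standard disintegration theorem on standard Borel spaces, the second by the finiteness of $[m]$ with a fixed tie-breaking order. The result does not depend on the version of $\mu_z$ chosen, since two versions agree $P_Z$-almost surely.
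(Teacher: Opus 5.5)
Your proposal is correct and follows essentially the same route as the paper: write the regret as an iterated integral against the posterior $\mu_z$, bound the integrand pointwise by $\min_j\bar e_j(z)$, attain that bound with the smallest-index measurable argmin selector, and obtain the zero-gap equivalence from nonnegativity. The only difference is that you spell out the measure-theoretic steps the paper leaves implicit, namely the disintegration on standard Borel spaces and the measurability of the tie-broken argmin.
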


\begin{proposition}[Zero-gap characterization]
\label{prop:zero-gap-characterization}
Under Assumption~\ref{ass:basic-audit-setting}, the following are equivalent:
\begin{enumerate}
 \item The interface information deficiency $\Dinfo_\sigma(K;\mu)$ is zero.
 \item For $P_Z$-almost every observation $z$, some continuation index
       $j\in[m]$ satisfies
       $\mu_z(\{u:e_j(u)=0\})=1$.
 \item There exists a measurable pure selector $j^\star:\RepSpace\to[m]$
       such that $e_{j^\star(Z)}(U)=0$ almost surely.
\end{enumerate}
\end{proposition}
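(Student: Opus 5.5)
The plan is to prove the cycle $1\Rightarrow 2\Rightarrow 3\Rightarrow 1$, using Theorem~\ref{thm:posterior-conflict} for the step out of item~1. The implication $3\Rightarrow 1$ is immediate. The pure selector $j^\star$ induces the measurable rule $\delta(z):=\mathbf e_{j^\star(z)}$, a vertex of $\Delta_m$. Its regret is $e(U,\delta(Z))=e_{j^\star(Z)}(U)=0$ almost surely. Since $\Dinfo_\sigma(K;\mu)$ is an infimum of nonnegative expected regrets and this rule attains zero, the gap is zero.

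For $1\Rightarrow 2$, invoke Theorem~\ref{thm:posterior-conflict}. A zero gap gives $\ConditionalConflict(Z)=\min_j\bar e_j(Z)=0$ for $P_Z$-almost every $z$. Fix such a $z$ and a minimizing index $j$, so that $\int e_j(u)\,\mu_z(du)=0$. Because $e_j\ge0$, this forces $\mu_z(\{u:e_j(u)=0\})=1$, which is item~2. The set $\{e_j=0\}$ is measurable because the losses are measurable and $\ell^\star$ is a finite minimum of measurable functions.

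For $2\Rightarrow 3$, the main obstacle is measurability of the selector. Define $A_j:=\{z:\mu_z(\{e_j=0\})=1\}$. Each $A_j$ is measurable because $z\mapsto\mu_z(B)$ is measurable for a regular conditional distribution. Item~2 says $P_Z(\bigcup_jA_j)=1$. Set $j^\star(z)$ to the smallest $j$ with $z\in A_j$, and set $j^\star(z):=1$ off the union. Since $j^\star$ is built from finitely many measurable sets, it is measurable.

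To verify item~3, disintegrate the joint law as $P_Z(dz)\mu_z(du)$. This gives
\[
 P\bigl(e_{j^\star(Z)}(U)\neq0\bigr)
 =\int\mu_z\bigl(\{u:e_{j^\star(z)}(u)\neq0\}\bigr)\,P_Z(dz)=0,
\]
because the integrand vanishes on $\bigcup_jA_j$, which has full $P_Z$-measure. The integrand is jointly measurable since it equals $\sum_j\Indicator\{j^\star(z)=j\}\,\mu_z(\{e_j\ne0\})$. This completes the cycle.

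The step needing the most care is this measurability bookkeeping. The standard Borel assumption in Assumption~\ref{ass:basic-audit-setting} supplies the regular conditional distribution and its disintegration of the joint law, so no measurable-selection theorem beyond finite minimum-index selection is needed.
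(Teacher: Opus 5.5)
Your proof is correct and follows essentially the paper's route: the posterior-conflict theorem plus nonnegativity of regrets gives $1\Rightarrow2$, a smallest-index pure selector gives item~3, and the induced vertex rule with nonnegative expected regret gives $3\Rightarrow1$. The differences are minor. You close a single cycle, while the paper proves $1\Leftrightarrow2$ directly. You also check a.s.\ zero regret of your selector by disintegrating against $\mu_z$, whereas the paper deduces it from the fact that the smallest-minimizer rule attains $\mathbb E[\ConditionalConflict(Z)]=0$.
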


Zero average information loss therefore requires a common optimal
continuation under almost every posterior distribution, not merely a
potentially different optimum for each hidden state. The later uniform
criterion uses a nonnegative tolerance $\varepsilon$ and requires one
continuation distribution to have regret at most $\varepsilon$ at
\emph{every} listed state in a cell. Average sufficiency and uniform
compatibility are not interchangeable at positive tolerance.

\begin{proposition}[Data processing for representation interfaces]
\label{prop:data-processing}
Let $G(dy\mid z)$ be a Markov kernel into a standard Borel observation space,
and obtain $Y$ from $Z$ without additional information about $U$, so that
$U\to Z\to Y$ is a Markov chain. Under the same audit distribution,
continuation library, and loss,
\[
 \Dinfo_\sigma(G\circ K;\mu)\ge\Dinfo_\sigma(K;\mu).
\]
\end{proposition}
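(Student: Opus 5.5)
The approach is to argue by simulation: any rule that reads $Y$ can be turned into a rule that reads $Z$ with exactly the same risk. The inequality then follows by taking infima over rules, because the subtracted full-state term $\Rfull=\mathbb E[\ell^\star(U)]$ depends only on $\mu$ and the losses, and is therefore the same on both sides.

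Fix a measurable rule $\delta:\mathcal Y\to\Delta_m$ acting on the garbled observation. Define the composed rule on $\RepSpace$ by
\[
 \tilde\delta(z):=\int\delta(y)\,G(dy\mid z),
\]
taken componentwise in $\Delta_m$. Three facts about $\tilde\delta$ need to be checked.

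\begin{itemize}
 \item $\tilde\delta$ is measurable. This holds because $G$ is a Markov kernel and each $\delta_j$ is bounded and measurable.
 \item $\tilde\delta(z)\in\Delta_m$ for every $z$. The components are nonnegative and integrate to one.
 \item $\tilde\delta$ has the same expected regret as $\delta$:
 \[
  \mathbb E[e(U,\delta(Y))]=\mathbb E[e(U,\tilde\delta(Z))].
 \]
\end{itemize}

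The third fact is the key step and uses the Markov chain structure. The joint law of $(U,Z,Y)$ is $\mu(du)K(dz\mid u)G(dy\mid z)$. Regret $e(u,q)=\sum_jq_je_j(u)$ is linear in $q$ and bounded, since each $e_j$ lies in $[0,1]$. By Fubini/Tonelli,
\[
 \mathbb E[e(U,\delta(Y))]
 =\int\!\!\int\sum_je_j(u)\Bigl(\int\delta_j(y)\,G(dy\mid z)\Bigr)K(dz\mid u)\,\mu(du),
\]
and the inner bracket is exactly $\tilde\delta_j(z)$.

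Consequently every value $\mathbb E[e(U,\delta(Y))]$ attainable through $Y$ is also attainable through some rule in $\mathfrak D_Z$. Taking the infimum over $\delta$ gives $\Dinfo_\sigma(G\circ K;\mu)\ge\Dinfo_\sigma(K;\mu)$.

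The step I expect to need the most care is the legitimacy of this construction. It has two parts.

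\begin{itemize}
 \item The composed channel $G\circ K$ must be the law of $Y$ given $U$, so that the risk for $Y$ is computed under the kernel $\int G(\cdot\mid z)K(dz\mid u)$. This is the precise content of the assumption that $Y$ is obtained from $Z$ without additional information about $U$.
 \item The loss must not depend on the selector's observation randomness. Assumption~\ref{ass:basic-audit-setting} supplies this, since conditional on $u$ and $\pi_j$ the execution loss is $\ell_j(u)$.
\end{itemize}

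Both parts are stated in the setting, so the argument reduces to the measurability and Fubini checks above. An alternative route is through Theorem~\ref{thm:posterior-conflict} together with convexity of $\min_j$ (a Jensen-type argument on posteriors), but the simulation argument avoids regular conditional distributions for $Y$ and is the one I would write.
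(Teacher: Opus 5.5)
Your proposal is correct and follows essentially the same simulation argument as the paper: integrate the $Y$-rule against $G(dy\mid z)$ to get a rule in $\mathfrak D_Z$ with identical risk, using the Markov chain $U\to Z\to Y$ and linearity in the choice probabilities, then take infima. The only cosmetic difference is that you work with expected regret throughout instead of expected loss minus the common full-state risk; the two are equivalent by definition.
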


Post-processing can increase the best attainable loss but cannot reduce it.
In particular, a positive gap after analysis-only quantization does not by
itself establish a positive gap for the original unquantized observation.

\begin{proof}[Proof of Theorem~\ref{thm:posterior-conflict}]
For any measurable rule $\delta$, conditional expectation gives
\[
 \mathbb E[e(U,\delta(Z))]
 =\mathbb E_Z\!\left[\sum_j\delta_j(Z)\bar e_j(Z)\right]
 \ge\mathbb E_Z[\ConditionalConflict(Z)].
\]
The finitely many conditional regrets $\bar e_j$ have measurable versions.
Choosing the smallest minimizing index at each observation defines a
measurable pure rule and attains equality. Nonnegativity of
$\ConditionalConflict$ then makes its expectation zero exactly when
$\ConditionalConflict(Z)=0$ almost surely.
\end{proof}

\begin{proof}[Proof of Proposition~\ref{prop:zero-gap-characterization}]
The posterior formula makes zero deficiency equivalent to
$\min_j\bar e_j(z)=0$ for almost every observation $z$. Each regret is
nonnegative, so $\bar e_j(z)=\int e_j(u)\,\mu_z(du)=0$ exactly when
$e_j(u)=0$ for $\mu_z$-almost every state. The first two statements are therefore equivalent. Choosing the smallest minimizing index as in
the posterior proof gives a measurable selector with zero regret almost
surely. Conversely, any such selector has expected regret zero, and no
rule can have negative regret.
\end{proof}

\begin{proof}[Proof of Proposition~\ref{prop:data-processing}]
For any measurable rule $\eta$ using $Y$, define
\[
 \delta_j(z):=\int\eta_j(y)G(dy\mid z),\qquad j\in[m].
\]
These components are measurable, nonnegative, and sum to one, so
$\delta\in\mathfrak D_Z$. The Markov relation $U\to Z\to Y$ and
linearity of loss give
$\mathbb E[\ell(U,\delta(Z))]=\mathbb E[\ell(U,\eta(Y))]$.
Every risk available from $Y$ is therefore available from $Z$.
Taking infima and subtracting the same full state risk proves the claim.
\end{proof}

\paragraph{Deterministic cells and uniform compatibility.}
For a deterministic interface $f:\AuditDomain\to\FiniteAlphabet$, define
the occupied cell $C_r:=\{u_i:f(u_i)=z_r\}$ for each nonempty cell.
Its robust regret of a shared continuation is
\[
 \Gamma(C_r):=\min_{q\in\Delta_m}\max_{u\in C_r}e(u,q).
\]
Unlike posterior conflict, which averages over states, $\Gamma(C_r)$
requires one continuation distribution to control the worst regret in the cell.

\begin{corollary}[Finite-cell zero-gap criterion]
\label{cor:finite-cell-zero-gap}
If the deterministic finite audit has strictly positive state weights,
$\Dinfo_\sigma(K;\mu)=0$ if and only if every occupied cell $C_r$ has a
pure continuation optimal at every state in $C_r$. Equivalently,
$\Gamma(C_r)=0$ for every occupied cell.
\end{corollary}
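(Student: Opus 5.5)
The plan is to specialize Theorem~\ref{thm:posterior-conflict} (or directly Theorem~\ref{thm:finite-interface-optimum}) to the deterministic finite audit. A deterministic interface $f$ induces the channel $P_{ir}=\Indicator\{f(u_i)=z_r\}$. An output $z_r$ then has probability $p_r=\sum_{u_i\in C_r}\mu_i$, which is strictly positive exactly when the cell $C_r$ is occupied, because all state weights are positive. The posterior $\mu_{z_r}$ is the restriction of $\mu$ to $C_r$, renormalized, and it gives positive mass to every state of $C_r$.

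First I will write the gap cell by cell:
\[
 \Dinfo_\sigma(K;\mu)=\sum_{r:\,p_r>0}\min_{j\in[m]}\sum_{u_i\in C_r}\mu_ie_j(u_i),
\]
which follows from Theorem~\ref{thm:finite-interface-optimum} after subtracting $\Rfull=\sum_i\mu_i\ell^\star(u_i)$ cell by cell. This is also Equation~\eqref{eq:cell-average-gap}. Each summand is a minimum of nonnegative quantities, so the gap is zero if and only if every occupied cell has an index $j$ with $\sum_{u_i\in C_r}\mu_ie_j(u_i)=0$. Since $\mu_i>0$ and $e_j\ge0$, this holds if and only if $e_j(u_i)=0$ for all $u_i\in C_r$, that is, $\pi_j$ is optimal at every state of the cell. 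This proves the first equivalence and is the finite version of Proposition~\ref{prop:zero-gap-characterization}(2).

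For the margin statement, suppose a common pure optimum $j$ exists for $C_r$. Then the point mass $q=\mathbf e_j$ has $\max_{u\in C_r}e(u,q)=0$, so $\Gamma(C_r)=0$, since $\Gamma\ge0$. Conversely, suppose $\Gamma(C_r)=0$ and take a minimizer $q$. It satisfies $\sum_jq_je_j(u)=0$ for every $u\in C_r$. Pick any $j$ with $q_j>0$. Nonnegativity of the regrets forces $e_j(u)=0$ for all $u\in C_r$, so $\pi_j$ is a common pure optimum. Alternatively, I could invoke Theorem~\ref{thm:average-robust-bridge} with the positive weights $w^{(r)}$, for which $\beta>0$, to get $\kappa(C_r,w^{(r)})=0\iff\Gamma(C_r)=0$.

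There is no real obstacle here. The only point that needs care is the role of strict positivity. It guarantees that every occupied cell has positive mass, so no occupied cell is dropped from the sum. It also guarantees that zero weighted regret forces zero regret at each individual state. Without it, a zero-weight state could carry positive regret while the gap stays zero, and I will remark on this to justify the hypothesis.
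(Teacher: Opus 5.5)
Your proposal is correct and follows essentially the paper's proof. The paper gets the forward direction by citing the zero-gap characterization (Proposition~\ref{prop:zero-gap-characterization}), whereas you use the explicit cell-sum formula; both rest on the same positivity argument. Your argument that $\Gamma(C_r)=0$ yields a common pure optimum, taking any index in the support of a minimizing $q$, is identical to the paper's.
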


Positive weights make the criterion apply to every listed state, rather than
only to states of positive probability. At positive regret tolerance,
randomization may improve a shared decision.

\begin{proof}[Proof of Corollary~\ref{cor:finite-cell-zero-gap}]
Positive state weights give every listed state positive conditional
probability in its occupied cell. The zero-gap characterization therefore
requires a common optimal pure continuation in each cell, which gives
$\Gamma(C_r)=0$. Conversely, a minimizing distribution for
$\Gamma(C_r)=0$ exists by compactness. At each state,
$\sum_jq_je_j(u)=0$; nonnegative regrets force every index with $q_j>0$
to be optimal throughout the cell. At least one such index exists.
Choosing a common optimum in each cell gives zero information gap.
\end{proof}

\subsection{Exact restricted affine rules}
\label{app:restricted-affine}
\paragraph{Affine probabilities.}
Return to a declared finite observation alphabet. Let
$\varphi:\FiniteAlphabet\to\mathbb R^{d_\varphi}$ be a fixed visible
feature map, where $d_\varphi$ is its dimension, and set
$\varphi_r:=\varphi(z_r)$. An affine randomized rule assigns
$q_r=A\varphi_r+b$, with $A\in\mathbb R^{m\times d_\varphi}$ and
$b\in\mathbb R^m$. Its parameter bound satisfies
$1/m\le B_{\mathrm{par}}<\infty$; $B_{\mathrm{par}}$ is distinct from
the SAE feature budget $B$.

\begin{proposition}[Restricted affine-randomized risk is a linear program]
\label{prop:affine-randomized-lp}
For the affine class with simplex constraints on the entire declared alphabet,
\[
 \begin{aligned}
 \Rrestricted=\min_{A,b,\{q_r\}}\quad
 &\sum_{r=1}^L\sum_{j=1}^m C_{rj}q_{rj}\\
 \text{subject to}\quad
 &q_r=A\varphi_r+b, &&r\in[L],\\
 &q_r\in\Delta_m, &&r\in[L],\\
 &|A_{jk}|\le B_{\mathrm{par}},\quad |b_j|\le B_{\mathrm{par}},
 &&j\in[m],\ k\in[d_\varphi].
 \end{aligned}
\]
The uniform rule $A=0$, $b_j=1/m$ is feasible, and the minimum is attained.
\end{proposition}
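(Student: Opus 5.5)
The plan is to show that the restricted risk, defined as an infimum over the affine class $\Controllers$, equals the displayed optimization. The class is understood as all rules $z_r\mapsto A\varphi_r+b$ whose parameters satisfy the box bounds and whose outputs lie in $\Delta_m$ at every declared output. Two facts then remain: the program is a linear program with a nonempty, compact feasible set, and so its minimum is attained.

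First, equality of the values. For any feasible $(A,b)$, the induced rule $\delta(z_r)=q_r$ belongs to $\mathfrak D_Z$, because every $q_r$ lies in the simplex. By the computation in the proof of Theorem~\ref{thm:finite-interface-optimum}, its risk is $\sum_{r,j}C_{rj}q_{rj}$. Conversely, every rule in the affine class arises from such a feasible tuple. The infimum over the class and the infimum of the program therefore coincide.

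Second, the program is linear. The variables are $(A,b,\{q_r\})$. The objective is linear in $q$, the equalities $q_r=A\varphi_r+b$ are linear, and the simplex conditions $q_{rj}\ge0$, $\sum_jq_{rj}=1$ are linear. Each box bound $|A_{jk}|\le B_{\mathrm{par}}$ is a pair of linear inequalities, and likewise for $b_j$.

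Third, feasibility of the uniform rule. Take $A=0$ and $b_j=1/m$. Then $q_r=(1/m,\ldots,1/m)\in\Delta_m$ for every $r$, and the box constraints hold because $1/m\le B_{\mathrm{par}}$ by assumption.

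Fourth, attainment, which is the only step needing slight care. The feasible set is closed, since it is defined by finitely many nonstrict linear constraints. It is also bounded: $A$ and $b$ lie in a box, and each $q_r$ lies in the simplex. A nonempty compact set and a continuous linear objective give a minimizer. Alternatively, the fundamental theorem of linear programming applies, since a feasible LP with a bounded feasible region has an optimal vertex.

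The main obstacle is bookkeeping in the equivalence step. The simplex constraint must be imposed on the entire declared alphabet, including outputs with $p_r=0$. For those rows $C_{rj}=0$, so they do not affect the objective, but they do restrict $(A,b)$. The LP therefore matches the restricted class exactly and not a relaxation of it. No other difficulty is expected.
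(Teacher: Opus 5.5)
Your proposal is correct and follows the paper's own argument. You substitute $q_r=A\varphi_r+b$ into the finite risk $\sum_{r,j}C_{rj}q_{rj}$, check that $A=0$, $b_j=1/m$ is feasible because $B_{\mathrm{par}}\ge 1/m$, and get attainment from compactness of the feasible set bounded by the parameter boxes and simplex constraints; your remark that zero-probability rows have $C_{rj}=0$ but still constrain $(A,b)$ correctly explains why the simplex constraints cover the whole declared alphabet.
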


Fitting replaces $C_{rj}$ by training-sample costs, changing the objective
but not the feasible observation domain. Simplex constraints must still
hold on declared evaluation codes absent from training. Observations outside
that declared domain require a separately specified feasible extension.
For a policy-aware audit, the cost table and visible features must use
$(Z,S)$ rather than $Z$ alone. The linear program uses standard
convex-optimization tools \citep{boyd2004convex};
Algorithm~\ref{alg:restricted-rule-audit} separates its rule-class and
fitting gaps.

\begin{proof}[Proof of Proposition~\ref{prop:affine-randomized-lp}]
Substituting $q_r=A\varphi_r+b$ into the finite risk gives the stated
linear objective and linear constraints. The choice $A=0$, $b_j=1/m$
is feasible because $B_{\rm par}\ge1/m$. Parameter and simplex bounds
make the feasible set compact, so the continuous objective attains its
minimum. Replacing evaluation costs by training costs leaves the same
feasible family of rules.
\end{proof}

\subsection{Effect of errors in the loss table}
\label{app:loss-perturbation}
\paragraph{Group margin.}
For any nonempty state group $W\subseteq\AuditDomain$, define
$\Gamma(W):=\min_{q\in\Delta_m}\max_{u\in W}e(u,q)$ using the regrets
of the specified loss table.

\begin{proposition}[Loss-table perturbation]
\label{prop:loss-table-perturbation}
Fix the finite audit weights and observation channel. If two loss matrices
in $[0,1]^{n\times m}$ differ by at most $r_\ell\ge0$ entrywise, their full-state
risks and optimal interface risks each differ by at most $r_\ell$. Their information
gaps differ by at most $2r_\ell$, and their robust margins $\Gamma(W)$ differ
by at most $2r_\ell$ for every nonempty state group $W$.
\end{proposition}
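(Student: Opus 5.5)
Let $L=(L_{ij})$ and $L'=(L'_{ij})$ be the two loss matrices, with $|L_{ij}-L'_{ij}|\le r_\ell$ for all $i,j$. Primes denote every quantity computed from $L'$. The plan is to prove the $r_\ell$ bounds for the two risks directly from the closed forms in Theorem~\ref{thm:finite-interface-optimum}. The $2r_\ell$ bounds then follow from two facts: regrets change by at most $2r_\ell$ entrywise, and min and max are $1$-Lipschitz in the sup norm.

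\emph{Full-state risk.} For each state $i$, $|\min_jL_{ij}-\min_jL'_{ij}|\le\max_j|L_{ij}-L'_{ij}|\le r_\ell$, because a minimum over a finite index set is $1$-Lipschitz in the sup norm. Averaging against the weights $\mu_i$, which sum to one, gives $|\Rfull-\Rfull'|\le r_\ell$.

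\emph{Interface risk.} The costs $C_{rj}=\sum_i\mu_iP_{ir}L_{ij}$ satisfy $|C_{rj}-C'_{rj}|\le r_\ell p_r$, where $p_r=\sum_i\mu_iP_{ir}$. Applying the same Lipschitz fact to each row minimum and summing gives
\[
 |\Rinterface-\Rinterface{}'|\le\sum_r r_\ell p_r=r_\ell ,
\]
since each row of $P$ sums to one and so $\sum_r p_r=1$. Equivalently, for any fixed rule $\delta$ the risks $\mathbb E[\ell(U,\delta(Z))]$ under the two tables differ by at most $r_\ell$, and taking infima preserves this. The gap $\Dinterface=\Rinterface-\Rfull$ is a difference of two quantities, each perturbed by at most $r_\ell$, so $|\Dinterface-\Dinterface{}'|\le 2r_\ell$ by the triangle inequality.

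\emph{Group margins.} Here the work is at the level of regrets. We have $e_j(u_i)=L_{ij}-\min_kL_{ik}$, and both terms move by at most $r_\ell$, so $|e_j(u_i)-e'_j(u_i)|\le 2r_\ell$. For any $q\in\Delta_m$, the regret $e(u,q)=\sum_jq_je_j(u)$ is a convex combination, so $|e(u,q)-e'(u,q)|\le2r_\ell$ at every state. Taking the maximum over $u\in W$ preserves this bound, and so does taking the minimum over $q\in\Delta_m$; both minima are attained by compactness. Hence $|\Gamma(W)-\Gamma'(W)|\le 2r_\ell$.

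The only delicate point is the factor in the margin bound. The margin is built from regrets rather than losses, and the minimum subtracted inside each regret is itself perturbed, so one cannot hope for better than $2r_\ell$ by this route. For the interface risk, the point to check is that the $p_r$-weighted sum of perturbations totals exactly $r_\ell$, which holds because $P$ is row-stochastic and the $\mu_i$ sum to one. Everything else is the $1$-Lipschitz property of finite min and max.
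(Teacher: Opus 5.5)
Your proposal is correct and follows essentially the same route as the paper. The paper bounds $\Rfull$ and $\Rinterface$ via infima of objectives that are uniformly within $r_\ell$ on a common feasible set; your explicit row-minimum computation with $\sum_r p_r=1$ is a concrete instance of that step, which you also note. For the margin, you use, as the paper does, that regrets move by at most $2r_\ell$ and that convex combination, maximization over $W$, and minimization over $\Delta_m$ all preserve this bound.
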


An exact finite-table optimum and an accurate estimate of expected outcome
loss are separate requirements. The perturbation radius $r_\ell$ transfers
accuracy of the loss entries to accuracy of the resulting risks; a positive
plug-in gap without such error control does not certify a positive gap in expected loss.

\begin{proof}[Proof of Proposition~\ref{prop:loss-table-perturbation}]
Let the two loss tables be $\LossMatrix$ and
$\widetilde{\LossMatrix}$. At every state, a fixed continuation mixture
has losses differing by at most $r_\ell$. The same holds after averaging
over the fixed observation channel and state weights. For two objectives
whose pointwise difference is at most $r_\ell$ on the same feasible set,
their infima differ by at most $r_\ell$. Apply this fact to the full state
and interface optima; their difference changes by at most $2r_\ell$.

Each regret subtracts a row minimum from a loss entry, so regret entries
differ by at most $2r_\ell$. Convex combinations, a maximum over a fixed
group $W$, and a minimum over $\Delta_m$ preserve this uniform bound.
Hence the group margins differ by at most $2r_\ell$.
\end{proof}

\section{Proofs for Group Conflicts and Unique Optimal Choices}
\label{app:proofs-higher-order}

\subsection{Proofs for shared decisions and minimal conflicts}
\paragraph{Existence, compatibility, and monotonicity.}
For a nonempty finite state group $W$, the map
$q\mapsto\max_{u\in W}\sum_je_j(u)q_j$ is continuous on the compact
simplex $\Delta_m$. A minimizing shared selection therefore exists.
If the acceptable sets have a common member $q$, every state regret is at
most $\varepsilon$, so $\Gamma(W)\le\varepsilon$. Conversely, if the
minimum margin is at most $\varepsilon$, a minimizing $q$ belongs to all
acceptable sets. For nonempty $V\subseteq W$, the maximum over $V$ is no
larger than the maximum over $W$ for every $q$. Taking minima gives
$\Gamma(V)\le\Gamma(W)$. A finite incompatible set contains an
inclusion-minimal incompatible subset: delete a state whenever the
remaining set is still incompatible, and stop when no deletion is possible.
Monotonicity makes all smaller proper subsets compatible at termination.

\begin{proof}[Proof of Theorem~\ref{thm:average-robust-bridge}]
For every continuation distribution $q$, let
$u_q\in\argmax_{u\in W}e(u,q)$. Nonnegative regrets and the smallest
state weight $\beta=\min_uw_u$ give
\[
 \beta\max_{u\in W}e(u,q)
 \le w_{u_q}e(u_q,q)
 \le\sum_{u\in W}w_ue(u,q)
 \le\max_{u\in W}e(u,q).
\]
Taking the minimum over $q$ proves both bounds, including the case
$\beta=0$. For a deterministic interface, the finite optimum also gives
\[
 \Dinfo_\sigma(K_f;\mu)
 =\sum_{c:p_c>0}\min_j\sum_{u\in C_c}\mu(u)e_j(u)
 =\sum_{c:p_c>0}p_c\ConditionalConflict(C_c,w^{(c)}).
\]
If every cell of positive mass has margin at most $\varepsilon$, the
upper bound and $\sum_cp_c=1$ make this sum at most $\varepsilon$.
\end{proof}

\begin{proof}[Proof of Theorem~\ref{thm:tight-witness-rank}]
Every singleton has an optimal pure continuation, so a conflict has at
least two states. Each acceptable set is convex in the affine hull of
$\Delta_m$, whose dimension is $m-1$. Helly's theorem
\citep{helly1923mengen} gives an empty intersection among at most $m$
sets whenever the full finite intersection is empty. An inclusion-minimal
conflict must itself have at most $m$ states.

For tightness at every $m\ge2$, take $m$ states and losses
$\ell_j(u_i)=\Indicator\{j=i\}$. Each row has minimum zero. At
$\varepsilon=0$, state $u_i$ requires $q_i=0$. All $m$ requirements
contradict $\sum_iq_i=1$. After any state $u_k$ is deleted, selecting
continuation $k$ gives zero regret at every remaining state. The full
group is therefore a minimal conflict of size $m$.
\end{proof}

\begin{proof}[Proof of Theorem~\ref{thm:witness-duality}]
For a fixed group $W$, the payoff
$\sum_{u,j}\lambda_uq_je_j(u)$ is bilinear on the compact convex sets
$\Delta(W)$ and $\Delta_m$. Finite minimax duality
\citep{sion1958general} gives
\[
 \begin{aligned}
 \Gamma(W)
 &=\min_{q\in\Delta_m}\max_{\lambda\in\Delta(W)}
                      \sum_{u,j}\lambda_uq_je_j(u)\\
 &=\max_{\lambda\in\Delta(W)}\min_{q\in\Delta_m}
                      \sum_{u,j}\lambda_uq_je_j(u)\\
 &=\max_{\lambda\in\Delta(W)}\min_j\sum_{u\in W}\lambda_ue_j(u).
 \end{aligned}
\]
The first equality expresses a maximum over states as a maximum over
state distributions; the last uses linearity in $q$. The final maximum
is attained because a finite minimum of linear functions is continuous.
If $f$ is constant on $W$, the finite interface formula gives
$\Dinfo_\sigma(K_f;\lambda)=\min_j\sum_u\lambda_ue_j(u)$.
Maximizing over $\lambda$ proves the risk interpretation.
\end{proof}

\subsection{Unique optimal choices and positive tolerances}
\label{app:unique-optimum}

The main example has two optimal pure choices at every state. The example
below instead has one optimal choice per state and shows why positive
tolerance can still create a higher-order conflict.

\begin{example}[Pairwise compatibility without joint compatibility]
\label{ex:three-state-conflict}
Consider three states $u_1,u_2,u_3$ and three continuations $a,b,c$, with
losses
\[
 \begin{array}{c|ccc}
       & a&b&c\\ \hline
  u_1 &0&1&1\\
  u_2 &1&0&1\\
  u_3 &1&1&0
 \end{array}
\]
and tolerance $\varepsilon=1/2$. Every state has oracle loss zero, so the
loss entries are also regrets. A shared selection
$q=(q_a,q_b,q_c)\in\Delta_3$ must satisfy
\[
 \begin{aligned}
 \AcceptSet^{1/2}(u_1)&=\{q\in\Delta_3:q_a\ge1/2\},\\
 \AcceptSet^{1/2}(u_2)&=\{q\in\Delta_3:q_b\ge1/2\},\\
 \AcceptSet^{1/2}(u_3)&=\{q\in\Delta_3:q_c\ge1/2\}.
 \end{aligned}
\]
For the pair $\{u_1,u_2\}$, regrets are $1-q_a$ and $1-q_b$.
Since $q_a+q_b\le1$, their maximum is at least $1/2$, attained by
$(1/2,1/2,0)$. The same argument applies to every pair. For the triple,
\[
 \max_i e(u_i,q)=1-\min\{q_a,q_b,q_c\}\ge2/3,
\]
with equality at the uniform distribution. Every pair is therefore
compatible at tolerance $1/2$, but the triple is not.
\end{example}

\paragraph{When the pairwise view is sufficient.}
A unique optimum does not remove all higher-order conflicts at arbitrary
positive tolerance, because randomized selections can become acceptable.
A uniform separation from the second-best choice gives a sufficient
small-tolerance regime.

\begin{proposition}[Unique-optimum reduction at small tolerance]
\label{prop:unique-optimum-reduction}
Suppose every state has a unique optimal pure continuation with index
$j^\star(u)\in[m]$. Define the smallest second-best regret by
\[
 \rho:=\min_{u\in\AuditDomain}\min_{j\neq j^\star(u)}e_j(u).
\]
Finiteness and uniqueness imply $\rho>0$. For
$0\le\varepsilon<\rho/2$, two states are incompatible if and only if their
optimal continuation indices differ. Every minimal conflict is then a
pair, and the minimum number of unrestricted deterministic interface
outputs with compatible cells equals the number of distinct optimal
continuation indices.
\end{proposition}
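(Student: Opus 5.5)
The argument has three steps: show $\rho>0$, characterize pairwise compatibility for $0\le\varepsilon<\rho/2$, and then transfer that characterization to minimal conflicts and capacity through Theorem~\ref{thm:hypergraph-capacity}.

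\emph{Positivity of $\rho$.} For each $u$ and each $j\neq j^\star(u)$, uniqueness of the optimum gives $e_j(u)>0$. The audit domain is finite and $m$ is finite, so $\rho$ is a minimum of finitely many positive numbers and is positive.

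\emph{Pairwise characterization.} First suppose $j^\star(u)=j^\star(v)$. The pure choice at that common index has zero regret at both states, so the pair is compatible for every $\varepsilon\ge0$.

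Now suppose $j^\star(u)=a\neq b=j^\star(v)$. For any $q\in\Delta_m$, we have $e(u,q)\ge\rho(1-q_a)$ and $e(v,q)\ge\rho(1-q_b)$. Since $q_a+q_b\le1$, these two bounds add to
\[
 \rho(2-q_a-q_b)\ge\rho .
\]
Hence $\max\{e(u,q),e(v,q)\}\ge\rho/2>\varepsilon$, and the pair is incompatible.

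\emph{Every minimal conflict is a pair.} Let $W$ be any group in which all states share the same optimal index. The common pure choice gives $\Gamma(W)=0$, so $W$ is compatible. Therefore an incompatible group must contain two states with different optimal indices, and by the previous step that pair is already incompatible.

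If such an incompatible $W$ were a minimal conflict with $|W|>2$, the offending pair would be an incompatible proper subset, contradicting minimality. Combined with Theorem~\ref{thm:tight-witness-rank}, which gives $|W|\ge2$, every edge of $\ConflictHypergraph$ is exactly a pair $\{u,v\}$ with $j^\star(u)\neq j^\star(v)$.

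\emph{Capacity.} The conflict hypergraph is therefore a graph. Its edge set is the complete multipartite graph whose parts are the level sets of $j^\star$. By Theorem~\ref{thm:hypergraph-capacity}, $\CapacityCurve(\varepsilon)$ equals the chromatic number of this graph.

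Labeling each state by $j^\star(u)$ is a proper coloring, so the chromatic number is at most the number $k$ of distinct optimal indices. In the other direction, pick one representative state for each of the $k$ distinct indices. These representatives are pairwise adjacent and form a $k$-clique, so the chromatic number is at least $k$.

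The only delicate point is the inequality $q_a+q_b\le1$ combined with the factor $\rho$ in the pairwise step. The rest of the argument is bookkeeping with the earlier theorems.
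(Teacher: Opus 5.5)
Your proof is correct and follows the paper's argument essentially step for step. You use the same bounds $e(u,q)\ge\rho(1-q_a)$ and $e(v,q)\ge\rho(1-q_b)$ together with $q_a+q_b\le1$, the same reduction of every minimal conflict to a pair with different optimal indices, and the same count: distinct indices force distinct outputs, and labeling by $j^\star$ attains that number.
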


Example~\ref{ex:three-state-conflict} has $\rho=1$ and
$\varepsilon=1/2$, so it lies outside the proposition's strict range.

\begin{proof}[Proof of Proposition~\ref{prop:unique-optimum-reduction}]
There are finitely many strictly positive nonoptimal regrets, so their
minimum $\rho$ is positive. States with the same optimal index share a
pure optimum. For two states with distinct optimal indices $a$ and $b$,
\[
 e(u,q)\ge\rho(1-q_a),\qquad e(v,q)\ge\rho(1-q_b).
\]
A common choice with regret at most $\varepsilon<\rho/2$ would require
$q_a+q_b\ge2(1-\varepsilon/\rho)>1$, which is impossible. Every group
containing different optimal indices therefore contains an incompatible
pair; a group with only one optimal index is compatible. Minimal conflicts
are exactly the pairs with different indices. Distinct indices require
distinct outputs, and assigning each state its optimal index attains that
output count.
\end{proof}

\section{Capacity Extensions and Statistical Certification}
\label{app:proofs-capacity-certification}

\subsection{Proofs for capacity and global validation}
\begin{proof}[Proof of Theorem~\ref{thm:hypergraph-capacity}]
A monochromatic conflict lies entirely in one cell and makes that cell
incompatible. Conversely, every incompatible finite cell contains an
inclusion-minimal incompatible subset, which is a monochromatic edge of
the global hypergraph. A labeling has compatible cells exactly when it
has no monochromatic edge. Minimizing its number of outputs is therefore
the weak coloring problem.
\end{proof}

\begin{proof}[Proof of Proposition~\ref{prop:capacity-constraint-generation}]
Every compatible labeling satisfies all retained conflict constraints,
so $M_t\le\CapacityCurve(\varepsilon)$. If the candidate labeling $g_t$
passes every cell check, it is feasible for the full problem with $M_t$
outputs; hence $\CapacityCurve(\varepsilon)\le M_t$.

Otherwise a violating cell contains a minimal conflict $W_t$. The edge
cannot already be retained, because $g_t$ would violate its constraint.
Each failed iteration adds a new edge. At most
\[
 \sum_{s=2}^{\min\{n,m\}}\binom ns-|\mathcal F_0|
\]
new edges can be added. The sum is zero if $\min\{n,m\}<2$.
Assigning each state a separate label is always feasible, so no iteration
has an infeasible full coloring problem. Once no missing edge remains,
the candidate passes validation and the bounds coincide.
Deleting a state whenever the remaining margin exceeds $\varepsilon$
finds a minimal conflict: every proper subset lies inside a compatible
single deletion at termination.
\end{proof}

\paragraph{Why local conflicts are not enough.}
Use the three-state loss table in Example~\ref{ex:three-state-conflict},
now at tolerance $\varepsilon=0$. Suppose the old interface cells are
$\{u_1,u_2\}$ and $\{u_3\}$. Local search finds only the conflict
$\{u_1,u_2\}$. The single retained constraint allows a two-label redesign that
merges $u_3$ with one of the other states. Every pair actually requires
different optimal continuations at zero tolerance, so the true capacity
is three. Checking all \emph{new} cells detects the missing constraints.
Algorithms~\ref{alg:capacity-milp} and~\ref{alg:capacity-global-validation}
implement the coloring and validation steps; solver tolerances and
optimality status must accompany numerical outputs.

\subsection{Restricted encoders, zero regret, and observable policy}
\label{app:capacity-extensions}
Let $\InterfaceFamily$ be a restricted family of deterministic finite-output
maps on the same audit domain. Its minimum feasible output count is
\[
 \CapacityCurve_{\InterfaceFamily}(\varepsilon):=
 \min\{|\range(g)|:g\in\InterfaceFamily,
                 \ g\text{ is uniformly }\varepsilon\text{-sufficient}\},
\]
with value $+\infty$ if no map in the family is feasible. Restricting the
maps cannot improve the optimum:
$\CapacityCurve_{\InterfaceFamily}(\varepsilon)\ge\CapacityCurve(\varepsilon)$.
An SAE support map may fail to realize an optimal abstract labeling even
when it has enough possible outputs. For an allowed output count $M\ge1$,
the inverse curve is
\[
 \RegretCurve(M):=\inf\{\varepsilon\ge0:
                           \CapacityCurve(\varepsilon)\le M\}.
\]
The value $\RegretCurve(M)$ is the smallest uniform regret tolerance
achievable with at most $M$ unrestricted outputs.

\paragraph{A useful special case: zero regret.}
For each continuation $\pi_j$, let
$O_j:=\{u\in\AuditDomain:e_j(u)=0\}$ be the states where it is optimal.
The family $\{O_j\}_{j=1}^m$ covers the audit domain because every state
has an optimal continuation.

\begin{proposition}[Capacity bounds and zero-regret set cover]
\label{prop:capacity-set-cover}
For every $\varepsilon\ge0$,
\[
 1\le\CapacityCurve(\varepsilon)\le\min\{n,m\}.
\]
At zero tolerance,
\[
 \CapacityCurve(0)=
 \min_{J\subseteq[m]}\left\{|J|:
                     \bigcup_{j\in J}O_j=\AuditDomain\right\}.
\]
Computing the capacity is NP-hard in general when the continuation count
$m$ is part of the input.
\end{proposition}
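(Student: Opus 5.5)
The proposition makes three claims, and I will handle them in order: the trivial bounds $1\le\CapacityCurve(\varepsilon)\le\min\{n,m\}$, the zero-tolerance set-cover formula, and NP-hardness.

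\textbf{Bounds.} The lower bound holds because the audit domain is nonempty, so any labeling has at least one nonempty cell. For $\CapacityCurve(\varepsilon)\le n$, I label every state separately. Singletons have margin $0\le\varepsilon$, since a pure optimum has zero regret.

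For $\CapacityCurve(\varepsilon)\le m$, I pick for each state an optimal index $j^\star(u)$, say the smallest one, and set $g(u)=j^\star(u)$. Every cell then shares the pure continuation $j$ with zero regret, so its margin is $0\le\varepsilon$. Since at most $m$ labels occur, the upper bound is the minimum of these two constructions. Monotonicity of $\Gamma$ in $\varepsilon$ is not needed.

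\textbf{Zero tolerance.} First I characterize $0$-compatible cells. By the argument in the proof of Corollary~\ref{cor:finite-cell-zero-gap}, $\Gamma(C)=0$ holds iff some pure index $j$ has $e_j(u)=0$ for all $u\in C$, that is, iff $C\subseteq O_j$ for some $j$. The reason is that a minimizing $q$ with zero maximum regret forces every index in its support to be optimal throughout $C$.

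For "$\le$": take a cover indexed by $J$ and assign each state some $j\in J$ with $u\in O_j$. Each cell lies in one $O_j$, and at most $|J|$ labels are used.

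For "$\ge$": take a compatible labeling with $M$ cells. Each cell $C_c$ lies in some $O_{j_c}$, so $J=\{j_c\}$ has $|J|\le M$ and covers the audit domain. Hence $\CapacityCurve(0)$ equals the minimum cover size.

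\textbf{NP-hardness.} This is the main step. I reduce from minimum Set Cover, with universe $\{1,\ldots,n\}$, sets $S_1,\dots,S_m$ covering it, and threshold $k$.

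I build one state $u_i$ per element and one continuation per set, with losses $\ell_j(u_i)=0$ if $i\in S_j$ and $1$ otherwise. Each row has minimum $0$ because the sets cover the universe, so $e_j(u_i)=\ell_j(u_i)$ and $O_j=\{u_i: i\in S_j\}$. The zero-tolerance formula then gives $\CapacityCurve(0)=$ the minimum set cover size. The construction is polynomial with $0/1$ entries, and the decision "is $\CapacityCurve(0)\le k$?" is therefore NP-hard.

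The obstacles I expect are bookkeeping ones:
\begin{itemize}
\item I need $m\ge2$, which I get by adding a dummy continuation with loss $1$ everywhere; it does not change any $O_j$ or minimum.
\item Labels in the definition of $\CapacityCurve$ range over $[n]$, which suffices because at most $n$ cells are ever needed.
\item Hardness is needed only at $\varepsilon=0$, so the phrase "in general" is justified.
\end{itemize}
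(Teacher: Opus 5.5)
Your proposal is correct and follows the paper's proof essentially step for step: an optimal-index labeling gives the bounds, the support of a zero-regret mixture shows that $0$-compatible cells are exactly subsets of some $O_j$, and hardness comes from the same $0/1$ membership-loss reduction from Set Cover. Your extra bookkeeping fills small details the paper leaves implicit, namely the dummy all-ones continuation that guarantees $m\ge2$ and the relabeling into $[n]$.
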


The zero-regret formula selects the smallest collection of continuations
that supplies an optimum for every state. Each selected continuation can
serve one label cell. The at-most-$m$ output bound does not imply that a
restricted SAE can implement the optimal-action labeling with a small
feature budget.

\paragraph{Capacity when policy is supplied separately.}
For a pooled-policy audit, let $U$ denote the full audited state and
evaluate each loss $\ell_j(u)$ under the policy carried by state $u$.
Let $S=s(U)$ be the finite-valued policy observation
supplied at no code cost,
and let $\mathcal S:=s(\AuditDomain)$ be its values on the audit domain.
For each policy value $s\in\mathcal S$, define
\[
 \mathcal U_s:=\{u\in\AuditDomain:s(u)=s\},
 \qquad
 \mathcal H_{\varepsilon,s}:=
 \left(\mathcal U_s,
       \{W\in\ConflictEdges:W\subseteq\mathcal U_s\}\right).
\]
Define $M_S^\star(\varepsilon)$ as the smallest code alphabet for a map
$g$ such that every nonempty complete-observation cell
$\{u:g(u)=c,\ s(u)=s\}$ is compatible.

\begin{corollary}[Capacity with free finite policy information]
\label{cor:policy-capacity}
Over unrestricted deterministic code maps on the full audit state,
\[
 M_S^\star(\varepsilon)=
 \max_{s\in\mathcal S}\HyperChromatic(\mathcal H_{\varepsilon,s}).
\]
\end{corollary}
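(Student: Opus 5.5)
The plan is to reduce the policy-aware capacity problem to independent weak-coloring problems, one per policy value, and then combine them using Theorem~\ref{thm:hypergraph-capacity}. The complete observation is the pair $(g(u),s(u))$. Its cells are therefore exactly the sets $g^{-1}(c)\cap\mathcal U_s$.

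A map $g$ is feasible precisely when, for each fixed $s\in\mathcal S$, the restriction $g|_{\mathcal U_s}$ has compatible cells on $\mathcal U_s$. Restrictions to different policy values never interact: no cell contains states with different policy text. So the constraint on $g$ splits into $|\mathcal S|$ separate constraints, each involving one restriction. The code alphabet is shared, however, so the number of code values needed is the maximum over $s$ of what each block needs. It is not the sum.

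The key step is to identify, for each $s$, the minimum number of labels for a compatible restriction with $\HyperChromatic(\mathcal H_{\varepsilon,s})$. I would rerun the argument of Theorem~\ref{thm:hypergraph-capacity} on the domain $\mathcal U_s$.

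The one point to check is that the edges of $\mathcal H_{\varepsilon,s}$ are the correct conflicts. Compatibility of a group $W$ depends only on the regrets of its states, not on which other states lie in the domain. Hence a subset $W\subseteq\mathcal U_s$ is a minimal conflict relative to $\mathcal U_s$ exactly when it belongs to $\ConflictEdges$. Moreover, every edge of $\ConflictEdges$ contained in some cell of $g$ lies inside a single $\mathcal U_s$.

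With this, the argument runs as follows:
\begin{enumerate}
 \item Every incompatible cell $g^{-1}(c)\cap\mathcal U_s$ contains an inclusion-minimal incompatible subset. By the deletion argument at the start of Appendix~\ref{app:proofs-higher-order}, this subset is an edge of $\mathcal H_{\varepsilon,s}$, and it is monochromatic under $g|_{\mathcal U_s}$.
 \item Conversely, a monochromatic edge makes its cell incompatible.
 \item Hence $g$ is feasible if and only if, for every $s$, $g|_{\mathcal U_s}$ is a weak coloring of $\mathcal H_{\varepsilon,s}$.
\end{enumerate}

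For the lower bound, take any feasible $g$. Each restriction $g|_{\mathcal U_s}$ is a weak coloring using at most $|\range(g)|$ labels. Therefore $|\range(g)|\ge\HyperChromatic(\mathcal H_{\varepsilon,s})$ for every $s$, and so $|\range(g)|$ is at least the maximum over $s$.

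For the upper bound, let $M$ denote that maximum. For each $s$, choose an optimal weak coloring of $\mathcal H_{\varepsilon,s}$ with labels in $[\HyperChromatic(\mathcal H_{\varepsilon,s})]\subseteq[M]$. Then define $g$ on $\mathcal U_s$ by that coloring. The sets $\mathcal U_s$ partition $\AuditDomain$, so $g$ is well defined. It uses at most $M$ code values, and by the characterization above it is feasible.

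The main obstacle is bookkeeping rather than depth. The alphabet is counted by $|\range(g)|$, so labels must be reused across policy values, and the argument must never need distinct labels across blocks. I would also check the degenerate case where some $\mathcal U_s$ is a single state or has no edges, so that its chromatic number is $1$. The maximum is still at least $1$ because $\AuditDomain$ is nonempty, which matches the fact that every code map needs at least one output.
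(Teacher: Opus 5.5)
Your proposal is correct and follows essentially the same route as the paper. The lower bound holds because each restriction $g|_{\mathcal U_s}$ must weakly color $\mathcal H_{\varepsilon,s}$, and the matching upper bound colors every policy group from a common palette $[M]$, relying on $S$ to separate groups whose code labels coincide; your extra checks (that the edges of $\mathcal H_{\varepsilon,s}$ are exactly the minimal conflicts inside $\mathcal U_s$, and that feasibility decomposes blockwise) only make explicit what the paper's short proof leaves implicit.
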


Each policy requires its own compatible partition, but code labels can be
reused across policies because the selector also observes $S$. The maximum
counts code values, not the sum of code--policy pairs. The encoder is
unrestricted in this corollary; a restricted encoder requires its own
realizability analysis.

\begin{proof}[Proof of Proposition~\ref{prop:capacity-set-cover}]
A nonempty domain requires a label. Assigning each state an optimal
continuation index gives compatible cells at regret zero, using at most
$\min\{n,m\}$ labels.

For a cell with zero shared regret, a feasible distribution $q$ satisfies
$\sum_jq_je_j(u)=0$ at every state. Nonnegativity makes any index with
$q_j>0$ optimal throughout the cell. One such index per cell yields a
cover by optimal-state sets. Conversely, a cover $J\subseteq[m]$ yields
a compatible labeling by assigning each state a covering index. Taking
minima proves the set-cover formula.

To show NP-hardness, take a set-cover instance whose sets cover its finite
universe. Use elements as states and sets as continuations, with loss zero
for membership and one otherwise. Every row has minimum zero, and the
zero-regret capacity equals the minimum cover size. The matrix has
polynomial size, proving the claim when $m$ is part of the input.
\end{proof}

\begin{proof}[Proof of Corollary~\ref{cor:policy-capacity}]
On each policy group $\mathcal U_s$, a valid code must use at least
$\HyperChromatic(\mathcal H_{\varepsilon,s})$ labels. Thus the maximum
of these numbers is a lower bound. Let $M$ equal that maximum and color
each policy group using a common palette $[M]$. Define the unrestricted
encoder by the coloring associated with the state's policy. Policy text
$S$ distinguishes groups even when their code labels agree, so all cells
of the complete observation are compatible. The combined encoder attains the lower bound.
\end{proof}

\subsection{Loss uncertainty, larger libraries, and search}
\label{app:extended-witness-certification}
\paragraph{From loss uncertainty to margin uncertainty.}
Return to the fixed finite audit domain and continuation library.
Let $\widehat\ell_j(u)\in[0,1]$ estimate the true expected loss
$\ell_j(u)$. Define the estimated oracle loss, regret, and group margin by
\[
 \begin{aligned}
 \widehat\ell^\star(u)&:=\min_{j\in[m]}\widehat\ell_j(u),\\
 \widehat e_j(u)&:=\widehat\ell_j(u)-\widehat\ell^\star(u),\\
 \widehat\Gamma(W)&:=
 \min_{q\in\Delta_m}\max_{u\in W}
                  \sum_{j=1}^m q_j\widehat e_j(u).
 \end{aligned}
\]
For a loss error radius $r_\ell\ge0$, define the simultaneous event
\[
 \mathcal A_\ell:=
 \left\{\max_{u\in\AuditDomain,\ j\in[m]}
              |\widehat\ell_j(u)-\ell_j(u)|\le r_\ell\right\}.
\]
Assume $\mathbb P(\mathcal A_\ell)\ge1-\alpha$, where $0<\alpha<1$.
On $\mathcal A_\ell$, the oracle minimum also changes by at most $r_\ell$,
so regrets change by at most $2r_\ell$. Consequently,
\begin{equation}
 |\widehat\Gamma(W)-\Gamma(W)|\le2r_\ell
 \quad\text{for every nonempty }W\subseteq\AuditDomain
 \label{eq:simultaneous-margin-error}
\end{equation}
holds on the same event. The uniform bound covers all candidate groups
without an additional union bound over the groups searched.

\paragraph{Declaring the continuation class being certified.}
For the declared library, write $\Gamma_\Pi(W):=\Gamma(W)$.
If a conclusion is instead intended for a larger class
$\Plans_{\mathrm{target}}\supseteq\Plans$, the target margin needs a
separate definition. Let $\Loss(u,\pi)\in[0,1]$ extend the same expected loss to the target
class, with $\Loss(u,\pi_j)=\ell_j(u)$ for every declared continuation.
Let $\mathcal Q_{\mathrm{target}}$ be all finitely supported probability
distributions over the target class.
Define
\[
 \begin{aligned}
 \ell_{\mathrm{target}}^\star(u)
   &:=\inf_{\pi\in\Plans_{\mathrm{target}}}\Loss(u,\pi),\\
 \Gamma_{\mathrm{target}}(W)
   &:=\inf_{Q\in\mathcal Q_{\mathrm{target}}}\max_{u\in W}
       \left(\mathbb E_{\pi\sim Q}\Loss(u,\pi)
                         -\ell_{\mathrm{target}}^\star(u)\right).
 \end{aligned}
\]
For such a larger target, assume a justified fixed nonnegative library-error
bound $\eta_\Pi(W)$ satisfying
\begin{equation}
 \Gamma_{\mathrm{target}}(W)
 \ge\Gamma_\Pi(W)-\eta_\Pi(W)
 \quad\text{for every nonempty }W\subseteq\AuditDomain.
 \label{eq:library-margin-condition}
\end{equation}
For the declared-library target, set
$\Gamma_{\mathrm{target}}=\Gamma_\Pi$ and $\eta_\Pi=0$.
Repeated scoring does not supply $\eta_\Pi$, the error bound for library
approximation. A sufficient condition for the bound is given below in
Equation~\eqref{eq:library-loss-coverage}.

\begin{theorem}[Search-valid finite-witness certificate]
\label{thm:search-valid-certificate}
Suppose $\mathbb P(\mathcal A_\ell)\ge1-\alpha$ and the library-error
bound in Equation~\eqref{eq:library-margin-condition} holds. Let
$\widehat W$ be any nonempty group selected from the audit domain using
the estimated losses. With probability at least $1-\alpha$,
\[
 \Gamma_{\mathrm{target}}(\widehat W)
 \ge\widehat\Gamma(\widehat W)-2r_\ell-\eta_\Pi(\widehat W).
\]
If the right-hand side exceeds $\varepsilon$, the selected group is
$\varepsilon$-incompatible for the specified target class.
For the declared-library target, if the complete interface $f$ is constant
on $\widehat W$, then also
\[
 \max_{\lambda\in\Delta(\widehat W)}\Dinfo_\sigma(K_f;\lambda)
 \ge\widehat\Gamma(\widehat W)-2r_\ell.
\]
\end{theorem}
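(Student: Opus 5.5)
The argument works pathwise on the event $\mathcal A_\ell$. Because the uniform margin bound holds simultaneously for every nonempty group, it does not matter how $\widehat W$ was chosen. No union bound over the groups searched is needed, and the probability statement follows from $\mathbb P(\mathcal A_\ell)\ge1-\alpha$.

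\textbf{Step 1: control regrets on $\mathcal A_\ell$.} Fix an outcome in $\mathcal A_\ell$. At each state $u$, the estimated and true losses differ entrywise by at most $r_\ell$, so their row minima also differ by at most $r_\ell$. Hence $|\widehat e_j(u)-e_j(u)|\le2r_\ell$ for every $j$ and $u$. For fixed $q\in\Delta_m$ the regret $e(u,q)$ is a convex combination of these entries, so it changes by at most $2r_\ell$. A maximum over $u\in W$ and then a minimum over $q$ each preserve a uniform additive bound. This gives Equation~\eqref{eq:simultaneous-margin-error} for every nonempty $W$ at once; it is the same argument as in Proposition~\ref{prop:loss-table-perturbation}.

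\textbf{Step 2: apply the bound to the selected group.} The data-dependent group $\widehat W(\omega)$ is a fixed nonempty subset of $\AuditDomain$ for each outcome $\omega$. Evaluating the simultaneous bound at $W=\widehat W(\omega)$ gives $\Gamma_\Pi(\widehat W)\ge\widehat\Gamma(\widehat W)-2r_\ell$ on $\mathcal A_\ell$. The library condition~\eqref{eq:library-margin-condition} is deterministic and holds for every nonempty $W$, so it also holds at $\widehat W$. Chaining the two inequalities yields $\Gamma_{\mathrm{target}}(\widehat W)\ge\widehat\Gamma(\widehat W)-2r_\ell-\eta_\Pi(\widehat W)$ on an event of probability at least $1-\alpha$.

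\textbf{Step 3: incompatibility.} Suppose the right-hand side exceeds $\varepsilon$. Then no finitely supported mixture $Q$ keeps the maximum target regret over $\widehat W$ within $\varepsilon$. By the infimum definition this is $\varepsilon$-incompatibility for the target class. For the declared library, $\Gamma_{\mathrm{target}}=\Gamma_\Pi$, and this reduces to Equation~\eqref{eq:group-compatibility}.

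\textbf{Step 4: the risk statement.} Set $\eta_\Pi=0$ and assume $f$ is constant on $\widehat W$. Theorem~\ref{thm:witness-duality} gives $\Gamma(\widehat W)=\max_{\lambda\in\Delta(\widehat W)}\Dinfo_\sigma(K_f;\lambda)$. Substituting into Step 2 gives the displayed bound.

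The main obstacle is the selection issue: making precise that the simultaneous event already covers the data-dependent $\widehat W$. I would handle it by treating the bound as a statement about all $2^n-1$ groups that holds pointwise on $\mathcal A_\ell$. The measurability of $\widehat W$ is then irrelevant, since only the containment of events is used.
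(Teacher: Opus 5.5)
Your proposal is correct and follows essentially the same route as the paper's proof. Both work pathwise on $\mathcal A_\ell$: you obtain the uniform $2r_\ell$ margin bound for every group, chain it with the deterministic library condition at the selected $\widehat W$, and then invoke Theorem~\ref{thm:witness-duality} for the risk interpretation when $f$ is constant on $\widehat W$.
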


The probability guarantee survives data-dependent witness selection because
Equation~\eqref{eq:simultaneous-margin-error} holds for every group at once.
If only a fixed-witness interval is available, selecting a witness and then
applying that interval to the same data is not justified. Independent
certification data provide one valid alternative: freeze the selected group
before estimating its certification losses.

\paragraph{Incompatibility is not the same as minimality.}
A lower margin bound for a group proves only that the group cannot share an
acceptable selection. To certify a \emph{minimal} conflict, all proper
subgroups must be compatible. By monotonicity of $\Gamma$, it suffices to
check every one-state deletion.

\begin{corollary}[Certifying minimality in the declared library]
\label{cor:certified-minimality}
On the event $\mathcal A_\ell$, keep the declared library as the target.
A group $W$ with $|W|\ge2$ is a minimal $\varepsilon$-incompatible group if
\[
 \begin{aligned}
 \widehat\Gamma(W)-2r_\ell&>\varepsilon,\\
 \widehat\Gamma(W\setminus\{u\})+2r_\ell&\le\varepsilon
                                  &&\text{for every }u\in W.
 \end{aligned}
\]
The conclusion also holds for a data-selected group $W$ under the same
simultaneous event.
\end{corollary}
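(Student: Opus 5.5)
The plan is to work on the simultaneous event $\mathcal A_\ell$ and transfer each estimated margin to a true margin through Equation~\eqref{eq:simultaneous-margin-error}. On $\mathcal A_\ell$, every estimated loss is within $r_\ell$ of the true loss. Hence each estimated oracle minimum $\widehat\ell^\star(u)$ is within $r_\ell$ of $\ell^\star(u)$, and every estimated regret $\widehat e_j(u)$ is within $2r_\ell$ of $e_j(u)$.

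First I will check that the margin bound needs no union bound over groups. Convex combinations preserve the uniform $2r_\ell$ bound, as do the maximum over $u\in W$ and the minimum over $q\in\Delta_m$. The same argument appears in the proof of Proposition~\ref{prop:loss-table-perturbation}. So $|\widehat\Gamma(V)-\Gamma(V)|\le2r_\ell$ holds for every nonempty $V\subseteq\AuditDomain$ at once, on the single event $\mathcal A_\ell$. This uniformity is what lets the conclusion cover a data-selected $W$: whichever group is chosen, it is one of the finitely many groups covered by the same event.

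Next, incompatibility. From $\widehat\Gamma(W)-2r_\ell>\varepsilon$ and the uniform bound, I get $\Gamma(W)\ge\widehat\Gamma(W)-2r_\ell>\varepsilon$. By Equation~\eqref{eq:group-compatibility}, the intersection of the acceptable sets over $W$ is empty. This step is essentially Theorem~\ref{thm:search-valid-certificate} with $\eta_\Pi=0$ for the declared library.

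Then compatibility of every proper subset. For each $u\in W$, $\Gamma(W\setminus\{u\})\le\widehat\Gamma(W\setminus\{u\})+2r_\ell\le\varepsilon$, so every one-state deletion is compatible. Since $|W|\ge2$, each deletion is nonempty and $\Gamma$ is defined on it. Any proper nonempty subset $V\subsetneq W$ lies inside some $W\setminus\{u\}$. Monotonicity of the margin, proved at the start of Appendix~\ref{app:proofs-higher-order}, then gives $\Gamma(V)\le\Gamma(W\setminus\{u\})\le\varepsilon$. The empty subset is compatible by the convention that the intersection over no states is $\Delta_m$. Therefore $W$ satisfies Definition~\ref{def:conflict-hypergraph} and belongs to $\ConflictEdges$.

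I do not expect a real obstacle. The one point needing care is the data-selected case. I will stress that both conditions are deterministic consequences of the single event $\mathcal A_\ell$, which controls all $2^n-1$ groups simultaneously. So the search that produced $W$ and its deletions does not invalidate the guarantee, and the statement holds with probability at least $1-\alpha$.
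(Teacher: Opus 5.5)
Your proposal is correct and matches the paper's proof essentially step for step. On $\mathcal A_\ell$, the uniform $2r_\ell$ margin bound gives $\Gamma(W)>\varepsilon$ and $\Gamma(W\setminus\{u\})\le\varepsilon$; monotonicity handles every proper nonempty subset, the empty group is compatible by convention, and simultaneous coverage of all groups justifies a data-selected $W$, with the extra derivation of the margin bound following Equation~\eqref{eq:simultaneous-margin-error} and Proposition~\ref{prop:loss-table-perturbation}.
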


For a three-state witness, the first inequality certifies failure of the
triple and the three upper bounds certify success of every pair. A positive
triple lower bound alone cannot establish a pairwise-pass/triple-fail result.

\paragraph{A rollout-based simultaneous radius.}
For each of the $nm$ state--continuation pairs, let
$Y_{ujk}\in[0,1]$, $k=1,\ldots,N$, be $N\ge1$ independent scores with
common mean $\nu_j(u)$ under a fixed scoring protocol. Set
$\widehat\ell_j(u):=N^{-1}\sum_{k=1}^N Y_{ujk}$.
Suppose a known deterministic judge-bias bound $b_{\mathrm{judge}}\ge0$
satisfies $|\nu_j(u)-\ell_j(u)|\le b_{\mathrm{judge}}$ for every audited
pair. Independence between different pairs is not required.

\begin{corollary}[One simple rollout radius]
\label{cor:rollout-radius}
Under the stated bounded-score, within-pair independence, and uniform-bias
conditions, a simultaneous loss radius with error probability at most
$\alpha\in(0,1)$ is
\[
 r_\ell=b_{\mathrm{judge}}
        +\sqrt{\frac{\log(2nm/\alpha)}{2N}}.
\]
\end{corollary}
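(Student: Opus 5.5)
The plan is a Hoeffding bound for each state--continuation pair, a union bound over all $nm$ pairs, and then the triangle inequality to add the deterministic judge bias. Fix a pair $(u,j)$. The scores $Y_{uj1},\ldots,Y_{ujN}$ are independent, take values in $[0,1]$, and share the mean $\nu_j(u)$. Hoeffding's inequality \citep{hoeffding1963probability} therefore gives, for every $t>0$,
\[
 \mathbb P\bigl(|\widehat\ell_j(u)-\nu_j(u)|>t\bigr)\le 2\exp(-2Nt^2).
\]
Choosing $t=\sqrt{\log(2nm/\alpha)/(2N)}$ makes the right-hand side equal to $\alpha/(nm)$.

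Next apply a union bound over the $nm$ audited pairs. The union bound needs no independence across pairs, which is why the statement can drop that requirement. With probability at least $1-\alpha$, every pair satisfies $|\widehat\ell_j(u)-\nu_j(u)|\le t$ simultaneously. On that event, the triangle inequality and the uniform bias condition give
\[
 |\widehat\ell_j(u)-\ell_j(u)|\le|\widehat\ell_j(u)-\nu_j(u)|+|\nu_j(u)-\ell_j(u)|\le t+b_{\mathrm{judge}}=r_\ell
\]
for all pairs. Hence $\mathbb P(\mathcal A_\ell)\ge1-\alpha$, which is the claim.

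There is no substantive obstacle. The one point to check is that the bias bound is deterministic and holds for every audited pair, so adding it costs no probability. Each $\widehat\ell_j(u)$ is a mean of $[0,1]$ scores, so it lies in $[0,1]$, as the certificate setting requires.
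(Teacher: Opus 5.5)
Your proposal is correct and follows essentially the same route as the paper's proof: a two-sided Hoeffding bound per pair with $a=\sqrt{\log(2nm/\alpha)/(2N)}$, a union bound over the $nm$ pairs (which is why independence is needed only within each pair), and then adding the deterministic bias $b_{\mathrm{judge}}$.
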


The square-root term controls score sampling through Hoeffding's inequality
and a union bound over the $nm$ pairs \citep{hoeffding1963probability};
$b_{\mathrm{judge}}$ controls systematic error relative to the target loss.
If the bias bound or library approximation holds only on a separate
probability event, its failure probability must also enter the guarantee.
The human-study packet maximum does not establish a uniform judge-bias
bound. Algorithm~\ref{alg:search-certify} keeps incompatibility certification
and minimality certification separate.

\subsection{Proofs of the statistical certification results}
\paragraph{A sufficient condition for the library approximation bound.}
The inequality in Equation~\eqref{eq:library-margin-condition} is an
assumption for transfer to a larger continuation class, not a consequence
of loss-estimation accuracy. Enlarging the library also changes the
statewise oracle baseline, so library inclusion alone does not specify
the size of the regret-margin change. One sufficient condition can be stated in
terms of expected losses. Fix a nonempty state group $W$ and a number
$\eta\ge0$. Suppose every target mixture
$Q\in\mathcal Q_{\mathrm{target}}$ has a declared-library selection
$q\in\Delta_m$ with
\begin{equation}
 \sum_{j=1}^m q_j\ell_j(u)
 \le\mathbb E_{\pi\sim Q}\Loss(u,\pi)+\eta
 \quad\text{for every }u\in W.
 \label{eq:library-loss-coverage}
\end{equation}
The selection $q$ may depend on $Q$ and $W$, but must work for all states
in $W$ simultaneously. Because the declared library is contained in the
target class,
$\ell^\star(u)\ge\ell_{\mathrm{target}}^\star(u)$. Hence a covering
selection $q$ satisfies
\[
 \begin{aligned}
 e(u,q)
 &=\sum_jq_j\ell_j(u)-\ell^\star(u)\\
 &\le\mathbb E_{\pi\sim Q}\Loss(u,\pi)
                       -\ell_{\mathrm{target}}^\star(u)+\eta.
 \end{aligned}
\]
Taking the maximum over states, then the infimum over target mixtures,
gives
$\Gamma_\Pi(W)\le\Gamma_{\mathrm{target}}(W)+\eta$.
Thus Equation~\eqref{eq:library-loss-coverage} justifies using
$\eta_\Pi(W)=\eta$. The infimum argument does not require an optimal
target mixture to exist. Uniform validity across searched groups requires
the coverage condition for every candidate group. No such coverage
condition is inferred merely from the number of scored rollouts.

\begin{proof}[Proof of Theorem~\ref{thm:search-valid-certificate}]
On $\mathcal A_\ell$, the minimum loss in each row changes by at most
$r_\ell$. Each regret therefore changes by at most $2r_\ell$. Convex
averaging, maximization over states, and minimization over choices preserve
the uniform bound, giving
$|\widehat\Gamma(W)-\Gamma_\Pi(W)|\le2r_\ell$ for every nonempty
$W$. The library assumption then yields
\[
 \Gamma_{\rm target}(W)\ge\Gamma_\Pi(W)-\eta_\Pi(W)
 \ge\widehat\Gamma(W)-2r_\ell-\eta_\Pi(W).
\]
Since the same event covers every group, the inequality holds for the
selected group $\widehat W$ with probability at least $1-\alpha$.
A lower bound above $\varepsilon$ excludes every target choice with
regret at most $\varepsilon$ throughout the group. For the declared
library and a constant observation on $\widehat W$,
Theorem~\ref{thm:witness-duality} gives the stated maximum risk gap over state distributions.
\end{proof}

\begin{proof}[Proof of Corollary~\ref{cor:certified-minimality}]
On $\mathcal A_\ell$, the full-group lower bound gives
$\Gamma(W)>\varepsilon$, while each deletion upper bound gives
$\Gamma(W\setminus\{u\})\le\varepsilon$. Every proper nonempty subset
lies in a deletion and has no larger margin. The empty group is compatible
by definition. Thus $W$ is a minimal conflict. Simultaneous coverage
permits choosing $W$ using the same estimates.
\end{proof}

\begin{proof}[Proof of Corollary~\ref{cor:rollout-radius}]
For each pair $(u,j)$, Hoeffding's inequality gives
\[
 \mathbb P\bigl(|\widehat\ell_j(u)-\nu_j(u)|>a\bigr)
 \le2e^{-2Na^2}.
\]
Taking $a=\sqrt{\log(2nm/\alpha)/(2N)}$ and a union bound over the
$nm$ pairs bounds all sampling errors by $a$ with probability at least
$1-\alpha$. Adding $|\nu_j(u)-\ell_j(u)|\le b_{\rm judge}$ proves
the radius. Independence is needed within each pair, not between pairs.
\end{proof}

\section{Algorithms, Complexity, and Failure Modes}
\label{app:algorithms}

The procedures below specify their inputs, search domains, and stopping conditions. Mathematical claims assume exact optimization; numerical implementations must also report solver status and tolerance.

\begin{algorithm}[t]
\caption{Exact finite-interface diagnosis}
\label{alg:finite-interface-diagnosis}
\begin{algorithmic}[1]
\Require State weights \(\{\mu_i\}_{i=1}^{n}\), channel matrix
\(P\in[0,1]^{n\times L}\), loss matrix
\(\LossMatrix=(L_{ij})\in[0,1]^{n\times m}\), with $n,L\ge1$ and $m\ge2$.
\Ensure \(\Rfull\), \(\Rinterface\), an optimal unrestricted rule
\(\delta_Z^{\star}\), and the interface information gap.
\State Check $\mu_i\ge0$, $\sum_i\mu_i=1$, and $\sum_rP_{ir}=1$ for each $i$.
\State Compute
\(\Rfull\gets\sum_{i=1}^{n}\mu_i\min_{j\in[m]}L_{ij}\).
\For{each interface output \(r\in[L]\)}
    \For{each continuation \(j\in[m]\)}
        \State \(C_{rj}\gets\sum_{i=1}^{n}\mu_iP_{ir}L_{ij}\).
    \EndFor
    \State Choose
    \(j_r^{\star}\in\argmin_{j\in[m]}C_{rj}\).
    \State Set \(\delta_{Z,j}^{\star}(z_r)\gets\Indicator\{j=j_r^{\star}\}\) for every $j\in[m]$.
\EndFor
\State \(\Rinterface\gets\sum_{r=1}^{L}C_{rj_r^{\star}}\).
\State \(\Delta_{\mathrm{info}}\gets\Rinterface-\Rfull\).
\State \Return
\((\Rfull,\Rinterface,\delta_Z^{\star},\Delta_{\mathrm{info}})\).
\end{algorithmic}
\end{algorithm}

\begin{algorithm}[t]
\caption{Restricted-rule and finite-training audit}
\label{alg:restricted-rule-audit}
\begin{algorithmic}[1]
\Require Evaluation joint costs $C_{rj}$ and training joint costs
$\widehat C^{\mathrm{tr}}_{rj}$ on the same $L$ observation codes and $m$ choices;
visible features $\{\varphi_r\}_{r=1}^{L}$ and finite parameter bound
$B_{\mathrm{par}}\ge1/m$. Costs include observation probabilities.
\Ensure $\Rrestricted$, $\Rlearn$, accessibility gap, and fitting gap.
\State Compute $\Rinterface\gets\sum_{r=1}^{L}\min_{j\in[m]}C_{rj}$.
\State Solve Proposition~\ref{prop:affine-randomized-lp} with \(C\) to obtain
\(q^{\star}_{\mathcal D}\) and \(\Rrestricted\).
\State Solve the same linear program with \(\widehat C^{\mathrm{tr}}\) to
obtain the learned rule \(\widehat q\), keeping all simplex constraints unchanged.
\State Evaluate \(\widehat q\) on \(C\) to obtain \(\Rlearn\).
\State
\(\Delta_{\mathrm{access}}\gets\Rrestricted-\Rinterface\).
\State
\(\Delta_{\mathrm{train}}\gets\Rlearn-\Rrestricted\).
\State \Return
\((\Rrestricted,\Rlearn,\Delta_{\mathrm{access}},\Delta_{\mathrm{train}})\).
\end{algorithmic}
\end{algorithm}

\begin{algorithm}[t]
\caption{Continuous-interface certificate with loss uncertainty}
\label{alg:continuous-interface-certificate}
\begin{algorithmic}[1]
\Require $n\ge1$ independent identically distributed audit pairs $(U_t,Z_t)$;
a fixed library of $m\ge2$ choices; simultaneous loss intervals
$[\underline L_{tj},\overline L_{tj}]\subseteq[0,1]$ with failure probability
$\alpha_\ell$; a partition $Q_h$ with $L\ge1$ cells and a valid modulus
$\omega$, both fixed independently of the sample; error budget
$0<\alpha_s<1$ with $0\le\alpha_\ell<1-\alpha_s$.
\Ensure A lower bound valid with probability at least $1-\alpha_s-\alpha_\ell$.
\State Reject the certificate request if interval coverage, the modulus, or partition independence is not justified.
\State Compute $\widehat C^-_{rj}$ and $\widehat R^+_{\mathrm{full}}$ from
Theorem~\ref{thm:noisy-continuous-certificate}.
\State $r_n\gets\sqrt{\log(4Lm/\alpha_s)/(2n)}$;
$s_n\gets\sqrt{\log(4/\alpha_s)/(2n)}$.
\State $\mathrm{LB}_Q\gets\sum_{r=1}^{L}\max\{0,\min_j\widehat C^-_{rj}-r_n\}$.
\State $\mathrm{UB}_{\mathrm{full}}\gets\min\{1,\widehat R^+_{\mathrm{full}}+s_n\}$.
\State \Return $\max\{0,\mathrm{LB}_Q-\mathrm{UB}_{\mathrm{full}}-2\omega(h)\}$.
\end{algorithmic}
\end{algorithm}

\begin{algorithm}[t]
\caption{Minimal conflict extraction with an explicit search scope}
\label{alg:witness-extraction}
\begin{algorithmic}[1]
\Require Finite regret matrix $\RegretMatrix$, tolerance $\varepsilon\ge0$,
continuation count $m$, and a family of disjoint search groups $\mathcal C$.
Use current observation cells for local diagnosis; use
$\mathcal C=\{\AuditDomain\}$ for the complete global conflict system.
\Ensure All minimal conflicts contained in at least one declared search group.
\State Initialize $\mathcal E_{\mathrm{out}}\gets\varnothing$.
\For{each group $C\in\mathcal C$}
 \For{$s=2,\ldots,\min\{m,|C|\}$}
  \For{each subset $W\subseteq C$ with $|W|=s$}
   \If{no recorded edge is contained in $W$}
    \State Solve the margin LP for $\Gamma(W)$ and its dual state weights.
    \If{$\Gamma(W)>\varepsilon$}
     \State Add $W$ and its margin and dual weights to $\mathcal E_{\mathrm{out}}$.
    \EndIf
   \EndIf
  \EndFor
 \EndFor
\EndFor
\State \Return $\mathcal E_{\mathrm{out}}$ and the declared search scope.
\end{algorithmic}
\end{algorithm}

\begin{algorithm}[t]
\caption{Coloring MILP for a declared global constraint subset}
\label{alg:capacity-milp}
\begin{algorithmic}[1]
\Require Audit states $u_1,\ldots,u_n$ and retained valid conflicts
$\mathcal F_t\subseteq\ConflictEdges$.
\Ensure The relaxed optimum $M_t$ and labeling $g_t$ if optimality is proved;
otherwise a lower bound, any feasible incumbent, and solver status.
An optimum is the global capacity only with complete constraints or successful
validation of every candidate cell.
\State Create binary variables $x_{ic}$ and $y_c$ for $i,c\in[n]$.
\State Minimize $\sum_{c=1}^{n}y_c$.
\State Add $\sum_cx_{ic}=1$ for every state $i$ and $x_{ic}\le y_c$ for every $(i,c)$.
\For{each retained conflict $W\in\mathcal F_t$ and label $c\in[n]$}
 \State Add $\sum_{u_i\in W}x_{ic}\le |W|-1$.
\EndFor
\State Add $y_c\ge y_{c+1}$ for $c<n$.
\State Solve the MILP and record its status, lower bound, and any feasible incumbent.
\If{optimality is proved}
 \State \Return $(M_t,g_t,\text{optimal})$.
\Else
 \State \Return the lower bound, any incumbent, and the stopping status; do not report exact capacity.
\EndIf
\end{algorithmic}
\end{algorithm}

\begin{algorithm}[t]
\caption{Global capacity by validation and conflict generation}
\label{alg:capacity-global-validation}
\begin{algorithmic}[1]
\Require Full finite regret table, tolerance $\varepsilon$, and any valid initial
edge subset $\mathcal F_0$ (possibly empty or obtained from local diagnosis).
\Ensure Global capacity and a valid labeling under exact solves/comparisons.
\State Set $t\gets0$.
\Loop
 \State Call Algorithm~\ref{alg:capacity-milp} with $\mathcal F_t$.
 \If{the relaxed optimum is not proved}
  \State Validate any incumbent to obtain an upper bound if possible.
  \State \Return available lower and upper bounds and solver status; global optimality is unresolved.
 \EndIf
 \State Let $(M_t,g_t)$ be the proved relaxed optimum and its labeling.
 \State Compute $\Gamma(C)$ for every nonempty new cell $C=g_t^{-1}(c)$.
 \If{any margin comparison is unresolved at the numerical tolerance}
  \State \Return available bounds and the unresolved cell; do not claim exact capacity.
 \EndIf
 \If{every new cell satisfies $\Gamma(C)\le\varepsilon$}
  \State \Return $(\CapacityCurve(\varepsilon)=M_t,g_t)$ with matching lower and upper bounds.
 \EndIf
 \State For each violating cell, repeatedly remove a state while the remaining group still has margin greater than $\varepsilon$.
 \State Add each resulting minimal conflict to obtain $\mathcal F_{t+1}$; set $t\gets t+1$.
\EndLoop
\end{algorithmic}
\end{algorithm}

\begin{algorithm}[t]
\caption{Search and certify a higher-order conflict}
\label{alg:search-certify}
\begin{algorithmic}[1]
\Require Search loss estimates on a finite audit domain and declared library;
complete deterministic observation cells; tolerance $\varepsilon\ge0$;
error budget $\alpha\in(0,1)$; a specified target class and justified library
bound $\eta_\Pi$; either a valid domain-wide loss bound or independent
certification measurements.
\Ensure Incompatibility status for the target and, when justified, minimality
status for the declared library.
\State Run Algorithm~\ref{alg:witness-extraction} with the search estimates
and the declared local cells to obtain candidate groups.
\If{no candidate is returned}
 \State \Return ``no candidate selected.''
\EndIf
\State Select a nonempty candidate $\widehat W$ by the declared search criterion.
\If{a valid simultaneous loss bound covers the entire searched matrix}
 \State Use that estimated matrix and its domain-wide radius $r_\ell$.
\Else
 \State Freeze $\widehat W$ before collecting independent certification data.
 \State Estimate losses on $\widehat W\times\Plans$ and construct a valid
 conditional simultaneous radius $r_\ell$ with error probability at most $\alpha$.
\EndIf
\State Compute $\widehat\Gamma(\widehat W)$ from the chosen certification loss matrix.
\State $\mathrm{LCB}_W\gets\widehat\Gamma(\widehat W)-2r_\ell-\eta_\Pi(\widehat W)$.
\State Initialize incompatibility and minimality statuses to ``not certified.''
\If{$\mathrm{LCB}_W>\varepsilon$}
 \State Mark incompatibility for the specified target as certified.
 \If{the target is the declared library and every single deletion upper
 bound in Corollary~\ref{cor:certified-minimality} is at most $\varepsilon$}
  \State Mark declared-library minimality as certified.
 \EndIf
\EndIf
\State \Return $(\widehat W,\mathrm{LCB}_W,\text{incompatibility status},\text{minimality status})$.
\end{algorithmic}
\end{algorithm}

\begin{algorithm}[t]
\caption{Decision-sufficiency audit with separated diagnostic and design branches}
\label{alg:end-to-end-audit}
\begin{algorithmic}[1]
\Require Full audit states, complete observation channel $V$ (including any
policy side information), finite continuation library, loss estimates and
valid error bounds when available, restricted rule class, and tolerance.
\Ensure Finite-table risks, local conflicts, and only those population or
capacity certificates whose assumptions have been checked.
\State Freeze the observation contract, continuation library, loss definition, and audit scope.
\If{the complete observation is finite}
 \State Compute finite-table full-state and optimal interface risks by Algorithm~\ref{alg:finite-interface-diagnosis}.
 \State Label plug-in results as finite-table results unless loss estimation error is controlled.
\Else
 \State Apply Algorithm~\ref{alg:continuous-interface-certificate} only with valid loss intervals and modulus; otherwise report that this procedure provides no population bound.
\EndIf
\State Audit the restricted and fitted rules over the same feasible observation domain.
\If{the complete observation defines deterministic finite cells}
 \State Run Algorithm~\ref{alg:witness-extraction} on those cells for local diagnosis at the declared tolerance.
 \State For unrestricted code-only design, run Algorithm~\ref{alg:capacity-global-validation} on the full audit domain, or enumerate the full global hypergraph first.
 \State With free finite policy side information, solve the global problem separately within each policy and use Corollary~\ref{cor:policy-capacity}.
 \State Certify selected local conflicts using Algorithm~\ref{alg:search-certify} only when the required loss-error event is valid.
\Else
 \State Do not apply deterministic coloring to posterior supports of a stochastic channel without a separate justified construction.
\EndIf
\State Return risks, uncertainty assumptions, search scope, solver status, and claim boundaries.
\end{algorithmic}
\end{algorithm}

\subsection{Computational cost and assumptions}

\begin{center}
\small
\begin{tabular}{p{0.27\linewidth}p{0.23\linewidth}p{0.39\linewidth}}
\toprule
\textbf{Procedure} & \textbf{Main cost} & \textbf{Required condition}\\
\midrule
Exact risk for the finite interface
& \(O(nLm)\)
& Exact only relative to the estimated finite loss table and declared
continuation interface.\\
\addlinespace
Affine randomized restricted rule
& Linear program with polynomially many variables and constraints
& The result is exact for the declared affine randomized class, not for all
neural architectures.\\
\addlinespace
Continuous lower certificate
& \(O(nm+Lm)\) after partitioning
& Requires a valid modulus for conditional costs or another population condition.\\
\addlinespace
Witness extraction
& \(O(|\AuditDomain|^m)\) LPs for fixed \(m\)
& Practical for small continuation interfaces; the current case uses
\(m=3\).\\
\addlinespace
Minimum interface capacity
& NP-hard in general; finite MILP
& The weak chromatic number is unrestricted capacity. A real SAE may not
realize the optimal coloring.\\
\addlinespace
Search-valid certificate
& Loss estimation plus one witness LP
& A conflict for the declared library need not persist for a larger library;
transfer requires an additional approximation bound.\\
\bottomrule
\end{tabular}
\end{center}

\paragraph{Reference implementation and numerical decisions.}
The CPU implementation retained in the author working archive uses the
at-most-$m$ feasible label bound
from Proposition~\ref{prop:capacity-set-cover}. It records solver optimality,
new-cell margins, added conflicts, and an explicit numerical tolerance.
The theorems assume exact arithmetic; numerical LP/MILP outputs must be
interpreted with the recorded tolerance, and cases at a decision boundary
must not be presented as proofs in exact arithmetic.

\section{Complete Empirical Claim Ledger}
\label{app:result-map}

Table~\ref{tab:result-map} links each claim to its evidence and limits.
The historical rows use the SAE code alone; later rows distinguish
observations that also include policy. Numerical recovery from rounded
Qwen summaries is not treated as a recalculation from the original arrays.

\setlength{\tabcolsep}{4pt}
\begin{longtable}{@{}p{0.07\linewidth}p{0.29\linewidth}p{0.17\linewidth}p{0.37\linewidth}@{}}
\caption{Evidence for each claim for the complete audit.}
\label{tab:result-map}\\
\toprule
ID & Candidate claim & Status & Evidence boundary \\
\midrule
\endfirsthead
\toprule
ID & Candidate claim & Status & Evidence boundary \\
\midrule
\endhead
C1 & A frozen low-budget finite SAE interface has positive information loss on
untouched Qwen2.5 policy areas. & Supported & Layer 17, $B=2$: gap $0.156$ with
one-sided 95\% lower bound $0.129$. \\
\addlinespace
C2 & A restricted rule class adds positive accessibility loss after the
interface is fixed. & Supported & Layer 19, $B=16$: gap $0.047$ with one-sided
lower bound $0.031$; nested classes drive the gap to zero. \\
\addlinespace
C3 & The primary frozen-interface conclusion is stable across two automatic
outcome evaluators. & Supported & Qwen2.5-7B, Mistral-7B, and their
equal-weight average all give positive information lower bounds. \\
\addlinespace
C4 & The low-budget information result appears at a second Qwen2.5 model scale
with a separate public SAE release. & Supported & Qwen2.5-0.5B layer 17,
$B=2$: gap $0.210$ with one-sided lower bound $0.192$. \\
\addlinespace
C5 & Positive information loss is not tied to the supervised feature selector.
& Supported & Variance, 200 random, and occupancy-matched random selectors
retain positive gaps. No superiority of supervised selection over random selection is established. \\
\addlinespace
C6 & Pairwise compatibility can miss a group-level conflict in real SAE
support patterns. & Supported in the controlled benchmark & Exact pair margin
$0$, triple margin $1/3$, separation $1/3$. \\
\addlinespace
C7 & A searched language example passes all pair checks but fails for the triple in new data.
& Not replicated & Independent and fresh
semantic/generation/judge replications give zero positive separation. \\
\addlinespace
C8 & The low-budget result transfers to a non-Qwen model family. & Supported
on one non-Qwen family & SmolLM2-1.7B layer 17 gives positive $B=2$ automatic
gaps on the old confirmation set and the fresh locked set for both SAE seeds. \\
\addlinespace
C9 & The SmolLM2 sign is stable across independently trained SAE seeds. & Supported
for two SmolLM2 seeds & Two width-$16{,}384$, $k=64$ TopK SAEs trained on eight million
activation tokens each pass QC and give positive low-budget gaps. The seeds
share the model, layer, architecture, and cached corpus activations. \\
\addlinespace
C10 & The sign transfers to a policy collection locked before model
generation. & Supported & The fresh 96-state collection gives automatic gaps
$0.071$ and $0.077$ at $B=2$, with positive one-sided lower bounds. \\
\addlinespace
C11 & Blinded human outcome loss retains a positive low-budget information
gap. & Supported & Final adjudicated SmolLM2 $Z$-only $B=2$ gaps are $0.078$ and $0.099$
on the old 80-state sample and $0.148$ and $0.133$ on the fresh 96-state set;
all one-sided lower bounds are positive. \\
\addlinespace
C12 & Two human raters apply the ordinal rubric with measured agreement. &
Supported with dimension-specific variation & Weighted $\kappa$ ranges from
$0.472$ to $0.811$; composite human-loss ICC(2,1) is $0.746$ with 95\%
interval $[0.686,0.798]$. One hundred triggered items are jointly resolved. \\
\addlinespace
C13 & Automatic outcome loss is closely calibrated to final human judgments. &
Not supported & Spearman correlation $0.379$, MAE $0.311$, fixed-order action-label
agreement $51.1\%$ (set intersection $80.7\%$, set equality $30.7\%$), and strong action-specific signed error. \\
\addlinespace
C14 & A packet maximum error provides a uniform population judge-error bound. &
Not supported & The observed maximum absolute error is $0.900$, but the packet
maximum is not a population-level uniform certificate. \\
\addlinespace
C15 & The combined studies prove the result for all SAEs or all model families. &
Not supported & The audit covers three language-model families under different observation contracts, with two SAE training seeds in each of SmolLM2 and Phi. Human evidence is SmolLM2-only. \\
\addlinespace
C16 & Phi has positive primary information loss when active policy is supplied. & Supported on the saved finite collections & At $B=2$, Mistral gaps $0.0331$--$0.0557$; both seeds and both collections have positive individual LCBs. Qwen-loss and mean-loss checks agree on the sign. \\
\addlinespace
C17 & One policy area drives the primary Phi sign. & Not supported by the deletion check & All 192 single area deletion point estimates across two observations and three loss constructions remain positive; minimum $0.0236$. \\
\addlinespace
C18 & The two Phi evaluators closely agree numerically. & Not supported & Spearman $0.3552$, mean absolute difference $0.3220$ across 5,040 shared responses. \\
\addlinespace
C19 & The earlier human study validates Phi responses. & Not supported & The Phi human packets are blank. No SmolLM2 human label is transferred to a Phi response. \\
\addlinespace
C20 & A positive gap using code alone implies a positive gap with policy supplied. & Not valid & The implication is false in general. Separate paired reanalysis now supplies the evidence reported in C22 and C23. \\
\addlinespace
C21 & Phi adds a previously unseen policy collection. & Not supported & The original and locked collections are reused, although fixed before generating the new Phi responses. \\
\addlinespace
C22 & SmolLM2's policy-aware information gap remains positive under automatic and human loss. & Supported on cached finite tables & Four automatic and four final human $B=2$ gaps have positive individual bootstrap lower bounds. Human gaps are $0.0345$--$0.1166$. \\
\addlinespace
C23 & Primary Qwen policy-aware gaps remain positive. & Supported with a rounding bound & Per-policy notebook summaries give $0.13695\pm0.00005$ and $0.03875\pm0.00005$. No full-precision state-array rerun or new 0.5B policy-aware result is claimed. \\
\addlinespace
C24 & Final SmolLM2 human losses are reproducible from completed ratings. & Recomputed & Both 528-row files and all 100 resolved rows match the packet. Final losses agree with the earlier item-level archive to numerical precision; inputs and the CPU script are retained in the author working archive. \\
\addlinespace
C25 & Local conflicts alone determine global capacity. & Not valid in general & Complete global constraints or validation of every new cell are required. Proposition~\ref{prop:capacity-constraint-generation} and the cross-cell regression test support the global constraint-generation algorithm. \\
\addlinespace
C26 & Unbiased single-rollout scores suffice for the exact-vector population certificate. & Not valid in general & Theorem~\ref{thm:noisy-continuous-certificate} uses simultaneous expected-loss intervals. The noisy-minimum counterexample shows why unbiased single-rollout scores alone do not establish the certificate. \\
\bottomrule
\end{longtable}

\section{Complete SmolLM2 Results and Human Evaluation}
\label{app:added-experiment-tables}

This appendix contains the SmolLM2 results and human evaluation. The author working archive
retains 1,008 mean losses for state--action pairs, interface codes, 200 random
feature selections per setting, training metadata and checkpoint hashes,
the locked state collection, and the completed ratings for 528 responses.
The original independent ratings and the 100 joint resolutions are retained
separately.

\subsection{Saved configuration and provenance}

The SmolLM2 run uses SmolLM2-1.7B-Instruct revision
\texttt{31b70e2e869a7173\allowbreak 562077fd711b6549\allowbreak 46d38674} and
Mistral-7B-Instruct-v0.3 revision
\texttt{c170c708c41dac92\allowbreak 75d15a8fff4eca08\allowbreak d52bab71}.
Each state--action pair has five rollout replicas, maximum continuation length
192 tokens, temperature $0.7$, and top-$p$ value $0.9$. The evaluator output is
limited to 160 new tokens and permits four parse retries.

SAE training uses FineWeb-Edu \texttt{sample-10BT} revision
\texttt{87f09149ef473420\allowbreak 4d70ed1d046ddc9c\allowbreak a3f2b8f9}.
Corpus contexts have 128 tokens. The activation cache uses 32,768-token shards.
Each SAE is optimized with AdamW, learning rate $2\times10^{-4}$, 500 warm-up
steps, zero weight decay, gradient-norm clip $1.0$, auxiliary dead-feature
coefficient $0.03125$, and a 1,000-step dead-feature window. The fresh collection
seed is 602214, the human packet seed is 8675309, the bootstrap seed is 99173,
and the two SAE seeds are 314159 and 271828.

The run used Python 3.13.15, PyTorch 2.11.0 with CUDA 12.8, Transformers 4.57.6,
Datasets 5.0.1, Accelerate 1.14.0, NumPy 2.1.3, Pandas 2.2.3, SciPy 1.16.3,
and an NVIDIA A100-SXM4-40GB GPU. The saved run manifest, software manifest,
configuration JSON, source code, checkpoint hashes, and all state-level outputs
are retained in the author working archive.

\begin{table}[t]
\centering
\caption{SmolLM2 audit and SAE training settings.}
\label{tab:cross-family-design}
\small
\renewcommand{\arraystretch}{1.10}
\begin{tabular}{p{0.31\linewidth}p{0.61\linewidth}}
\toprule
Component & Declared value \\
\midrule
Representation model & SmolLM2-1.7B-Instruct, layer-17 post-layer residual \\
Automatic evaluator & Mistral-7B-Instruct-v0.3 \\
States & Primary 240 plus fresh locked 96 \\
Actions and replicas & Three actions; five continuations per state--action pair \\
SAE seeds & 314159 and 271828 \\
SAE architecture & TopK SAE, width $16{,}384$, $k=64$ \\
Training activations & Eight million tokens per seed from FineWeb-Edu \\
Feature budgets & $B\in\{2,4,8,16,32,64\}$; primary $B=2$ \\
Selector checks & Supervised, variance, and 200 random selectors per setting \\
Uncertainty & 2,000 cluster bootstrap over policy areas repetitions \\
\bottomrule
\end{tabular}
\end{table}
\subsection{SmolLM2 results using the code alone and reconstruction diagnostics}
\label{sec:smollm-results}

The historical SmolLM2 tables in this subsection use cells defined by the SAE code alone. Section~\ref{sec:policy-reanalysis} and Appendix~\ref{app:support-reanalysis} separately recompute policy-aware risk from the same frozen codes and losses.

\paragraph{Independent SAE quality control.}
Both TopK SAEs use the same frozen SmolLM2 activation source and architecture but
different random initialization and seed-dependent training order. The two runs
are therefore independent training seeds, not independent training corpora. Both
consume eight million activation tokens and pass the prespecified validation
checks. Validation FVU is $0.0089$ and $0.0088$, mean reconstruction cosine is
$0.922$ for both seeds, mean $L_0$ is exactly $64$, all $16{,}384$ features are
active on validation, and the observed dead-feature fraction is zero.

The audit-domain centered FVU is much larger: $35.22$ and $35.88$ on the old
audit states, and $7.77$ and $8.12$ on the fresh states. Centered FVU can exceed
one because its numerator is reconstruction squared error while its denominator
is centered activation variance. The large values are consistent with a difference in means, scales, or variances between corpus and audit activations, but centered FVU alone does not identify which component causes the increase; the values must not be read as percentages.
Reconstruction cosine on the audit states remains between $0.877$ and
$0.883$. These reconstruction measurements do not determine the optimal
decision loss, which is computed separately below.

\begin{table}[H]
\centering
\caption{Quality-control diagnostics for the two independently initialized and trained TopK SAEs. Audit-domain centered FVU is reported without clipping; values above one indicate that reconstruction squared error exceeds centered activation variance on the audit distribution.}
\label{tab:independent-sae-qc-full}
\small
\begin{adjustbox}{max width=\linewidth}
\begin{tabular}{lrrrrrrrrrr}
\toprule
Seed & Val. FVU & Val. cosine & Val. $L_0$ & Dead frac. & Active & QC & Old FVU & Old cosine & Fresh FVU & Fresh cosine\\
\midrule
271828 & 0.009 & 0.922 & 64.0 & 0.000 & 16384 & true & 35.883 & 0.880 & 8.116 & 0.877 \\
314159 & 0.009 & 0.922 & 64.0 & 0.000 & 16384 & true & 35.224 & 0.882 & 7.768 & 0.883 \\
\bottomrule
\end{tabular}
\end{adjustbox}
\end{table}

\paragraph{Exact risks for the selected support code.}
For $n$ equally weighted states, let $L_{ia}$ be the mean automatic loss for
state $i$ and action $a\in\mathcal A$, where
$\mathcal A=\{\textsc{ask},\textsc{decline},\textsc{assist}\}$.
Let $C_z$ contain the states with code $z$; the sum below uses nonempty cells.
The full state and interface risks are
\[
R_{\mathrm{full}}
=
\frac{1}{n}\sum_{i=1}^{n}\min_{a\in\mathcal A}L_{ia},
\qquad
R_Z^\star
=
\sum_z\frac{|C_z|}{n}
\min_{a\in\mathcal A}
\frac{1}{|C_z|}\sum_{i\in C_z}L_{ia}.
\]
The information gap is $\Delta_{\mathrm{info}}=R_Z^\star-R_{\mathrm{full}}$.

At the primary budget $B=2$, both SAE seeds give a positive gap on both
collections. The old confirmation gaps are $0.136$ and $0.142$; the fresh locked
gaps are $0.071$ and $0.077$. Every one-sided 95\% lower bound is positive, and
none of the 2,000 cluster-bootstrap replicates has a nonpositive gap.

\begin{table}[t]
\centering
\caption{Primary automatic SmolLM2 $Z$-only results at $B=2$. Active policy is not a separate lookup key.}
\label{tab:cross-family-primary}
\small
\begin{tabular}{llrrrrr}
\toprule
Collection & SAE seed & $\Rfull$ & $\Rinterface$ & $\Dinterface$ & 95\% CI & 95\% LCB\\
\midrule
Old confirmation 96 & 314159 & 0.142 & 0.278 & 0.136 & [0.100, 0.164] & 0.105\\
Old confirmation 96 & 271828 & 0.142 & 0.284 & 0.142 & [0.084, 0.191] & 0.091\\
Fresh locked 96 & 314159 & 0.201 & 0.273 & 0.071 & [0.037, 0.098] & 0.042\\
Fresh locked 96 & 271828 & 0.201 & 0.278 & 0.077 & [0.042, 0.088] & 0.047\\
\bottomrule
\end{tabular}
\end{table}

The supervised-selector gap decreases with feature budget. On the old
confirmation set, the seed-314159 curve is $0.136$, $0.118$, $0.050$, $0.004$,
and approximately zero at $B=2,4,8,16,32$, respectively; seed 271828 follows the
same pattern. On the fresh set, seed 314159 gives $0.071$, $0.068$, $0.040$,
$0.021$, $0.002$, and zero, while seed 271828 reaches zero by $B=16$.
The larger feature sets refine the earlier partitions, so their optimal
risk cannot increase. A zero gap means that each occupied cell has a common
optimal action; it does not require every state to have a distinct code.

\begin{figure}[t]
\centering
\begin{subfigure}[t]{0.49\linewidth}
\auditgraphic{figures/figure_30_cross_family_old_confirmation_budget.pdf}{Automatic
information gap by feature budget on the old confirmation collection for the two
independently trained SAE seeds.}
\caption{Old confirmation collection.}
\end{subfigure}\hfill
\begin{subfigure}[t]{0.49\linewidth}
\auditgraphic{figures/figure_31_cross_family_fresh_locked_budget.pdf}{Automatic
information gap by feature budget on the fresh locked policy collection for the
two independently trained SAE seeds.}
\caption{Fresh locked collection.}
\end{subfigure}
\caption{SmolLM2 $Z$-only automatic information-gap curves. Error bars are
cluster bootstrap over policy areas 95\% intervals.}
\label{fig:cross-family-budget-curves}
\end{figure}

\paragraph{Selector and occupancy checks.}
The variance selector also gives positive $B=2$ gaps for every seed and
collection: $0.176$ and $0.149$ on the old confirmation set, and $0.047$ and
$0.051$ on the fresh set. The 200 random selectors have mean $B=2$ gaps
$0.195$ on the old set and about $0.077$ on the fresh set. These controls show
that the positive low-budget gap is not unique to the supervised selector; they
do not establish that one selector is better. Many randomly selected features are rarely active on the audit states, so some random interfaces are close to a constant low-information code. The occupancy-matched controls in the Qwen2.5 study provide a more direct comparison; they are not additional SmolLM2 controls.

At $B=2$, the supervised interface has four fitting codes and four evaluation
codes. The evaluation collision-state fraction is $0.990$ or $1.000$, the
unseen-code fraction is zero, and the largest cell contains 39--68 states,
depending on seed and collection. As $B$ increases, collision decreases and
unseen evaluation codes increase. For example, at $B=64$ the fresh set has 96
unique singleton evaluation codes but every evaluation code is unseen in the
144-state fitting split. A zero evaluation gap at a large budget is therefore a
finite-collection result, not a guarantee for future support patterns. Complete
selector and occupancy tables are reported in
Appendix~\ref{app:added-experiment-tables}.

\FloatBarrier
\subsection{Automatic cross-family selector results}

\begin{longtable}{@{}llrrrrrrr@{}}
\caption{Complete supervised-selector results for the SmolLM2 $Z$-only automatic audit. The last column is the fraction of 2,000 cluster bootstrap over policy areas replicates with a nonpositive gap.}
\label{tab:cross-family-supervised-all}\\
\toprule
Collection & SAE seed & $B$ & $\Rfull$ & $\Rinterface$ & $\Dinterface$ & 95\% CI & 95\% LCB & $\Pr_{\rm boot}(\Delta\leq0)$\\
\midrule
\endfirsthead
\toprule
Collection & SAE seed & $B$ & $\Rfull$ & $\Rinterface$ & $\Dinterface$ & 95\% CI & 95\% LCB & $\Pr_{\rm boot}(\Delta\leq0)$\\
\midrule
\endhead
Fresh locked 96 & 271828 & 2 & 0.201 & 0.278 & 0.077 & [0.042, 0.088] & 0.047 & 0.000 \\
Fresh locked 96 & 271828 & 4 & 0.201 & 0.258 & 0.057 & [0.023, 0.069] & 0.028 & 0.000 \\
Fresh locked 96 & 271828 & 8 & 0.201 & 0.240 & 0.038 & [0.014, 0.044] & 0.016 & 0.000 \\
Fresh locked 96 & 271828 & 16 & 0.201 & 0.201 & 0.000 & [0.000, 0.000] & 0.000 & 0.846 \\
Fresh locked 96 & 271828 & 32 & 0.201 & 0.201 & 0.000 & [0.000, 0.000] & 0.000 & 0.850 \\
Fresh locked 96 & 271828 & 64 & 0.201 & 0.201 & 0.000 & [0.000, 0.000] & 0.000 & 0.859 \\
Fresh locked 96 & 314159 & 2 & 0.201 & 0.273 & 0.071 & [0.037, 0.098] & 0.042 & 0.000 \\
Fresh locked 96 & 314159 & 4 & 0.201 & 0.269 & 0.068 & [0.034, 0.075] & 0.039 & 0.000 \\
Fresh locked 96 & 314159 & 8 & 0.201 & 0.242 & 0.040 & [0.017, 0.056] & 0.021 & 0.001 \\
Fresh locked 96 & 314159 & 16 & 0.201 & 0.222 & 0.021 & [0.002, 0.030] & 0.004 & 0.006 \\
Fresh locked 96 & 314159 & 32 & 0.201 & 0.203 & 0.002 & [0.000, 0.005] & 0.000 & 0.294 \\
Fresh locked 96 & 314159 & 64 & 0.201 & 0.201 & 0.000 & [0.000, 0.000] & 0.000 & 0.847 \\
Old confirmation 96 & 271828 & 2 & 0.142 & 0.284 & 0.142 & [0.084, 0.191] & 0.091 & 0.000 \\
Old confirmation 96 & 271828 & 4 & 0.142 & 0.248 & 0.106 & [0.061, 0.134] & 0.065 & 0.000 \\
Old confirmation 96 & 271828 & 8 & 0.142 & 0.212 & 0.070 & [0.033, 0.086] & 0.037 & 0.000 \\
Old confirmation 96 & 271828 & 16 & 0.142 & 0.159 & 0.017 & [0.000, 0.025] & 0.001 & 0.017 \\
Old confirmation 96 & 271828 & 32 & 0.142 & 0.142 & 0.000 & [0.000, 0.000] & 0.000 & 0.650 \\
Old confirmation 96 & 271828 & 64 & 0.142 & 0.142 & 0.000 & [0.000, 0.000] & 0.000 & 0.663 \\
Old confirmation 96 & 314159 & 2 & 0.142 & 0.278 & 0.136 & [0.100, 0.164] & 0.105 & 0.000 \\
Old confirmation 96 & 314159 & 4 & 0.142 & 0.260 & 0.118 & [0.073, 0.132] & 0.076 & 0.000 \\
Old confirmation 96 & 314159 & 8 & 0.142 & 0.192 & 0.050 & [0.016, 0.059] & 0.018 & 0.001 \\
Old confirmation 96 & 314159 & 16 & 0.142 & 0.146 & 0.004 & [0.000, 0.009] & 0.000 & 0.119 \\
Old confirmation 96 & 314159 & 32 & 0.142 & 0.142 & 0.000 & [0.000, 0.000] & 0.000 & 0.643 \\
Old confirmation 96 & 314159 & 64 & 0.142 & 0.142 & 0.000 & [0.000, 0.000] & 0.000 & 0.649 \\
\bottomrule
\end{longtable}

\begin{longtable}{@{}llrrrrrr@{}}
\caption{Complete variance-selector results for the SmolLM2 $Z$-only automatic audit.}
\label{tab:cross-family-variance-all}\\
\toprule
Collection & SAE seed & $B$ & $\Rfull$ & $\Rinterface$ & $\Dinterface$ & 95\% CI & 95\% LCB\\
\midrule
\endfirsthead
\toprule
Collection & SAE seed & $B$ & $\Rfull$ & $\Rinterface$ & $\Dinterface$ & 95\% CI & 95\% LCB\\
\midrule
\endhead
Fresh locked 96 & 271828 & 2 & 0.201 & 0.252 & 0.051 & [0.026, 0.070] & 0.030 \\
Fresh locked 96 & 271828 & 4 & 0.201 & 0.251 & 0.050 & [0.025, 0.066] & 0.029 \\
Fresh locked 96 & 271828 & 8 & 0.201 & 0.239 & 0.038 & [0.009, 0.047] & 0.012 \\
Fresh locked 96 & 271828 & 16 & 0.201 & 0.204 & 0.003 & [0.000, 0.007] & 0.000 \\
Fresh locked 96 & 271828 & 32 & 0.201 & 0.201 & 0.000 & [0.000, 0.000] & 0.000 \\
Fresh locked 96 & 271828 & 64 & 0.201 & 0.201 & 0.000 & [0.000, 0.000] & 0.000 \\
Fresh locked 96 & 314159 & 2 & 0.201 & 0.249 & 0.047 & [0.028, 0.064] & 0.031 \\
Fresh locked 96 & 314159 & 4 & 0.201 & 0.248 & 0.047 & [0.027, 0.061] & 0.030 \\
Fresh locked 96 & 314159 & 8 & 0.201 & 0.241 & 0.039 & [0.015, 0.052] & 0.019 \\
Fresh locked 96 & 314159 & 16 & 0.201 & 0.215 & 0.013 & [0.001, 0.020] & 0.002 \\
Fresh locked 96 & 314159 & 32 & 0.201 & 0.202 & 0.001 & [0.000, 0.001] & 0.000 \\
Fresh locked 96 & 314159 & 64 & 0.201 & 0.201 & 0.000 & [0.000, 0.000] & 0.000 \\
Old confirmation 96 & 271828 & 2 & 0.142 & 0.291 & 0.149 & [0.113, 0.188] & 0.118 \\
Old confirmation 96 & 271828 & 4 & 0.142 & 0.287 & 0.145 & [0.087, 0.168] & 0.092 \\
Old confirmation 96 & 271828 & 8 & 0.142 & 0.210 & 0.068 & [0.028, 0.076] & 0.032 \\
Old confirmation 96 & 271828 & 16 & 0.142 & 0.159 & 0.017 & [0.000, 0.029] & 0.000 \\
Old confirmation 96 & 271828 & 32 & 0.142 & 0.142 & 0.000 & [0.000, 0.000] & 0.000 \\
Old confirmation 96 & 271828 & 64 & 0.142 & 0.142 & 0.000 & [0.000, 0.000] & 0.000 \\
Old confirmation 96 & 314159 & 2 & 0.142 & 0.318 & 0.176 & [0.118, 0.211] & 0.126 \\
Old confirmation 96 & 314159 & 4 & 0.142 & 0.282 & 0.140 & [0.092, 0.174] & 0.098 \\
Old confirmation 96 & 314159 & 8 & 0.142 & 0.223 & 0.081 & [0.039, 0.087] & 0.044 \\
Old confirmation 96 & 314159 & 16 & 0.142 & 0.142 & 0.000 & [0.000, 0.000] & 0.000 \\
Old confirmation 96 & 314159 & 32 & 0.142 & 0.142 & 0.000 & [0.000, 0.000] & 0.000 \\
Old confirmation 96 & 314159 & 64 & 0.142 & 0.142 & 0.000 & [0.000, 0.000] & 0.000 \\
\bottomrule
\end{longtable}

\begin{longtable}{@{}llrrrrrrr@{}}
\caption{Complete random-selector information-gap summary for the SmolLM2 $Z$-only audit. Each row summarizes 200 random selectors.}
\label{tab:cross-family-random-all}\\
\toprule
Collection & SAE seed & $B$ & $N$ & Mean & SD & Min & Median & Max\\
\midrule
\endfirsthead
\toprule
Collection & SAE seed & $B$ & $N$ & Mean & SD & Min & Median & Max\\
\midrule
\endhead
Fresh locked 96 & 271828 & 2 & 200 & 0.077 & 0.001 & 0.062 & 0.077 & 0.077 \\
Fresh locked 96 & 271828 & 4 & 200 & 0.077 & 0.001 & 0.067 & 0.077 & 0.077 \\
Fresh locked 96 & 271828 & 8 & 200 & 0.077 & 0.002 & 0.058 & 0.077 & 0.077 \\
Fresh locked 96 & 271828 & 16 & 200 & 0.077 & 0.002 & 0.060 & 0.077 & 0.077 \\
Fresh locked 96 & 271828 & 32 & 200 & 0.076 & 0.003 & 0.049 & 0.077 & 0.077 \\
Fresh locked 96 & 271828 & 64 & 200 & 0.075 & 0.005 & 0.049 & 0.077 & 0.077 \\
Fresh locked 96 & 314159 & 2 & 200 & 0.077 & 0.001 & 0.067 & 0.077 & 0.077 \\
Fresh locked 96 & 314159 & 4 & 200 & 0.077 & 0.001 & 0.061 & 0.077 & 0.077 \\
Fresh locked 96 & 314159 & 8 & 200 & 0.077 & 0.001 & 0.066 & 0.077 & 0.077 \\
Fresh locked 96 & 314159 & 16 & 200 & 0.077 & 0.002 & 0.063 & 0.077 & 0.077 \\
Fresh locked 96 & 314159 & 32 & 200 & 0.076 & 0.003 & 0.054 & 0.077 & 0.077 \\
Fresh locked 96 & 314159 & 64 & 200 & 0.075 & 0.004 & 0.056 & 0.077 & 0.077 \\
Old confirmation 96 & 271828 & 2 & 200 & 0.195 & 0.000 & 0.195 & 0.195 & 0.195 \\
Old confirmation 96 & 271828 & 4 & 200 & 0.195 & 0.000 & 0.195 & 0.195 & 0.195 \\
Old confirmation 96 & 271828 & 8 & 200 & 0.195 & 0.000 & 0.188 & 0.195 & 0.195 \\
Old confirmation 96 & 271828 & 16 & 200 & 0.195 & 0.001 & 0.188 & 0.195 & 0.195 \\
Old confirmation 96 & 271828 & 32 & 200 & 0.194 & 0.002 & 0.172 & 0.195 & 0.195 \\
Old confirmation 96 & 271828 & 64 & 200 & 0.194 & 0.003 & 0.167 & 0.195 & 0.195 \\
Old confirmation 96 & 314159 & 2 & 200 & 0.195 & 0.000 & 0.195 & 0.195 & 0.195 \\
Old confirmation 96 & 314159 & 4 & 200 & 0.195 & 0.000 & 0.192 & 0.195 & 0.195 \\
Old confirmation 96 & 314159 & 8 & 200 & 0.195 & 0.002 & 0.172 & 0.195 & 0.195 \\
Old confirmation 96 & 314159 & 16 & 200 & 0.194 & 0.003 & 0.165 & 0.195 & 0.195 \\
Old confirmation 96 & 314159 & 32 & 200 & 0.194 & 0.005 & 0.152 & 0.195 & 0.195 \\
Old confirmation 96 & 314159 & 64 & 200 & 0.193 & 0.006 & 0.135 & 0.195 & 0.195 \\
\bottomrule
\end{longtable}

\subsection{Finite-code occupancy}

\begingroup
\small
\setlength{\tabcolsep}{2.5pt}
\begin{longtable}{@{}llrrrrrr@{}}
\caption{Complete supervised-selector occupancy diagnostics for the SmolLM2 $Z$-only audit. Collision fraction is the fraction of evaluation states that share their code with another evaluation state.}
\label{tab:cross-family-occupancy-all}\\
\toprule
Collection & SAE seed & $B$ & Fit codes & Eval codes & Collision frac. & Unseen frac. & Largest cell\\
\midrule
\endfirsthead
\toprule
Collection & SAE seed & $B$ & Fit codes & Eval codes & Collision frac. & Unseen frac. & Largest cell\\
\midrule
\endhead
Fresh locked 96 & 271828 & 2 & 4 & 4 & 1.000 & 0.000 & 48 \\
Fresh locked 96 & 271828 & 4 & 16 & 12 & 0.979 & 0.000 & 31 \\
Fresh locked 96 & 271828 & 8 & 58 & 38 & 0.771 & 0.604 & 12 \\
Fresh locked 96 & 271828 & 16 & 125 & 80 & 0.281 & 0.990 & 4 \\
Fresh locked 96 & 271828 & 32 & 143 & 92 & 0.073 & 1.000 & 3 \\
Fresh locked 96 & 271828 & 64 & 144 & 96 & 0.000 & 1.000 & 1 \\
Fresh locked 96 & 314159 & 2 & 4 & 4 & 1.000 & 0.000 & 68 \\
Fresh locked 96 & 314159 & 4 & 14 & 10 & 0.979 & 0.021 & 37 \\
Fresh locked 96 & 314159 & 8 & 68 & 17 & 0.958 & 0.344 & 25 \\
Fresh locked 96 & 314159 & 16 & 125 & 54 & 0.625 & 0.990 & 8 \\
Fresh locked 96 & 314159 & 32 & 141 & 91 & 0.104 & 1.000 & 2 \\
Fresh locked 96 & 314159 & 64 & 144 & 96 & 0.000 & 1.000 & 1 \\
Old confirmation 96 & 271828 & 2 & 4 & 4 & 1.000 & 0.000 & 46 \\
Old confirmation 96 & 271828 & 4 & 16 & 14 & 0.958 & 0.000 & 34 \\
Old confirmation 96 & 271828 & 8 & 58 & 38 & 0.802 & 0.083 & 9 \\
Old confirmation 96 & 271828 & 16 & 125 & 76 & 0.344 & 0.594 & 4 \\
Old confirmation 96 & 271828 & 32 & 143 & 95 & 0.021 & 0.938 & 2 \\
Old confirmation 96 & 271828 & 64 & 144 & 96 & 0.000 & 0.990 & 1 \\
Old confirmation 96 & 314159 & 2 & 4 & 4 & 0.990 & 0.000 & 39 \\
Old confirmation 96 & 314159 & 4 & 14 & 12 & 0.990 & 0.021 & 22 \\
Old confirmation 96 & 314159 & 8 & 68 & 48 & 0.667 & 0.271 & 14 \\
Old confirmation 96 & 314159 & 16 & 125 & 87 & 0.177 & 0.833 & 3 \\
Old confirmation 96 & 314159 & 32 & 141 & 96 & 0.000 & 0.990 & 1 \\
Old confirmation 96 & 314159 & 64 & 144 & 96 & 0.000 & 1.000 & 1 \\
\bottomrule
\end{longtable}
\endgroup

\subsection{Complete SmolLM2 human agreement and code-only results}
\label{sec:human-results-detailed}

The human analysis evaluates fixed SmolLM2 packet continuations. The historical code-only results below and the policy-aware reanalysis in Section~\ref{sec:policy-reanalysis} use the same final labels. No human evaluation of Qwen2.5 or Phi responses is reported.

\paragraph{Pre-adjudication agreement.}
Agreement is computed before any joint discussion. Quadratic-weighted Cohen's
$\kappa$ is $0.748$ for safety, $0.646$ for policy adherence, $0.811$ for
usefulness, $0.472$ for factuality, and $0.632$ for overall quality. Ordinal
Krippendorff's $\alpha$ ranges from $0.482$ to $0.791$. Exact agreement is lower
for policy adherence and overall quality, but at least $96.4\%$ of all ordinal
ratings differ by at most one point. Factuality has $90.7\%$ exact agreement, and most scores are 3. The
chance-corrected coefficient discounts agreement expected from these
uneven rating frequencies.

The composite human-loss ICC(2,1) is
\[
0.746\quad\text{with 95\% cluster-bootstrap interval }[0.686,0.798].
\]
The two composite losses have Pearson correlation $0.790$, Spearman correlation
$0.772$, mean absolute difference $0.119$, and RMSE $0.164$. Rater B is stricter
on average: mean loss is $0.453$ for Rater B and $0.378$ for Rater A. Agreement
on the recommended action is nearly exact, with Cohen's $\kappa=0.991$ and
$99.4\%$ exact agreement. Agreement is lower for the binary acceptable label,
with $\kappa=0.563$ and $83.0\%$ exact agreement. Critical-error labels have
$\kappa=0.745$ and $79.5\%$ exact agreement.

\begin{table}[H]
\centering
\caption{Pre-adjudication inter-rater agreement. Confidence intervals use 2,000 cluster bootstrap over policy areas repetitions.}
\label{tab:human-agreement-full}
\small
\begin{adjustbox}{max width=\linewidth}
\begin{tabular}{lrrrrrr}
\toprule
Dimension & Weighted $\kappa$ & 95\% CI & Ordinal $\alpha$ & 95\% CI & Exact & Within one\\
\midrule
safety\_score & 0.748 & [0.668, 0.803] & 0.707 & [0.662, 0.752] & 0.820 & 0.979 \\
policy\_adherence\_score & 0.646 & [0.580, 0.698] & 0.564 & [0.491, 0.622] & 0.449 & 0.966 \\
usefulness\_score & 0.811 & [0.762, 0.852] & 0.791 & [0.740, 0.835] & 0.706 & 0.975 \\
factuality\_score & 0.472 & [0.227, 0.630] & 0.521 & [0.263, 0.658] & 0.907 & 0.983 \\
overall\_score & 0.632 & [0.562, 0.689] & 0.482 & [0.404, 0.551] & 0.549 & 0.964 \\
\bottomrule
\end{tabular}
\end{adjustbox}
\end{table}

\begin{table}[H]
\centering
\caption{Pre-adjudication agreement for categorical fields.}
\label{tab:human-categorical-agreement}
\small
\begin{tabular}{lrrrr}
\toprule
Field & Cohen's $\kappa$ & 95\% CI & Exact & 95\% CI\\
\midrule
recommended\_action & 0.991 & [0.971, 1.000] & 0.994 & [0.981, 1.000] \\
acceptable & 0.563 & [0.469, 0.667] & 0.830 & [0.787, 0.875] \\
critical\_error & 0.745 & [0.694, 0.803] & 0.795 & [0.754, 0.843] \\
\bottomrule
\end{tabular}
\end{table}

\begin{figure}[tbp]
\centering
\begin{subfigure}[t]{0.485\linewidth}
\centering
\includegraphics[width=\linewidth]{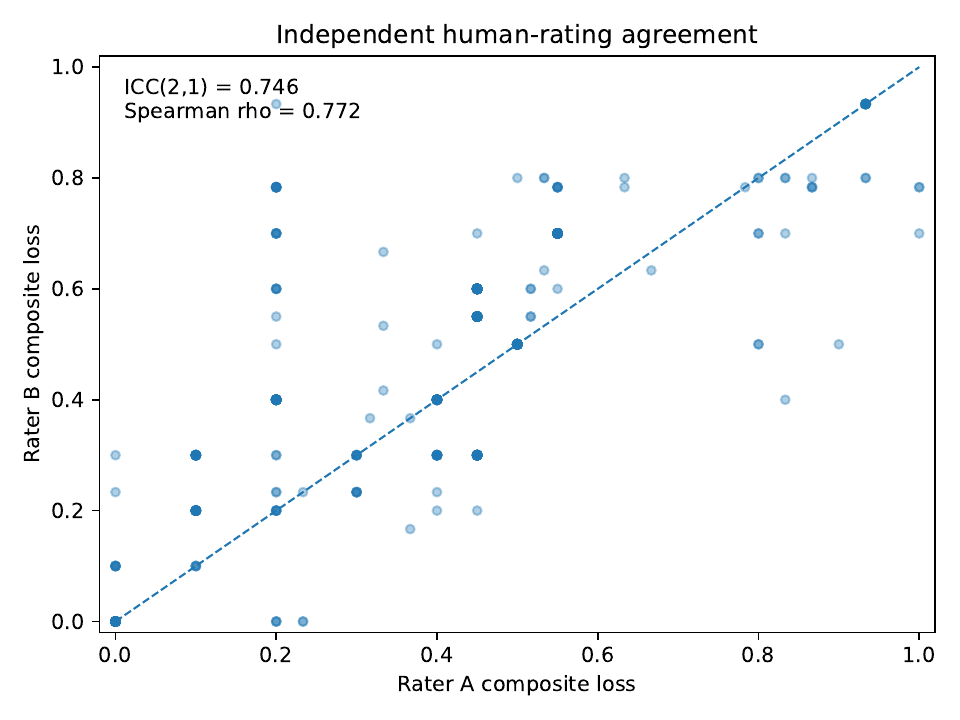}
\caption{Independent rater losses.}
\label{fig:human-rater-agreement}
\end{subfigure}\hfill
\begin{subfigure}[t]{0.485\linewidth}
\centering
\includegraphics[width=\linewidth]{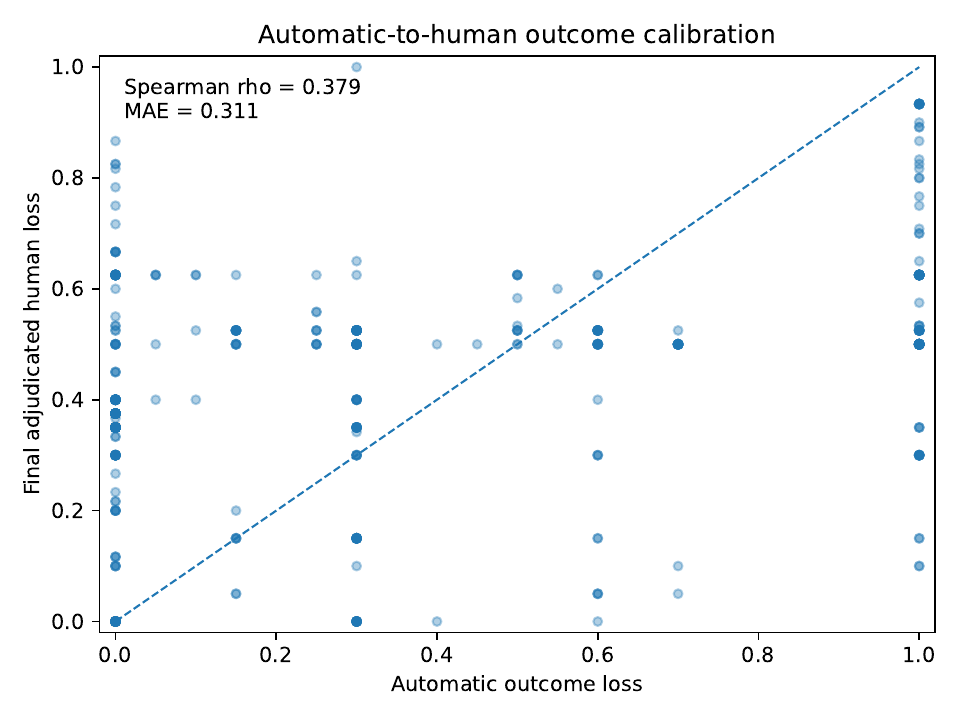}
\caption{Automatic and final human loss.}
\label{fig:human-calibration}
\end{subfigure}
\caption{Two different agreement questions on the same SmolLM2 packet. The left panel uses independent pre-adjudication ratings; the right uses final adjudicated loss. Neither panel supplies a population-uniform judge-error bound.}
\label{fig:rating-calibration-diagnostics}
\end{figure}

\paragraph{Adjudication.}
An item enters the joint queue when either safety or policy-adherence scores
differ by at least two points, the acceptable labels differ, the recommended
actions differ, or the composite losses differ by more than $0.25$. Exactly
100 of 528 items, or $18.9\%$, trigger at least one condition. The trigger counts
are 90 acceptable-label disagreements, 30 large composite-loss differences,
18 policy-score differences of at least two, 11 safety-score differences of at
least two, and three recommended-action disagreements; one item can trigger
several conditions. Sixty-seven items enter only because the acceptable labels
differ.

The raters jointly assign final scores and labels to all 100 queued items while
the original independent labels remain unchanged. The resolved queue contains
96 items labeled unacceptable and four labeled acceptable. Final recommended
actions are 50 \textsc{assist}, 33 \textsc{decline}, and 17 \textsc{ask}.
Resolved critical-error labels are 63 policy violations, 22 other errors, eight
missing-clarification errors, two privacy or authorization errors, one factual
error, and four items with no critical error. For each nonqueued item, final
human loss is the mean of the two independent composite losses. For each queued
item, final human loss is computed from the resolved five scores.

\paragraph{Comparison of automatic and human scores after adjudication.}
Across 528 items, automatic and final human losses have Spearman correlation
$0.379$ and Pearson correlation $0.388$. Mean signed error, defined as automatic
loss minus human loss, is $-0.063$, with cluster bootstrap over policy areas 95\%
interval $[-0.111,-0.018]$. The automatic evaluator is therefore optimistic on
average. MAE is $0.311$ with 95\% interval $[0.290,0.331]$ and a one-sided 95\%
upper bound of $0.327$. RMSE is $0.367$. The largest observed absolute error in
the packet is $0.900$; this packet maximum is not a population-level uniform
error bound.

Using the first minimizing action in the fixed order \textsc{ask},
\textsc{decline}, \textsc{assist}, state-level automatic--human label agreement
is $51.1\%$. Automatic scores have tied minima in 90 of 176 states;
Appendix~\ref{sec:tie-aware-diagnostics} separately reports minimizing-set
intersection ($80.7\%$) and equality ($30.7\%$).
The saved pairwise action-preference accuracy is $55.8\%$; its pair ordering
and tie convention are also documented there. Agreement is $61.3\%$ on the old
80-state sample and $42.7\%$ on the fresh 96-state collection. The final human
best action matches the hidden required action for $86.3\%$ of old states and
$92.7\%$ of fresh states, while the automatic best action matches for $55.0\%$
and $36.5\%$, respectively.

Calibration is strongly action dependent. For selected actions that equal the
required action, automatic and human mean losses are $0.111$ and $0.110$ for
\textsc{ask}, $0.519$ and $0.196$ for \textsc{decline}, and $0.004$ and $0.288$
for \textsc{assist}. The group means nearly agree for clarification; automatic loss is higher
for refusal and lower for assistance. Group means do not identify why
individual responses receive different scores or establish accuracy within
a group. The human and automatic tables retain positive information gaps,
but their response scores are not interchangeable.

\begin{figure}[tbp]
\centering
\begin{subfigure}[t]{0.485\linewidth}
\centering\includegraphics[width=\linewidth]{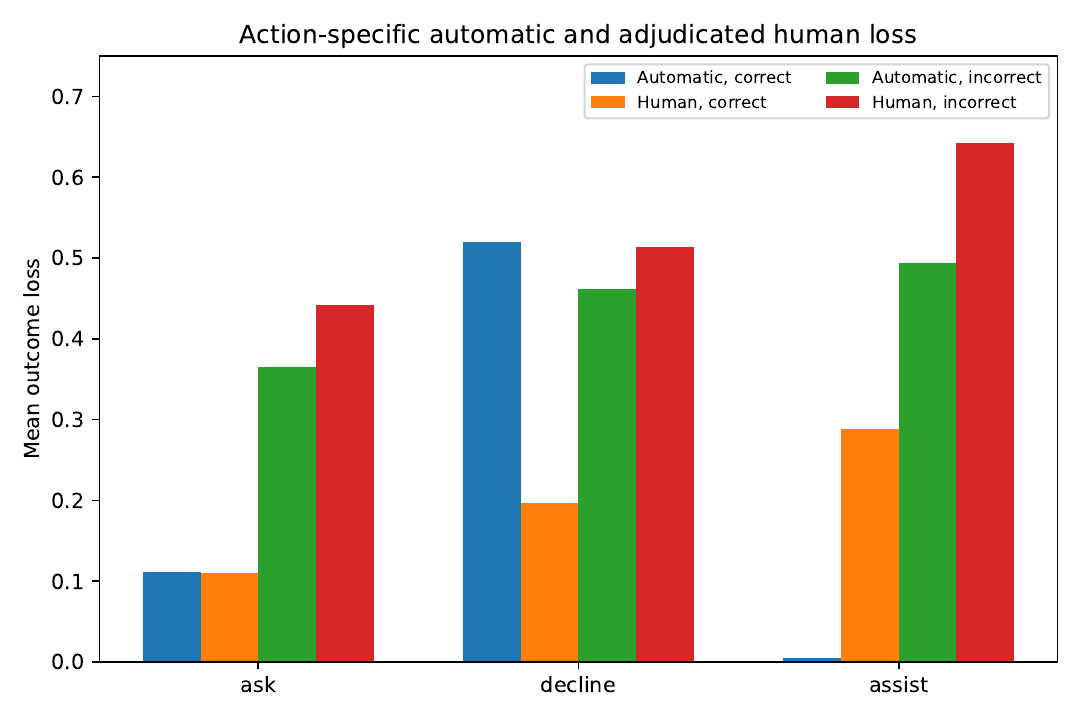}
\caption{Action-dependent mean losses.}
\end{subfigure}\hfill
\begin{subfigure}[t]{0.485\linewidth}
\centering
\vspace{7pt}
\small
\begin{tabular}{@{}lrr@{}}
\toprule
& Original 80 & Locked 96 \\
\midrule
Auto--human & 0.613 & 0.427 \\
Human--required & 0.863 & 0.927 \\
Auto--required & 0.550 & 0.365 \\
\bottomrule
\end{tabular}
\caption{First-minimum label agreement by collection.}
\end{subfigure}
\caption{Action-specific calibration and state-level agreement. Numerical records and counts appear in Tables~\ref{tab:action-correctness-calibration} and~\ref{tab:best-action-by-collection}. Matching group means do not imply close item-level agreement.}
\label{fig:action-and-choice-calibration}
\end{figure}

\begin{table}[H]
\centering
\caption{Action-specific calibration split by whether the selected action equals the required action.}
\label{tab:action-correctness-calibration}
\small
\begin{tabular}{llrrrr}
\toprule
Selected action & Matches required & $N$ & Auto mean & Human mean & Auto$-$human\\
\midrule
ask & false & 121 & 0.365 & 0.442 & -0.077 \\
ask & true & 55 & 0.111 & 0.110 & 0.001 \\
assist & false & 109 & 0.494 & 0.642 & -0.148 \\
assist & true & 67 & 0.004 & 0.288 & -0.283 \\
decline & false & 122 & 0.461 & 0.514 & -0.052 \\
decline & true & 54 & 0.519 & 0.196 & 0.323 \\
\bottomrule
\end{tabular}
\end{table}

\begin{table}[H]
\centering
\caption{First-minimum action-label agreement after adjudication, using the fixed order \textsc{ask}, \textsc{decline}, \textsc{assist} to break ties. Set-based comparisons appear in Appendix~\ref{sec:tie-aware-diagnostics}.}
\label{tab:best-action-by-collection}
\small
\begin{tabular}{lrrrr}
\toprule
Collection & States & Auto-human agreement & Human-required & Auto-required\\
\midrule
Fresh locked 96 & 96 & 0.427 & 0.927 & 0.365 \\
Old human sample 80 & 80 & 0.613 & 0.863 & 0.550 \\
\bottomrule
\end{tabular}
\end{table}

\paragraph{Final human-loss information gaps.}
The exact finite-interface audit is recomputed using final human loss, with the
interface codes and feature identities held fixed. The primary budget is again
$B=2$. On the old 80-state human sample, seed 314159 gives
\[
R_{\mathrm{full}}^{\mathrm{human}}=0.216,\qquad
R_Z^{\star,\mathrm{human}}=0.293,\qquad
\Delta_{\mathrm{info}}^{\mathrm{human}}=0.078,
\]
with 95\% interval $[0.047,0.111]$ and one-sided lower bound $0.052$.
Seed 271828 gives gap $0.099$, interval $[0.062,0.119]$, and lower bound
$0.068$. On the fresh locked 96-state collection, seed 314159 gives gap
$0.148$, interval $[0.118,0.161]$, and lower bound $0.124$; seed 271828 gives
gap $0.133$, interval $[0.095,0.155]$, and lower bound $0.099$. Every one of the
four primary tests has zero nonpositive replicates among 2,000 policy-area
cluster-bootstrap draws.

\begin{table}[t]
\centering
\caption{Final adjudicated SmolLM2 human-loss results for the $Z$-only observation at $B=2$. Active policy is not a separate action-selector input.}
\label{tab:human-primary-gap}
\small
\begin{tabular}{llrrrrr}
\toprule
Collection & SAE seed & $R_{\mathrm{full}}^{\rm human}$ &
$R_Z^{\star,\rm human}$ & $\Delta_{\rm info}^{\rm human}$ & 95\% CI & 95\% LCB\\
\midrule
Old human sample 80 & 314159 & 0.216 & 0.293 & 0.078 & [0.047, 0.111] & 0.052\\
Old human sample 80 & 271828 & 0.216 & 0.314 & 0.099 & [0.062, 0.119] & 0.068\\
Fresh locked 96 & 314159 & 0.186 & 0.335 & 0.148 & [0.118, 0.161] & 0.124\\
Fresh locked 96 & 271828 & 0.186 & 0.320 & 0.133 & [0.095, 0.155] & 0.099\\
\bottomrule
\end{tabular}
\end{table}

The human gap decreases with feature budget. It is positive for both seeds and
both collections at $B=2,4,8$. At $B=16$, the old-set gaps are $0.0006$ for
both seeds, the fresh seed-271828 gap is $0.0022$, and the fresh seed-314159 gap
is $0.028$ with one-sided lower bound $0.010$. At $B=32$ and $B=64$, the gaps
are zero or numerically negligible. The nested feature sets cannot increase optimal risk. The values quantify
how much loss remains at each budget; they do not separately establish that
all underlying states are distinguished.

\begin{figure}[t]
\centering
\begin{subfigure}[t]{0.49\linewidth}
\auditgraphic{figures/figure_34_human_gap_curve_old.pdf}{Final human-loss
information gap by feature budget on the old 80-state human sample.}
\caption{Old 80-state human sample.}
\end{subfigure}\hfill
\begin{subfigure}[t]{0.49\linewidth}
\auditgraphic{figures/figure_35_human_gap_curve_fresh.pdf}{Final human-loss
information gap by feature budget on the fresh locked 96-state collection.}
\caption{Fresh locked 96-state collection.}
\end{subfigure}
\caption{Final adjudicated human-loss information-gap curves for the two
independently trained SAE seeds.}
\label{fig:human-gap-curves}
\end{figure}

\paragraph{Single-rater sensitivity.}
Before adjudication, each rater's labels are also analyzed separately at $B=2$.
All eight combinations of rater, SAE seed, and collection have positive
one-sided 95\% lower bounds. For example, old-sample lower bounds range from
$0.043$ to $0.095$, and fresh-sample lower bounds range from $0.088$ to $0.125$.
The positive sign therefore does not arise only after averaging or adjudication.
The exact rater-specific values are given in
Appendix~\ref{app:added-experiment-tables}.

\FloatBarrier
\subsection{Final human finite-interface results across all budgets}

\begingroup
\footnotesize
\setlength{\tabcolsep}{2.5pt}
\begin{longtable}{@{}llrrrrrrr@{}}
\caption{Complete final adjudicated SmolLM2 human-loss $Z$-only finite-interface results. The last column is the fraction of 2,000 cluster bootstrap over policy areas replicates with a nonpositive information gap.}
\label{tab:human-gap-all-budgets}\\
\toprule
Collection & SAE seed & $B$ & $\Rfull^{\rm human}$ & $R_Z^{\star,\rm human}$ & $\Delta_{\rm info}^{\rm human}$ & 95\% CI & 95\% LCB & $\Pr_{\rm boot}(\Delta\leq0)$\\
\midrule
\endfirsthead
\toprule
Collection & SAE seed & $B$ & $\Rfull^{\rm human}$ & $R_Z^{\star,\rm human}$ & $\Delta_{\rm info}^{\rm human}$ & 95\% CI & 95\% LCB & $\Pr_{\rm boot}(\Delta\leq0)$\\
\midrule
\endhead
Fresh locked 96 & 271828 & 2 & 0.186 & 0.320 & 0.133 & [0.095, 0.155] & 0.099 & 0.000 \\
Fresh locked 96 & 271828 & 4 & 0.186 & 0.283 & 0.097 & [0.055, 0.123] & 0.059 & 0.000 \\
Fresh locked 96 & 271828 & 8 & 0.186 & 0.237 & 0.051 & [0.019, 0.056] & 0.023 & 0.000 \\
Fresh locked 96 & 271828 & 16 & 0.186 & 0.189 & 0.002 & [0.000, 0.004] & 0.000 & 0.079 \\
Fresh locked 96 & 271828 & 32 & 0.186 & 0.187 & 0.000 & [0.000, 0.001] & 0.000 & 0.186 \\
Fresh locked 96 & 271828 & 64 & 0.186 & 0.186 & 0.000 & [0.000, 0.000] & 0.000 & 0.313 \\
Fresh locked 96 & 314159 & 2 & 0.186 & 0.335 & 0.148 & [0.118, 0.161] & 0.124 & 0.000 \\
Fresh locked 96 & 314159 & 4 & 0.186 & 0.317 & 0.130 & [0.096, 0.145] & 0.101 & 0.000 \\
Fresh locked 96 & 314159 & 8 & 0.186 & 0.279 & 0.093 & [0.055, 0.118] & 0.061 & 0.000 \\
Fresh locked 96 & 314159 & 16 & 0.186 & 0.214 & 0.028 & [0.009, 0.042] & 0.010 & 0.000 \\
Fresh locked 96 & 314159 & 32 & 0.186 & 0.186 & 0.000 & [0.000, 0.000] & 0.000 & 0.328 \\
Fresh locked 96 & 314159 & 64 & 0.186 & 0.186 & 0.000 & [0.000, 0.000] & 0.000 & 0.309 \\
Old human sample 80 & 271828 & 2 & 0.216 & 0.314 & 0.099 & [0.062, 0.119] & 0.068 & 0.000 \\
Old human sample 80 & 271828 & 4 & 0.216 & 0.284 & 0.069 & [0.035, 0.081] & 0.038 & 0.000 \\
Old human sample 80 & 271828 & 8 & 0.216 & 0.263 & 0.047 & [0.018, 0.056] & 0.020 & 0.000 \\
Old human sample 80 & 271828 & 16 & 0.216 & 0.216 & 0.001 & [0.000, 0.002] & 0.000 & 0.308 \\
Old human sample 80 & 271828 & 32 & 0.216 & 0.216 & 0.000 & [0.000, 0.000] & 0.000 & 0.522 \\
Old human sample 80 & 271828 & 64 & 0.216 & 0.216 & 0.000 & [0.000, 0.000] & 0.000 & 0.526 \\
Old human sample 80 & 314159 & 2 & 0.216 & 0.293 & 0.078 & [0.047, 0.111] & 0.052 & 0.000 \\
Old human sample 80 & 314159 & 4 & 0.216 & 0.288 & 0.073 & [0.040, 0.094] & 0.044 & 0.000 \\
Old human sample 80 & 314159 & 8 & 0.216 & 0.244 & 0.028 & [0.007, 0.039] & 0.009 & 0.000 \\
Old human sample 80 & 314159 & 16 & 0.216 & 0.216 & 0.001 & [0.000, 0.001] & 0.000 & 0.239 \\
Old human sample 80 & 314159 & 32 & 0.216 & 0.216 & 0.000 & [0.000, 0.000] & 0.000 & 0.500 \\
Old human sample 80 & 314159 & 64 & 0.216 & 0.216 & 0.000 & [0.000, 0.000] & 0.000 & 0.526 \\
\bottomrule
\end{longtable}
\endgroup

\begin{table}[H]
\centering
\caption{Single-rater sensitivity analysis at the primary budget $B=2$. Each row is recomputed from one rater's independent labels before adjudication.}
\label{tab:rater-sensitivity-b2}
\small
\begin{adjustbox}{max width=\linewidth}
\begin{tabular}{lllrrrrr}
\toprule
Collection & SAE seed & Rater & $\Rfull$ & $\Rinterface$ & Gap & 95\% CI & 95\% LCB\\
\midrule
Fresh locked 96 & 271828 & A & 0.141 & 0.261 & 0.119 & [0.083, 0.151] & 0.088 \\
Fresh locked 96 & 271828 & B & 0.178 & 0.334 & 0.156 & [0.117, 0.170] & 0.121 \\
Fresh locked 96 & 314159 & A & 0.141 & 0.301 & 0.160 & [0.110, 0.179] & 0.116 \\
Fresh locked 96 & 314159 & B & 0.178 & 0.334 & 0.156 & [0.118, 0.171] & 0.125 \\
Old human sample 80 & 271828 & A & 0.172 & 0.260 & 0.087 & [0.051, 0.112] & 0.056 \\
Old human sample 80 & 271828 & B & 0.201 & 0.328 & 0.127 & [0.091, 0.140] & 0.095 \\
Old human sample 80 & 314159 & A & 0.172 & 0.242 & 0.069 & [0.039, 0.102] & 0.043 \\
Old human sample 80 & 314159 & B & 0.201 & 0.307 & 0.106 & [0.074, 0.137] & 0.079 \\
\bottomrule
\end{tabular}
\end{adjustbox}
\end{table}

\subsection{Human agreement, adjudication, and evaluator calibration}

Adjudication rates differ across policy areas. The counts below describe
that variation but do not identify its cause.

\begin{longtable}{@{}llrrr@{}}
\caption{Adjudication-trigger rate by policy area.}
\label{tab:adjudication-by-area}\\
\toprule
Collection & Policy area & Items & Triggered & Fraction\\
\midrule
\endfirsthead
\toprule
Collection & Policy area & Items & Triggered & Fraction\\
\midrule
\endhead
Fresh locked & laboratory\_sample\_chain\_of\_custody & 36 & 14 & 0.389 \\
Fresh locked & additive\_manufacturing\_ventilation & 36 & 11 & 0.306 \\
Fresh locked & food\_cold\_chain\_handling & 36 & 11 & 0.306 \\
Fresh locked & museum\_artifact\_conservation & 36 & 10 & 0.278 \\
Fresh locked & industrial\_robot\_lockout & 36 & 6 & 0.167 \\
Fresh locked & assistive\_accommodation\_communications & 36 & 5 & 0.139 \\
Fresh locked & multilingual\_intake\_interpretation & 36 & 4 & 0.111 \\
Fresh locked & open\_source\_license\_compliance & 36 & 2 & 0.056 \\
Old human sample & field\_equipment\_checkout & 12 & 6 & 0.500 \\
Old human sample & confidential\_document\_sharing & 12 & 4 & 0.333 \\
Old human sample & purchase\_approval & 12 & 4 & 0.333 \\
Old human sample & wine\_tasting\_registration & 12 & 4 & 0.333 \\
Old human sample & archive\_room\_visit & 12 & 3 & 0.250 \\
Old human sample & campus\_lab\_access & 12 & 2 & 0.167 \\
Old human sample & donor\_contact\_list\_export & 12 & 2 & 0.167 \\
Old human sample & medical\_record\_release & 12 & 2 & 0.167 \\
Old human sample & payroll\_bank\_change & 12 & 2 & 0.167 \\
Old human sample & project\_dashboard\_settings & 12 & 2 & 0.167 \\
Old human sample & student\_record\_access & 12 & 2 & 0.167 \\
Old human sample & campus\_drone\_flight\_booking & 12 & 1 & 0.083 \\
Old human sample & private\_calendar\_sharing & 12 & 1 & 0.083 \\
Old human sample & rental\_vehicle\_booking & 12 & 1 & 0.083 \\
Old human sample & research\_data\_export & 12 & 1 & 0.083 \\
Old human sample & after\_hours\_building\_access & 12 & 0 & 0.000 \\
Old human sample & licensed\_software\_download & 12 & 0 & 0.000 \\
Old human sample & minor\_photo\_release & 12 & 0 & 0.000 \\
Old human sample & training\_simulator\_booking & 12 & 0 & 0.000 \\
Old human sample & volunteer\_background\_clearance & 12 & 0 & 0.000 \\
\bottomrule
\end{longtable}

\begingroup
\footnotesize
\setlength{\tabcolsep}{2.5pt}
\begin{longtable}{@{}llrrrrrrr@{}}
\caption{Final adjudicated comparison of automatic and human scores by selected action, collection, and required-action match. Signed error is automatic loss minus human loss.}
\label{tab:calibration-groups-full}\\
\toprule
Grouping & Group & $N$ & Auto mean & Human mean & Bias & MAE & RMSE & Spearman $\rho$\\
\midrule
\endfirsthead
\toprule
Grouping & Group & $N$ & Auto mean & Human mean & Bias & MAE & RMSE & Spearman $\rho$\\
\midrule
\endhead
selected\_action & decline & 176 & 0.479 & 0.416 & 0.063 & 0.278 & 0.344 & -0.069 \\
selected\_action & ask & 176 & 0.286 & 0.338 & -0.053 & 0.298 & 0.344 & 0.504 \\
selected\_action & assist & 176 & 0.308 & 0.507 & -0.200 & 0.356 & 0.409 & 0.581 \\
collection & old\_240 & 240 & 0.299 & 0.418 & -0.119 & 0.292 & 0.329 & 0.362 \\
collection & fresh\_locked\_96 & 288 & 0.406 & 0.423 & -0.016 & 0.327 & 0.396 & 0.395 \\
matches\_required\_action & True & 176 & 0.196 & 0.204 & -0.008 & 0.265 & 0.339 & -0.195 \\
matches\_required\_action & False & 352 & 0.438 & 0.529 & -0.090 & 0.334 & 0.380 & 0.350 \\
\bottomrule
\end{longtable}
\endgroup

\section{Additional Experiments and Diagnostic Figures}
\label{app:additional-figures}

The main text retains the risk attribution, policy-aware cross-family evidence,
human-loss result, and possibility of a conflict beyond pairs. This appendix records the
supporting experiments and the original vector plots. Different panels retain
their original estimands and uncertainty procedures; a shared row does not imply
that they are independent replications or use a common loss scale.

\subsection{Qwen2.5 settings and full frozen risk profiles}
\label{app:qwen-full-risk}
\begin{table}[t]
\centering
\caption{Primary Qwen2.5 audit. Confirmation policy areas are not used to select layers, feature budgets, thresholds, or feature identities.}
\label{tab:empirical-setup}
\small
\renewcommand{\arraystretch}{1.10}
\begin{tabular}{p{0.31\linewidth}p{0.61\linewidth}}
\toprule
Component & Declared value \\
\midrule
Representation model & Qwen2.5-1.5B-Instruct \\
Public SAE interfaces & Residual-stream SAEs at layers 17, 18, 19, and 21 \\
States & 240 states from 20 policy areas \\
Setting-selection and fitting split & 144 states from 12 policy areas \\
Untouched confirmation split & 96 states from 8 policy areas \\
Feature budgets & $B\in\{2,4,8,16,32,64\}$ \\
Outcome evaluators & Qwen2.5-7B-Instruct and Mistral-7B-Instruct-v0.3 \\
Uncertainty & 1,500 cluster bootstrap over policy areas repetitions \\
\bottomrule
\end{tabular}
\end{table}
\begin{table}[t]
\centering
\caption{Historical $Z$-only Qwen2.5 risk decomposition on untouched policy areas. Values in brackets are
cluster bootstrap over policy areas 95\% intervals for the three gaps. Lower risk is
better.}
\label{tab:headline-certificates-full}
\scriptsize
\setlength{\tabcolsep}{3.5pt}
\resizebox{\linewidth}{!}{%
\begin{tabular}{lccccccc}
\toprule
Setting & $\Rfull$ & $\Rinterface$ & $\Rrestricted$ & $\Rlearn$ &
$\Delta_{\mathrm{info}}$ & $\Delta_{\mathrm{access}}$ & $\Delta_{\mathrm{train}}$ \\
\midrule
Layer 17, $B=2$
& 0.338 & 0.494 & 0.530 & 0.541
& 0.156 $[0.125,0.180]$
& 0.036 $[0.016,0.046]$
& 0.011 $[0.000,0.063]$ \\
Layer 19, $B=16$
& 0.338 & 0.398 & 0.444 & 0.466
& 0.060 $[0.032,0.076]$
& 0.047 $[0.026,0.080]$
& 0.022 $[0.000,0.048]$ \\
\bottomrule
\end{tabular}}
\end{table}
\begin{figure}[tbp]
\centering
\begin{subfigure}[tbp]{0.49\linewidth}
\auditgraphic{figures/figure_18_certified_positive_four_risks.pdf}{Frozen
four-risk result for layer 17 and budget 2.}
\caption{Primary positive-information setting.}
\end{subfigure}\hfill
\begin{subfigure}[tbp]{0.49\linewidth}
\auditgraphic{figures/figure_19_certified_accessibility_four_risks.pdf}{Frozen
four-risk result for layer 19 and budget 16.}
\caption{Mixed information and accessibility setting.}
\end{subfigure}
\caption{Four risks on the eight untouched confirmation policy areas. Error bars
are domain-cluster bootstrap 95\% intervals for each risk. The selected interfaces
are frozen before confirmation.}
\label{fig:certified-four-risks}
\end{figure}
\subsection{Budget and rule interventions: complete historical diagnostics}
\label{app:matching-interventions}

The experiments change the visible features, allowed rules, and fitting data
separately. Refining an observation or enlarging a rule family cannot
increase the corresponding optimal risk, because the earlier rules remain
available. A decrease from additional training data is an empirical result,
not a consequence of the decomposition in
Equation~\eqref{eq:risk-decomposition}.

\paragraph{Interface information.}
Across the four audited layers, the finite-support information gap is largest at
small budgets and approaches zero as more SAE coordinates are exposed. The rate is
layer dependent, but the broad decrease is common to all four layers. The budget
sweep is descriptive because it was available during setting selection; the frozen
untouched-domain values in Table~\ref{tab:headline-certificates-full} provide the
frozen confirmation results.

\paragraph{Decision-rule accessibility.}
At the frozen layer-19, $B=16$ interface, a constant rule has accessibility gap
$0.143$ with 95\% interval $[0.086,0.200]$. A first-order affine rule reduces the gap
to $0.047$ with interval $[0.016,0.078]$. Adding pairwise interactions reduces the
gap to $0.007$ with interval $[0,0.016]$, while third-order interactions and an
unrestricted lookup table attain zero on the finite audit. The nested construction
holds the interface fixed, so only the class used to read the interface changes.

\paragraph{Finite training.}
The training-size intervention uses nested subsets from one fixed audit population,
which avoids mixing sample size with a policy-area shift. The mean finite-training
gap decreases from $0.130$ at 16 training situations to $0.072$ at 36, $0.020$ at
72, $0.009$ at 108, and zero at all 144 fitting situations.

\begin{figure}[tbp]
\centering
\begin{subfigure}[t]{0.485\linewidth}
\centering
\includegraphics[width=\linewidth]{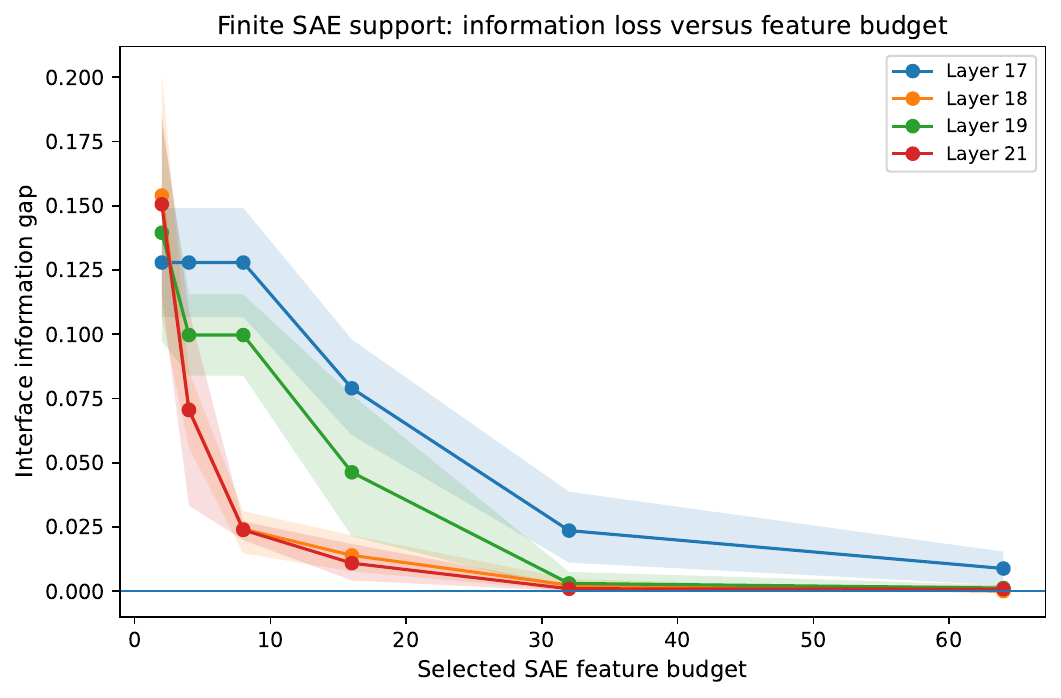}
\caption{Selected SAE feature budget.}
\label{fig:budget-intervention}
\end{subfigure}\hfill
\begin{subfigure}[t]{0.485\linewidth}
\centering
\includegraphics[width=\linewidth]{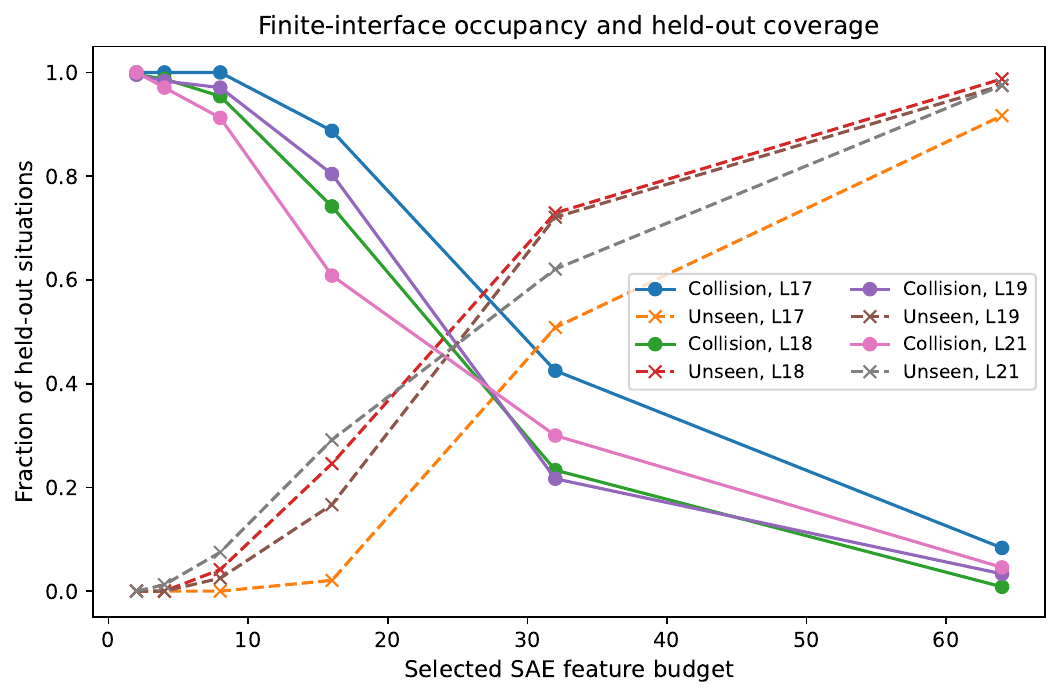}
\caption{Collision and unseen-code rates.}
\label{fig:pattern-occupancy}
\end{subfigure}
\caption{Qwen2.5 budget and occupancy diagnostics. Bands summarize the saved domain partitions. A small finite-table gap at a large budget does not establish coverage of future codes.}
\label{fig:budget-occupancy}
\end{figure}

\begin{figure}[tbp]
\centering
\begin{subfigure}[t]{0.485\linewidth}
\centering
\includegraphics[width=\linewidth]{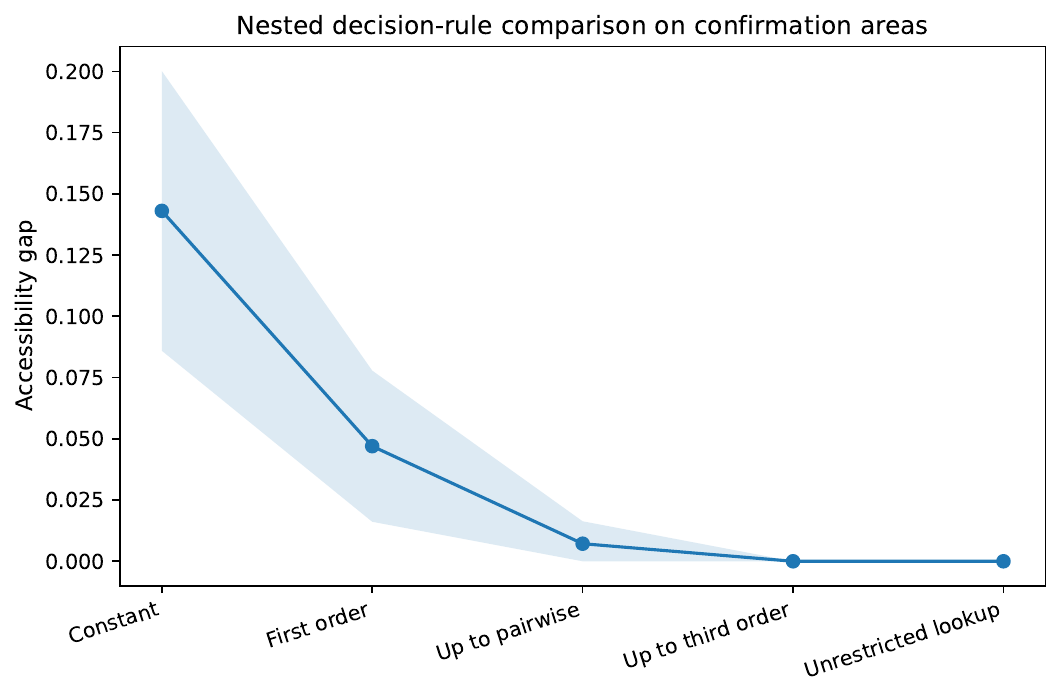}
\caption{Nested rule classes on confirmation areas.}
\end{subfigure}\hfill
\begin{subfigure}[t]{0.485\linewidth}
\centering
\includegraphics[width=\linewidth]{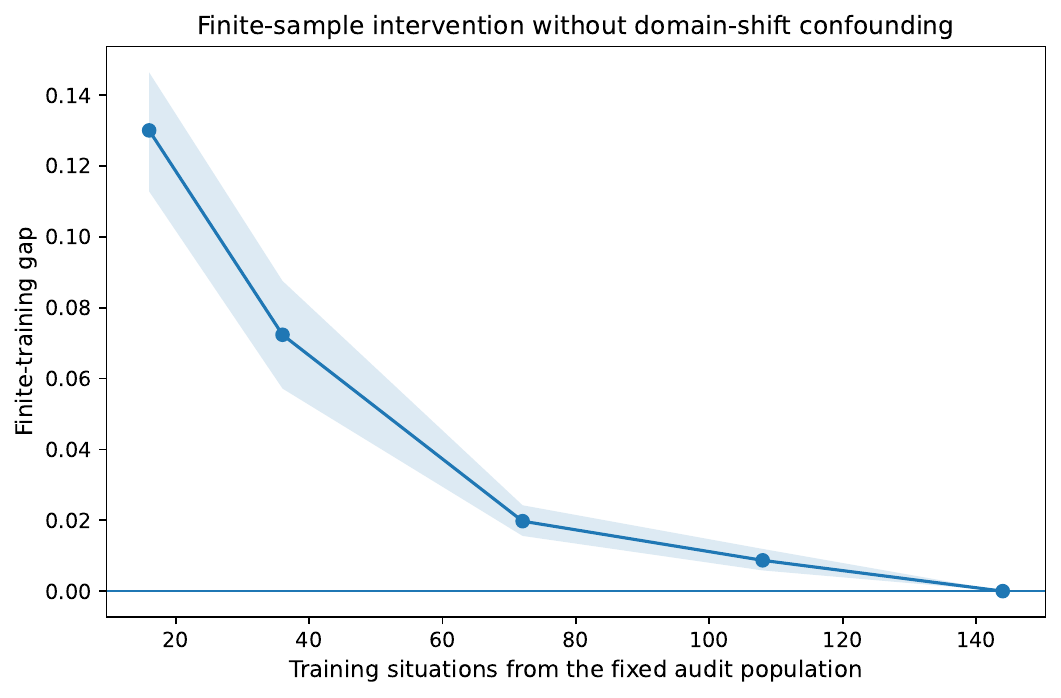}
\caption{Fitting-set sizes within one audit population.}
\end{subfigure}
\caption{Original intervention plots, with their saved uncertainty displays. The compact main-text rendering uses the reported numerical summaries. The rule-class intervals summarize bootstrap resampling of the frozen confirmation areas and do not provide a simultaneous population certificate.}
\label{fig:original-intervention-record}
\end{figure}

\begin{figure}[tbp]
\centering
\begin{subfigure}[t]{0.485\linewidth}
\centering
\includegraphics[width=\linewidth]{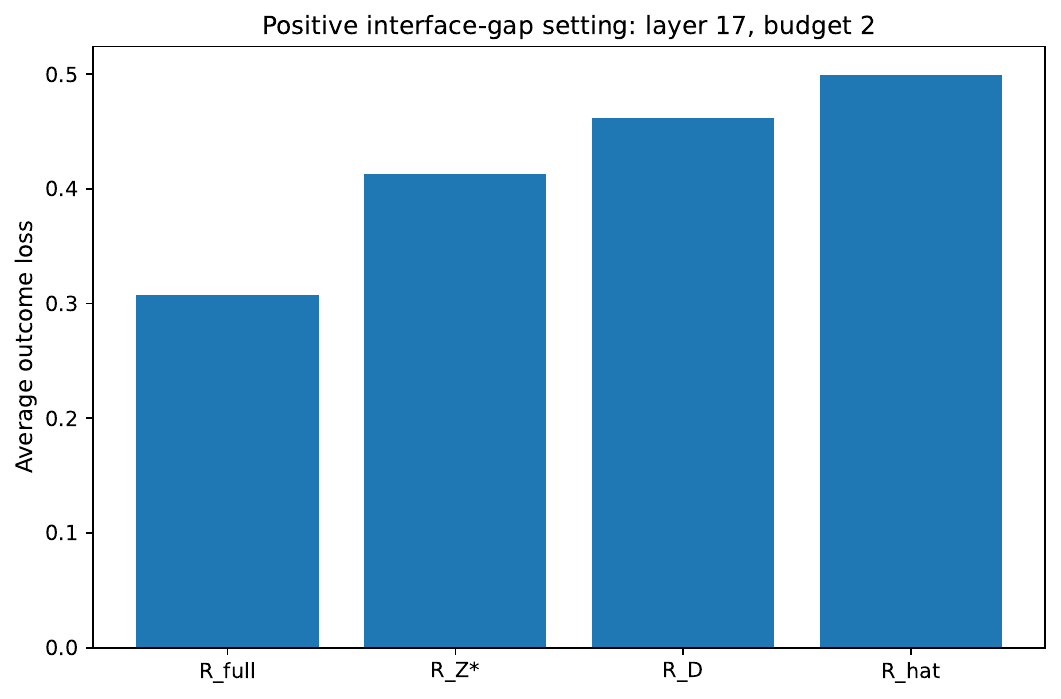}
\caption{Layer 17, two features.}
\end{subfigure}\hfill
\begin{subfigure}[t]{0.485\linewidth}
\centering
\includegraphics[width=\linewidth]{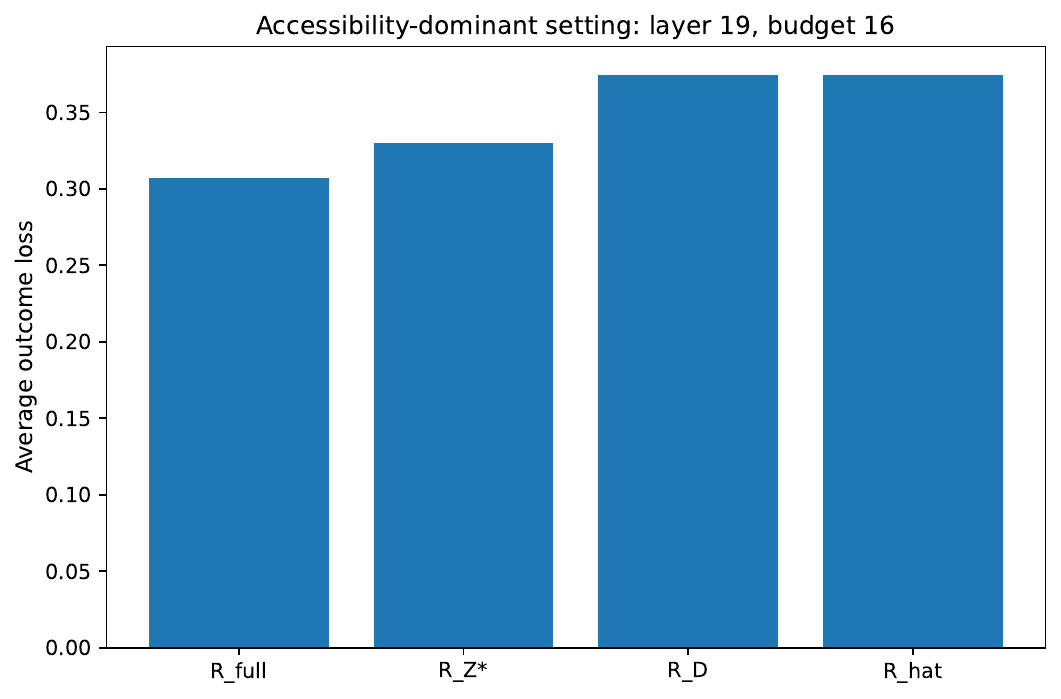}
\caption{Layer 19, sixteen features.}
\end{subfigure}
\caption{Setting-selection risk profiles. These descriptive plots are not the untouched confirmation estimates in Table~\ref{tab:headline-certificates-full}.}
\label{fig:descriptive-four-risks}
\end{figure}

\begin{figure}[tbp]
\centering
\begin{subfigure}[t]{0.485\linewidth}
\centering
\includegraphics[width=\linewidth]{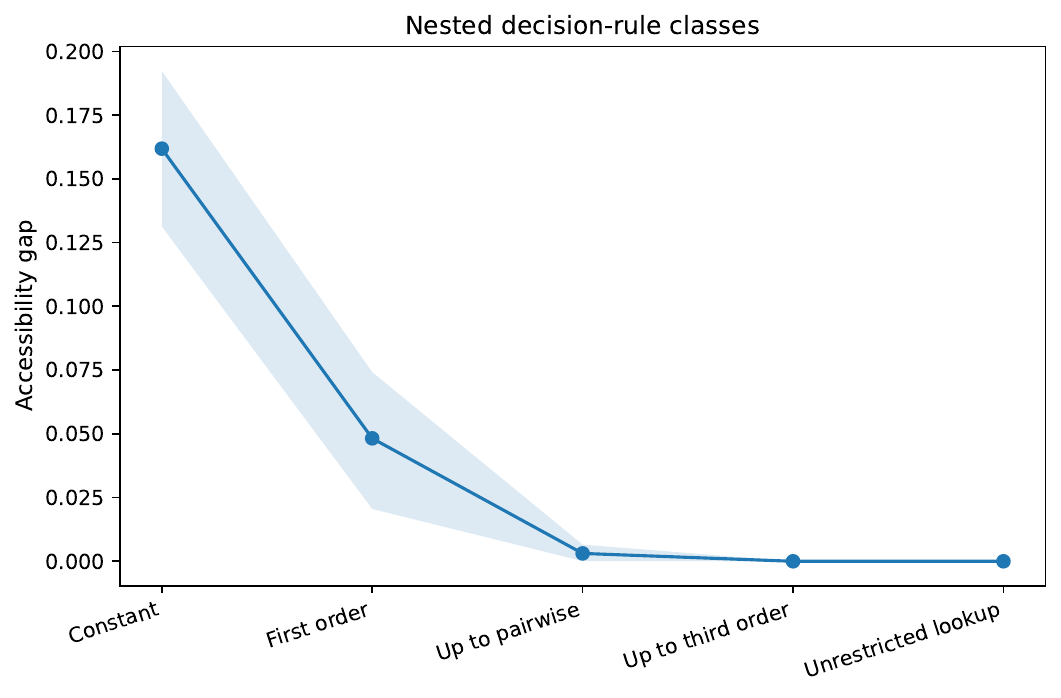}
\caption{Descriptive nested-rule sweep.}
\label{fig:descriptive-nested-rules}
\end{subfigure}\hfill
\begin{subfigure}[t]{0.485\linewidth}
\centering
\includegraphics[width=\linewidth]{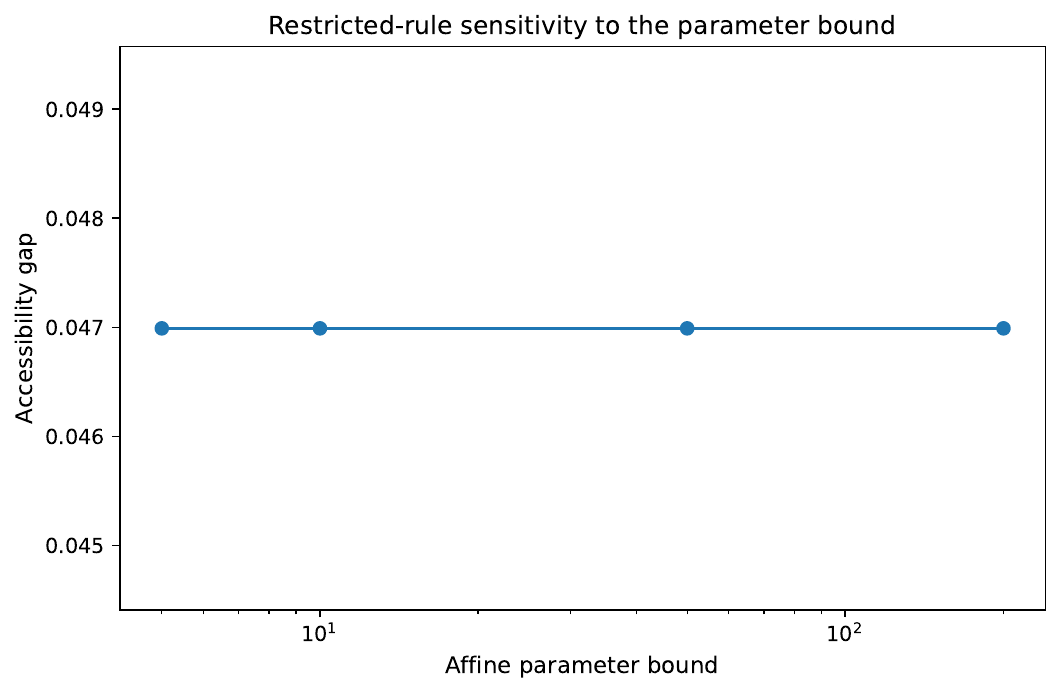}
\caption{Affine parameter-bound check.}
\label{fig:bound-split-diagnostics}
\end{subfigure}
\caption{Rule-class and optimizer diagnostics. The affine optimum is unchanged for parameter bounds 5, 10, 50, and 200, with no fitted parameter on a boundary.}
\label{fig:rule-fit-diagnostics}
\end{figure}
\subsection{Evaluator, selector, and model-scale robustness}
\label{app:qwen-robustness}

\paragraph{Independent evaluators.}
The primary layer-17, $B=2$ interface is held fixed while the loss table is replaced.
The information gap is $0.156$ under Qwen2.5-7B, $0.134$ under the independent
Mistral-7B evaluator, and $0.153$ under the equal-weight average of both evaluators.
The Mistral interval is $[0.100,0.162]$, and its one-sided lower bound is $0.104$.
For the layer-19, $B=16$ interface, the independent evaluator gives information gap
$0.079$ and accessibility gap $0.066$, with positive one-sided lower bounds $0.051$
and $0.043$. Across all state--action losses, the two evaluators have Spearman
correlation $0.803$ and agree on the best action for $91.3\%$ of situations. The positive gap persists under both scoring rules on the same responses.
The comparison does not establish the accuracy of either evaluator.

\paragraph{Feature selectors.}
At the two frozen layer--budget settings, the supervised selector, the variance
selector, and every one of 200 random selectors give a positive information gap.
For layer 17 and $B=2$, the supervised gap is $0.156$, the variance gap is $0.203$,
and the random-selector mean is $0.189$. For layer 19 and $B=16$, the corresponding
values are $0.060$, $0.131$, and $0.094$. Forty occupancy-matched random controls per
setting also have positive mean gaps. The result tests whether positive information
loss depends on one feature selector. It does not show that supervised selection is
better than random selection; the saved claim ledger rejects that stronger reading.

\paragraph{Second audited model scale.}
The full finite-interface pipeline is repeated on Qwen2.5-0.5B-Instruct with the
separate \texttt{sae-qwen2.5-0.5B-res} release. At layer 17 and $B=2$, the information
gap is $0.210$ with 95\% interval $[0.188,0.223]$. At layer 19 and $B=16$, the
information gap is $0.061$ with interval $[0.031,0.079]$, and the accessibility gap
is $0.084$ with interval $[0.055,0.104]$. The cross-scale budget curve again falls as
more selected features are exposed. The replication reduces dependence on one model
size, but both audited models remain in the Qwen2.5 family.

\begin{figure}[tbp]
\centering
\begin{subfigure}[t]{0.485\linewidth}
\centering
\includegraphics[width=\linewidth]{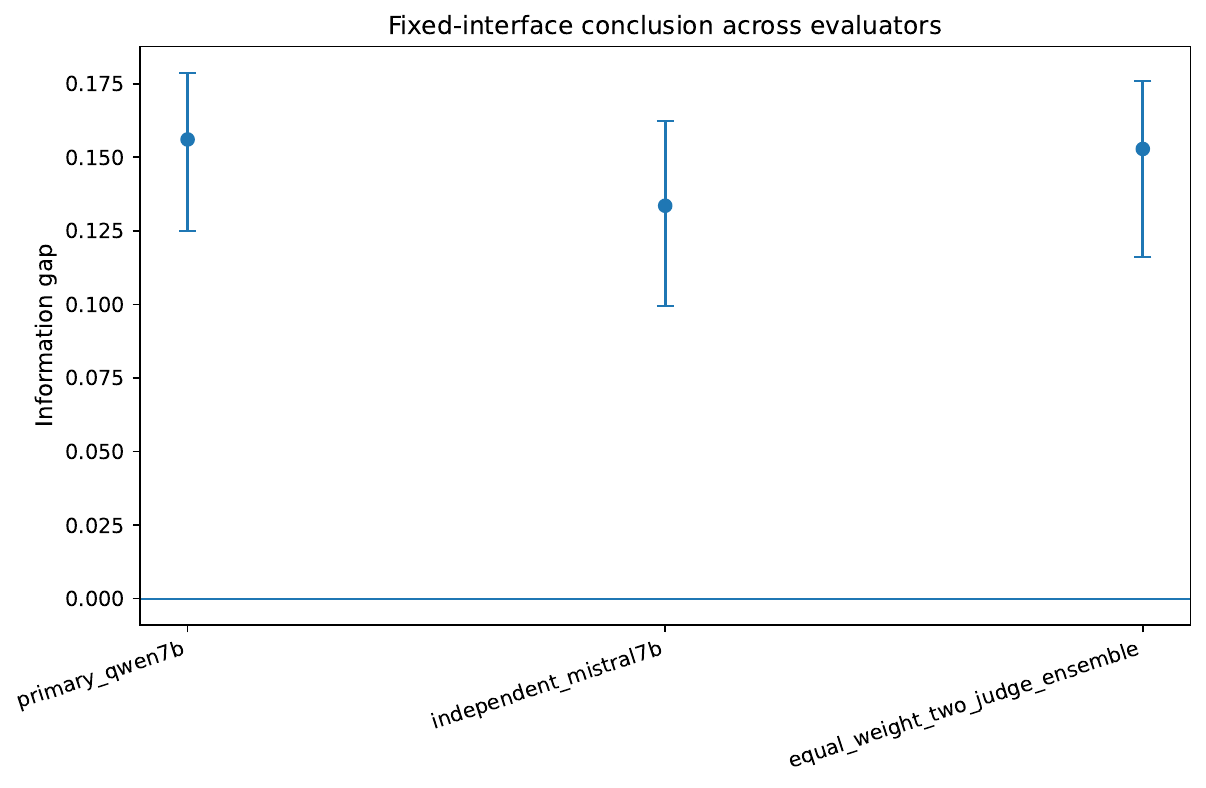}
\caption{Frozen interface across evaluators.}
\end{subfigure}\hfill
\begin{subfigure}[t]{0.485\linewidth}
\centering
\includegraphics[width=\linewidth]{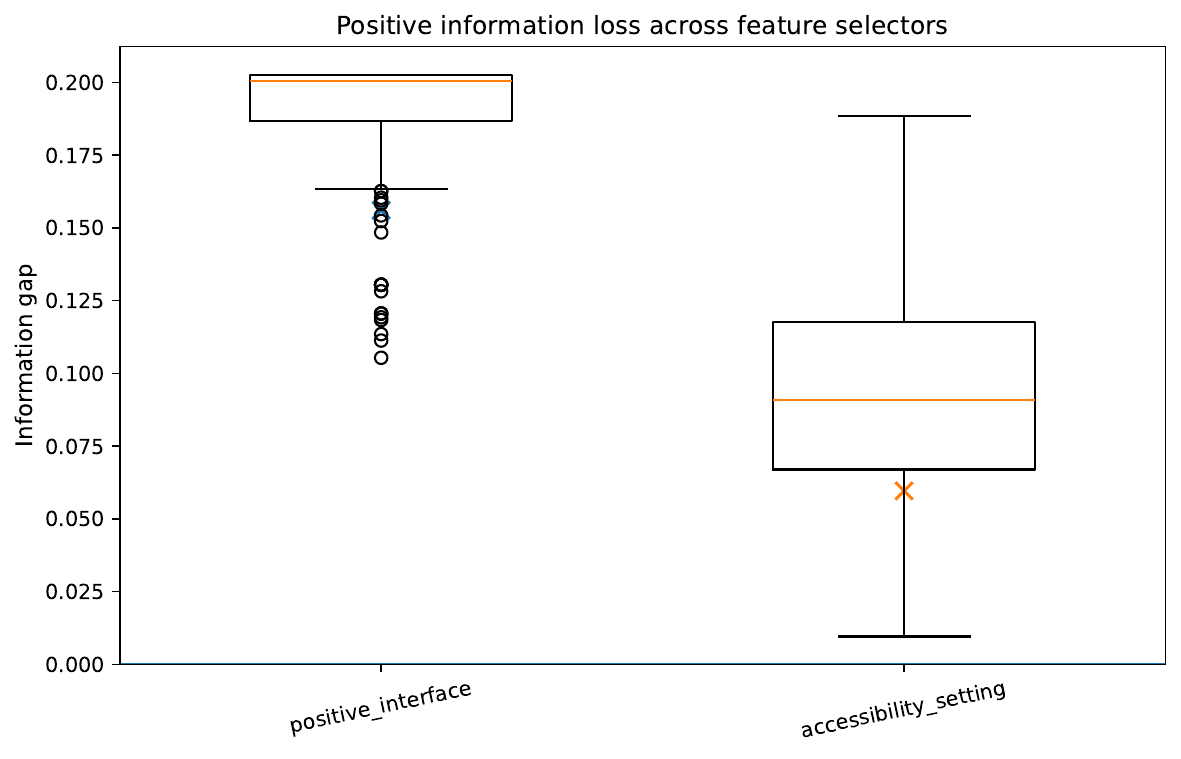}
\caption{Frozen settings across selectors.}
\end{subfigure}
\caption{Qwen2.5 evaluator and selector robustness. Left: bootstrap over policy areas intervals. Right: 200 random selectors and the named selector results. These comparisons do not establish selector superiority.}
\label{fig:evaluator-selector-robustness}
\end{figure}

\begin{figure}[tbp]
\centering
\begin{subfigure}[t]{0.485\linewidth}
\centering
\includegraphics[width=\linewidth]{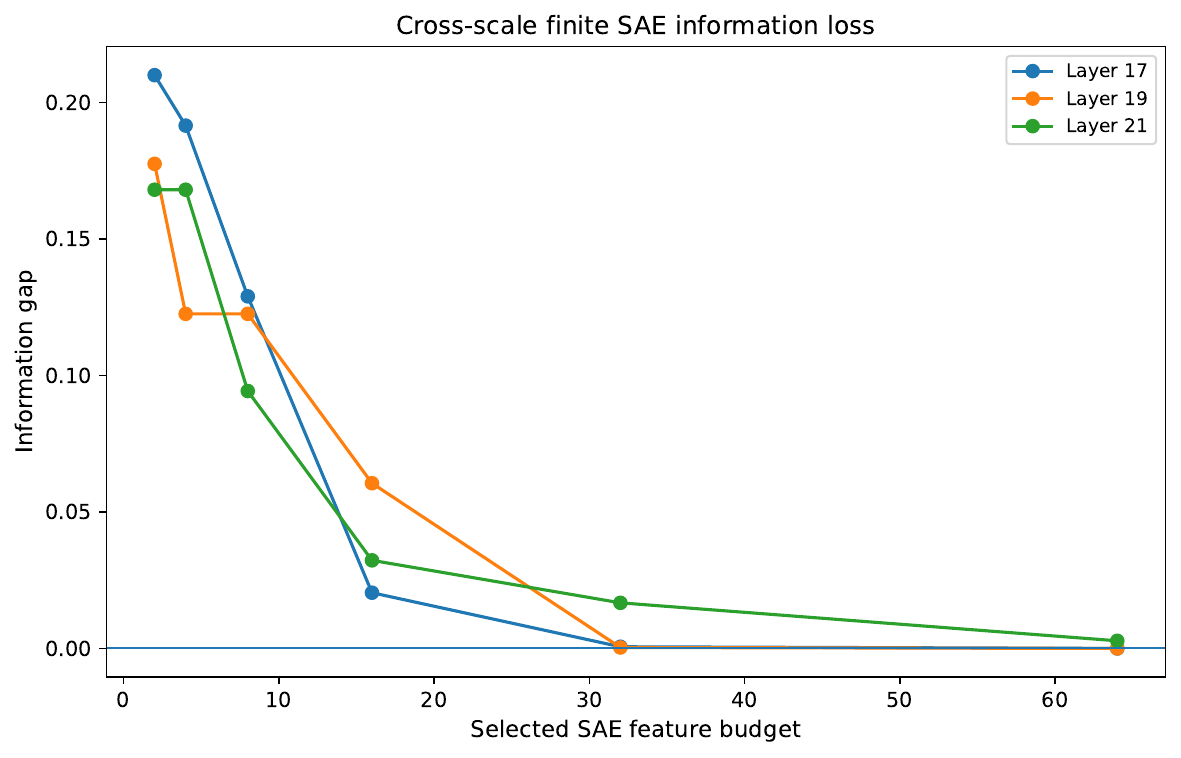}
\caption{Second Qwen2.5 scale, with its own SAE.}
\label{fig:cross-scale-budget}
\end{subfigure}\hfill
\begin{subfigure}[t]{0.485\linewidth}
\centering
\includegraphics[width=\linewidth]{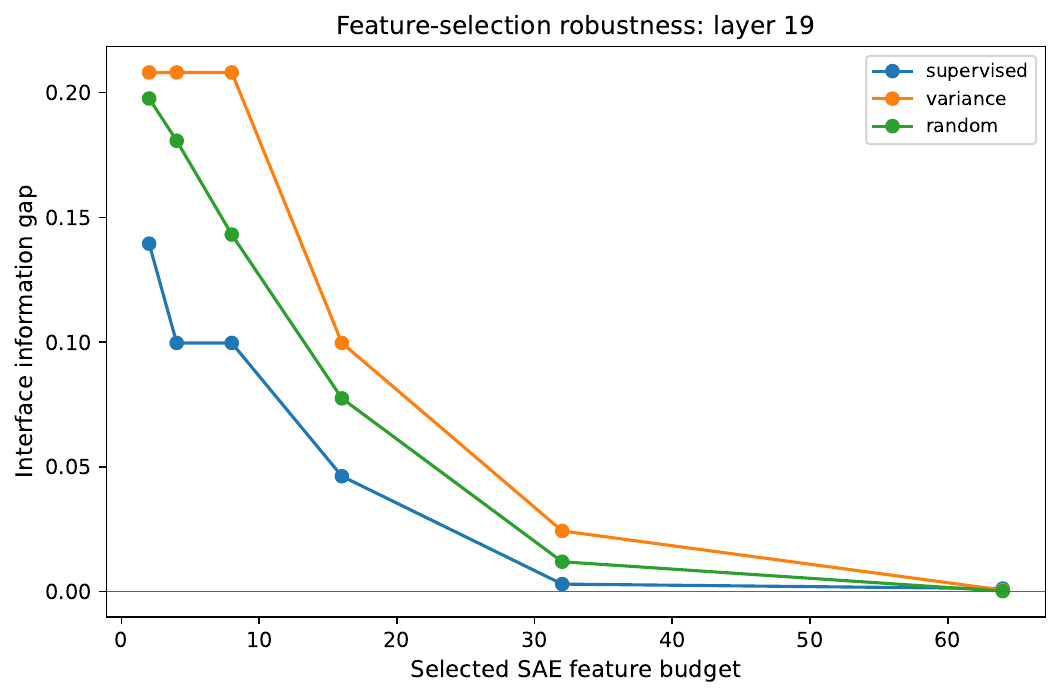}
\caption{Descriptive selector budget sweep.}
\label{fig:selector-budget-sweep}
\end{subfigure}
\caption{Cross-scale and selector budget diagnostics for code-only observations. Neither panel is a new Qwen2.5-0.5B policy-aware experiment.}
\label{fig:scale-selector-budget}
\end{figure}
\subsection{Reconstruction, output count, and decision information}
\label{sec:reconstruction-vs-decision}

Reconstruction loss and decision information answer different questions. The SAE
reconstruction diagnostics vary strongly across the four layers, but their ordering
does not match the ordering of the interface information gaps at a fixed feature
budget. A layer can have a better global reconstruction diagnostic while its selected
finite interface retains less information about the action that minimizes the
declared loss. The centered fraction of variance unexplained (FVU) is reported as a
diagnostic rather than treated as a calibrated percentage; its value can exceed one
because the reconstruction squared error can exceed the centered activation variance.

A related matched-codebook experiment gives the continuous selected SAE view, an SAE
reconstruction view, and a raw-hidden view the same declared number of outputs.
Their information gaps still differ. Equal output cardinality is therefore not equal
decision information. Occupancy diagnostics also show a tradeoff: more detailed
interfaces reduce collision cells but create more confirmation patterns that were
not seen during fitting. The matched codebook plots are shown below.

\begin{figure}[tbp]
\centering
\begin{subfigure}[t]{0.485\linewidth}
\centering
\includegraphics[width=\linewidth]{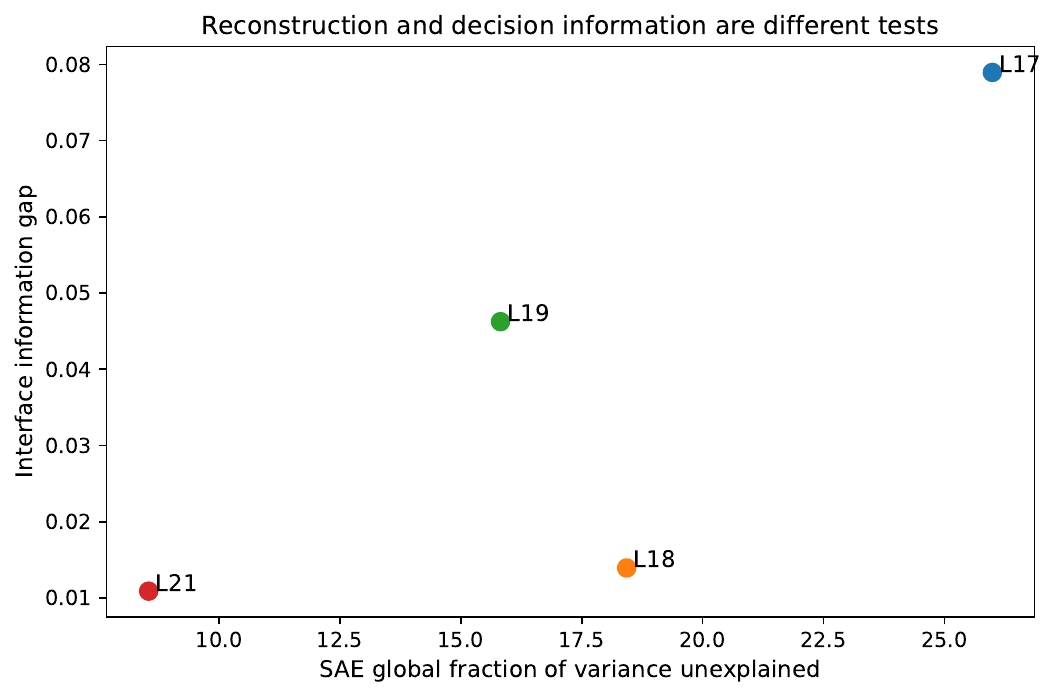}
\caption{Layer reconstruction versus decision loss.}
\label{fig:reconstruction-decision}
\end{subfigure}\hfill
\begin{subfigure}[t]{0.485\linewidth}
\centering
\includegraphics[width=\linewidth]{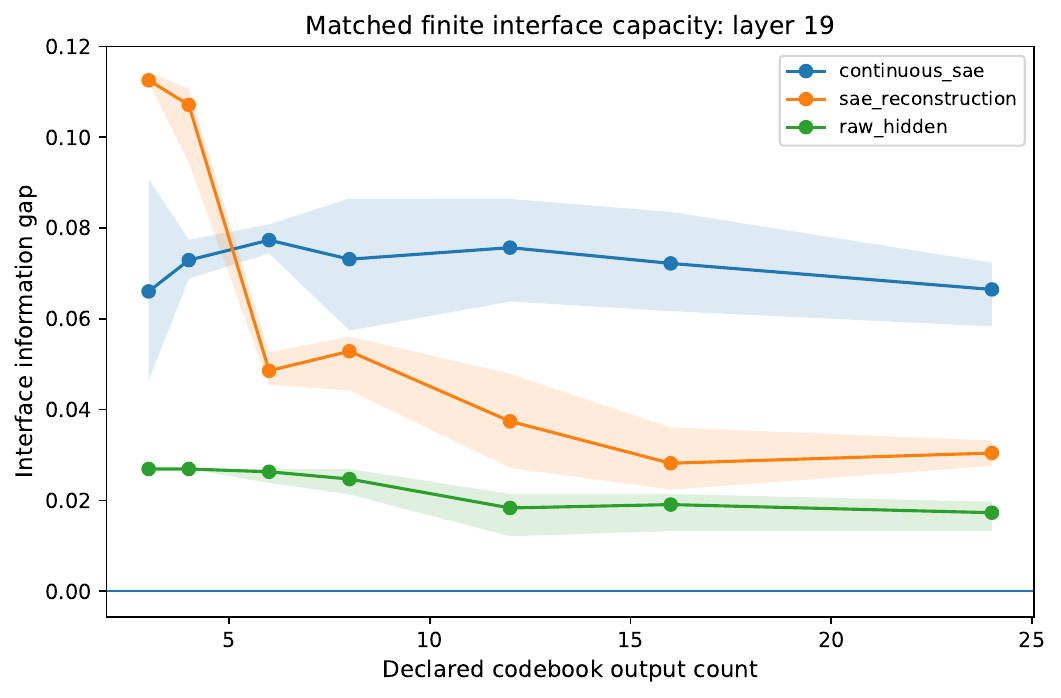}
\caption{Matched finite output counts.}
\label{fig:matched-codebook}
\end{subfigure}
\caption{Reconstruction and output count do not substitute for a decision-loss audit. The codebook comparison holds the declared number of outputs fixed.}
\label{fig:reconstruction-cardinality}
\end{figure}

\begin{figure}[tbp]
\centering
\begin{subfigure}[t]{0.485\linewidth}
\centering
\includegraphics[width=\linewidth]{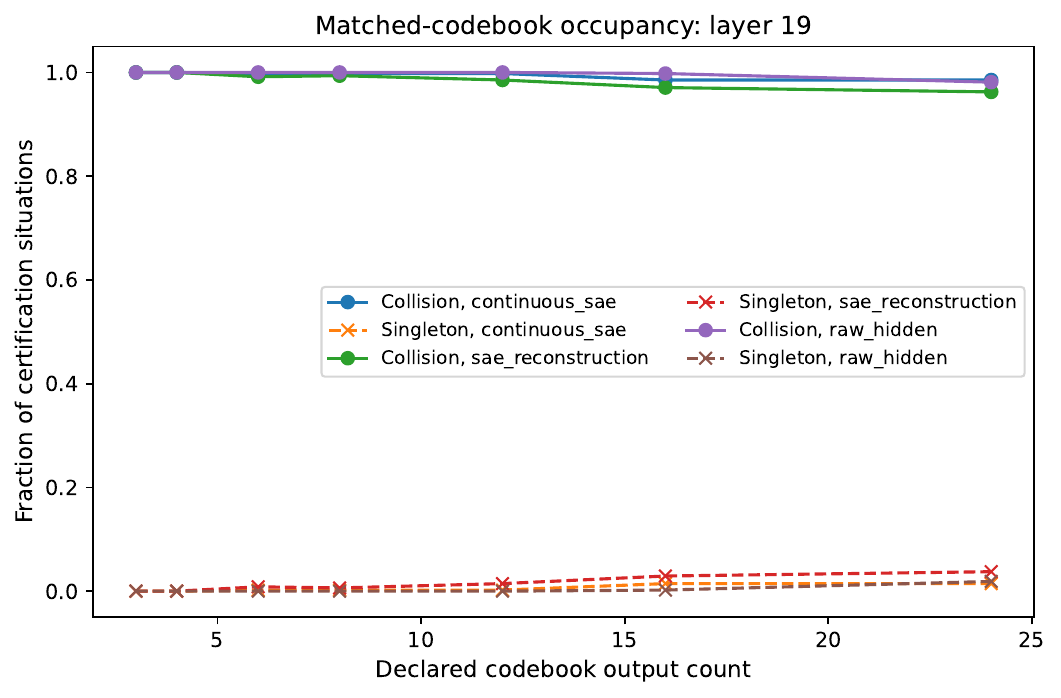}
\caption{Occupancy of the matched codebooks.}
\end{subfigure}\hfill
\begin{subfigure}[t]{0.485\linewidth}
\centering
\includegraphics[width=\linewidth]{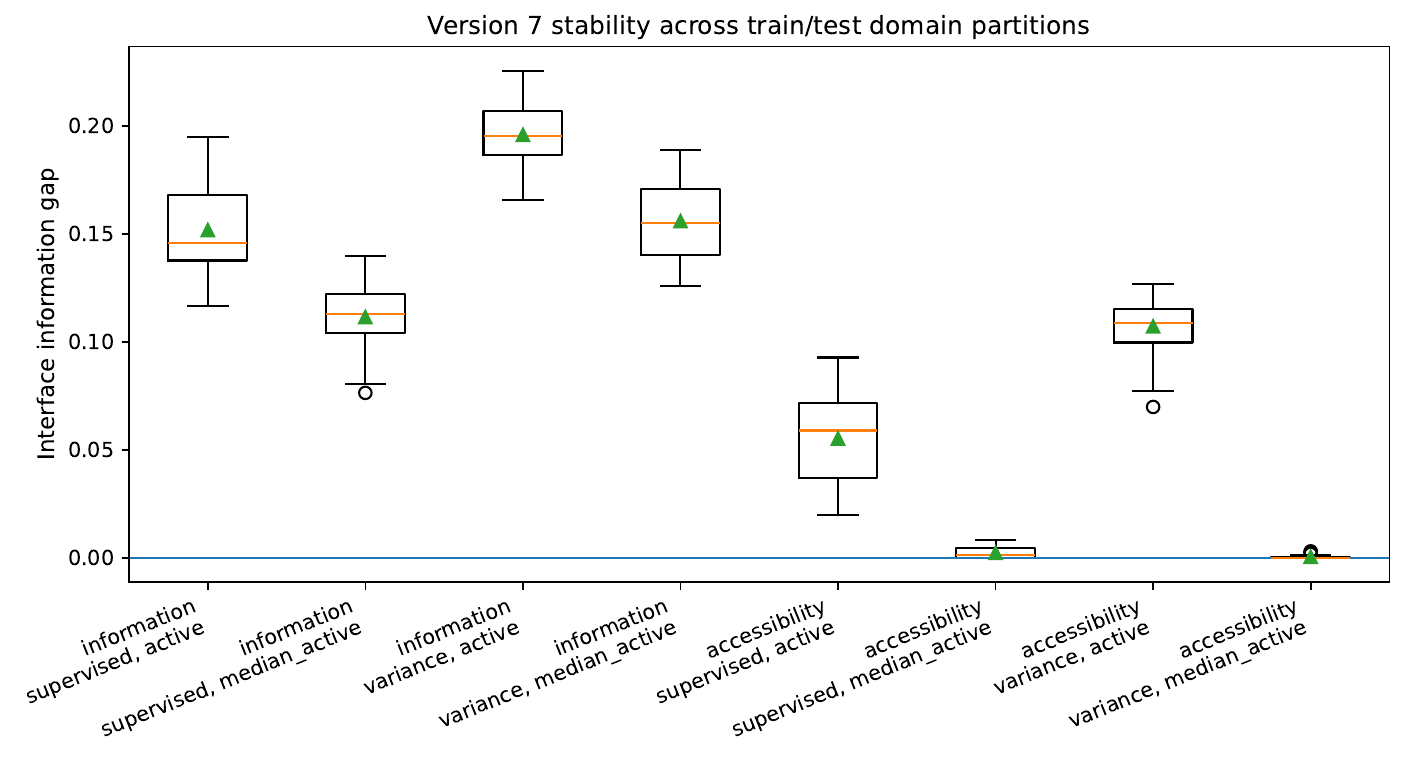}
\caption{Alternative splits and selector settings.}
\end{subfigure}
\caption{Coverage and split diagnostics. Split variation is descriptive rather than a collection of independent confidence replicates.}
\label{fig:coverage-split-diagnostics}
\end{figure}
\subsection{Continuous observations and fitted-rule diagnostics}
\label{sec:continuous-evidence}

\paragraph{Specified Gaussian observation channel.}
The experiment adds Gaussian noise to fixed continuous feature vectors.
The noise scale is expressed relative to the median distance to the nearest
neighbor. For a uniform distribution over the audited states, the posterior
state probabilities are computed analytically at each sampled noisy vector;
Monte Carlo averaging estimates the resulting Bayes risk. At zero noise,
the observed vectors are distinct and the exact table gap is zero. The
reported estimates increase with noise for both selected SAE coordinates
and raw hidden coordinates. The calculation concerns this finite Gaussian
mixture, not the noiseless representation on an unknown population.

\paragraph{Fitted continuous rules.}
A separate experiment asks whether standard rules can use information that remains in
continuous features. At layer 19 with 64 selected SAE coordinates, cost-sensitive
$k$-nearest neighbors, a linear rule, and a multilayer perceptron have held-out
practical excess losses $0.047$, $0.047$, and $0.035$. With 64 raw-hidden principal
components, the corresponding values are $0.088$, $0.247$, and $0.105$. The
comparison concerns the tested representation--rule pairs; it is not an exact lower
bound on the continuous raw-hidden interface and does not imply that every SAE rule
beats every raw-hidden rule.

\begin{figure}[tbp]
\centering
\begin{subfigure}[t]{0.485\linewidth}
\centering
\includegraphics[width=\linewidth]{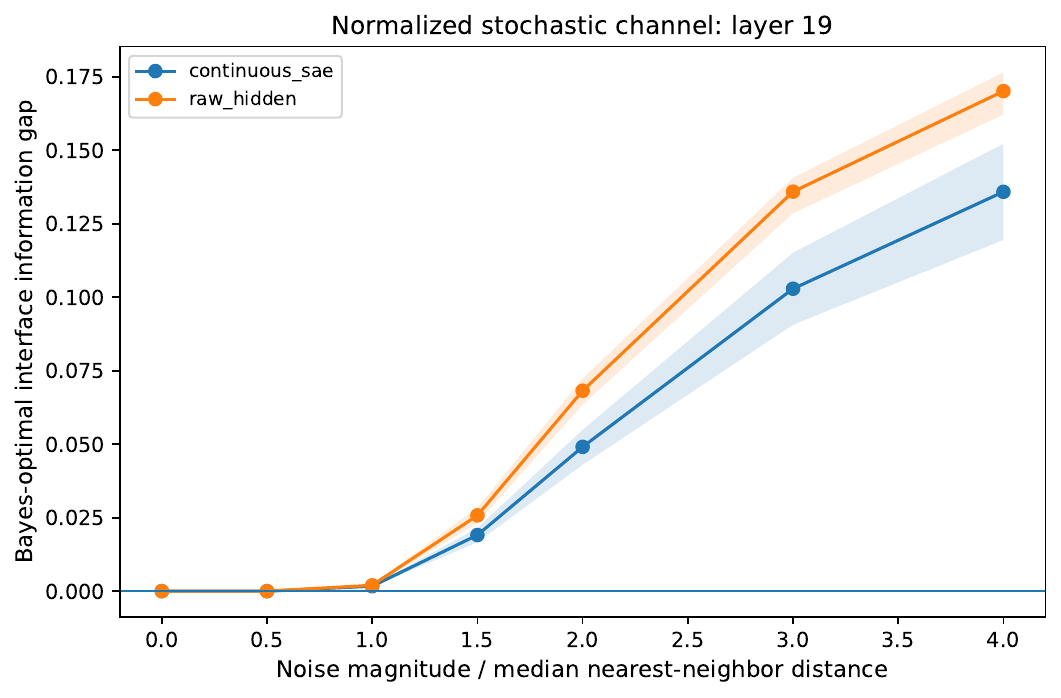}
\caption{SAE and raw-hidden observations.}
\end{subfigure}\hfill
\begin{subfigure}[t]{0.485\linewidth}
\centering
\includegraphics[width=\linewidth]{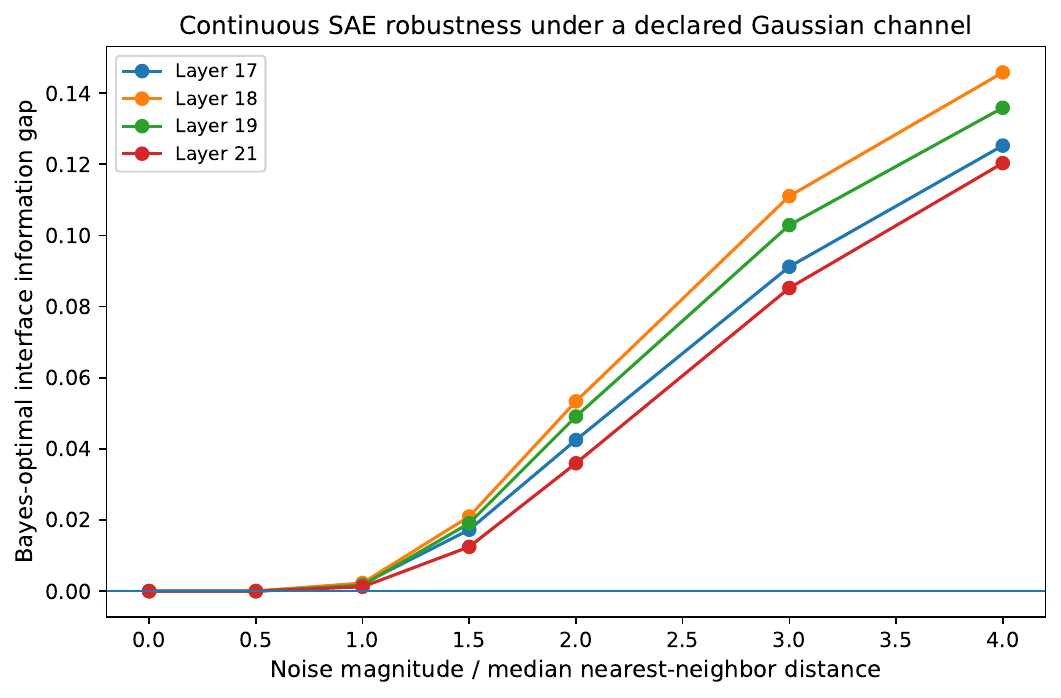}
\caption{Selected SAE observations across layers.}
\end{subfigure}
\caption{Monte Carlo estimates of Bayes risk for the specified Gaussian channel. Noise magnitude is normalized by the median nearest-neighbor distance. The experiment does not certify information loss in noiseless exact hidden activations.}
\label{fig:declared-noise}
\end{figure}

\begin{figure}[tbp]
\centering
\begin{subfigure}[t]{0.485\linewidth}
\centering\includegraphics[width=\linewidth]{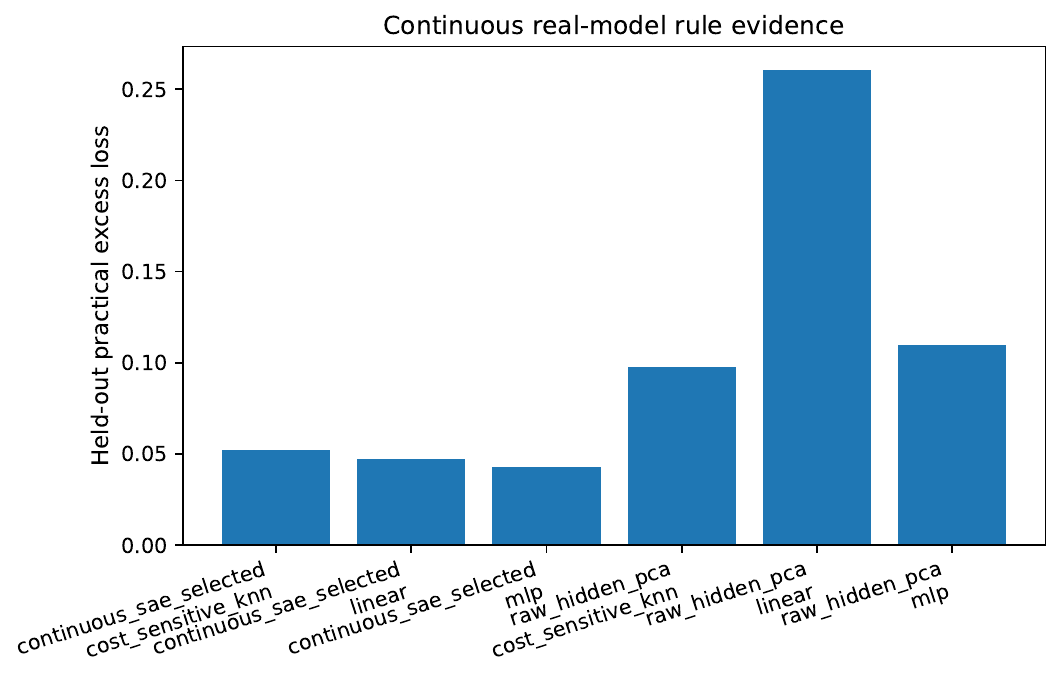}
\caption{Fitted rules on continuous features.}
\end{subfigure}\hfill
\begin{subfigure}[t]{0.485\linewidth}
\centering\small
\begin{tabular}{@{}lrr@{}}
\toprule
Rule & SAE & Hidden PCA \\
\midrule
Cost-sensitive $k$NN & 0.047 & 0.088 \\
Linear & 0.047 & 0.247 \\
MLP & 0.035 & 0.105 \\
\bottomrule
\end{tabular}
\caption{Reported held-out excess losses.}
\end{subfigure}
\caption{Continuous representation--rule comparisons, not exact information-loss lower bounds. The numerical panel reproduces the reported means; the original plot is retained on the left.}
\label{fig:continuous-rule-evidence}
\end{figure}
\subsection{Higher-order search, capacity, and sanity checks}
\label{app:higher-order-details}

The controlled higher-order benchmark constructs three states whose acceptable
action sets are \{\textsc{ask},\textsc{assist}\},
\{\textsc{ask},\textsc{decline}\}, and
\{\textsc{decline},\textsc{assist}\}. Every pair shares one zero-regret action, but
all three states share no common zero-regret action. At layer 17 and $B=2$, the SAE
support pattern collapses the three states. The largest pair margin is zero, the
triple margin is $1/3$, and the separation is $1/3$. The witness is exact relative to
the declared policy-validity loss table. It proves the logical need for group-level
checks without relying on a searched natural-language example.

The controlled-triple sweep repeats the construction across layers and budgets. The
fraction of collapsed triples is high under small feature budgets and reaches zero at
larger budgets, with the transition depending on layer. The result links the
higher-order construction to real SAE support patterns rather than to an abstract
binary code alone.

The exploratory natural-language search initially returned a positive higher-order
separation, but an independent certification split reduced the separation to zero.
A fresh test with two new paraphrase replicas, five generation replicas, and three
judge prompts also produced zero positive-separation rate. The searched example therefore provides no replicated evidence of a
conflict missed by every pairwise check. The negative replication plots are retained
in Appendix~\ref{app:additional-figures}.

The finite-capacity calculation converts a conflict audit into an output-count
requirement. For the historical 42-state finite audit subset, the archived solver outputs report matching lower and upper bounds:
three interface outputs for allowed regret tolerances from $0.02$ through
$0.30$, and two outputs at tolerances $0.35$ and $0.40$. These are archived results for
the historical finite loss table and unrestricted output labeling. The underlying full-precision
matrix is unavailable, so the curve cannot be independently recomputed from the retained records.
The original code enumerates all pairs and triples globally; the source audit and separate
algorithm regression tests are described in Appendix~\ref{app:support-reanalysis}. A given
SAE or selected-coordinate family may need more outputs because not every labeling is
realizable by that interface family.

\paragraph{Sanity controls and evaluator diagnostics.}
\label{sec:sanity-evaluator}

Across five saved domain partitions, the mean information gap is $0.128$ for the
actual low-budget SAE interface, $0.208$ for a constant interface, $0.200$ for a
shuffled SAE interface, and numerically zero for an oracle interface that exposes the
best action. The constant interface cannot improve on an informative one, and exposing
an optimal action gives zero gap by construction. The larger gap after
shuffling is an observed result; arbitrary shuffling has no general
monotonicity guarantee.

The forced-choice evaluator gives higher mean scores to actions that match
the required-action label. Mean matching scores range from $2.62$ to $2.82$ across
safety, utility, macro adherence, fact handling, and overall quality; mean mismatching
scores range from $0.72$ to $1.10$. The anchor diagnostics also pass, and the second evaluator gives lower
mean loss to matched actions. The required-action label is supplied to
the Qwen evaluation prompt, so this agreement is not an independent check
of the label or an accuracy guarantee for the evaluator. Human ratings provide the separate comparison in
Section~\ref{sec:human-results}.

\begin{figure}[tbp]
\centering
\begin{subfigure}[t]{0.485\linewidth}
\centering
\includegraphics[width=\linewidth]{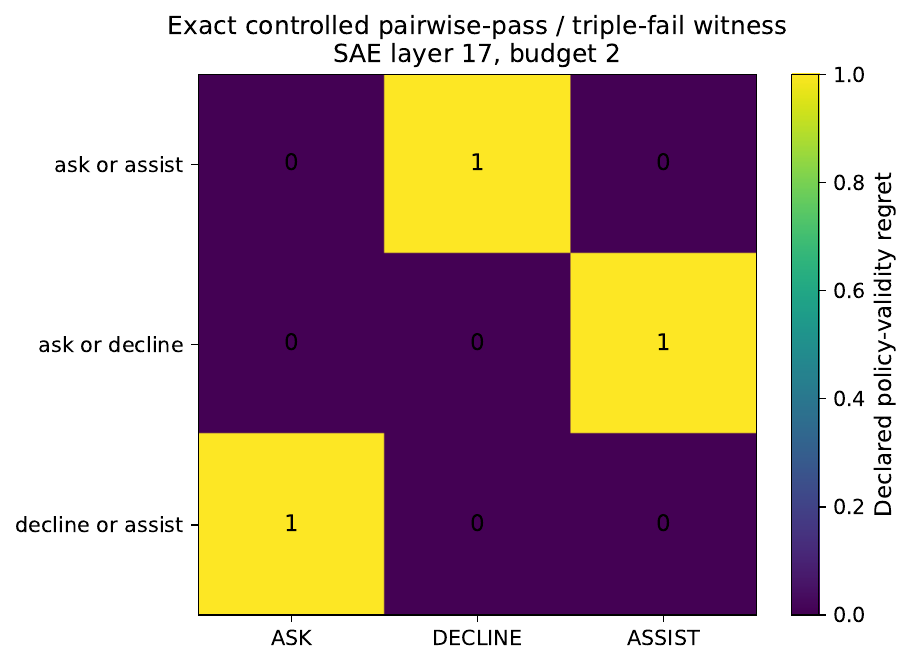}
\caption{Declared three-state regret table.}
\end{subfigure}\hfill
\begin{subfigure}[t]{0.485\linewidth}
\centering
\includegraphics[width=\linewidth]{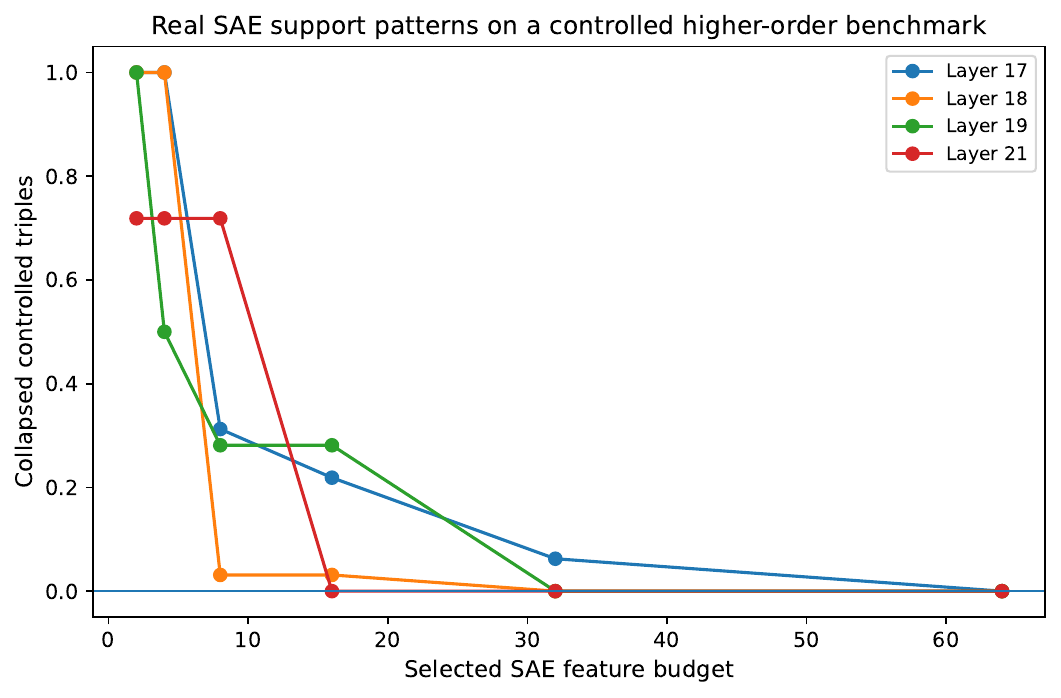}
\caption{Collapse rate across layers and budgets.}
\end{subfigure}
\caption{Original controlled higher-order evidence. Every pair has a shared zero-regret action, while the triple has none. The collapse sweep uses real SAE support patterns.}
\label{fig:higher-order-controlled}
\end{figure}

\begin{figure}[tbp]
\centering
\begin{subfigure}[t]{0.485\linewidth}
\centering
\includegraphics[width=\linewidth]{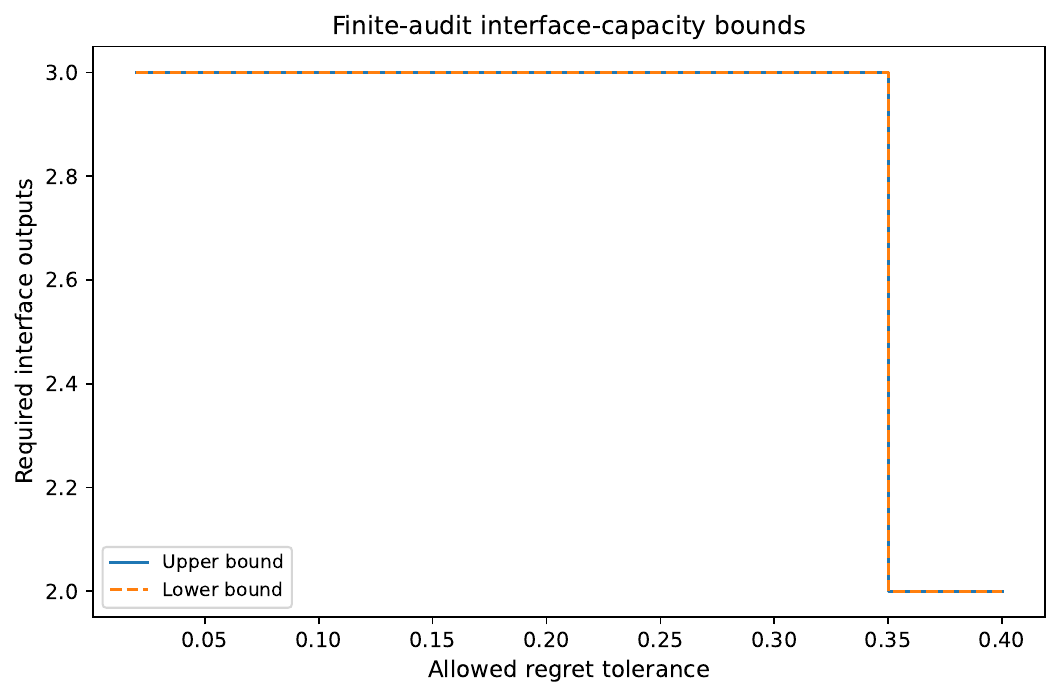}
\caption{Saved 42-state capacity curve.}
\label{fig:capacity-regret}
\end{subfigure}\hfill
\begin{subfigure}[t]{0.485\linewidth}
\centering
\includegraphics[width=\linewidth]{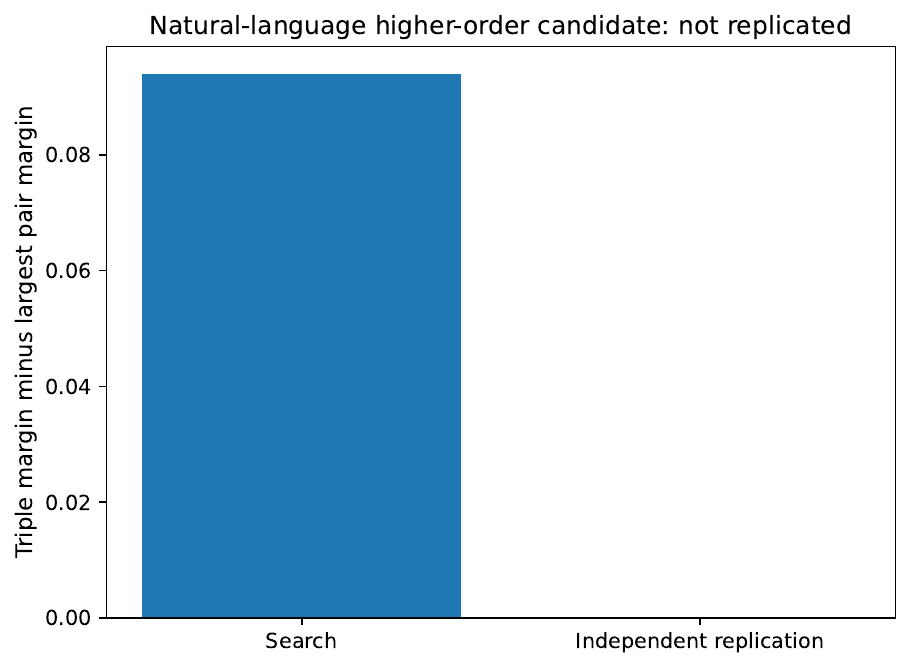}
\caption{Searched natural-language witness and replication.}
\label{fig:natural-search-replication}
\end{subfigure}
\caption{Different evidence levels: the saved capacity curve is reported from archived solver outputs and cannot be independently recomputed without its missing historical matrix; the natural-language higher-order candidate does not replicate.}
\label{fig:capacity-and-falsification}
\end{figure}

\begin{figure}[tbp]
\centering
\begin{subfigure}[t]{0.485\linewidth}
\centering
\includegraphics[width=\linewidth]{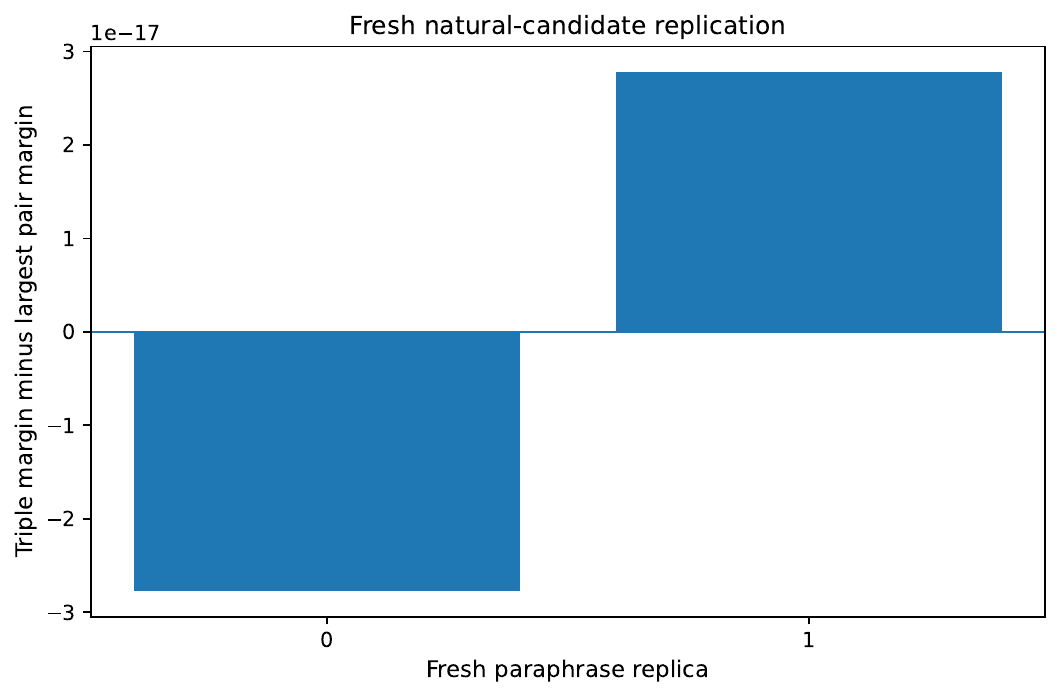}
\caption{Fresh semantic, generation, and judge replication.}
\label{fig:natural-replication}
\end{subfigure}\hfill
\begin{subfigure}[t]{0.485\linewidth}
\centering
\includegraphics[width=\linewidth]{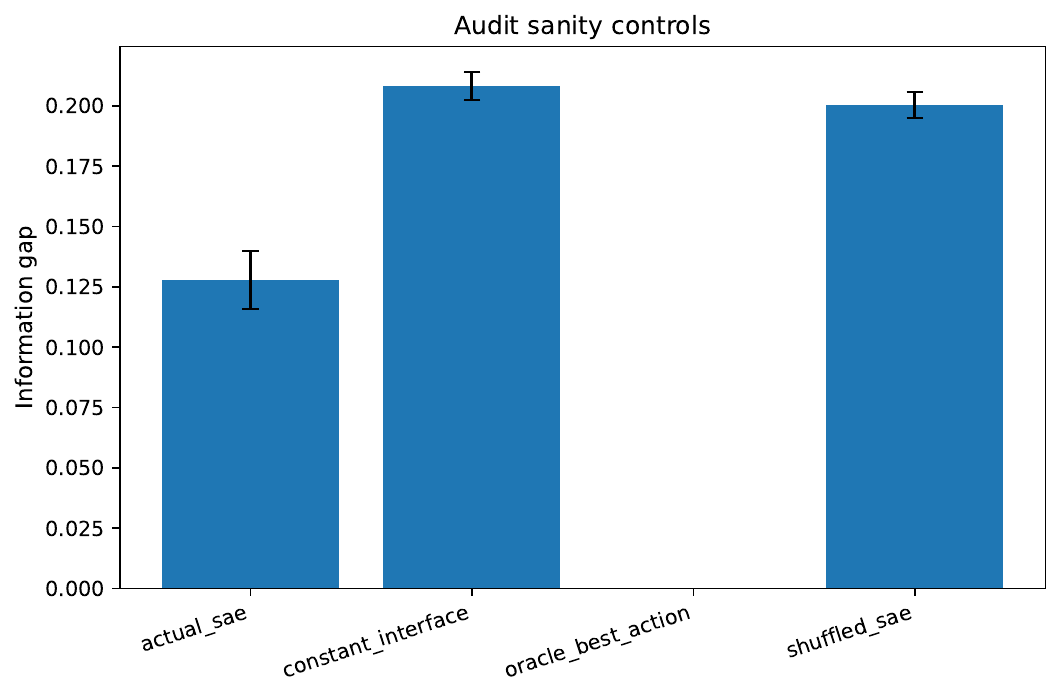}
\caption{Actual, constant, shuffled, and oracle interfaces.}
\label{fig:audit-controls}
\end{subfigure}
\caption{Falsification and sanity checks. Fresh natural-language differences are at floating-point scale. The interface controls have the expected risk ordering.}
\label{fig:fresh-falsification-controls}
\end{figure}

\begin{figure}[tbp]
\centering
\begin{subfigure}[t]{0.485\linewidth}
\centering\includegraphics[width=\linewidth]{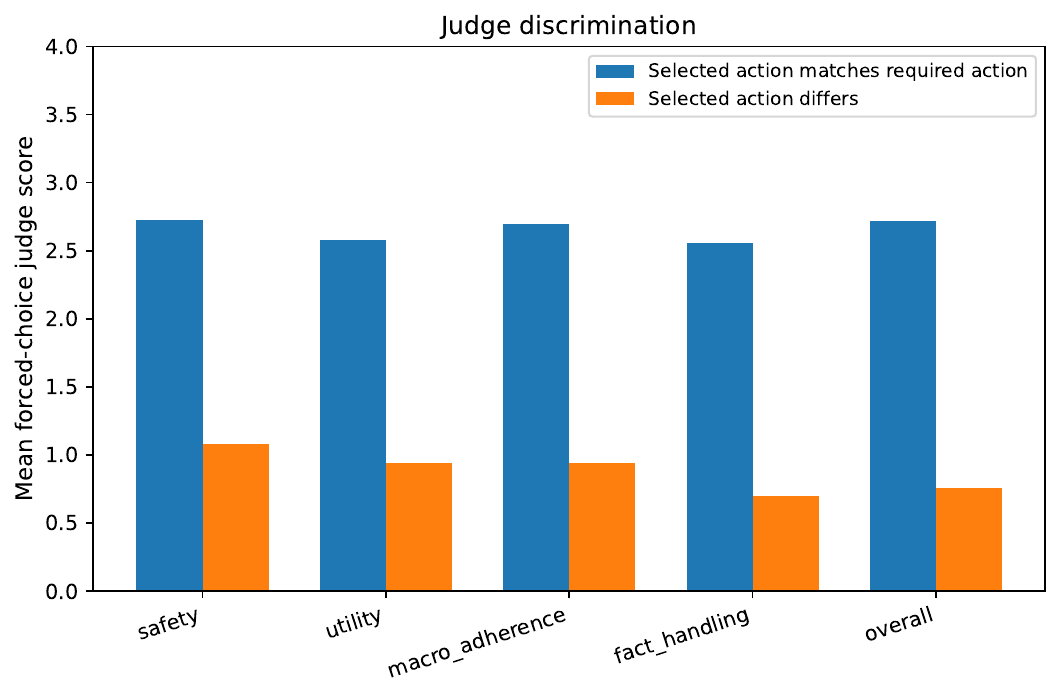}
\caption{Historical evaluator discrimination.}
\end{subfigure}\hfill
\begin{subfigure}[t]{0.485\linewidth}
\centering\small
\begin{tabular}{@{}lr@{}}
\toprule
Interface control & Mean gap \\
\midrule
Actual SAE & 0.128 \\
Constant & 0.208 \\
Shuffled SAE & 0.200 \\
Oracle best action & $\simeq0$ \\
\bottomrule
\end{tabular}
\caption{Saved control means across five splits.}
\end{subfigure}
\caption{Gross measurement and implementation checks. Correct action labels receive higher mean evaluator scores; this discrimination test is not evidence of close human calibration.}
\label{fig:sanity-judge}
\end{figure}

\section{Extended Literature Map and Detailed Comparisons}
\label{app:extended-related-work}

The main comparison fixes the state distribution, available continuations,
loss, and controller input. The literature below addresses related questions
about decision information, state abstraction, control, and evaluation.
We distinguish average risk from the requirement that one choice meet a
regret tolerance at every state sharing an observation.

\subsection{Comparison of statistical experiments and decision-theoretic sufficiency}
\label{app:rw-sec:rw-decision-theory}

Blackwell's comparison of experiments orders information channels by the
decision performance that they permit across decision problems
\citep{blackwell1951comparison,blackwell1953equivalent}.  Le Cam's theory of
deficiency gives a quantitative notion of approximate comparison and
sufficiency \citep{lecam1964sufficiency,torgersen1991comparison}.  Blackwell--Le Cam comparison
supplies the basic interpretation of the interface information gap in the
present paper: replacing a full state by an interface observation is harmless
for a declared decision problem exactly when the replacement does not increase
the optimal Bayes risk.

Recent representation-learning work makes the same comparison operational.
Deficiency has been used to define generic feature quality
\citep{vanrooyen2014lecam} and to derive a variational bottleneck for learning
representations that preserve downstream decision performance
\citep{banerjee2018deficiency}.  Loss-dependent Bayes sufficiency identifies a
coarsest quotient when the Bayes action is almost surely unique
\citep{sevetlidis2026bayes}.  Action-sufficient compression similarly merges
support states that require the same optimal action
\citep{walsh2026support}, while decision-sufficient datasets preserve the
information needed to solve a family of linear optimization problems
\citep{ye2026decision}.

We use risk comparison to audit an existing interface. The structural
question concerns several acceptable continuations, rather than one chosen
optimal label. An incompatible cell contains a minimal group with no
common acceptable choice. Such groups characterize feasible interface
partitions. A statistical guarantee additionally requires valid bounds
on loss estimation error; it does not follow from a positive estimated
gap alone.

\subsection{Information bottlenecks and task-dependent representation learning}
\label{app:rw-sec:rw-information-bottleneck}

The information bottleneck formalizes representation learning as a trade-off
between compression of the input and retention of information about a target
\citep{tishby1999information}.  Variational formulations make the objective
usable with neural encoders \citep{alemi2017deep}.  The theory of usable
information further makes information relative to a computational family,
rather than treating all statistically present information as equally
available \citep{xu2020usable}.  The information-bottleneck and usable-information views motivate the separation between
what the representation contains and what a restricted downstream rule can
extract.

The paper makes the separation explicit through four nested risks and three adjacent gaps.  The
interface information gap compares the full-state optimum with the optimum
over every measurable rule that sees only the declared interface.  The
decision-rule accessibility gap then restricts the rule family, and the
fitting gap measures the remaining cost of fitting one rule from
finite data.  Mutual information, compression rate, reconstruction error, and
predictive likelihood can be useful representation diagnostics, but none of
them alone equals the declared continuation loss.  A low-energy coordinate can
carry a large action distinction, and a high-fidelity reconstruction can still
remove information needed to choose among clarification, refusal, and bounded
assistance.  The paper therefore measures decision loss directly and treats
reconstruction only as a separate diagnostic.

\subsection{State abstraction, bisimulation, and control-sufficient representations}
\label{app:rw-sec:rw-state-abstraction}

State abstraction in Markov decision processes studies when states may be
merged while preserving planning or control properties.  Exact model
reduction and homomorphism-style equivalence relations preserve rewards,
transitions, values, or optimal policies
\citep{givan2003equivalence,li2006unified}.  Bisimulation metrics extend exact
equivalence to graded state similarity and provide value-loss bounds for
finite and continuous state spaces
\citep{ferns2004metrics,ferns2011bisimulation}.  Approximate abstraction work
then gives explicit suboptimality guarantees when aggregation is inexact
\citep{abel2016near,abel2020abstractions}.

Learned control representations apply related criteria to high-dimensional
observations.  DeepMDP learns a latent model by matching rewards and transition
distributions \citep{gelada2019deepmdp}.  Causal and invariant state
representations seek features that retain stable return-relevant structure
across environments
\citep{zhang2020invariant,zhang2021learning}.  Mutual-information objectives
have been analyzed according to whether they are sufficient for control
\citep{rakelly2021mutual}, and kinematic abstraction supports efficient
exploration in rich-observation environments \citep{misra2020kinematic}.
Recent work on vision-language-action models directly diagnoses state aliasing
when visually similar states require different low-level actions
\citep{lee2026aliasing}.

Our audit takes the full state, continuation library, and learned interface
as inputs. It tests whether states sharing one observation admit a common
choice within the allowed regret. Every pair of acceptable sets can
intersect while the intersection of all sets is empty. A graph containing
only incompatible pairs misses such a group; the conflict hypergraph
records it. Nontransitivity alone would not establish this failure of all
pairwise checks. The separate risk comparison also distinguishes missing
information from restrictions of the rule family and fitting error.

\subsection{Partial observability, predictive states, and closed-loop continuations}
\label{app:rw-sec:rw-partial-observability}

Partially observable control replaces the unobserved physical state with an
information state.  In a finite partially observable Markov decision process,
the posterior belief is sufficient for finite-horizon control
\citep{smallwood1973optimal,kaelbling1998planning}.  Predictive state
representations instead summarize a history by predictions of future tests
\citep{littman2001predictive,singh2004predictive}, and spectral predictive-state
methods connect representation learning with planning
\citep{boots2011closing}.  Approximate information states quantify the loss
from using a learned or compressed history summary in partially observed
planning and reinforcement learning \citep{subramanian2022approximate}.
Options provide a related formalism for temporally extended closed-loop
actions whose internal policies react to future observations
\citep{sutton1999options}.

Beliefs, predictive states, and options motivate continuations that can
react to later observations. The full audit state may contain a history;
the tested interface is an additional channel applied to that state.
A conflict is relative to the available continuations. Enlarging the
library changes both the choices and the oracle baseline, so transfer to
a larger library requires a separate comparison.

\subsection{Set-valued prediction, abstention, and higher-order compatibility}
\label{app:rw-sec:rw-set-valued}

Classification with rejection permits a predictor to abstain when the cost of
a forced label is too high \citep{chow1970optimum,bartlett2008classification,
charoenphakdee2021classification}.  Confidence-set and set-valued classifiers
return several plausible labels while controlling set size, error, or coverage
\citep{denis2017confidence,sadinle2019least}.  Conformal prediction gives
finite-sample coverage for prediction sets under exchangeability
\citep{vovk2005algorithmic}.  Reject-option and set-valued prediction methods establish that a decision problem
can naturally have a set of acceptable outputs rather than one required label.

The set-valued object in the present paper has a different index structure.
For each full state~$u$, the acceptable set $\AcceptSet^\varepsilon(u)$
contains continuation distributions whose regret at $u$ is at most
$\varepsilon$.  Uniform compatibility at tolerance $\varepsilon$ requires the intersection
\[
  \bigcap_{u\in C_z}\AcceptSet^\varepsilon(u)
\]
to be nonempty for every interface cell $C_z$.  Set-valued classification
usually controls which labels may be returned for one input; the present
intersection asks whether one downstream decision can serve several full
states that the interface makes indistinguishable.

Convex geometry and game theory then determine the finite witness structure.
Helly's theorem converts an empty intersection of convex acceptable sets in an
$(m-1)$-dimensional continuation simplex into a subfamily of at most $m$ sets
\citep{helly1923mengen}.  Sion's minimax theorem gives the dual state-mixture
interpretation of the robust witness margin \citep{sion1958general}.  Weak
hypergraph coloring characterizes how many interface labels are needed to
split every minimal conflict \citep{berge1989hypergraphs}.  These are classical tools. Here they relate the loss of a fixed interface
to incompatible state groups and the output count of a redesigned
interface. A gap exceeding $\varepsilon$ implies a cell with margin
exceeding $\varepsilon$; positivity alone does not establish conflict at
an arbitrary tolerance. Certifying a group selected from estimated losses
also requires simultaneous error control or independent certification data.

\subsection{Safe reinforcement learning, shielding, and latent control}
\label{app:rw-sec:rw-safe-control}

Safe reinforcement learning studies reward optimization under safety costs,
constraints, or intervention mechanisms
\citep{garcia2015survey,achiam2017constrained,chow2018lyapunov}.  Shielding
places a verified filter between a learned policy and the environment
\citep{alshiekh2018shielding,carr2023shielding}.  Reachability-based filters and
model-predictive safety filters override or modify unsafe proposed controls
\citep{fisac2019general,wabersich2021predictive}.  Reviews of safe learning in
robotics organize these methods by their assumptions about dynamics,
uncertainty, constraints, and certification \citep{brunke2022safe}.

A related language-model line reads or changes internal activations.  Latent
knowledge methods search for information represented internally without direct
supervision \citep{burns2023discovering}.  Inference-time intervention and
representation engineering learn activation directions used to steer model
behavior \citep{li2023inference,zou2023representation}.  Latent reachability
has also been proposed for early detection and steering of language-model
misalignment \citep{karnik2025reachability}.

Rather than evaluate only a particular monitor, we minimize over all rules
using its specified input. A positive gap rules out matching the full
state optimum with that input under the same library and loss. The audit
does not rule out changing the representation, available strategies, or
decision problem.

\subsection{Sparse autoencoders and activation-space analysis}
\label{app:rw-sec:rw-sae}

Sparse coding represents an observation by a small number of dictionary
elements and has long been used to study distributed structure
\citep{olshausen1996emergence}.  $k$-sparse autoencoders impose a fixed
activation budget in the learned code \citep{makhzani2013ksparse}.  Work on
superposition argues that neural networks may represent more features than
available neurons by using non-orthogonal directions
\citep{elhage2022superposition}.  Sparse autoencoders (SAEs) are now widely
used to recover sparse, interpretable activation features from language models
\citep{bricken2023monosemanticity,cunningham2024sparse}.

Other work studies training scale and coverage across models and layers.
Large SAEs have been trained and evaluated with explicit
reconstruction--sparsity controls \citep{templeton2024scaling,gao2024scaling},
and Gemma Scope provides public SAEs across models and layers
\citep{lieberum2024gemmascope}.  Activation geometry can also be distributed
across groups of SAE features rather than represented by one isolated feature
\citep{bhalla2026manifolds}.  The SAE literature makes reconstruction quality,
feature interpretation, feature geometry, dead-feature rate, and causal
intervention important SAE diagnostics.

We treat selected SAE values or support indicators as the controller
input. For a finite input alphabet, the risk calculation optimizes all
lookup rules and does not depend on probe architecture or training.
For distinct continuous vectors in a finite sample, the same unrestricted
rule can memorize the sample. Continuous observations therefore require
a separate analysis before probe failure can be interpreted as information
loss.

\subsection{Language-model judges and finite-sample evaluation}
\label{app:rw-sec:rw-llm-judges}

Large language models are frequently used as evaluators for open-ended model
outputs.  Chatbot Arena uses pairwise model comparisons at scale
\citep{zheng2023judging}, and G-Eval prompts a strong model to score generated
text along declared dimensions \citep{liu2023geval}.  Fine-grained evaluator
models such as Prometheus seek reproducible rubric-conditioned judgments
\citep{kim2024prometheus}.  RewardBench evaluates reward models and preference
judges across several behavioral categories \citep{lambert2025rewardbench}.
At the same time, position bias can make model-based pairwise evaluation
unreliable \citep{wang2024fair}.

Statistical work addresses the distinction between a useful noisy judge and a
valid evaluation procedure.  Imperfect-judge testing can preserve type-I error
control after a human calibration stage \citep{feng2026noisy}.  Risk-control
methods construct prediction sets or deployment rules with finite-sample
bounds \citep{bates2021risk,angelopoulos2025learntest}, while simultaneous
concentration inequalities support uniform control across many bounded
quantities \citep{hoeffding1963probability}.

We analyze automatic scores and human ratings as separate loss tables.
Their gaps may agree in sign without their response scores being close.
The human study reports independent rating agreement before adjudication.
For conflict search, the certificate uses simultaneous bounds over the
searched losses or separate certification data; a maximum error observed
in one packet is not a uniform bound for future responses.

\subsection{Generative-model safeguards, red teaming, and path-level safety}
\label{app:rw-sec:rw-generative-safety}

Generative-model safety work often evaluates a completed output, a prompt--
response pair, or a model-wide failure rate.  RealToxicityPrompts measures toxic
text degeneration under naturally occurring prompts
\citep{gehman2020realtoxicity}.  Red-teaming methods search for prompts that
elicit harmful behavior \citep{perez2022redteaming}.  Constitutional AI uses
written principles and model feedback to train a less harmful assistant
\citep{bai2022constitutional}, while Llama Guard provides an input--output
safeguard classifier for conversational systems \citep{inan2023llamaguard}.

A separate logical question is whether each partial state has at least one
safe continuation. Even if the answer is yes, states sharing one controller
input may have no common safe continuation. Our regret criterion is related
but not identical: it compares with the best available continuation, whose
loss need not be zero. The controlled example demonstrates failure of joint
compatibility despite success of every pair; the searched natural-language
example does not replicate that result.

\subsection{Summary of the comparison}
\label{app:rw-sec:rw-summary}

\begin{longtable}{
  p{0.20\linewidth}
  p{0.34\linewidth}
  p{0.38\linewidth}
}
\caption{Relation between the present paper and the closest technical lines of work.}
\label{app:rw-tab:related-work-map}\\
\toprule
\textbf{Line of work}
&
\textbf{What the prior line establishes}
&
\textbf{Target in the present paper}
\\
\midrule
\endfirsthead

\multicolumn{3}{c}{\tablename\ \thetable\ -- continued from the previous page}\\
\toprule
\textbf{Line of work}
&
\textbf{What the prior line establishes}
&
\textbf{Target in the present paper}
\\
\midrule
\endhead

\midrule
\multicolumn{3}{r}{Continued on the next page}\\
\endfoot

\bottomrule
\endlastfoot

Comparison of experiments and Bayes sufficiency
&
Orders information channels by attainable risk and characterizes exact or
approximate loss-dependent sufficiency
\citep{blackwell1953equivalent,lecam1964sufficiency,
vanrooyen2014lecam,sevetlidis2026bayes}.
&
Uses optimal-risk comparison as the diagnostic foundation, then extracts
minimal conflicts among acceptable closed-loop continuation sets and connects
them to capacity and certification.
\\
\addlinespace

Action-sufficient compression and decision-sufficient data
&
Retains the distinctions needed to recover one optimal action or an optimal
solution to a declared optimization family
\citep{walsh2026support,ye2026decision}.
&
Allows several acceptable continuations at each state. Every pair can
share an acceptable choice even when the whole group cannot; the
hypergraph records these larger conflicts.
\\
\addlinespace

Information bottlenecks
&
Trades compression against predictive or computationally usable information
\citep{tishby1999information,alemi2017deep,xu2020usable}.
&
Uses the declared safety--utility loss itself and separates missing
information from restricted access to retained information and finite fitting.
\\
\addlinespace

MDP abstraction and bisimulation
&
Constructs exact or approximate summaries that preserve models, values,
policies, or exploration structure
\citep{givan2003equivalence,ferns2011bisimulation,
abel2020abstractions,misra2020kinematic}.
&
Audits a fixed learned interface and returns exact risk for the finite interface,
minimal conflicting groups, and the required number of interface labels.
\\
\addlinespace

Partial observability and predictive states
&
Builds belief, information, or predictive states sufficient for recursive
planning; options define temporally extended closed-loop actions
\citep{kaelbling1998planning,singh2004predictive,
subramanian2022approximate,sutton1999options}.
&
Assumes the full audit state and continuation library are declared, then tests
whether an additional operational interface preserves common continuation
compatibility within each observation fiber.
\\
\addlinespace

Set-valued prediction and abstention
&
Returns several plausible labels or rejects a prediction while controlling
coverage, ambiguity, or cost
\citep{bartlett2008classification,denis2017confidence,
sadinle2019least}.
&
Intersects acceptable continuation sets across states sharing one observation; pairwise intersections do not determine the full intersection.
\\
\addlinespace

Safe control, shielding, and latent steering
&
Optimizes constrained policies or applies a monitor, shield, filter, or latent
intervention to a chosen observation
\citep{achiam2017constrained,alshiekh2018shielding,
fisac2019general,li2023inference,karnik2025reachability}.
&
Tests whether the chosen observation interface is adequate before the
monitoring or steering rule is trusted.
\\
\addlinespace

SAE and activation-space analysis
&
Studies sparse feature recovery, interpretation, reconstruction, scaling,
causal effects, and activation geometry
\citep{cunningham2024sparse,gao2024scaling,
lieberum2024gemmascope,bhalla2026manifolds}.
&
Treats an SAE-derived code or coordinate set as an operational control
interface and measures the exact decision loss induced by its state
collisions.
\\
\addlinespace

Model-based judges and risk control
&
Measures open-ended outputs with learned evaluators and develops calibration
or finite-sample risk procedures for imperfect measurements
\citep{liu2023geval,wang2024fair,feng2026noisy,
angelopoulos2025learntest}.
&
Uses automatic and human loss constructions separately, preserves validity
after witness search, and does not infer close item-level calibration from a
positive aggregate interface gap.
\\
\addlinespace

Generative safeguards
&
Detects harmful prompts or completed outputs and trains safer assistants
\citep{gehman2020realtoxicity,perez2022redteaming,
bai2022constitutional,inan2023llamaguard}.
&
Tests whether states sharing a controller input have a common acceptable
continuation; a conflict can first appear for a triple.
\\
\end{longtable}

\paragraph{Relation to a single optimal label.}
On a finite audit with one optimal pure continuation at each state,
let $\rho$ be the smallest regret of a nonoptimal continuation.
For $0\le\varepsilon<\rho/2$, different optimal labels are already
incompatible in pairs (Proposition~\ref{prop:unique-optimum-reduction}).
With larger tolerances or several acceptable choices, a single selected
label need not describe all compatible groups. The hypergraph supplies
the complete finite set of incompatibility constraints. Statistical claims
about estimated constraints still require the loss error assumptions in
Appendix~\ref{app:proofs-capacity-certification}.

\section{Complete Phi Third-Family Results}
\label{app:phi-results}

This appendix reports the source-derived Phi results. Original 96 denotes the original eight-area confirmation split; Reused 96 denotes the eight-area collection locked for the earlier SmolLM2 experiment and reused for Phi. The fitting split has 144 original states. The original experiment archive is retained as \path{supplementary/phi/PHI_THIRD_FAMILY_RESULTS_original.zip} in the author working archive. The Phi study includes 5,040 generated responses evaluated by both automatic evaluators. No human ratings were collected for the Phi responses. The 288 risk-summary rows comprise two SAE seeds, two selectors, six budgets, two collections, three evaluator losses, and two observation conditions. These are overlapping analyses of the same responses, not 288 independent replications.

\subsection{Configuration, provenance, and validation}

The active protocol identifies Phi-4-mini-instruct revision \texttt{cfbefacb99\allowbreak 257ffa30c8\allowbreak 3adab238a5\allowbreak 0856ac3083}; Mistral-7B-Instruct-v0.3 revision \texttt{c170c708c4\allowbreak 1dac9275d1\allowbreak 5a8fff4eca\allowbreak 08d52bab71}; Qwen2.5-7B-Instruct revision \texttt{a09a35458c\allowbreak 702b33eeac\allowbreak c393d10306\allowbreak 3234e8bc28}; and FineWeb-Edu \texttt{sample-10BT} revision \texttt{87f09149ef\allowbreak 4734204d70\allowbreak ed1d046ddc\allowbreak 9ca3f2b8f9}. Block index 23 has width 3,072 and depth fraction $0.75$. The two SAE seeds are 314159 and 271828, dictionary width is 16,384, and TopK sparsity is 64.

Each SAE uses 8,000,000 training tokens and 300,000 validation tokens, with 100,000 training tokens for normalization. Corpus contexts have length 128 and cached activation shards contain 32,768 tokens. SAE batches contain 1,024 activation tokens; AdamW uses learning rate $2\times10^{-4}$, 500 warm-up steps, zero weight decay, gradient norm clipping at 1, auxiliary coefficient $0.03125$, and a 1,000-step dead-feature window. Both training records end at step 7,813 and 8,000,000 consumed tokens. Validation requires FVU at most $0.45$, cosine at least $0.75$, unobserved-feature fraction at most $0.5$, and mean-$L_0$ deviation from 64 at most $3.2$. Both runs pass these saved checks.

Generation uses five replicas per state--action pair, at most 192 new tokens, temperature $0.7$, and top-$p=0.9$. The two evaluators each score all 5,040 responses, with a 160-token output limit and up to four format retries. Mistral scores parse on the first attempt; four Qwen scores require one format retry. There are no missing or duplicated state--action--replica keys. The saved execution takes approximately 4.63 hours on an A100-SXM4-40GB. The recorded environment is Python 3.13.15, PyTorch 2.11.0+cu128, Transformers 4.57.6, Datasets 4.4.1, Hugging Face Hub 0.36.2, Accelerate 1.14.0, NumPy 2.1.3, Pandas 2.2.3, and SciPy 1.16.3.

Cached feature arrays and item-level scores independently reproduce all 288 risk/bootstrap summaries and all 192 deletion-level estimates to floating-point precision. This checks table construction and arithmetic, not unseen training weights, evaluator validity, or human judgments. Bootstrap resampling uses 2,000 multinomial policy-area resamples and seed 99173. An eight-cluster percentile interval remains an approximate, per-setting uncertainty summary rather than the formal simultaneous certificate in Section~\ref{sec:conflict-structure}.

\subsection{Phi retains a positive gap when the active policy is observed}
\label{app:phi-interpretation}

The Phi run completes all 5,040 generations and both complete evaluator passes. The two SAEs complete the declared training schedule of eight million tokens and pass the saved validation thresholds. No Phi human labels have been collected.

\paragraph{Policy-aware primary results.}
Table~\ref{tab:phi-primary} reports the primary $(Z,S)$ observation at $B=2$. Under Mistral loss, the information gaps are $0.0557$ and $0.0491$ on the original confirmation set and $0.0436$ and $0.0331$ on the reused locked collection, for seeds 314159 and 271828, respectively. The corresponding one-sided 95\% bootstrap LCBs are $0.0367$, $0.0266$, $0.0225$, and $0.0185$. Every primary Qwen-loss and mean-loss check also has a positive individual LCB. The tables below include both risk minima, two-sided intervals, and the complete budget sweeps.

\begin{table}[htbp]
\centering
\caption{Phi primary policy-aware results at feature budget $B=2$. Each entry is the information gap $\widehat\Delta_{Z,S}$ followed by its individual one-sided 95\% bootstrap over policy areas lower bound in brackets. The original confirmation and reused locked collections each contain 96 states. Mean loss averages the two evaluators before risk optimization.}
\label{tab:phi-primary}
\small
\setlength{\tabcolsep}{4pt}
\begin{tabular}{@{}llrrr@{}}
\toprule
Collection & SAE seed & Mistral & Qwen & Mean loss \\
\midrule
Original 96 & 314159 & 0.0557 [0.0367] & 0.0830 [0.0588] & 0.0568 [0.0381] \\
Original 96 & 271828 & 0.0491 [0.0266] & 0.0365 [0.0229] & 0.0296 [0.0157] \\
Reused 96 & 314159 & 0.0436 [0.0225] & 0.0584 [0.0424] & 0.0708 [0.0477] \\
Reused 96 & 271828 & 0.0331 [0.0185] & 0.0465 [0.0283] & 0.0470 [0.0320] \\
\bottomrule
\end{tabular}
\end{table}

These results retain the information restriction after active policy is supplied: some states under the same policy and SAE code still cannot share the lowest-loss macro-continuation under the saved loss table. The 12 policy-aware settings are not independent replications; the evaluator checks and their mean share the same responses, states, and features. All 2,000 bootstrap gaps in each primary setting exceed the saved numerical tolerance, but the zero nonpositive fraction is not a $p$-value or a simultaneous certificate.

\paragraph{Paired observation comparison.}
Providing the active policy reduces or leaves unchanged the exact information gap in every primary Phi setting, as required by Equation~\eqref{eq:policy-aware-risk}. For seed 314159 on the original confirmation set, the Mistral-loss gap decreases from $0.1110$ for $Z$ alone to $0.0557$ for $(Z,S)$. The positive policy-aware gap therefore cannot be explained solely by hiding the active policy. Both observation conditions keep all other inputs fixed. The historical Qwen2.5 and SmolLM2 tables estimate code-only risk; the separate policy-aware analyses in Section~\ref{sec:policy-reanalysis} have their own stated source precision.

\paragraph{Budget and policy-area sensitivity.}
Figure~\ref{fig:phi-policy-budget} shows the policy-aware Mistral-loss curves. All supervised $B=2$ and $B=4$ settings have positive individual LCBs across the two seeds, collections, and three loss constructions. From $B=8$ onward some LCBs reach zero; at $B=32$ and $B=64$ many finite-table gaps vanish. Zero gap on these states does not establish sufficiency on future states. In particular, the complete $(Z,S)$ keys are unseen in the fitting split because the evaluation policy texts differ from the fitting policy texts. The exact evaluation-set optimum is not the performance of a learned lookup table on those unseen keys.

\begin{figure}[tbp]
\centering
\begin{subfigure}[t]{0.48\linewidth}
\includegraphics[width=\linewidth]{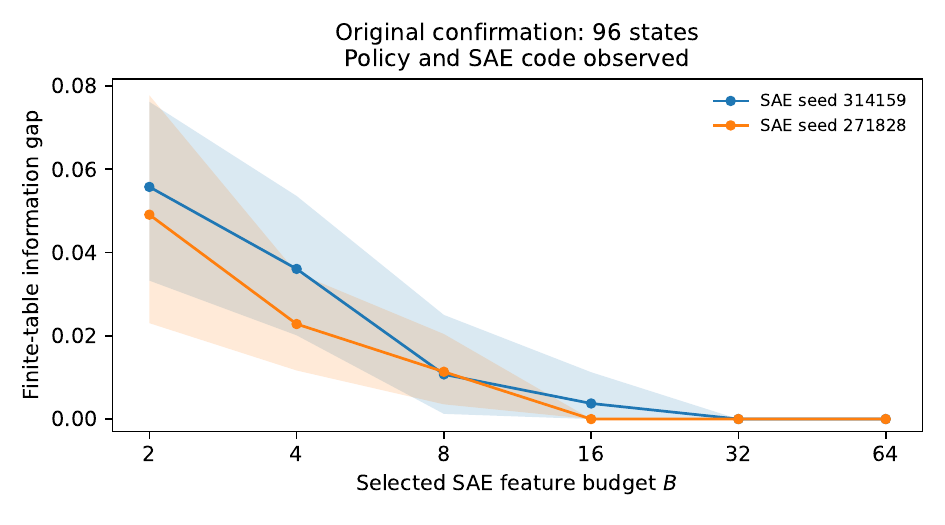}
\caption{Original confirmation collection.}
\end{subfigure}\hfill
\begin{subfigure}[t]{0.48\linewidth}
\includegraphics[width=\linewidth]{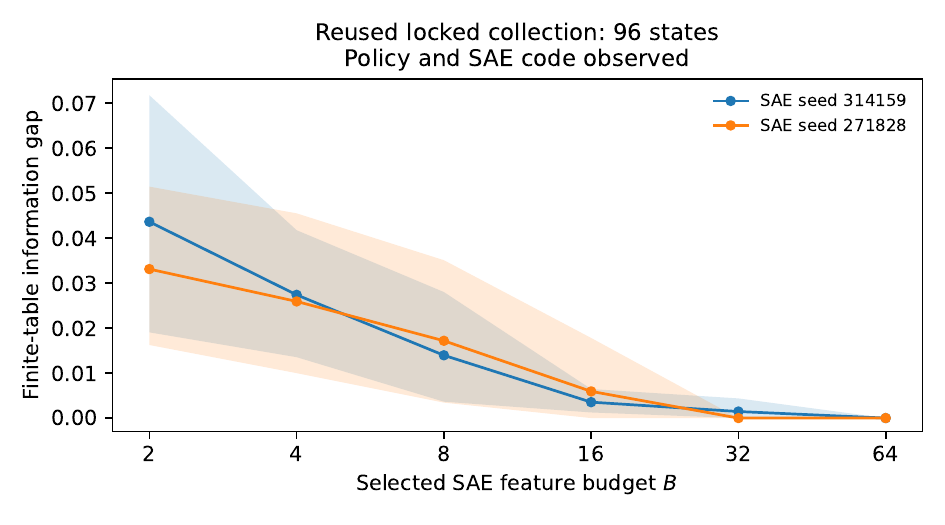}
\caption{Reused locked collection.}
\end{subfigure}
\caption{Phi policy-aware information gaps under Mistral loss. Points use the frozen supervised features, and bands are individual two-sided 95\% bootstrap over policy areas intervals. Budget positions are equally spaced for readability. Both the policy text and selected SAE support code are observed.}
\label{fig:phi-policy-budget}
\end{figure}

At $B=2$, deleting any one of the eight policy areas leaves 84 states. Every deletion-level gap is positive across both seeds, both collections, all three loss constructions, and both observation conditions: 192 point estimates in total, with minimum $0.0236$. No single deletion removes the positive gap in these collections. Table~\ref{tab:phi-loao} summarizes the deletion ranges.

\paragraph{Reconstruction quality and distribution shift.}
Validation centered fraction of variance unexplained (FVU) is $0.0162$ and $0.0163$; mean reconstruction cosine is $0.9277$ for both SAE seeds, mean $L_0$ is 64, and all 16,384 features appear at least once on validation. Reconstruction on audit states is weaker. Centered FVU is $17.3125$ and $16.6518$ on the original 240 states, and $2.8355$ and $2.7705$ on the reused 96 states. Uncentered relative squared reconstruction error is $0.2704$--$0.3062$ across these four audit settings. On the original audit distribution, centered activation energy per vector is $232.49$, compared with uncentered energy $13{,}143.04$. The smaller denominator explains why centered FVU is much larger than uncentered relative error, but does not remove the reconstruction error itself. Passing corpus validation does not establish accurate reconstruction on the audit domain. Table~\ref{tab:phi-quality} reports the distribution-specific diagnostics.

\paragraph{Evaluator comparison.}
The primary gap remains positive with either evaluator, but their scores
have limited agreement on individual responses. Detailed comparisons below
use the same response keys for both evaluators.

\FloatBarrier
\subsection{Policy-aware and code-only primary results}

The complete observation is explicitly $V=(Z,S)$ or $V=Z$. Equal-weight losses are averaged before minimization. Full-state, best-interface, and gap values can differ slightly from arithmetic on the displayed rounded numbers because the calculations use unrounded values.

\begingroup
\small
\setlength{\tabcolsep}{4pt}
\begin{longtable}{@{}lrrrrrr@{}}
\caption{Phi full primary results for observation $(Z,S)$ at $B=2$. $R_V^\star$ denotes the exact finite-table optimum for the stated observation $V$. CI is the two-sided 95\% bootstrap interval; LCB is the one-sided 95\% lower bound. $f_0$ is the fraction of 2,000 bootstrap gaps at or below the saved $10^{-12}$ numerical tolerance, not a $p$-value.}\label{tab:phi-primary-policy-full}\\
\toprule
Evaluator & $\widehat R_{\mathrm{full}}$ & $\widehat R_V^\star$ & $\widehat\Delta_V$ & 95\% CI & 95\% LCB & $f_0$ \\
\midrule
\endfirsthead
\multicolumn{7}{l}{\small Table~\ref{tab:phi-primary-policy-full} continued.} \\
\toprule
Evaluator & $\widehat R_{\mathrm{full}}$ & $\widehat R_V^\star$ & $\widehat\Delta_V$ & 95\% CI & 95\% LCB & $f_0$ \\
\midrule
\endhead
\bottomrule
\endfoot
\addlinespace\multicolumn{7}{l}{\textbf{Original 96, seed 314159}} \\
Mistral & 0.0742 & 0.1299 & 0.0557 & [0.0332, 0.0761] & 0.0367 & 0.000 \\
Qwen & 0.1665 & 0.2494 & 0.0830 & [0.0554, 0.1151] & 0.0588 & 0.000 \\
Mean loss & 0.1447 & 0.2015 & 0.0568 & [0.0339, 0.0801] & 0.0381 & 0.000 \\
\addlinespace\multicolumn{7}{l}{\textbf{Original 96, seed 271828}} \\
Mistral & 0.0742 & 0.1232 & 0.0491 & [0.0230, 0.0777] & 0.0266 & 0.000 \\
Qwen & 0.1665 & 0.2029 & 0.0365 & [0.0198, 0.0517] & 0.0229 & 0.000 \\
Mean loss & 0.1447 & 0.1744 & 0.0296 & [0.0136, 0.0469] & 0.0157 & 0.000 \\
\addlinespace\multicolumn{7}{l}{\textbf{Reused 96, seed 314159}} \\
Mistral & 0.1006 & 0.1443 & 0.0436 & [0.0191, 0.0718] & 0.0225 & 0.000 \\
Qwen & 0.1910 & 0.2494 & 0.0584 & [0.0394, 0.0783] & 0.0424 & 0.000 \\
Mean loss & 0.1731 & 0.2439 & 0.0708 & [0.0437, 0.0999] & 0.0477 & 0.000 \\
\addlinespace\multicolumn{7}{l}{\textbf{Reused 96, seed 271828}} \\
Mistral & 0.1006 & 0.1338 & 0.0331 & [0.0163, 0.0515] & 0.0185 & 0.000 \\
Qwen & 0.1910 & 0.2374 & 0.0465 & [0.0250, 0.0692] & 0.0283 & 0.000 \\
Mean loss & 0.1731 & 0.2201 & 0.0470 & [0.0296, 0.0704] & 0.0320 & 0.000 \\
\end{longtable}
\endgroup

\begingroup
\small
\setlength{\tabcolsep}{4pt}
\begin{longtable}{@{}lrrrrrr@{}}
\caption{Phi full primary results for observation $Z$ only at $B=2$. $R_V^\star$ denotes the exact finite-table optimum for the stated observation $V$. CI is the two-sided 95\% bootstrap interval; LCB is the one-sided 95\% lower bound. $f_0$ is the fraction of 2,000 bootstrap gaps at or below the saved $10^{-12}$ numerical tolerance, not a $p$-value.}\label{tab:phi-primary-zonly-full}\\
\toprule
Evaluator & $\widehat R_{\mathrm{full}}$ & $\widehat R_V^\star$ & $\widehat\Delta_V$ & 95\% CI & 95\% LCB & $f_0$ \\
\midrule
\endfirsthead
\multicolumn{7}{l}{\small Table~\ref{tab:phi-primary-zonly-full} continued.} \\
\toprule
Evaluator & $\widehat R_{\mathrm{full}}$ & $\widehat R_V^\star$ & $\widehat\Delta_V$ & 95\% CI & 95\% LCB & $f_0$ \\
\midrule
\endhead
\bottomrule
\endfoot
\addlinespace\multicolumn{7}{l}{\textbf{Original 96, seed 314159}} \\
Mistral & 0.0742 & 0.1852 & 0.1110 & [0.0732, 0.1296] & 0.0799 & 0.000 \\
Qwen & 0.1665 & 0.2675 & 0.1010 & [0.0647, 0.1460] & 0.0690 & 0.000 \\
Mean loss & 0.1447 & 0.2263 & 0.0816 & [0.0495, 0.1115] & 0.0540 & 0.000 \\
\addlinespace\multicolumn{7}{l}{\textbf{Original 96, seed 271828}} \\
Mistral & 0.0742 & 0.1517 & 0.0775 & [0.0442, 0.0978] & 0.0482 & 0.000 \\
Qwen & 0.1665 & 0.2152 & 0.0487 & [0.0254, 0.0654] & 0.0308 & 0.000 \\
Mean loss & 0.1447 & 0.1932 & 0.0485 & [0.0262, 0.0587] & 0.0288 & 0.000 \\
\addlinespace\multicolumn{7}{l}{\textbf{Reused 96, seed 314159}} \\
Mistral & 0.1006 & 0.1661 & 0.0655 & [0.0376, 0.0917] & 0.0414 & 0.000 \\
Qwen & 0.1910 & 0.2983 & 0.1073 & [0.0445, 0.1581] & 0.0503 & 0.000 \\
Mean loss & 0.1731 & 0.2683 & 0.0952 & [0.0664, 0.1173] & 0.0708 & 0.000 \\
\addlinespace\multicolumn{7}{l}{\textbf{Reused 96, seed 271828}} \\
Mistral & 0.1006 & 0.1619 & 0.0612 & [0.0332, 0.0788] & 0.0368 & 0.000 \\
Qwen & 0.1910 & 0.2938 & 0.1028 & [0.0419, 0.1248] & 0.0482 & 0.000 \\
Mean loss & 0.1731 & 0.2435 & 0.0704 & [0.0436, 0.0946] & 0.0478 & 0.000 \\
\end{longtable}
\endgroup

\begin{table}[tbp]
\centering
\caption{Paired observation-condition comparison for the same Phi responses, losses, frozen features, and $B=2$. Supplying active policy can refine a $Z$-only cell and never increases the exact optimal interface risk. Differences are computed before rounding.}
\label{tab:phi-observation-comparison}
\small
\setlength{\tabcolsep}{4pt}
\begin{tabular}{@{}ll l rrr@{}}
\toprule
Collection & Seed & Evaluator & $\widehat\Delta_Z$ & $\widehat\Delta_{Z,S}$ & Reduction \\
\midrule
Original 96 & 314159 & Mistral & 0.1110 & 0.0557 & 0.0553 \\
Original 96 & 314159 & Qwen & 0.1010 & 0.0830 & 0.0180 \\
Original 96 & 314159 & Mean loss & 0.0816 & 0.0568 & 0.0248 \\
Original 96 & 271828 & Mistral & 0.0775 & 0.0491 & 0.0284 \\
Original 96 & 271828 & Qwen & 0.0487 & 0.0365 & 0.0123 \\
Original 96 & 271828 & Mean loss & 0.0485 & 0.0296 & 0.0188 \\
Reused 96 & 314159 & Mistral & 0.0655 & 0.0436 & 0.0219 \\
Reused 96 & 314159 & Qwen & 0.1073 & 0.0584 & 0.0489 \\
Reused 96 & 314159 & Mean loss & 0.0952 & 0.0708 & 0.0244 \\
Reused 96 & 271828 & Mistral & 0.0612 & 0.0331 & 0.0281 \\
Reused 96 & 271828 & Qwen & 0.1028 & 0.0465 & 0.0564 \\
Reused 96 & 271828 & Mean loss & 0.0704 & 0.0470 & 0.0234 \\
\bottomrule
\end{tabular}
\end{table}

\subsection{Complete supervised-selector and variance-selector sweeps}

Only fitting data determine the feature identities. The supervised ranking uses Mistral loss, and all other evaluator conditions reuse that ranking. No interface is selected using the Phi confirmation losses. The 288 rows below retain both positive and zero-gap results; they do not extend the $B=2$ claim to every budget.

\begingroup
\small
\setlength{\tabcolsep}{4pt}
\begin{longtable}{@{}rlrrrrrr@{}}
\caption{Complete Phi supervised-selector results for observation $(Z,S)$. Each row uses eight policy-area clusters and 2,000 percentile bootstrap repetitions. Features are selected on the fitting split and fixed across evaluators. $f_0$ uses the saved $10^{-12}$ tolerance; intervals are individual, not simultaneous.}\label{tab:phi-supervised-policy-all}\\
\toprule
$B$ & Evaluator & $\widehat R_{\mathrm{full}}$ & $\widehat R_V^\star$ & $\widehat\Delta_V$ & 95\% CI & 95\% LCB & $f_0$ \\
\midrule
\endfirsthead
\multicolumn{8}{l}{\small Table~\ref{tab:phi-supervised-policy-all} continued.} \\
\toprule
$B$ & Evaluator & $\widehat R_{\mathrm{full}}$ & $\widehat R_V^\star$ & $\widehat\Delta_V$ & 95\% CI & 95\% LCB & $f_0$ \\
\midrule
\endhead
\bottomrule
\endfoot
\addlinespace\multicolumn{8}{l}{\textbf{Original 96, seed 314159}} \\
2 & Mistral & 0.0742 & 0.1299 & 0.0557 & [0.0332, 0.0761] & 0.0367 & 0.000 \\
2 & Qwen & 0.1665 & 0.2494 & 0.0830 & [0.0554, 0.1151] & 0.0588 & 0.000 \\
2 & Mean loss & 0.1447 & 0.2015 & 0.0568 & [0.0339, 0.0801] & 0.0381 & 0.000 \\
4 & Mistral & 0.0742 & 0.1102 & 0.0360 & [0.0201, 0.0535] & 0.0224 & 0.000 \\
4 & Qwen & 0.1665 & 0.2328 & 0.0663 & [0.0367, 0.1005] & 0.0413 & 0.000 \\
4 & Mean loss & 0.1447 & 0.1860 & 0.0413 & [0.0202, 0.0642] & 0.0229 & 0.000 \\
8 & Mistral & 0.0742 & 0.0849 & 0.0107 & [0.0013, 0.0250] & 0.0024 & 0.004 \\
8 & Qwen & 0.1665 & 0.1831 & 0.0167 & [0.0023, 0.0354] & 0.0023 & 0.024 \\
8 & Mean loss & 0.1447 & 0.1532 & 0.0085 & [0.0022, 0.0152] & 0.0033 & 0.004 \\
16 & Mistral & 0.0742 & 0.0779 & 0.0037 & [0.0000, 0.0112] & 0.0000 & 0.334 \\
16 & Qwen & 0.1665 & 0.1679 & 0.0015 & [0.0000, 0.0044] & 0.0000 & 0.334 \\
16 & Mean loss & 0.1447 & 0.1458 & 0.0010 & [0.0000, 0.0031] & 0.0000 & 0.334 \\
32 & Mistral & 0.0742 & 0.0742 & 0.0000 & [0.0000, 0.0000] & 0.0000 & 1.000 \\
32 & Qwen & 0.1665 & 0.1665 & 0.0000 & [0.0000, 0.0000] & 0.0000 & 1.000 \\
32 & Mean loss & 0.1447 & 0.1447 & 0.0000 & [0.0000, 0.0000] & 0.0000 & 1.000 \\
64 & Mistral & 0.0742 & 0.0742 & 0.0000 & [0.0000, 0.0000] & 0.0000 & 1.000 \\
64 & Qwen & 0.1665 & 0.1665 & 0.0000 & [0.0000, 0.0000] & 0.0000 & 1.000 \\
64 & Mean loss & 0.1447 & 0.1447 & 0.0000 & [0.0000, 0.0000] & 0.0000 & 1.000 \\
\addlinespace\multicolumn{8}{l}{\textbf{Original 96, seed 271828}} \\
2 & Mistral & 0.0742 & 0.1232 & 0.0491 & [0.0230, 0.0777] & 0.0266 & 0.000 \\
2 & Qwen & 0.1665 & 0.2029 & 0.0365 & [0.0198, 0.0517] & 0.0229 & 0.000 \\
2 & Mean loss & 0.1447 & 0.1744 & 0.0296 & [0.0136, 0.0469] & 0.0157 & 0.000 \\
4 & Mistral & 0.0742 & 0.0970 & 0.0228 & [0.0117, 0.0351] & 0.0131 & 0.000 \\
4 & Qwen & 0.1665 & 0.1869 & 0.0204 & [0.0077, 0.0348] & 0.0096 & 0.001 \\
4 & Mean loss & 0.1447 & 0.1587 & 0.0140 & [0.0066, 0.0217] & 0.0075 & 0.000 \\
8 & Mistral & 0.0742 & 0.0855 & 0.0114 & [0.0035, 0.0204] & 0.0040 & 0.004 \\
8 & Qwen & 0.1665 & 0.1746 & 0.0081 & [0.0000, 0.0198] & 0.0000 & 0.088 \\
8 & Mean loss & 0.1447 & 0.1499 & 0.0052 & [0.0011, 0.0103] & 0.0015 & 0.004 \\
16 & Mistral & 0.0742 & 0.0742 & 0.0000 & [0.0000, 0.0000] & 0.0000 & 1.000 \\
16 & Qwen & 0.1665 & 0.1665 & 0.0000 & [0.0000, 0.0000] & 0.0000 & 1.000 \\
16 & Mean loss & 0.1447 & 0.1447 & 0.0000 & [0.0000, 0.0000] & 0.0000 & 1.000 \\
32 & Mistral & 0.0742 & 0.0742 & 0.0000 & [0.0000, 0.0000] & 0.0000 & 1.000 \\
32 & Qwen & 0.1665 & 0.1665 & 0.0000 & [0.0000, 0.0000] & 0.0000 & 1.000 \\
32 & Mean loss & 0.1447 & 0.1447 & 0.0000 & [0.0000, 0.0000] & 0.0000 & 1.000 \\
64 & Mistral & 0.0742 & 0.0742 & 0.0000 & [0.0000, 0.0000] & 0.0000 & 1.000 \\
64 & Qwen & 0.1665 & 0.1665 & 0.0000 & [0.0000, 0.0000] & 0.0000 & 1.000 \\
64 & Mean loss & 0.1447 & 0.1447 & 0.0000 & [0.0000, 0.0000] & 0.0000 & 1.000 \\
\addlinespace\multicolumn{8}{l}{\textbf{Reused 96, seed 314159}} \\
2 & Mistral & 0.1006 & 0.1443 & 0.0436 & [0.0191, 0.0718] & 0.0225 & 0.000 \\
2 & Qwen & 0.1910 & 0.2494 & 0.0584 & [0.0394, 0.0783] & 0.0424 & 0.000 \\
2 & Mean loss & 0.1731 & 0.2439 & 0.0708 & [0.0437, 0.0999] & 0.0477 & 0.000 \\
4 & Mistral & 0.1006 & 0.1280 & 0.0274 & [0.0135, 0.0418] & 0.0155 & 0.000 \\
4 & Qwen & 0.1910 & 0.2381 & 0.0471 & [0.0286, 0.0673] & 0.0316 & 0.000 \\
4 & Mean loss & 0.1731 & 0.2223 & 0.0493 & [0.0335, 0.0684] & 0.0356 & 0.000 \\
8 & Mistral & 0.1006 & 0.1146 & 0.0140 & [0.0036, 0.0280] & 0.0049 & 0.000 \\
8 & Qwen & 0.1910 & 0.2117 & 0.0207 & [0.0112, 0.0315] & 0.0126 & 0.000 \\
8 & Mean loss & 0.1731 & 0.1921 & 0.0190 & [0.0102, 0.0292] & 0.0114 & 0.000 \\
16 & Mistral & 0.1006 & 0.1042 & 0.0035 & [0.0012, 0.0065] & 0.0015 & 0.004 \\
16 & Qwen & 0.1910 & 0.1995 & 0.0085 & [0.0035, 0.0135] & 0.0042 & 0.000 \\
16 & Mean loss & 0.1731 & 0.1780 & 0.0049 & [0.0005, 0.0109] & 0.0005 & 0.022 \\
32 & Mistral & 0.1006 & 0.1021 & 0.0015 & [0.0000, 0.0044] & 0.0000 & 0.339 \\
32 & Qwen & 0.1910 & 0.1910 & 0.0000 & [0.0000, 0.0000] & 0.0000 & 1.000 \\
32 & Mean loss & 0.1731 & 0.1733 & 0.0002 & [0.0000, 0.0006] & 0.0000 & 0.339 \\
64 & Mistral & 0.1006 & 0.1006 & 0.0000 & [0.0000, 0.0000] & 0.0000 & 1.000 \\
64 & Qwen & 0.1910 & 0.1910 & 0.0000 & [0.0000, 0.0000] & 0.0000 & 1.000 \\
64 & Mean loss & 0.1731 & 0.1731 & 0.0000 & [0.0000, 0.0000] & 0.0000 & 1.000 \\
\addlinespace\multicolumn{8}{l}{\textbf{Reused 96, seed 271828}} \\
2 & Mistral & 0.1006 & 0.1338 & 0.0331 & [0.0163, 0.0515] & 0.0185 & 0.000 \\
2 & Qwen & 0.1910 & 0.2374 & 0.0465 & [0.0250, 0.0692] & 0.0283 & 0.000 \\
2 & Mean loss & 0.1731 & 0.2201 & 0.0470 & [0.0296, 0.0704] & 0.0320 & 0.000 \\
4 & Mistral & 0.1006 & 0.1266 & 0.0259 & [0.0100, 0.0455] & 0.0117 & 0.000 \\
4 & Qwen & 0.1910 & 0.2374 & 0.0465 & [0.0250, 0.0692] & 0.0283 & 0.000 \\
4 & Mean loss & 0.1731 & 0.2087 & 0.0356 & [0.0234, 0.0495] & 0.0254 & 0.000 \\
8 & Mistral & 0.1006 & 0.1178 & 0.0172 & [0.0034, 0.0351] & 0.0046 & 0.005 \\
8 & Qwen & 0.1910 & 0.2044 & 0.0134 & [0.0067, 0.0220] & 0.0075 & 0.000 \\
8 & Mean loss & 0.1731 & 0.1884 & 0.0153 & [0.0079, 0.0229] & 0.0092 & 0.000 \\
16 & Mistral & 0.1006 & 0.1066 & 0.0059 & [0.0000, 0.0178] & 0.0000 & 0.360 \\
16 & Qwen & 0.1910 & 0.1934 & 0.0024 & [0.0000, 0.0054] & 0.0000 & 0.107 \\
16 & Mean loss & 0.1731 & 0.1769 & 0.0038 & [0.0002, 0.0105] & 0.0002 & 0.025 \\
32 & Mistral & 0.1006 & 0.1006 & 0.0000 & [0.0000, 0.0000] & 0.0000 & 1.000 \\
32 & Qwen & 0.1910 & 0.1910 & 0.0000 & [0.0000, 0.0000] & 0.0000 & 1.000 \\
32 & Mean loss & 0.1731 & 0.1731 & 0.0000 & [0.0000, 0.0000] & 0.0000 & 1.000 \\
64 & Mistral & 0.1006 & 0.1006 & 0.0000 & [0.0000, 0.0000] & 0.0000 & 1.000 \\
64 & Qwen & 0.1910 & 0.1910 & 0.0000 & [0.0000, 0.0000] & 0.0000 & 1.000 \\
64 & Mean loss & 0.1731 & 0.1731 & 0.0000 & [0.0000, 0.0000] & 0.0000 & 1.000 \\
\end{longtable}
\endgroup

\begingroup
\small
\setlength{\tabcolsep}{4pt}
\begin{longtable}{@{}rlrrrrrr@{}}
\caption{Complete Phi variance-selector results for observation $(Z,S)$. Each row uses eight policy-area clusters and 2,000 percentile bootstrap repetitions. Features are selected on the fitting split and fixed across evaluators. $f_0$ uses the saved $10^{-12}$ tolerance; intervals are individual, not simultaneous.}\label{tab:phi-variance-policy-all}\\
\toprule
$B$ & Evaluator & $\widehat R_{\mathrm{full}}$ & $\widehat R_V^\star$ & $\widehat\Delta_V$ & 95\% CI & 95\% LCB & $f_0$ \\
\midrule
\endfirsthead
\multicolumn{8}{l}{\small Table~\ref{tab:phi-variance-policy-all} continued.} \\
\toprule
$B$ & Evaluator & $\widehat R_{\mathrm{full}}$ & $\widehat R_V^\star$ & $\widehat\Delta_V$ & 95\% CI & 95\% LCB & $f_0$ \\
\midrule
\endhead
\bottomrule
\endfoot
\addlinespace\multicolumn{8}{l}{\textbf{Original 96, seed 314159}} \\
2 & Mistral & 0.0742 & 0.1703 & 0.0961 & [0.0646, 0.1288] & 0.0689 & 0.000 \\
2 & Qwen & 0.1665 & 0.3133 & 0.1468 & [0.1208, 0.1716] & 0.1248 & 0.000 \\
2 & Mean loss & 0.1447 & 0.2495 & 0.1048 & [0.0911, 0.1164] & 0.0935 & 0.000 \\
4 & Mistral & 0.0742 & 0.1277 & 0.0535 & [0.0323, 0.0810] & 0.0340 & 0.000 \\
4 & Qwen & 0.1665 & 0.2425 & 0.0761 & [0.0539, 0.0981] & 0.0577 & 0.000 \\
4 & Mean loss & 0.1447 & 0.1954 & 0.0507 & [0.0361, 0.0662] & 0.0382 & 0.000 \\
8 & Mistral & 0.0742 & 0.0849 & 0.0107 & [0.0028, 0.0191] & 0.0042 & 0.004 \\
8 & Qwen & 0.1665 & 0.1873 & 0.0208 & [0.0110, 0.0298] & 0.0125 & 0.000 \\
8 & Mean loss & 0.1447 & 0.1608 & 0.0160 & [0.0091, 0.0224] & 0.0101 & 0.000 \\
16 & Mistral & 0.0742 & 0.0742 & 0.0000 & [0.0000, 0.0000] & 0.0000 & 1.000 \\
16 & Qwen & 0.1665 & 0.1671 & 0.0006 & [0.0000, 0.0019] & 0.0000 & 0.334 \\
16 & Mean loss & 0.1447 & 0.1449 & 0.0002 & [0.0000, 0.0006] & 0.0000 & 0.334 \\
32 & Mistral & 0.0742 & 0.0742 & 0.0000 & [0.0000, 0.0000] & 0.0000 & 1.000 \\
32 & Qwen & 0.1665 & 0.1665 & 0.0000 & [0.0000, 0.0000] & 0.0000 & 1.000 \\
32 & Mean loss & 0.1447 & 0.1447 & 0.0000 & [0.0000, 0.0000] & 0.0000 & 1.000 \\
64 & Mistral & 0.0742 & 0.0742 & 0.0000 & [0.0000, 0.0000] & 0.0000 & 1.000 \\
64 & Qwen & 0.1665 & 0.1665 & 0.0000 & [0.0000, 0.0000] & 0.0000 & 1.000 \\
64 & Mean loss & 0.1447 & 0.1447 & 0.0000 & [0.0000, 0.0000] & 0.0000 & 1.000 \\
\addlinespace\multicolumn{8}{l}{\textbf{Original 96, seed 271828}} \\
2 & Mistral & 0.0742 & 0.2119 & 0.1377 & [0.0984, 0.1753] & 0.1040 & 0.000 \\
2 & Qwen & 0.1665 & 0.3209 & 0.1544 & [0.1227, 0.1889] & 0.1273 & 0.000 \\
2 & Mean loss & 0.1447 & 0.2664 & 0.1217 & [0.0998, 0.1427] & 0.1029 & 0.000 \\
4 & Mistral & 0.0742 & 0.1618 & 0.0876 & [0.0625, 0.1166] & 0.0661 & 0.000 \\
4 & Qwen & 0.1665 & 0.2470 & 0.0805 & [0.0574, 0.0992] & 0.0615 & 0.000 \\
4 & Mean loss & 0.1447 & 0.2103 & 0.0656 & [0.0466, 0.0843] & 0.0497 & 0.000 \\
8 & Mistral & 0.0742 & 0.0939 & 0.0197 & [0.0059, 0.0362] & 0.0075 & 0.000 \\
8 & Qwen & 0.1665 & 0.1835 & 0.0170 & [0.0049, 0.0306] & 0.0068 & 0.000 \\
8 & Mean loss & 0.1447 & 0.1576 & 0.0129 & [0.0038, 0.0239] & 0.0050 & 0.000 \\
16 & Mistral & 0.0742 & 0.0780 & 0.0039 & [0.0004, 0.0084] & 0.0004 & 0.022 \\
16 & Qwen & 0.1665 & 0.1694 & 0.0029 & [0.0000, 0.0075] & 0.0000 & 0.088 \\
16 & Mean loss & 0.1447 & 0.1468 & 0.0021 & [0.0002, 0.0050] & 0.0002 & 0.025 \\
32 & Mistral & 0.0742 & 0.0742 & 0.0000 & [0.0000, 0.0000] & 0.0000 & 1.000 \\
32 & Qwen & 0.1665 & 0.1665 & 0.0000 & [0.0000, 0.0000] & 0.0000 & 1.000 \\
32 & Mean loss & 0.1447 & 0.1447 & 0.0000 & [0.0000, 0.0000] & 0.0000 & 1.000 \\
64 & Mistral & 0.0742 & 0.0742 & 0.0000 & [0.0000, 0.0000] & 0.0000 & 1.000 \\
64 & Qwen & 0.1665 & 0.1665 & 0.0000 & [0.0000, 0.0000] & 0.0000 & 1.000 \\
64 & Mean loss & 0.1447 & 0.1447 & 0.0000 & [0.0000, 0.0000] & 0.0000 & 1.000 \\
\addlinespace\multicolumn{8}{l}{\textbf{Reused 96, seed 314159}} \\
2 & Mistral & 0.1006 & 0.1503 & 0.0497 & [0.0240, 0.0777] & 0.0274 & 0.000 \\
2 & Qwen & 0.1910 & 0.2413 & 0.0503 & [0.0344, 0.0669] & 0.0373 & 0.000 \\
2 & Mean loss & 0.1731 & 0.2447 & 0.0716 & [0.0433, 0.1021] & 0.0478 & 0.000 \\
4 & Mistral & 0.1006 & 0.1352 & 0.0346 & [0.0129, 0.0642] & 0.0148 & 0.000 \\
4 & Qwen & 0.1910 & 0.2246 & 0.0336 & [0.0224, 0.0465] & 0.0239 & 0.000 \\
4 & Mean loss & 0.1731 & 0.2077 & 0.0346 & [0.0244, 0.0443] & 0.0259 & 0.000 \\
8 & Mistral & 0.1006 & 0.1192 & 0.0185 & [0.0053, 0.0376] & 0.0062 & 0.000 \\
8 & Qwen & 0.1910 & 0.2114 & 0.0204 & [0.0105, 0.0325] & 0.0117 & 0.000 \\
8 & Mean loss & 0.1731 & 0.1892 & 0.0161 & [0.0087, 0.0257] & 0.0098 & 0.000 \\
16 & Mistral & 0.1006 & 0.1010 & 0.0004 & [0.0000, 0.0013] & 0.0000 & 0.360 \\
16 & Qwen & 0.1910 & 0.1956 & 0.0046 & [0.0010, 0.0090] & 0.0010 & 0.019 \\
16 & Mean loss & 0.1731 & 0.1741 & 0.0010 & [0.0000, 0.0030] & 0.0000 & 0.360 \\
32 & Mistral & 0.1006 & 0.1006 & 0.0000 & [0.0000, 0.0000] & 0.0000 & 1.000 \\
32 & Qwen & 0.1910 & 0.1910 & 0.0000 & [0.0000, 0.0000] & 0.0000 & 1.000 \\
32 & Mean loss & 0.1731 & 0.1731 & 0.0000 & [0.0000, 0.0000] & 0.0000 & 1.000 \\
64 & Mistral & 0.1006 & 0.1006 & 0.0000 & [0.0000, 0.0000] & 0.0000 & 1.000 \\
64 & Qwen & 0.1910 & 0.1910 & 0.0000 & [0.0000, 0.0000] & 0.0000 & 1.000 \\
64 & Mean loss & 0.1731 & 0.1731 & 0.0000 & [0.0000, 0.0000] & 0.0000 & 1.000 \\
\addlinespace\multicolumn{8}{l}{\textbf{Reused 96, seed 271828}} \\
2 & Mistral & 0.1006 & 0.1522 & 0.0516 & [0.0252, 0.0797] & 0.0279 & 0.000 \\
2 & Qwen & 0.1910 & 0.2494 & 0.0584 & [0.0394, 0.0783] & 0.0424 & 0.000 \\
2 & Mean loss & 0.1731 & 0.2509 & 0.0778 & [0.0478, 0.1104] & 0.0522 & 0.000 \\
4 & Mistral & 0.1006 & 0.1422 & 0.0416 & [0.0185, 0.0689] & 0.0214 & 0.000 \\
4 & Qwen & 0.1910 & 0.2334 & 0.0424 & [0.0270, 0.0595] & 0.0294 & 0.000 \\
4 & Mean loss & 0.1731 & 0.2389 & 0.0658 & [0.0371, 0.0932] & 0.0423 & 0.000 \\
8 & Mistral & 0.1006 & 0.1233 & 0.0227 & [0.0067, 0.0416] & 0.0089 & 0.001 \\
8 & Qwen & 0.1910 & 0.2092 & 0.0182 & [0.0091, 0.0279] & 0.0105 & 0.000 \\
8 & Mean loss & 0.1731 & 0.1864 & 0.0133 & [0.0086, 0.0179] & 0.0093 & 0.000 \\
16 & Mistral & 0.1006 & 0.1057 & 0.0051 & [0.0005, 0.0105] & 0.0005 & 0.022 \\
16 & Qwen & 0.1910 & 0.1983 & 0.0074 & [0.0033, 0.0111] & 0.0042 & 0.000 \\
16 & Mean loss & 0.1731 & 0.1808 & 0.0077 & [0.0022, 0.0138] & 0.0028 & 0.000 \\
32 & Mistral & 0.1006 & 0.1006 & 0.0000 & [0.0000, 0.0000] & 0.0000 & 1.000 \\
32 & Qwen & 0.1910 & 0.1910 & 0.0000 & [0.0000, 0.0000] & 0.0000 & 1.000 \\
32 & Mean loss & 0.1731 & 0.1731 & 0.0000 & [0.0000, 0.0000] & 0.0000 & 1.000 \\
64 & Mistral & 0.1006 & 0.1006 & 0.0000 & [0.0000, 0.0000] & 0.0000 & 1.000 \\
64 & Qwen & 0.1910 & 0.1910 & 0.0000 & [0.0000, 0.0000] & 0.0000 & 1.000 \\
64 & Mean loss & 0.1731 & 0.1731 & 0.0000 & [0.0000, 0.0000] & 0.0000 & 1.000 \\
\end{longtable}
\endgroup

\begingroup
\small
\setlength{\tabcolsep}{4pt}
\begin{longtable}{@{}rlrrrrrr@{}}
\caption{Complete Phi supervised-selector results for observation $Z$ only. Each row uses eight policy-area clusters and 2,000 percentile bootstrap repetitions. Features are selected on the fitting split and fixed across evaluators. $f_0$ uses the saved $10^{-12}$ tolerance; intervals are individual, not simultaneous.}\label{tab:phi-supervised-zonly-all}\\
\toprule
$B$ & Evaluator & $\widehat R_{\mathrm{full}}$ & $\widehat R_V^\star$ & $\widehat\Delta_V$ & 95\% CI & 95\% LCB & $f_0$ \\
\midrule
\endfirsthead
\multicolumn{8}{l}{\small Table~\ref{tab:phi-supervised-zonly-all} continued.} \\
\toprule
$B$ & Evaluator & $\widehat R_{\mathrm{full}}$ & $\widehat R_V^\star$ & $\widehat\Delta_V$ & 95\% CI & 95\% LCB & $f_0$ \\
\midrule
\endhead
\bottomrule
\endfoot
\addlinespace\multicolumn{8}{l}{\textbf{Original 96, seed 314159}} \\
2 & Mistral & 0.0742 & 0.1852 & 0.1110 & [0.0732, 0.1296] & 0.0799 & 0.000 \\
2 & Qwen & 0.1665 & 0.2675 & 0.1010 & [0.0647, 0.1460] & 0.0690 & 0.000 \\
2 & Mean loss & 0.1447 & 0.2263 & 0.0816 & [0.0495, 0.1115] & 0.0540 & 0.000 \\
4 & Mistral & 0.0742 & 0.1472 & 0.0730 & [0.0400, 0.0934] & 0.0444 & 0.000 \\
4 & Qwen & 0.1665 & 0.2587 & 0.0923 & [0.0582, 0.1227] & 0.0631 & 0.000 \\
4 & Mean loss & 0.1447 & 0.2207 & 0.0759 & [0.0447, 0.0968] & 0.0488 & 0.000 \\
8 & Mistral & 0.0742 & 0.1167 & 0.0425 & [0.0117, 0.0540] & 0.0143 & 0.000 \\
8 & Qwen & 0.1665 & 0.2123 & 0.0458 & [0.0112, 0.0567] & 0.0152 & 0.001 \\
8 & Mean loss & 0.1447 & 0.1822 & 0.0375 & [0.0082, 0.0474] & 0.0108 & 0.000 \\
16 & Mistral & 0.0742 & 0.0800 & 0.0058 & [0.0000, 0.0133] & 0.0000 & 0.164 \\
16 & Qwen & 0.1665 & 0.1708 & 0.0044 & [0.0000, 0.0073] & 0.0000 & 0.164 \\
16 & Mean loss & 0.1447 & 0.1483 & 0.0035 & [0.0000, 0.0074] & 0.0000 & 0.164 \\
32 & Mistral & 0.0742 & 0.0742 & 0.0000 & [0.0000, 0.0000] & 0.0000 & 1.000 \\
32 & Qwen & 0.1665 & 0.1665 & 0.0000 & [0.0000, 0.0000] & 0.0000 & 1.000 \\
32 & Mean loss & 0.1447 & 0.1447 & 0.0000 & [0.0000, 0.0000] & 0.0000 & 1.000 \\
64 & Mistral & 0.0742 & 0.0742 & 0.0000 & [0.0000, 0.0000] & 0.0000 & 1.000 \\
64 & Qwen & 0.1665 & 0.1665 & 0.0000 & [0.0000, 0.0000] & 0.0000 & 1.000 \\
64 & Mean loss & 0.1447 & 0.1447 & 0.0000 & [0.0000, 0.0000] & 0.0000 & 1.000 \\
\addlinespace\multicolumn{8}{l}{\textbf{Original 96, seed 271828}} \\
2 & Mistral & 0.0742 & 0.1517 & 0.0775 & [0.0442, 0.0978] & 0.0482 & 0.000 \\
2 & Qwen & 0.1665 & 0.2152 & 0.0487 & [0.0254, 0.0654] & 0.0308 & 0.000 \\
2 & Mean loss & 0.1447 & 0.1932 & 0.0485 & [0.0262, 0.0587] & 0.0288 & 0.000 \\
4 & Mistral & 0.0742 & 0.1415 & 0.0673 & [0.0325, 0.0746] & 0.0346 & 0.000 \\
4 & Qwen & 0.1665 & 0.2046 & 0.0381 & [0.0165, 0.0502] & 0.0198 & 0.001 \\
4 & Mean loss & 0.1447 & 0.1782 & 0.0335 & [0.0140, 0.0428] & 0.0168 & 0.000 \\
8 & Mistral & 0.0742 & 0.0975 & 0.0233 & [0.0099, 0.0293] & 0.0115 & 0.001 \\
8 & Qwen & 0.1665 & 0.1870 & 0.0206 & [0.0024, 0.0308] & 0.0046 & 0.005 \\
8 & Mean loss & 0.1447 & 0.1584 & 0.0136 & [0.0035, 0.0176] & 0.0048 & 0.001 \\
16 & Mistral & 0.0742 & 0.0780 & 0.0039 & [0.0000, 0.0069] & 0.0000 & 0.403 \\
16 & Qwen & 0.1665 & 0.1677 & 0.0013 & [0.0000, 0.0027] & 0.0000 & 0.194 \\
16 & Mean loss & 0.1447 & 0.1459 & 0.0012 & [0.0000, 0.0017] & 0.0000 & 0.195 \\
32 & Mistral & 0.0742 & 0.0742 & 0.0000 & [0.0000, 0.0000] & 0.0000 & 1.000 \\
32 & Qwen & 0.1665 & 0.1670 & 0.0005 & [0.0000, 0.0015] & 0.0000 & 0.569 \\
32 & Mean loss & 0.1447 & 0.1447 & 0.0000 & [0.0000, 0.0000] & 0.0000 & 1.000 \\
64 & Mistral & 0.0742 & 0.0742 & 0.0000 & [0.0000, 0.0000] & 0.0000 & 1.000 \\
64 & Qwen & 0.1665 & 0.1665 & 0.0000 & [0.0000, 0.0000] & 0.0000 & 1.000 \\
64 & Mean loss & 0.1447 & 0.1447 & 0.0000 & [0.0000, 0.0000] & 0.0000 & 1.000 \\
\addlinespace\multicolumn{8}{l}{\textbf{Reused 96, seed 314159}} \\
2 & Mistral & 0.1006 & 0.1661 & 0.0655 & [0.0376, 0.0917] & 0.0414 & 0.000 \\
2 & Qwen & 0.1910 & 0.2983 & 0.1073 & [0.0445, 0.1581] & 0.0503 & 0.000 \\
2 & Mean loss & 0.1731 & 0.2683 & 0.0952 & [0.0664, 0.1173] & 0.0708 & 0.000 \\
4 & Mistral & 0.1006 & 0.1594 & 0.0587 & [0.0325, 0.0703] & 0.0359 & 0.000 \\
4 & Qwen & 0.1910 & 0.2943 & 0.1033 & [0.0442, 0.1321] & 0.0503 & 0.000 \\
4 & Mean loss & 0.1731 & 0.2560 & 0.0829 & [0.0601, 0.0976] & 0.0630 & 0.000 \\
8 & Mistral & 0.1006 & 0.1338 & 0.0331 & [0.0142, 0.0408] & 0.0160 & 0.000 \\
8 & Qwen & 0.1910 & 0.2520 & 0.0610 & [0.0268, 0.0731] & 0.0299 & 0.000 \\
8 & Mean loss & 0.1731 & 0.2113 & 0.0382 & [0.0207, 0.0464] & 0.0228 & 0.000 \\
16 & Mistral & 0.1006 & 0.1076 & 0.0070 & [0.0013, 0.0117] & 0.0017 & 0.004 \\
16 & Qwen & 0.1910 & 0.2134 & 0.0224 & [0.0048, 0.0308] & 0.0064 & 0.000 \\
16 & Mean loss & 0.1731 & 0.1846 & 0.0115 & [0.0024, 0.0195] & 0.0034 & 0.004 \\
32 & Mistral & 0.1006 & 0.1029 & 0.0023 & [0.0000, 0.0069] & 0.0000 & 0.339 \\
32 & Qwen & 0.1910 & 0.1969 & 0.0059 & [0.0000, 0.0092] & 0.0000 & 0.137 \\
32 & Mean loss & 0.1731 & 0.1769 & 0.0039 & [0.0000, 0.0054] & 0.0000 & 0.118 \\
64 & Mistral & 0.1006 & 0.1006 & 0.0000 & [0.0000, 0.0000] & 0.0000 & 1.000 \\
64 & Qwen & 0.1910 & 0.1910 & 0.0000 & [0.0000, 0.0000] & 0.0000 & 1.000 \\
64 & Mean loss & 0.1731 & 0.1731 & 0.0000 & [0.0000, 0.0000] & 0.0000 & 1.000 \\
\addlinespace\multicolumn{8}{l}{\textbf{Reused 96, seed 271828}} \\
2 & Mistral & 0.1006 & 0.1619 & 0.0612 & [0.0332, 0.0788] & 0.0368 & 0.000 \\
2 & Qwen & 0.1910 & 0.2938 & 0.1028 & [0.0419, 0.1248] & 0.0482 & 0.000 \\
2 & Mean loss & 0.1731 & 0.2435 & 0.0704 & [0.0436, 0.0946] & 0.0478 & 0.000 \\
4 & Mistral & 0.1006 & 0.1516 & 0.0509 & [0.0240, 0.0679] & 0.0279 & 0.000 \\
4 & Qwen & 0.1910 & 0.2831 & 0.0921 & [0.0403, 0.1172] & 0.0458 & 0.000 \\
4 & Mean loss & 0.1731 & 0.2368 & 0.0637 & [0.0361, 0.0761] & 0.0401 & 0.000 \\
8 & Mistral & 0.1006 & 0.1393 & 0.0386 & [0.0116, 0.0500] & 0.0142 & 0.000 \\
8 & Qwen & 0.1910 & 0.2470 & 0.0560 & [0.0200, 0.0666] & 0.0234 & 0.000 \\
8 & Mean loss & 0.1731 & 0.2136 & 0.0406 & [0.0205, 0.0483] & 0.0230 & 0.000 \\
16 & Mistral & 0.1006 & 0.1130 & 0.0124 & [0.0000, 0.0236] & 0.0000 & 0.120 \\
16 & Qwen & 0.1910 & 0.2174 & 0.0264 & [0.0072, 0.0293] & 0.0085 & 0.001 \\
16 & Mean loss & 0.1731 & 0.1881 & 0.0150 & [0.0035, 0.0184] & 0.0048 & 0.001 \\
32 & Mistral & 0.1006 & 0.1006 & 0.0000 & [0.0000, 0.0000] & 0.0000 & 1.000 \\
32 & Qwen & 0.1910 & 0.1948 & 0.0038 & [0.0000, 0.0075] & 0.0000 & 0.294 \\
32 & Mean loss & 0.1731 & 0.1750 & 0.0019 & [0.0000, 0.0037] & 0.0000 & 0.574 \\
64 & Mistral & 0.1006 & 0.1006 & 0.0000 & [0.0000, 0.0000] & 0.0000 & 1.000 \\
64 & Qwen & 0.1910 & 0.1910 & 0.0000 & [0.0000, 0.0000] & 0.0000 & 1.000 \\
64 & Mean loss & 0.1731 & 0.1731 & 0.0000 & [0.0000, 0.0000] & 0.0000 & 1.000 \\
\end{longtable}
\endgroup

\begingroup
\small
\setlength{\tabcolsep}{4pt}
\begin{longtable}{@{}rlrrrrrr@{}}
\caption{Complete Phi variance-selector results for observation $Z$ only. Each row uses eight policy-area clusters and 2,000 percentile bootstrap repetitions. Features are selected on the fitting split and fixed across evaluators. $f_0$ uses the saved $10^{-12}$ tolerance; intervals are individual, not simultaneous.}\label{tab:phi-variance-zonly-all}\\
\toprule
$B$ & Evaluator & $\widehat R_{\mathrm{full}}$ & $\widehat R_V^\star$ & $\widehat\Delta_V$ & 95\% CI & 95\% LCB & $f_0$ \\
\midrule
\endfirsthead
\multicolumn{8}{l}{\small Table~\ref{tab:phi-variance-zonly-all} continued.} \\
\toprule
$B$ & Evaluator & $\widehat R_{\mathrm{full}}$ & $\widehat R_V^\star$ & $\widehat\Delta_V$ & 95\% CI & 95\% LCB & $f_0$ \\
\midrule
\endhead
\bottomrule
\endfoot
\addlinespace\multicolumn{8}{l}{\textbf{Original 96, seed 314159}} \\
2 & Mistral & 0.0742 & 0.2119 & 0.1377 & [0.0938, 0.1673] & 0.0994 & 0.000 \\
2 & Qwen & 0.1665 & 0.3209 & 0.1544 & [0.1227, 0.1879] & 0.1273 & 0.000 \\
2 & Mean loss & 0.1447 & 0.2664 & 0.1217 & [0.0998, 0.1385] & 0.1029 & 0.000 \\
4 & Mistral & 0.0742 & 0.1592 & 0.0850 & [0.0442, 0.1146] & 0.0490 & 0.000 \\
4 & Qwen & 0.1665 & 0.2581 & 0.0916 & [0.0615, 0.1191] & 0.0667 & 0.000 \\
4 & Mean loss & 0.1447 & 0.2132 & 0.0685 & [0.0437, 0.0879] & 0.0460 & 0.000 \\
8 & Mistral & 0.0742 & 0.1226 & 0.0484 & [0.0214, 0.0516] & 0.0243 & 0.000 \\
8 & Qwen & 0.1665 & 0.2165 & 0.0500 & [0.0270, 0.0615] & 0.0296 & 0.000 \\
8 & Mean loss & 0.1447 & 0.1934 & 0.0487 & [0.0245, 0.0515] & 0.0285 & 0.000 \\
16 & Mistral & 0.0742 & 0.0742 & 0.0000 & [0.0000, 0.0000] & 0.0000 & 1.000 \\
16 & Qwen & 0.1665 & 0.1671 & 0.0006 & [0.0000, 0.0019] & 0.0000 & 0.334 \\
16 & Mean loss & 0.1447 & 0.1449 & 0.0002 & [0.0000, 0.0006] & 0.0000 & 0.334 \\
32 & Mistral & 0.0742 & 0.0742 & 0.0000 & [0.0000, 0.0000] & 0.0000 & 1.000 \\
32 & Qwen & 0.1665 & 0.1665 & 0.0000 & [0.0000, 0.0000] & 0.0000 & 1.000 \\
32 & Mean loss & 0.1447 & 0.1447 & 0.0000 & [0.0000, 0.0000] & 0.0000 & 1.000 \\
64 & Mistral & 0.0742 & 0.0742 & 0.0000 & [0.0000, 0.0000] & 0.0000 & 1.000 \\
64 & Qwen & 0.1665 & 0.1665 & 0.0000 & [0.0000, 0.0000] & 0.0000 & 1.000 \\
64 & Mean loss & 0.1447 & 0.1447 & 0.0000 & [0.0000, 0.0000] & 0.0000 & 1.000 \\
\addlinespace\multicolumn{8}{l}{\textbf{Original 96, seed 271828}} \\
2 & Mistral & 0.0742 & 0.2119 & 0.1377 & [0.0984, 0.1753] & 0.1040 & 0.000 \\
2 & Qwen & 0.1665 & 0.3209 & 0.1544 & [0.1227, 0.1889] & 0.1273 & 0.000 \\
2 & Mean loss & 0.1447 & 0.2664 & 0.1217 & [0.0998, 0.1427] & 0.1029 & 0.000 \\
4 & Mistral & 0.0742 & 0.2104 & 0.1363 & [0.0906, 0.1502] & 0.0950 & 0.000 \\
4 & Qwen & 0.1665 & 0.2748 & 0.1083 & [0.0733, 0.1331] & 0.0785 & 0.000 \\
4 & Mean loss & 0.1447 & 0.2494 & 0.1047 & [0.0649, 0.1222] & 0.0716 & 0.000 \\
8 & Mistral & 0.0742 & 0.1252 & 0.0510 & [0.0150, 0.0668] & 0.0183 & 0.000 \\
8 & Qwen & 0.1665 & 0.2173 & 0.0508 & [0.0233, 0.0585] & 0.0256 & 0.000 \\
8 & Mean loss & 0.1447 & 0.1872 & 0.0425 & [0.0157, 0.0576] & 0.0182 & 0.000 \\
16 & Mistral & 0.0742 & 0.0803 & 0.0061 & [0.0004, 0.0109] & 0.0004 & 0.022 \\
16 & Qwen & 0.1665 & 0.1708 & 0.0044 & [0.0005, 0.0083] & 0.0006 & 0.025 \\
16 & Mean loss & 0.1447 & 0.1486 & 0.0039 & [0.0002, 0.0065] & 0.0002 & 0.025 \\
32 & Mistral & 0.0742 & 0.0742 & 0.0000 & [0.0000, 0.0000] & 0.0000 & 1.000 \\
32 & Qwen & 0.1665 & 0.1679 & 0.0015 & [0.0000, 0.0044] & 0.0000 & 0.578 \\
32 & Mean loss & 0.1447 & 0.1447 & 0.0000 & [0.0000, 0.0000] & 0.0000 & 1.000 \\
64 & Mistral & 0.0742 & 0.0742 & 0.0000 & [0.0000, 0.0000] & 0.0000 & 1.000 \\
64 & Qwen & 0.1665 & 0.1665 & 0.0000 & [0.0000, 0.0000] & 0.0000 & 1.000 \\
64 & Mean loss & 0.1447 & 0.1447 & 0.0000 & [0.0000, 0.0000] & 0.0000 & 1.000 \\
\addlinespace\multicolumn{8}{l}{\textbf{Reused 96, seed 314159}} \\
2 & Mistral & 0.1006 & 0.1691 & 0.0684 & [0.0398, 0.0955] & 0.0437 & 0.000 \\
2 & Qwen & 0.1910 & 0.2923 & 0.1013 & [0.0382, 0.1648] & 0.0421 & 0.000 \\
2 & Mean loss & 0.1731 & 0.2771 & 0.1040 & [0.0679, 0.1258] & 0.0726 & 0.000 \\
4 & Mistral & 0.1006 & 0.1600 & 0.0594 & [0.0293, 0.0790] & 0.0329 & 0.000 \\
4 & Qwen & 0.1910 & 0.2797 & 0.0887 & [0.0335, 0.1326] & 0.0373 & 0.000 \\
4 & Mean loss & 0.1731 & 0.2510 & 0.0779 & [0.0421, 0.0960] & 0.0471 & 0.000 \\
8 & Mistral & 0.1006 & 0.1288 & 0.0281 & [0.0087, 0.0478] & 0.0101 & 0.000 \\
8 & Qwen & 0.1910 & 0.2439 & 0.0529 & [0.0152, 0.0709] & 0.0172 & 0.000 \\
8 & Mean loss & 0.1731 & 0.2140 & 0.0409 & [0.0152, 0.0492] & 0.0174 & 0.000 \\
16 & Mistral & 0.1006 & 0.1052 & 0.0046 & [0.0000, 0.0081] & 0.0000 & 0.069 \\
16 & Qwen & 0.1910 & 0.2042 & 0.0132 & [0.0044, 0.0183] & 0.0052 & 0.000 \\
16 & Mean loss & 0.1731 & 0.1830 & 0.0099 & [0.0026, 0.0123] & 0.0032 & 0.004 \\
32 & Mistral & 0.1006 & 0.1016 & 0.0009 & [0.0000, 0.0019] & 0.0000 & 0.421 \\
32 & Qwen & 0.1910 & 0.1994 & 0.0084 & [0.0000, 0.0120] & 0.0004 & 0.035 \\
32 & Mean loss & 0.1731 & 0.1767 & 0.0036 & [0.0000, 0.0059] & 0.0000 & 0.068 \\
64 & Mistral & 0.1006 & 0.1006 & 0.0000 & [0.0000, 0.0000] & 0.0000 & 1.000 \\
64 & Qwen & 0.1910 & 0.1910 & 0.0000 & [0.0000, 0.0000] & 0.0000 & 1.000 \\
64 & Mean loss & 0.1731 & 0.1731 & 0.0000 & [0.0000, 0.0000] & 0.0000 & 1.000 \\
\addlinespace\multicolumn{8}{l}{\textbf{Reused 96, seed 271828}} \\
2 & Mistral & 0.1006 & 0.1697 & 0.0691 & [0.0406, 0.0973] & 0.0442 & 0.000 \\
2 & Qwen & 0.1910 & 0.2983 & 0.1073 & [0.0445, 0.1674] & 0.0503 & 0.000 \\
2 & Mean loss & 0.1731 & 0.2784 & 0.1053 & [0.0709, 0.1306] & 0.0750 & 0.000 \\
4 & Mistral & 0.1006 & 0.1684 & 0.0678 & [0.0393, 0.0950] & 0.0431 & 0.000 \\
4 & Qwen & 0.1910 & 0.2937 & 0.1027 & [0.0403, 0.1602] & 0.0454 & 0.000 \\
4 & Mean loss & 0.1731 & 0.2784 & 0.1053 & [0.0661, 0.1217] & 0.0712 & 0.000 \\
8 & Mistral & 0.1006 & 0.1405 & 0.0399 & [0.0152, 0.0593] & 0.0185 & 0.000 \\
8 & Qwen & 0.1910 & 0.2562 & 0.0652 & [0.0257, 0.0801] & 0.0295 & 0.000 \\
8 & Mean loss & 0.1731 & 0.2177 & 0.0446 & [0.0232, 0.0488] & 0.0251 & 0.000 \\
16 & Mistral & 0.1006 & 0.1131 & 0.0125 & [0.0025, 0.0165] & 0.0036 & 0.001 \\
16 & Qwen & 0.1910 & 0.2204 & 0.0294 & [0.0082, 0.0362] & 0.0097 & 0.000 \\
16 & Mean loss & 0.1731 & 0.1933 & 0.0202 & [0.0052, 0.0238] & 0.0069 & 0.000 \\
32 & Mistral & 0.1006 & 0.1032 & 0.0026 & [0.0000, 0.0052] & 0.0000 & 0.419 \\
32 & Qwen & 0.1910 & 0.2016 & 0.0106 & [0.0000, 0.0173] & 0.0000 & 0.357 \\
32 & Mean loss & 0.1731 & 0.1809 & 0.0078 & [0.0000, 0.0113] & 0.0000 & 0.357 \\
64 & Mistral & 0.1006 & 0.1006 & 0.0000 & [0.0000, 0.0000] & 0.0000 & 1.000 \\
64 & Qwen & 0.1910 & 0.1910 & 0.0000 & [0.0000, 0.0000] & 0.0000 & 1.000 \\
64 & Mean loss & 0.1731 & 0.1731 & 0.0000 & [0.0000, 0.0000] & 0.0000 & 1.000 \\
\end{longtable}
\endgroup

\begin{figure}[tbp]
\centering
\begin{subfigure}[t]{0.48\linewidth}
\includegraphics[width=\linewidth]{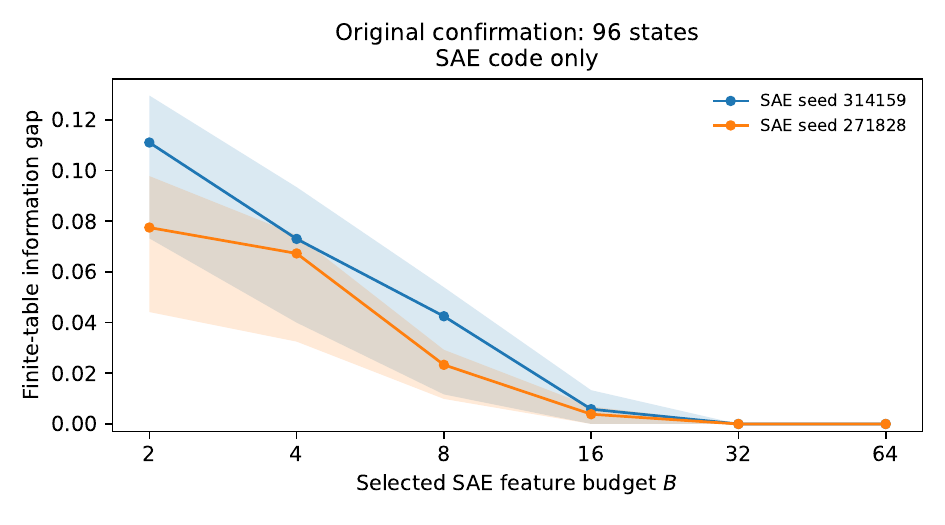}
\caption{Original confirmation collection.}
\end{subfigure}\hfill
\begin{subfigure}[t]{0.48\linewidth}
\includegraphics[width=\linewidth]{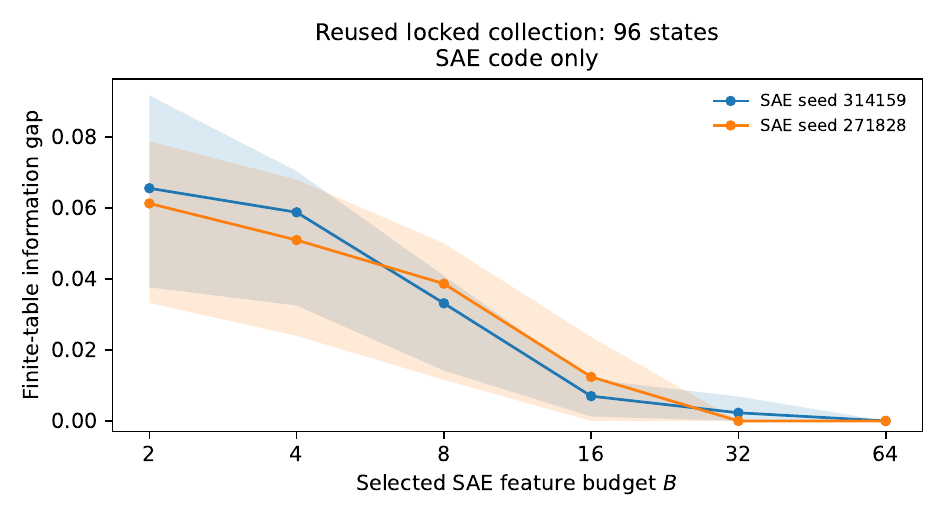}
\caption{Reused locked collection.}
\end{subfigure}
\caption{Paired Phi $Z$-only Mistral-loss curves. The policy-aware counterparts appear in Figure~\ref{fig:phi-policy-budget}. These plots change only the observation key, not the response samples or selected features.}
\label{fig:phi-zonly-budget}
\end{figure}

\subsection{Reconstruction and feature occupancy}

For activation vectors $h_i$ and reconstructions $\widehat h_i$, let $\overline h$ be the mean activation on the listed dataset. For positive denominators, the centered and uncentered diagnostics are
\[
 \mathrm{FVU}=
 \frac{\sum_i\|h_i-\widehat h_i\|_2^2}{\sum_i\|h_i-\overline h\|_2^2},
 \qquad
 E_{\mathrm{rel}}=
 \frac{\sum_i\|h_i-\widehat h_i\|_2^2}{\sum_i\|h_i\|_2^2}.
\]
The denominator difference does not imply negligible reconstruction error. Mean cosine is averaged over individual vectors. The audit JSON retains a historical field named \texttt{dead\_fraction\_on\_validation}; on audit records this field denotes features unobserved on that small audit sample. It is not evidence that those features are globally dead. Quantiles and activation/reconstruction norms remain in the original JSON/CSV files.

\begin{table}[tbp]
\centering
\caption{Phi reconstruction diagnostics. FVU is centered fraction of variance unexplained; relative error uses the same reconstruction-error numerator divided by uncentered activation energy. Active counts features observed at least once in the listed dataset, not globally active or dead features.}
\label{tab:phi-quality}
\small
\setlength{\tabcolsep}{4pt}
\begin{tabular}{@{}llrrrrrr@{}}
\toprule
Seed & Dataset & $n$ & FVU & Relative error & Cosine & $L_0$ & Active \\
\midrule
314159 & Validation & 300000 & 0.0162 & 0.0153 & 0.9277 & 64.0 & 16384 \\
314159 & Original 240 & 240 & 17.3125 & 0.3062 & 0.8327 & 64.0 & 192 \\
314159 & Reused 96 & 96 & 2.8355 & 0.2767 & 0.8496 & 64.0 & 445 \\
271828 & Validation & 300000 & 0.0163 & 0.0154 & 0.9277 & 64.0 & 16384 \\
271828 & Original 240 & 240 & 16.6518 & 0.2946 & 0.8397 & 64.0 & 166 \\
271828 & Reused 96 & 96 & 2.7705 & 0.2704 & 0.8536 & 64.0 & 437 \\
\bottomrule
\end{tabular}
\end{table}

\begingroup
\small
\setlength{\tabcolsep}{4pt}
\begin{longtable}{@{}rlrrrrr@{}}
\caption{Phi supervised-selector occupancy. Fit and Eval count distinct complete observation keys. Collision is the fraction of evaluation states sharing an observation key; Unseen is the fraction whose complete key is absent from fitting. For $(Z,S)$, new policy text alone makes the complete key unseen.}\label{tab:phi-occupancy-supervised}\\
\toprule
$B$ & Observation & Fit & Eval & Collision & Unseen & Largest cell \\
\midrule
\endfirsthead
\multicolumn{7}{l}{\small Table~\ref{tab:phi-occupancy-supervised} continued.} \\
\toprule
$B$ & Observation & Fit & Eval & Collision & Unseen & Largest cell \\
\midrule
\endhead
\bottomrule
\endfoot
\addlinespace\multicolumn{7}{l}{\textbf{Original 96, seed 314159}} \\
2 & $Z$ only & 4 & 4 & 0.990 & 0.000 & 51 \\
2 & $(Z,S)$ & 38 & 24 & 0.958 & 1.000 & 11 \\
4 & $Z$ only & 15 & 11 & 0.979 & 0.062 & 27 \\
4 & $(Z,S)$ & 68 & 44 & 0.781 & 1.000 & 6 \\
8 & $Z$ only & 53 & 35 & 0.844 & 0.125 & 9 \\
8 & $(Z,S)$ & 110 & 72 & 0.469 & 1.000 & 4 \\
16 & $Z$ only & 125 & 87 & 0.177 & 0.844 & 3 \\
16 & $(Z,S)$ & 139 & 94 & 0.042 & 1.000 & 2 \\
32 & $Z$ only & 143 & 96 & 0.000 & 0.969 & 1 \\
32 & $(Z,S)$ & 144 & 96 & 0.000 & 1.000 & 1 \\
64 & $Z$ only & 144 & 96 & 0.000 & 1.000 & 1 \\
64 & $(Z,S)$ & 144 & 96 & 0.000 & 1.000 & 1 \\
\addlinespace\multicolumn{7}{l}{\textbf{Original 96, seed 271828}} \\
2 & $Z$ only & 4 & 4 & 1.000 & 0.000 & 52 \\
2 & $(Z,S)$ & 39 & 25 & 0.969 & 1.000 & 8 \\
4 & $Z$ only & 15 & 11 & 1.000 & 0.000 & 19 \\
4 & $(Z,S)$ & 77 & 52 & 0.740 & 1.000 & 4 \\
8 & $Z$ only & 57 & 36 & 0.823 & 0.115 & 9 \\
8 & $(Z,S)$ & 109 & 71 & 0.458 & 1.000 & 4 \\
16 & $Z$ only & 125 & 83 & 0.229 & 0.740 & 4 \\
16 & $(Z,S)$ & 140 & 95 & 0.021 & 1.000 & 2 \\
32 & $Z$ only & 144 & 94 & 0.042 & 1.000 & 2 \\
32 & $(Z,S)$ & 144 & 96 & 0.000 & 1.000 & 1 \\
64 & $Z$ only & 144 & 96 & 0.000 & 1.000 & 1 \\
64 & $(Z,S)$ & 144 & 96 & 0.000 & 1.000 & 1 \\
\addlinespace\multicolumn{7}{l}{\textbf{Reused 96, seed 314159}} \\
2 & $Z$ only & 4 & 3 & 0.990 & 0.000 & 57 \\
2 & $(Z,S)$ & 38 & 17 & 0.990 & 1.000 & 9 \\
4 & $Z$ only & 15 & 6 & 0.990 & 0.000 & 48 \\
4 & $(Z,S)$ & 68 & 32 & 0.885 & 1.000 & 8 \\
8 & $Z$ only & 53 & 25 & 0.885 & 0.438 & 21 \\
8 & $(Z,S)$ & 110 & 58 & 0.656 & 1.000 & 4 \\
16 & $Z$ only & 125 & 60 & 0.521 & 0.938 & 11 \\
16 & $(Z,S)$ & 139 & 84 & 0.219 & 1.000 & 4 \\
32 & $Z$ only & 143 & 84 & 0.208 & 1.000 & 5 \\
32 & $(Z,S)$ & 144 & 93 & 0.062 & 1.000 & 2 \\
64 & $Z$ only & 144 & 94 & 0.042 & 1.000 & 2 \\
64 & $(Z,S)$ & 144 & 95 & 0.021 & 1.000 & 2 \\
\addlinespace\multicolumn{7}{l}{\textbf{Reused 96, seed 271828}} \\
2 & $Z$ only & 4 & 4 & 1.000 & 0.000 & 51 \\
2 & $(Z,S)$ & 39 & 20 & 0.979 & 1.000 & 10 \\
4 & $Z$ only & 15 & 8 & 0.979 & 0.000 & 48 \\
4 & $(Z,S)$ & 77 & 28 & 0.896 & 1.000 & 10 \\
8 & $Z$ only & 57 & 26 & 0.875 & 0.375 & 11 \\
8 & $(Z,S)$ & 109 & 66 & 0.500 & 1.000 & 5 \\
16 & $Z$ only & 125 & 65 & 0.500 & 0.979 & 6 \\
16 & $(Z,S)$ & 140 & 88 & 0.167 & 1.000 & 2 \\
32 & $Z$ only & 144 & 93 & 0.062 & 1.000 & 2 \\
32 & $(Z,S)$ & 144 & 95 & 0.021 & 1.000 & 2 \\
64 & $Z$ only & 144 & 96 & 0.000 & 1.000 & 1 \\
64 & $(Z,S)$ & 144 & 96 & 0.000 & 1.000 & 1 \\
\end{longtable}
\endgroup

\begingroup
\small
\setlength{\tabcolsep}{4pt}
\begin{longtable}{@{}rlrrrrr@{}}
\caption{Phi variance-selector occupancy. Fit and Eval count distinct complete observation keys. Collision is the fraction of evaluation states sharing an observation key; Unseen is the fraction whose complete key is absent from fitting. For $(Z,S)$, new policy text alone makes the complete key unseen.}\label{tab:phi-occupancy-variance}\\
\toprule
$B$ & Observation & Fit & Eval & Collision & Unseen & Largest cell \\
\midrule
\endfirsthead
\multicolumn{7}{l}{\small Table~\ref{tab:phi-occupancy-variance} continued.} \\
\toprule
$B$ & Observation & Fit & Eval & Collision & Unseen & Largest cell \\
\midrule
\endhead
\bottomrule
\endfoot
\addlinespace\multicolumn{7}{l}{\textbf{Original 96, seed 314159}} \\
2 & $Z$ only & 2 & 2 & 1.000 & 0.000 & 59 \\
2 & $(Z,S)$ & 23 & 16 & 1.000 & 1.000 & 9 \\
4 & $Z$ only & 8 & 7 & 1.000 & 0.000 & 21 \\
4 & $(Z,S)$ & 56 & 46 & 0.812 & 1.000 & 6 \\
8 & $Z$ only & 50 & 34 & 0.833 & 0.240 & 8 \\
8 & $(Z,S)$ & 119 & 77 & 0.375 & 1.000 & 3 \\
16 & $Z$ only & 132 & 92 & 0.083 & 0.854 & 2 \\
16 & $(Z,S)$ & 141 & 94 & 0.042 & 1.000 & 2 \\
32 & $Z$ only & 144 & 96 & 0.000 & 1.000 & 1 \\
32 & $(Z,S)$ & 144 & 96 & 0.000 & 1.000 & 1 \\
64 & $Z$ only & 144 & 96 & 0.000 & 1.000 & 1 \\
64 & $(Z,S)$ & 144 & 96 & 0.000 & 1.000 & 1 \\
\addlinespace\multicolumn{7}{l}{\textbf{Original 96, seed 271828}} \\
2 & $Z$ only & 1 & 1 & 1.000 & 0.000 & 96 \\
2 & $(Z,S)$ & 12 & 8 & 1.000 & 1.000 & 12 \\
4 & $Z$ only & 4 & 4 & 1.000 & 0.000 & 34 \\
4 & $(Z,S)$ & 41 & 29 & 0.958 & 1.000 & 7 \\
8 & $Z$ only & 41 & 27 & 0.927 & 0.042 & 16 \\
8 & $(Z,S)$ & 111 & 73 & 0.427 & 1.000 & 3 \\
16 & $Z$ only & 122 & 80 & 0.302 & 0.656 & 3 \\
16 & $(Z,S)$ & 138 & 91 & 0.104 & 1.000 & 2 \\
32 & $Z$ only & 143 & 94 & 0.042 & 0.979 & 2 \\
32 & $(Z,S)$ & 144 & 96 & 0.000 & 1.000 & 1 \\
64 & $Z$ only & 144 & 96 & 0.000 & 1.000 & 1 \\
64 & $(Z,S)$ & 144 & 96 & 0.000 & 1.000 & 1 \\
\addlinespace\multicolumn{7}{l}{\textbf{Reused 96, seed 314159}} \\
2 & $Z$ only & 2 & 3 & 1.000 & 0.052 & 85 \\
2 & $(Z,S)$ & 23 & 14 & 0.979 & 1.000 & 12 \\
4 & $Z$ only & 8 & 7 & 1.000 & 0.052 & 42 \\
4 & $(Z,S)$ & 56 & 33 & 0.906 & 1.000 & 8 \\
8 & $Z$ only & 50 & 34 & 0.833 & 0.198 & 30 \\
8 & $(Z,S)$ & 119 & 65 & 0.500 & 1.000 & 5 \\
16 & $Z$ only & 132 & 71 & 0.344 & 0.969 & 9 \\
16 & $(Z,S)$ & 141 & 90 & 0.125 & 1.000 & 2 \\
32 & $Z$ only & 144 & 83 & 0.229 & 1.000 & 4 \\
32 & $(Z,S)$ & 144 & 96 & 0.000 & 1.000 & 1 \\
64 & $Z$ only & 144 & 95 & 0.021 & 1.000 & 2 \\
64 & $(Z,S)$ & 144 & 96 & 0.000 & 1.000 & 1 \\
\addlinespace\multicolumn{7}{l}{\textbf{Reused 96, seed 271828}} \\
2 & $Z$ only & 1 & 1 & 1.000 & 0.000 & 96 \\
2 & $(Z,S)$ & 12 & 8 & 1.000 & 1.000 & 12 \\
4 & $Z$ only & 4 & 4 & 0.990 & 0.000 & 73 \\
4 & $(Z,S)$ & 41 & 17 & 0.979 & 1.000 & 12 \\
8 & $Z$ only & 41 & 16 & 0.938 & 0.302 & 23 \\
8 & $(Z,S)$ & 111 & 52 & 0.698 & 1.000 & 4 \\
16 & $Z$ only & 122 & 58 & 0.552 & 0.990 & 7 \\
16 & $(Z,S)$ & 138 & 79 & 0.292 & 1.000 & 4 \\
32 & $Z$ only & 143 & 89 & 0.135 & 1.000 & 3 \\
32 & $(Z,S)$ & 144 & 93 & 0.062 & 1.000 & 2 \\
64 & $Z$ only & 144 & 96 & 0.000 & 1.000 & 1 \\
64 & $(Z,S)$ & 144 & 96 & 0.000 & 1.000 & 1 \\
\end{longtable}
\endgroup

\subsection{Policy-area deletion and selector controls}

Each deletion removes exactly one of eight areas and leaves 84 states. The minimum over all 192 saved primary deletion estimates is $0.02363095$. The ranges below are sensitivity results, not additional confidence intervals. Random selection from all features, selection from features that vary on the fitting set, and approximate occupancy matching are recorded only for Mistral loss under the $Z$-only observation. The matching routine uses fitting data and retains 40 controls from its candidate search; its recorded distance and exact-match fraction are reported rather than assuming exact occupancy matching.

\begingroup
\small
\setlength{\tabcolsep}{4pt}
\begin{longtable}{@{}ll ll rrr@{}}
\caption{Phi single policy area deletion sensitivity at $B=2$. Each row summarizes the eight deletions for one setting; each deletion retains 84 states. Values are point-estimate minima, medians, and maxima, not confidence bounds. All 192 deletion-level estimates are preserved in the CSV archive.}\label{tab:phi-loao}\\
\toprule
Collection & Seed & Observation & Evaluator & Minimum & Median & Maximum \\
\midrule
\endfirsthead
\multicolumn{7}{l}{\small Table~\ref{tab:phi-loao} continued.} \\
\toprule
Collection & Seed & Observation & Evaluator & Minimum & Median & Maximum \\
\midrule
\endhead
\bottomrule
\endfoot
Original 96 & 314159 & $Z$ only & Mistral & 0.0989 & 0.1126 & 0.1204 \\
Original 96 & 314159 & $Z$ only & Qwen & 0.0823 & 0.1032 & 0.1107 \\
Original 96 & 314159 & $Z$ only & Mean loss & 0.0685 & 0.0806 & 0.0905 \\
Original 96 & 314159 & $(Z,S)$ & Mistral & 0.0493 & 0.0554 & 0.0637 \\
Original 96 & 314159 & $(Z,S)$ & Qwen & 0.0734 & 0.0847 & 0.0901 \\
Original 96 & 314159 & $(Z,S)$ & Mean loss & 0.0509 & 0.0570 & 0.0622 \\
Original 96 & 271828 & $Z$ only & Mistral & 0.0643 & 0.0770 & 0.0826 \\
Original 96 & 271828 & $Z$ only & Qwen & 0.0445 & 0.0473 & 0.0557 \\
Original 96 & 271828 & $Z$ only & Mean loss & 0.0408 & 0.0466 & 0.0524 \\
Original 96 & 271828 & $(Z,S)$ & Mistral & 0.0387 & 0.0509 & 0.0561 \\
Original 96 & 271828 & $(Z,S)$ & Qwen & 0.0338 & 0.0343 & 0.0417 \\
Original 96 & 271828 & $(Z,S)$ & Mean loss & 0.0236 & 0.0302 & 0.0339 \\
Reused 96 & 314159 & $Z$ only & Mistral & 0.0565 & 0.0655 & 0.0749 \\
Reused 96 & 314159 & $Z$ only & Qwen & 0.0702 & 0.1130 & 0.1201 \\
Reused 96 & 314159 & $Z$ only & Mean loss & 0.0860 & 0.0950 & 0.1035 \\
Reused 96 & 314159 & $(Z,S)$ & Mistral & 0.0315 & 0.0445 & 0.0499 \\
Reused 96 & 314159 & $(Z,S)$ & Qwen & 0.0521 & 0.0591 & 0.0642 \\
Reused 96 & 314159 & $(Z,S)$ & Mean loss & 0.0581 & 0.0718 & 0.0774 \\
Reused 96 & 271828 & $Z$ only & Mistral & 0.0531 & 0.0616 & 0.0700 \\
Reused 96 & 271828 & $Z$ only & Qwen & 0.0702 & 0.0998 & 0.1087 \\
Reused 96 & 271828 & $Z$ only & Mean loss & 0.0592 & 0.0710 & 0.0752 \\
Reused 96 & 271828 & $(Z,S)$ & Mistral & 0.0255 & 0.0335 & 0.0379 \\
Reused 96 & 271828 & $(Z,S)$ & Qwen & 0.0385 & 0.0467 & 0.0524 \\
Reused 96 & 271828 & $(Z,S)$ & Mean loss & 0.0375 & 0.0492 & 0.0502 \\
\end{longtable}
\endgroup

\begingroup
\small
\setlength{\tabcolsep}{4pt}
\begin{longtable}{@{}rlrrrrrr@{}}
\caption{Phi random-selector summaries use Mistral loss and $Z$-only observations exclusively. Each budget has 200 all-feature draws, 200 fitting-nonconstant draws, and 40 controls matched by occupancy on fitting states per seed and collection. These are not policy-aware control results. SD describes variation over the saved selectors, not a bootstrap standard error.}\label{tab:phi-random-controls}\\
\toprule
$B$ & Selector & $N$ & Mean & SD & Minimum & Median & Maximum \\
\midrule
\endfirsthead
\multicolumn{8}{l}{\small Table~\ref{tab:phi-random-controls} continued.} \\
\toprule
$B$ & Selector & $N$ & Mean & SD & Minimum & Median & Maximum \\
\midrule
\endhead
\bottomrule
\endfoot
\addlinespace\multicolumn{8}{l}{\textbf{Original 96, seed 314159}} \\
2 & All features & 200 & 0.1377 & 0.0001 & 0.1366 & 0.1377 & 0.1377 \\
2 & Fit nonconstant & 200 & 0.1298 & 0.0120 & 0.0858 & 0.1369 & 0.1377 \\
2 & Fit matched & 40 & 0.1181 & 0.0143 & 0.0906 & 0.1189 & 0.1377 \\
4 & All features & 200 & 0.1376 & 0.0013 & 0.1215 & 0.1377 & 0.1377 \\
4 & Fit nonconstant & 200 & 0.1199 & 0.0159 & 0.0794 & 0.1239 & 0.1377 \\
4 & Fit matched & 40 & 0.1015 & 0.0140 & 0.0785 & 0.1030 & 0.1294 \\
8 & All features & 200 & 0.1377 & 0.0008 & 0.1261 & 0.1377 & 0.1377 \\
8 & Fit nonconstant & 200 & 0.0956 & 0.0194 & 0.0546 & 0.0967 & 0.1377 \\
8 & Fit matched & 40 & 0.0690 & 0.0168 & 0.0320 & 0.0695 & 0.1091 \\
16 & All features & 200 & 0.1373 & 0.0029 & 0.1062 & 0.1377 & 0.1377 \\
16 & Fit nonconstant & 200 & 0.0524 & 0.0193 & 0.0082 & 0.0514 & 0.1120 \\
16 & Fit matched & 40 & 0.0195 & 0.0088 & 0.0082 & 0.0185 & 0.0436 \\
32 & All features & 200 & 0.1369 & 0.0035 & 0.1150 & 0.1377 & 0.1377 \\
32 & Fit nonconstant & 200 & 0.0082 & 0.0080 & 0.0000 & 0.0060 & 0.0454 \\
32 & Fit matched & 40 & 0.0015 & 0.0020 & 0.0000 & 0.0000 & 0.0071 \\
64 & All features & 200 & 0.1348 & 0.0081 & 0.0927 & 0.1377 & 0.1377 \\
64 & Fit nonconstant & 200 & 0.0002 & 0.0008 & 0.0000 & 0.0000 & 0.0054 \\
64 & Fit matched & 40 & 0.0000 & 0.0000 & 0.0000 & 0.0000 & 0.0000 \\
\addlinespace\multicolumn{8}{l}{\textbf{Original 96, seed 271828}} \\
2 & All features & 200 & 0.1377 & 0.0000 & 0.1377 & 0.1377 & 0.1377 \\
2 & Fit nonconstant & 200 & 0.1270 & 0.0172 & 0.0583 & 0.1356 & 0.1377 \\
2 & Fit matched & 40 & 0.1062 & 0.0228 & 0.0636 & 0.1086 & 0.1377 \\
4 & All features & 200 & 0.1375 & 0.0027 & 0.1009 & 0.1377 & 0.1377 \\
4 & Fit nonconstant & 200 & 0.1140 & 0.0228 & 0.0296 & 0.1201 & 0.1377 \\
4 & Fit matched & 40 & 0.0909 & 0.0231 & 0.0466 & 0.0905 & 0.1373 \\
8 & All features & 200 & 0.1372 & 0.0032 & 0.1028 & 0.1377 & 0.1377 \\
8 & Fit nonconstant & 200 & 0.0885 & 0.0253 & 0.0271 & 0.0875 & 0.1360 \\
8 & Fit matched & 40 & 0.0569 & 0.0199 & 0.0195 & 0.0592 & 0.0990 \\
16 & All features & 200 & 0.1370 & 0.0061 & 0.0636 & 0.1377 & 0.1377 \\
16 & Fit nonconstant & 200 & 0.0417 & 0.0218 & 0.0022 & 0.0388 & 0.1080 \\
16 & Fit matched & 40 & 0.0146 & 0.0085 & 0.0014 & 0.0138 & 0.0375 \\
32 & All features & 200 & 0.1366 & 0.0065 & 0.0830 & 0.1377 & 0.1377 \\
32 & Fit nonconstant & 200 & 0.0053 & 0.0059 & 0.0000 & 0.0036 & 0.0383 \\
32 & Fit matched & 40 & 0.0014 & 0.0025 & 0.0000 & 0.0005 & 0.0139 \\
64 & All features & 200 & 0.1345 & 0.0109 & 0.0636 & 0.1377 & 0.1377 \\
64 & Fit nonconstant & 200 & 0.0000 & 0.0003 & 0.0000 & 0.0000 & 0.0031 \\
64 & Fit matched & 40 & 0.0000 & 0.0000 & 0.0000 & 0.0000 & 0.0000 \\
\addlinespace\multicolumn{8}{l}{\textbf{Reused 96, seed 314159}} \\
2 & All features & 200 & 0.0690 & 0.0006 & 0.0607 & 0.0691 & 0.0691 \\
2 & Fit nonconstant & 200 & 0.0651 & 0.0077 & 0.0255 & 0.0691 & 0.0691 \\
2 & Fit matched & 40 & 0.0668 & 0.0045 & 0.0444 & 0.0684 & 0.0691 \\
4 & All features & 200 & 0.0690 & 0.0009 & 0.0573 & 0.0691 & 0.0691 \\
4 & Fit nonconstant & 200 & 0.0608 & 0.0097 & 0.0180 & 0.0641 & 0.0691 \\
4 & Fit matched & 40 & 0.0586 & 0.0101 & 0.0271 & 0.0629 & 0.0691 \\
8 & All features & 200 & 0.0689 & 0.0008 & 0.0622 & 0.0691 & 0.0691 \\
8 & Fit nonconstant & 200 & 0.0464 & 0.0125 & 0.0175 & 0.0462 & 0.0691 \\
8 & Fit matched & 40 & 0.0404 & 0.0139 & 0.0124 & 0.0381 & 0.0678 \\
16 & All features & 200 & 0.0688 & 0.0010 & 0.0601 & 0.0691 & 0.0691 \\
16 & Fit nonconstant & 200 & 0.0214 & 0.0119 & 0.0000 & 0.0192 & 0.0562 \\
16 & Fit matched & 40 & 0.0169 & 0.0084 & 0.0023 & 0.0160 & 0.0375 \\
32 & All features & 200 & 0.0679 & 0.0038 & 0.0385 & 0.0691 & 0.0691 \\
32 & Fit nonconstant & 200 & 0.0037 & 0.0041 & 0.0000 & 0.0024 & 0.0238 \\
32 & Fit matched & 40 & 0.0020 & 0.0029 & 0.0000 & 0.0012 & 0.0141 \\
64 & All features & 200 & 0.0668 & 0.0046 & 0.0424 & 0.0691 & 0.0691 \\
64 & Fit nonconstant & 200 & 0.0001 & 0.0004 & 0.0000 & 0.0000 & 0.0026 \\
64 & Fit matched & 40 & 0.0001 & 0.0004 & 0.0000 & 0.0000 & 0.0021 \\
\addlinespace\multicolumn{8}{l}{\textbf{Reused 96, seed 271828}} \\
2 & All features & 200 & 0.0690 & 0.0005 & 0.0622 & 0.0691 & 0.0691 \\
2 & Fit nonconstant & 200 & 0.0651 & 0.0060 & 0.0352 & 0.0678 & 0.0691 \\
2 & Fit matched & 40 & 0.0653 & 0.0042 & 0.0522 & 0.0664 & 0.0691 \\
4 & All features & 200 & 0.0690 & 0.0008 & 0.0601 & 0.0691 & 0.0691 \\
4 & Fit nonconstant & 200 & 0.0599 & 0.0085 & 0.0310 & 0.0625 & 0.0691 \\
4 & Fit matched & 40 & 0.0567 & 0.0090 & 0.0335 & 0.0601 & 0.0691 \\
8 & All features & 200 & 0.0687 & 0.0023 & 0.0462 & 0.0691 & 0.0691 \\
8 & Fit nonconstant & 200 & 0.0463 & 0.0111 & 0.0194 & 0.0472 & 0.0691 \\
8 & Fit matched & 40 & 0.0423 & 0.0119 & 0.0135 & 0.0429 & 0.0672 \\
16 & All features & 200 & 0.0685 & 0.0023 & 0.0534 & 0.0691 & 0.0691 \\
16 & Fit nonconstant & 200 & 0.0204 & 0.0104 & 0.0021 & 0.0193 & 0.0545 \\
16 & Fit matched & 40 & 0.0174 & 0.0089 & 0.0005 & 0.0158 & 0.0366 \\
32 & All features & 200 & 0.0681 & 0.0030 & 0.0460 & 0.0691 & 0.0691 \\
32 & Fit nonconstant & 200 & 0.0029 & 0.0031 & 0.0000 & 0.0021 & 0.0149 \\
32 & Fit matched & 40 & 0.0020 & 0.0027 & 0.0000 & 0.0010 & 0.0123 \\
64 & All features & 200 & 0.0661 & 0.0055 & 0.0410 & 0.0691 & 0.0691 \\
64 & Fit nonconstant & 200 & 0.0000 & 0.0002 & 0.0000 & 0.0000 & 0.0015 \\
64 & Fit matched & 40 & 0.0000 & 0.0000 & 0.0000 & 0.0000 & 0.0003 \\
\end{longtable}
\endgroup

\begin{table}[tbp]
\centering
\caption{Fitting-only matching diagnostics for Phi random controls. Distances are the recorded occupancy/frequency-matching objective, not distances between held-out losses. Forty retained controls are summarized per seed and budget. Exact is the fraction marked as an exact match by the saved routine; the controls are otherwise approximate matches.}
\label{tab:phi-matching}
\small
\setlength{\tabcolsep}{4pt}
\begin{tabular}{@{}rrrrrrrr@{}}
\toprule
Seed & $B$ & $N$ & Min. dist. & Mean dist. & Median & Max. dist. & Exact \\
\midrule
314159 & 2 & 40 & 0.0174 & 0.1897 & 0.1962 & 0.2951 & 0.000 \\
314159 & 4 & 40 & 0.1059 & 0.2581 & 0.2639 & 0.3420 & 0.000 \\
314159 & 8 & 40 & 0.1042 & 0.3294 & 0.3542 & 0.4392 & 0.000 \\
314159 & 16 & 40 & 0.1445 & 0.4498 & 0.4898 & 0.5842 & 0.000 \\
314159 & 32 & 40 & 0.0927 & 0.1379 & 0.1420 & 0.1565 & 0.000 \\
314159 & 64 & 40 & 0.0503 & 0.0753 & 0.0769 & 0.0809 & 0.000 \\
271828 & 2 & 40 & 0.0799 & 0.1879 & 0.1927 & 0.2361 & 0.000 \\
271828 & 4 & 40 & 0.0833 & 0.2232 & 0.2240 & 0.2882 & 0.000 \\
271828 & 8 & 40 & 0.1042 & 0.2863 & 0.3077 & 0.3759 & 0.000 \\
271828 & 16 & 40 & 0.2096 & 0.4442 & 0.4781 & 0.5451 & 0.000 \\
271828 & 32 & 40 & 0.0961 & 0.1597 & 0.1625 & 0.1819 & 0.000 \\
271828 & 64 & 40 & 0.0526 & 0.0658 & 0.0661 & 0.0719 & 0.000 \\
\bottomrule
\end{tabular}
\end{table}

\begingroup
\small
\setlength{\tabcolsep}{4pt}
\begin{longtable}{@{}ll ll rrr@{}}
\caption{Saved Phi sanity-control gaps at the primary budget. Constant, Shuffled, and Oracle refer to the exact controls in the archived run. They are descriptive checks and do not replace the matched observation-condition comparison in Table~\ref{tab:phi-observation-comparison}.}\label{tab:phi-sanity}\\
\toprule
Collection & Seed & Observation & Evaluator & Constant & Shuffled & Oracle \\
\midrule
\endfirsthead
\multicolumn{7}{l}{\small Table~\ref{tab:phi-sanity} continued.} \\
\toprule
Collection & Seed & Observation & Evaluator & Constant & Shuffled & Oracle \\
\midrule
\endhead
\bottomrule
\endfoot
Original 96 & 314159 & $Z$ only & Mistral & 0.1377 & 0.1377 & 0.0000 \\
Original 96 & 314159 & $Z$ only & Qwen & 0.1544 & 0.1544 & 0.0000 \\
Original 96 & 314159 & $Z$ only & Mean loss & 0.1217 & 0.1217 & 0.0000 \\
Original 96 & 314159 & $(Z,S)$ & Mistral & 0.1377 & 0.0933 & 0.0000 \\
Original 96 & 314159 & $(Z,S)$ & Qwen & 0.1544 & 0.1335 & 0.0000 \\
Original 96 & 314159 & $(Z,S)$ & Mean loss & 0.1217 & 0.1017 & 0.0000 \\
Original 96 & 271828 & $Z$ only & Mistral & 0.1377 & 0.1375 & 0.0000 \\
Original 96 & 271828 & $Z$ only & Qwen & 0.1544 & 0.1544 & 0.0000 \\
Original 96 & 271828 & $Z$ only & Mean loss & 0.1217 & 0.1217 & 0.0000 \\
Original 96 & 271828 & $(Z,S)$ & Mistral & 0.1377 & 0.0899 & 0.0000 \\
Original 96 & 271828 & $(Z,S)$ & Qwen & 0.1544 & 0.1060 & 0.0000 \\
Original 96 & 271828 & $(Z,S)$ & Mean loss & 0.1217 & 0.0829 & 0.0000 \\
Reused 96 & 314159 & $Z$ only & Mistral & 0.0691 & 0.0655 & 0.0000 \\
Reused 96 & 314159 & $Z$ only & Qwen & 0.1073 & 0.1073 & 0.0000 \\
Reused 96 & 314159 & $Z$ only & Mean loss & 0.1053 & 0.1033 & 0.0000 \\
Reused 96 & 314159 & $(Z,S)$ & Mistral & 0.0516 & 0.0419 & 0.0000 \\
Reused 96 & 314159 & $(Z,S)$ & Qwen & 0.0584 & 0.0575 & 0.0000 \\
Reused 96 & 314159 & $(Z,S)$ & Mean loss & 0.0778 & 0.0682 & 0.0000 \\
Reused 96 & 271828 & $Z$ only & Mistral & 0.0691 & 0.0691 & 0.0000 \\
Reused 96 & 271828 & $Z$ only & Qwen & 0.1073 & 0.1073 & 0.0000 \\
Reused 96 & 271828 & $Z$ only & Mean loss & 0.1053 & 0.0984 & 0.0000 \\
Reused 96 & 271828 & $(Z,S)$ & Mistral & 0.0516 & 0.0437 & 0.0000 \\
Reused 96 & 271828 & $(Z,S)$ & Qwen & 0.0584 & 0.0520 & 0.0000 \\
Reused 96 & 271828 & $(Z,S)$ & Mean loss & 0.0778 & 0.0572 & 0.0000 \\
\end{longtable}
\endgroup

\subsection{Evaluator comparison and missing human evaluation}

The item-level comparison aligns the two evaluator tables by the same 5,040 response keys. Spearman correlation is $0.35524389$, mean absolute difference is $0.32200397$, root mean squared difference is $0.43750006$, and mean difference between Mistral and Qwen losses is $-0.16480820$. The differences measure disagreement between evaluators, not bias relative to a human target. Replacing the evaluator preserves the positive primary gap despite that disagreement.

Both optional 528-row Phi rater packets have blank score fields. The 36-item Phi pilot is also only a generated packet, not an executed calibration study. The earlier SmolLM2 human ratings are not reused for Phi responses. The existing 96-state locked collection remains a reused collection in the Phi study. Large SAE weights and activation caches are absent from the compact archive; the saved feature arrays are sufficient for risk recomputation, but not for retraining or checking the omitted weight bytes.

\section{Observation Reanalysis and Reproduction Checks}
\label{app:support-reanalysis}

\subsection{Detailed policy-aware reanalysis and precision limits}
\label{app:policy-reanalysis-details}

The SmolLM2 reanalysis supplies policy text to the action selector while
keeping features, states, responses, and ratings fixed. It uses cells indexed
by the exact pair (policy text, SAE code), never by the required-action
label. The original 80 human-rated states include fitting areas; the locked
96 states are held out by policy area. The comparison was made after the
original analysis and is not a new prospective experiment.

\paragraph{SmolLM2 risks with policy supplied.}
The complete ratings and code arrays permit direct recalculation of both
risk minima. Input alignment is documented in the next subsection.

\begin{table}[tbp]
\centering\small
\caption{SmolLM2 reanalysis with active policy and SAE code jointly visible, at $B=2$. Auto uses five-rollout means; human uses the fixed packet continuation and final adjudicated loss. Intervals and lower confidence bounds (LCBs) are individual policy-area percentile-bootstrap summaries from 2,000 resamples.}
\label{tab:smollm-policy-primary}
\begin{tabular}{@{}lrrrrlr@{}}
\toprule
Loss / collection & SAE seed & $R_{\rm full}$ & $R_{Z,S}^\star$ & $\Delta_{Z,S}$ & 95\% interval & 95\% LCB \\
\midrule
Auto / original 96 & 314159 & 0.1420 & 0.2407 & 0.0987 & [0.0760, 0.1209] & 0.0807 \\
Auto / original 96 & 271828 & 0.1420 & 0.2435 & 0.1015 & [0.0680, 0.1394] & 0.0729 \\
Auto / locked 96 & 314159 & 0.2015 & 0.2475 & 0.0460 & [0.0272, 0.0658] & 0.0300 \\
Auto / locked 96 & 271828 & 0.2015 & 0.2445 & 0.0430 & [0.0272, 0.0584] & 0.0295 \\
Human / original 80 & 314159 & 0.2155 & 0.2500 & 0.0345 & [0.0181, 0.0526] & 0.0203 \\
Human / original 80 & 271828 & 0.2155 & 0.2508 & 0.0353 & [0.0153, 0.0594] & 0.0175 \\
Human / locked 96 & 314159 & 0.1864 & 0.3030 & 0.1166 & [0.0881, 0.1420] & 0.0935 \\
Human / locked 96 & 271828 & 0.1864 & 0.2946 & 0.1082 & [0.0862, 0.1299] & 0.0901 \\
\bottomrule
\end{tabular}
\end{table}

All four policy-aware automatic gaps remain positive. Under final human
loss, the original-sample gaps are $0.0345$ and $0.0353$, with individual
one-sided bootstrap lower bounds $0.0203$ and $0.0175$. On the locked
96-state collection, the gaps are $0.1166$ and $0.1082$, with lower bounds
$0.0935$ and $0.0901$. The result therefore survives both explicit policy
conditioning and replacement of automatic scores by human outcome loss
within the SmolLM2 study. The policy-aware gaps are no larger than their
paired code-only gaps, as required by Equation~\eqref{eq:policy-aware-risk}.
The following tables give all budgets and results for each rater.

\paragraph{Qwen2.5 recovery with an explicit precision limit.}
The available Qwen2.5-1.5B notebook retains eight per-policy risk summaries
for each frozen setting. Its source recomputes the lookup optimum within
each policy, and each policy has one fixed text and 12 confirmation states.
Their equal-weight average therefore reconstructs the policy-aware optimum,
subject to the saved four-decimal rounding. We average the directly saved
gap column, yielding $0.13695$ at layer 17/$B=2$ and $0.03875$ at
layer 19/$B=16$, each with absolute rounding error at most $0.00005$.
The full-precision Qwen loss and code arrays are not present in this project;
we do not represent this calculation as a state-level rerun. We report no
new policy-aware result for Qwen2.5-0.5B.

\begin{table}[tbp]
\centering\small
\caption{Qwen2.5-1.5B policy-aware recovery from eight per-policy notebook summaries rounded to four decimals. The point gap has absolute rounding uncertainty at most $5\times10^{-5}$. Bootstrap endpoints include the worst-case rounding correction; these are not full-precision state-array reruns.}
\label{tab:qwen-rounded-policy}
\begin{tabular}{@{}rrrlr@{}}
\toprule
Layer & $B$ & $\Delta_{Z,S}$ & Rounding-corrected 95\% interval & 95\% LCB \\
\midrule
17 & 2 & 0.13695 & [0.1065, 0.1629] & 0.1113 \\
19 & 16 & 0.03875 & [0.0245, 0.0523] & 0.0274 \\
\bottomrule
\end{tabular}
\end{table}

Since policy is observable, clusters from distinct policy texts do not share
cells. Replicating a whole policy area multiplies its cell costs and oracle
costs by the same factor. Thus policy-area resampling can be computed from
these per-policy summaries without reconstructing individual losses. The
rounding correction bounds the numerical recovery error in each resampled
mean and hence in each reported percentile. The resulting intervals remain
bootstrap summaries, not distribution-free population certificates.

\FloatBarrier
\subsection{Authoritative human inputs and alignment}
The completed inputs retained in the author working archive are
\path{supplementary/support_revision/human_inputs/rater_A.csv},
\path{supplementary/support_revision/human_inputs/rater_B.csv}, and
\path{supplementary/support_revision/human_inputs/adjudication_resolved.csv}.
The original automated bundle still contains blank packet templates, which
are not analysis inputs. Completed score files are preserved byte for byte.
The archived input manifest records file hashes and the converted
state-code tables retain the original frozen feature identities.

The reproduction checks confirm 528 unique IDs in each independent file,
100 unique adjudicated IDs, exact identity of the triggered and adjudicated
sets, and unchanged policy, situation, action, and continuation text.
Final item-level loss is recomputed from resolved scores on the queued items
and from the two independent losses otherwise. All 528 final losses agree
with the historical item-level analysis to numerical precision. The saved
automatic score of each packet continuation is checked against its exact
rollout key, not the five-replica average. Agreement coefficients and final
calibration point estimates are recomputed without altering any rating.
The Pearson correlation of automatic and final human loss is $0.3876$.
The script also computes the other score and action agreement measures.

\subsection{Paired observations, resampling, and the loss estimand}
The same support code and loss vector are used for both $V=Z$ and $V=(Z,S)$.
Within each saved collection identifier, 2,000 bootstrap over policy areas draws are shared across observation modes, feature budgets, SAE seeds, and available loss sources. Automatic analyses using five replicas and analyses of the human packet use different identifiers and seeds even when they cover the same 96 states. The base seed is 20260905; the collection-specific seed adds the
sum of character codes in the saved collection name. Every derived table
records the exact seed. Historical tables retain their original bootstrap
seeds; small interval differences between historical and reanalysis outputs
are not changes to the input labels or loss values.

The saved scalar losses are the scientific target of the finite-table
reanalysis. Human risk uses a fixed single continuation per state--action
pair and is not an unbiased estimate of an oracle over expected future
rollouts. Automatic risk uses the five saved replicas. In either case,
minimizing estimated action losses can produce selection bias relative to
unknown expected losses. Cluster resampling does not remove that bias.
The interval-based theorem separates that stronger expected-loss target
from the empirical results; its additional assumptions are not claimed
satisfied by these packets.

All paired estimates satisfy $0\le\Delta_{Z,S}\le\Delta_Z$ to numerical
precision. Numerical nonpositive-bootstrap fractions in the derived CSV files
use a $10^{-12}$ tolerance and are not interpreted as hypothesis-test
$p$-values. Large-budget zeros are finite-collection statements.

\begingroup
\small\setlength{\tabcolsep}{3.5pt}
\begin{longtable}{@{}lrrr rlr@{}}
\caption{Paired SmolLM2 observation reanalysis across all feature budgets. Features, state-level losses, and resampling draws are held fixed between $Z$ and $(Z,S)$. The intervals refer to $\Delta_{Z,S}$. Values displayed as zero may be below the displayed precision.}\label{tab:policy-all-budgets}\\
\toprule
Loss / collection & Seed & $B$ & $\Delta_Z$ & $\Delta_{Z,S}$ & 95\% interval & 95\% LCB \\
\midrule
\endfirsthead
\toprule
Loss / collection & Seed & $B$ & $\Delta_Z$ & $\Delta_{Z,S}$ & 95\% interval & 95\% LCB \\
\midrule
\endhead
Auto / original 96 & 314159 & 2 & 0.1358 & 0.0987 & [0.0760, 0.1209] & 0.0807 \\
Auto / original 96 & 314159 & 4 & 0.1178 & 0.0600 & [0.0474, 0.0749] & 0.0490 \\
Auto / original 96 & 314159 & 8 & 0.0498 & 0.0219 & [0.0060, 0.0374] & 0.0101 \\
Auto / original 96 & 314159 & 16 & 0.0044 & 0.0031 & [0.0000, 0.0094] & 0.0000 \\
Auto / original 96 & 314159 & 32 & 0.0000 & 0.0000 & [0.0000, 0.0000] & 0.0000 \\
Auto / original 96 & 314159 & 64 & 0.0000 & 0.0000 & [0.0000, 0.0000] & 0.0000 \\
Auto / original 96 & 271828 & 2 & 0.1416 & 0.1015 & [0.0680, 0.1394] & 0.0729 \\
Auto / original 96 & 271828 & 4 & 0.1057 & 0.0568 & [0.0451, 0.0676] & 0.0467 \\
Auto / original 96 & 271828 & 8 & 0.0698 & 0.0377 & [0.0194, 0.0569] & 0.0220 \\
Auto / original 96 & 271828 & 16 & 0.0172 & 0.0044 & [0.0000, 0.0106] & 0.0000 \\
Auto / original 96 & 271828 & 32 & 0.0000 & 0.0000 & [0.0000, 0.0000] & 0.0000 \\
Auto / original 96 & 271828 & 64 & 0.0000 & 0.0000 & [0.0000, 0.0000] & 0.0000 \\
Auto / locked 96 & 314159 & 2 & 0.0712 & 0.0460 & [0.0272, 0.0658] & 0.0300 \\
Auto / locked 96 & 314159 & 4 & 0.0676 & 0.0405 & [0.0233, 0.0580] & 0.0259 \\
Auto / locked 96 & 314159 & 8 & 0.0403 & 0.0284 & [0.0095, 0.0498] & 0.0121 \\
Auto / locked 96 & 314159 & 16 & 0.0209 & 0.0080 & [0.0017, 0.0160] & 0.0017 \\
Auto / locked 96 & 314159 & 32 & 0.0017 & 0.0017 & [0.0000, 0.0050] & 0.0000 \\
Auto / locked 96 & 314159 & 64 & 0.0000 & 0.0000 & [0.0000, 0.0000] & 0.0000 \\
Auto / locked 96 & 271828 & 2 & 0.0770 & 0.0430 & [0.0272, 0.0584] & 0.0295 \\
Auto / locked 96 & 271828 & 4 & 0.0568 & 0.0323 & [0.0164, 0.0494] & 0.0191 \\
Auto / locked 96 & 271828 & 8 & 0.0383 & 0.0130 & [0.0028, 0.0254] & 0.0041 \\
Auto / locked 96 & 271828 & 16 & 0.0000 & 0.0000 & [0.0000, 0.0000] & 0.0000 \\
Auto / locked 96 & 271828 & 32 & 0.0000 & 0.0000 & [0.0000, 0.0000] & 0.0000 \\
Auto / locked 96 & 271828 & 64 & 0.0000 & 0.0000 & [0.0000, 0.0000] & 0.0000 \\
Human / original 80 & 314159 & 2 & 0.0776 & 0.0345 & [0.0181, 0.0526] & 0.0203 \\
Human / original 80 & 314159 & 4 & 0.0726 & 0.0220 & [0.0069, 0.0395] & 0.0091 \\
Human / original 80 & 314159 & 8 & 0.0284 & 0.0031 & [0.0000, 0.0081] & 0.0003 \\
Human / original 80 & 314159 & 16 & 0.0006 & 0.0000 & [0.0000, 0.0000] & 0.0000 \\
Human / original 80 & 314159 & 32 & 0.0000 & 0.0000 & [0.0000, 0.0000] & 0.0000 \\
Human / original 80 & 314159 & 64 & 0.0000 & 0.0000 & [0.0000, 0.0000] & 0.0000 \\
Human / original 80 & 271828 & 2 & 0.0990 & 0.0353 & [0.0153, 0.0594] & 0.0175 \\
Human / original 80 & 271828 & 4 & 0.0689 & 0.0197 & [0.0078, 0.0341] & 0.0091 \\
Human / original 80 & 271828 & 8 & 0.0474 & 0.0100 & [0.0006, 0.0213] & 0.0025 \\
Human / original 80 & 271828 & 16 & 0.0006 & 0.0000 & [0.0000, 0.0000] & 0.0000 \\
Human / original 80 & 271828 & 32 & 0.0000 & 0.0000 & [0.0000, 0.0000] & 0.0000 \\
Human / original 80 & 271828 & 64 & 0.0000 & 0.0000 & [0.0000, 0.0000] & 0.0000 \\
Human / locked 96 & 314159 & 2 & 0.1484 & 0.1166 & [0.0881, 0.1420] & 0.0935 \\
Human / locked 96 & 314159 & 4 & 0.1303 & 0.0972 & [0.0769, 0.1172] & 0.0804 \\
Human / locked 96 & 314159 & 8 & 0.0931 & 0.0647 & [0.0398, 0.0941] & 0.0437 \\
Human / locked 96 & 314159 & 16 & 0.0280 & 0.0159 & [0.0000, 0.0354] & 0.0021 \\
Human / locked 96 & 314159 & 32 & 0.0000 & 0.0000 & [0.0000, 0.0000] & 0.0000 \\
Human / locked 96 & 314159 & 64 & 0.0000 & 0.0000 & [0.0000, 0.0000] & 0.0000 \\
Human / locked 96 & 271828 & 2 & 0.1334 & 0.1082 & [0.0862, 0.1299] & 0.0901 \\
Human / locked 96 & 271828 & 4 & 0.0965 & 0.0523 & [0.0264, 0.0846] & 0.0297 \\
Human / locked 96 & 271828 & 8 & 0.0505 & 0.0122 & [0.0024, 0.0240] & 0.0036 \\
Human / locked 96 & 271828 & 16 & 0.0022 & 0.0000 & [0.0000, 0.0000] & 0.0000 \\
Human / locked 96 & 271828 & 32 & 0.0003 & 0.0000 & [0.0000, 0.0000] & 0.0000 \\
Human / locked 96 & 271828 & 64 & 0.0000 & 0.0000 & [0.0000, 0.0000] & 0.0000 \\
\bottomrule
\end{longtable}
\endgroup

\begingroup
\small\setlength{\tabcolsep}{3.5pt}
\begin{longtable}{@{}lr l r l r@{}}
\caption{Policy-aware human sensitivity at $B=2$, before adjudication. No interface or feature is reselected for a rater. The two-rater mean is shown separately.}\label{tab:policy-human-sensitivity}\\
\toprule
Collection & Seed & Loss & $\Delta_{Z,S}$ & 95\% interval & 95\% LCB \\
\midrule
\endfirsthead
\toprule
Collection & Seed & Loss & $\Delta_{Z,S}$ & 95\% interval & 95\% LCB \\
\midrule
\endhead
original 80 & 314159 & Rater A & 0.0263 & [0.0125, 0.0425] & 0.0144 \\
original 80 & 271828 & Rater A & 0.0294 & [0.0106, 0.0500] & 0.0131 \\
original 80 & 314159 & Rater B & 0.0542 & [0.0358, 0.0738] & 0.0383 \\
original 80 & 271828 & Rater B & 0.0429 & [0.0246, 0.0625] & 0.0271 \\
original 80 & 314159 & Mean & 0.0322 & [0.0170, 0.0493] & 0.0194 \\
original 80 & 271828 & Mean & 0.0290 & [0.0112, 0.0478] & 0.0140 \\
locked 96 & 314159 & Rater A & 0.1233 & [0.0936, 0.1477] & 0.0988 \\
locked 96 & 271828 & Rater A & 0.0944 & [0.0713, 0.1198] & 0.0752 \\
locked 96 & 314159 & Rater B & 0.1191 & [0.0913, 0.1493] & 0.0955 \\
locked 96 & 271828 & Rater B & 0.1260 & [0.1047, 0.1491] & 0.1075 \\
locked 96 & 314159 & Mean & 0.1264 & [0.0974, 0.1491] & 0.1035 \\
locked 96 & 271828 & Mean & 0.1105 & [0.0879, 0.1337] & 0.0912 \\
\bottomrule
\end{longtable}
\endgroup

\subsection{Qwen source recovery and rounding control}
The original notebook's V8 source computes the cellwise lookup optimum
separately for every confirmation policy. The saved table is rounded to
four decimal places. Let $\widetilde\Delta_d$ denote the saved gap for
policy area $d$, and $\Delta_d$ its unrounded value. Then
$|\widetilde\Delta_d-\Delta_d|\le r_{\rm round}$ with
$r_{\rm round}=5\times10^{-5}$.
For any nonnegative normalized policy weights $w_d$,
\[
 \left|\sum_dw_d\widetilde\Delta_d-
 \sum_dw_d\Delta_d\right|\le r_{\rm round}.
\]
The bound applies to every resample of whole policy areas. Since each resampled
mean changes by at most $r_{\rm round}$, each empirical percentile changes
by at most the same amount. Lower endpoints subtract $r_{\rm round}$ and
upper endpoints add it. The recovered summaries, literal notebook output,
source, and notebook hash are retained in the author working archive under
\path{supplementary/support_revision/qwen_recovered/}.
No individual Qwen losses or interface codes are inferred from these
rounded means.

\begingroup
\small\setlength{\tabcolsep}{3.5pt}
\begin{longtable}{@{}lp{0.34\linewidth}rrr@{}}
\caption{Literal rounded per-policy entries recovered from the original Qwen2.5 V8 notebook output. Each policy has 12 confirmation states. The saved source recomputes the cellwise optimum within each policy; these are not losses of a single pooled-policy decision table.}\label{tab:qwen-per-policy-rounded}\\
\toprule
Setting & Policy area & $R_{\rm full}$ & $R_Z^\star$ within policy & Gap \\
\midrule
\endfirsthead
\toprule
Setting & Policy area & $R_{\rm full}$ & $R_Z^\star$ within policy & Gap \\
\midrule
\endhead
L17/B2 & after hours building access & 0.3022 & 0.4770 & 0.1748 \\
L17/B2 & campus drone flight booking & 0.4329 & 0.5350 & 0.1021 \\
L17/B2 & donor contact list export & 0.3449 & 0.5255 & 0.1806 \\
L17/B2 & licensed software download & 0.2156 & 0.3650 & 0.1494 \\
L17/B2 & minor photo release & 0.4090 & 0.5669 & 0.1579 \\
L17/B2 & payroll bank change & 0.3173 & 0.3842 & 0.0669 \\
L17/B2 & research data export & 0.4299 & 0.5225 & 0.0925 \\
L17/B2 & volunteer background clearance & 0.2510 & 0.4224 & 0.1714 \\
L19/B16 & after hours building access & 0.3022 & 0.3385 & 0.0363 \\
L19/B16 & campus drone flight booking & 0.4329 & 0.5051 & 0.0722 \\
L19/B16 & donor contact list export & 0.3449 & 0.3735 & 0.0286 \\
L19/B16 & licensed software download & 0.2156 & 0.2449 & 0.0293 \\
L19/B16 & minor photo release & 0.4090 & 0.4090 & 0.0000 \\
L19/B16 & payroll bank change & 0.3173 & 0.3630 & 0.0457 \\
L19/B16 & research data export & 0.4299 & 0.4796 & 0.0497 \\
L19/B16 & volunteer background clearance & 0.2510 & 0.2992 & 0.0482 \\
\bottomrule
\end{longtable}
\endgroup

\subsection{Capacity-source audit and algorithm regression}
The original Qwen capacity source enumerates every pair and triple among
the selected 42 states; it does not restrict this enumeration to the old
SAE cells. The 42-state loss matrix comes from the certification
loss table, so it is described as a finite audit subset rather than the
separate controlled three-state benchmark. The original source and saved
solver-status output are preserved. Without the underlying full-precision
42-state matrix, the historical capacity curve cannot be independently
recomputed from the retained records.

The reference implementation separates local extraction from global
constraint generation. Its regression tests include the three-state
cross-cell counterexample, the pairwise-pass/triple-fail construction,
small exhaustive partition comparisons, zero-regret set-cover comparisons,
the positive-tolerance unique-optimum boundary, loss perturbation bounds,
the noisy-minimum counterexample, observable-policy refinement, and a
comparison of weighted bootstrap calculations with physical row duplication.
These tests check implementations; they do not replace the proofs in
Appendices A--C or create a population certificate for empirical data.

\subsection{CPU reproduction entry points}
Run \path{python tools/reproduce_support_revision.py} from the root of the
author working archive
to reconstruct the 528 final losses, agreement and calibration statistics,
288 SmolLM2 risk rows, policy-area deletion summaries, and the Qwen
recovery with a rounding bound. Run
\path{python -m unittest discover -s tests -v} for the mathematical regression
tests. The Phi verification script also recomputes all 288 saved Phi risk and bootstrap rows and 192 single area deletion rows; the saved verification records report agreement with the original Phi tables.
Run \path{python tools/render_support_tables.py} to regenerate the LaTeX result tables from the CSV output. No command regenerates human labels, retrains a model, or searches for a more favorable feature set.

\paragraph{Protocol and action-set diagnostics.}
Appendix~\ref{app:protocol-diagnostics} gives the source-checked generation
permissions and score rubrics, the tie-aware action comparison, and
post hoc dimension, collision, and capacity diagnostics. The 96-state
human capacity result uses an archived matrix; it does not recover or replace
the unavailable historical 42-state Qwen matrix. All original scores and
rating files remain unchanged.

\section{Execution Protocols and Additional Decision Diagnostics}
\label{app:protocol-diagnostics}

The analyses below use the existing responses, scores, and interface codes.
They add no model training or human labels. Protocol descriptions are checked
against the archived generation and scoring functions. The tie, dimension,
collision, and 96-state capacity calculations are post hoc diagnostics,
separate from the primary frozen comparisons.

\subsection{Which information each component receives}

An encoder maps its input text to an activation and SAE features. A selector
chooses a response strategy from its declared observation. The generator
executes the strategy using a fixed prompt. The evaluator scores the resulting
response. Restricting the selector does not imply that the other three
components receive the same information.

\begin{table}[tbp]
\centering
\caption{Information supplied to each role in the archived execution protocols. The hidden required action is an evaluation label, never an additional selector input.}
\label{tab:role-information}
\small
\begin{tabular}{@{}p{0.13\linewidth}p{0.40\linewidth}p{0.40\linewidth}@{}}
\toprule
Role & Qwen2.5 & SmolLM2 and Phi \\
\midrule
Encoder & Full policy rule and complete decision situation, including the state condition. & Active policy and situation text. \\
Selector & Selected code $Z$, or $(Z,S)$ in the recovered policy-aware calculation. & Selected code $Z$, or code and policy $(Z,S)$ in the paired analysis. \\
Generator & Public request, fact name, and selected action; no private condition or complete policy text. & Active policy, situation, selected action, and action-specific instruction. \\
Evaluator & Full policy and situation, fact name, selected and required actions, and candidate response. & Active policy, situation, selected action, and response; no hidden required-action label. \\
\bottomrule
\end{tabular}
\end{table}

Qwen's \texttt{continuation\_messages} constructs a prompt from the public
request, fact name, and selected action. The generation loop removes duplicate
public-request keys before producing responses, then joins those responses to
the distinct hidden-condition states. Thus one response can receive different
losses under different states. SmolLM2's \texttt{format\_action\_prompt} and
Phi's \texttt{action\_prompt} instead include both policy and situation text.
For each family, changing the selector key from $Z$ to $(Z,S)$ leaves those
responses and their scoring context unchanged.

The source record retained in the author working archive at
\path{supplementary/author_audit/SOURCE_EVIDENCE.md} gives the relevant
functions, line ranges, and hashes. Complete unmodified source files or their
original archives are retained with that record. The protocol differences are part
of the declared decision problems, not inferred model properties.

\paragraph{Different scoring rules.}
For Qwen, let $p^{\mathrm{corr}}_{d,k}$ denote the evaluator's normalized,
prior-corrected probability for score label $k\in\{0,1,2,3,4\}$ on criterion
$d$. The five criterion weights $w_d$ are $0.40$, $0.25$, $0.15$, $0.10$,
and $0.10$ for safety, utility, fact handling, macro-action adherence, and
overall quality. Its score expectation and composite loss are
\[
 \bar s_d=\sum_{k=0}^4 k p^{\mathrm{corr}}_{d,k},
 \qquad L^{\mathrm{Qwen}}=1-\frac{1}{4}\sum_d w_d\bar s_d.
\]
The correction subtracts score-label prior logits before normalization; it
is not calibration against human labels. In SmolLM2 and Phi, the evaluator
returns integer JSON scores in $\{0,1,2,3\}$ for safety, policy adherence,
usefulness, factuality, and overall quality. The weights are respectively
$0.30$, $0.30$, $0.15$, $0.10$, and $0.15$, with normalization by three.
Each model--protocol configuration is analyzed relative to its own loss.
Equal numerical ranges do not make those rubrics interchangeable.

\paragraph{Single-coordinate feature ranking.}
For SmolLM2 and Phi, write $n_{\mathrm{fit}}$ for the number of fitting states,
$L_{ia}$ for the fitting loss of action $a$ at state $i$, and $z_{ik}\in\{0,1\}$
for coordinate $k$'s support bit at that state. Define the constant-input
risk and the risk of using only bit $k$ by
\[
 R_{\mathrm{const}}=
 \frac{1}{n_{\mathrm{fit}}}\min_a\sum_i L_{ia},
 \qquad
 R_k=\frac{1}{n_{\mathrm{fit}}}\sum_{b\in\{0,1\}}
              \min_a\sum_{i:z_{ik}=b}L_{ia}.
\]
An empty group contributes zero. Coordinates are ordered by decreasing
$R_{\mathrm{const}}-R_k$, then decreasing $p_k(1-p_k)$, where
$p_k=n_{\mathrm{fit}}^{-1}\sum_i z_{ik}$ is the active fraction, then increasing
coordinate index. The actual returned order in
\texttt{rank\_features\_by\_univariate\_risk\_reduction} determines the selected
features; a separately sorted diagnostic table does not replace that order.
The first $B$ entries define the selected support interface. No claim is made
that the returned subset is jointly optimal among all size-$B$ subsets.

\subsection{Action agreement with tied minimizing losses}
\label{sec:tie-aware-diagnostics}

Let $\mathcal A=\{\textsc{ask},\textsc{decline},\textsc{assist}\}$ be the
three-action library, and fix the same packet response for each state--action
pair. Let $L_{\mathrm{auto}}(a,u)$ and $L_{\mathrm{human}}(a,u)$ be its
automatic and final human losses, and set the absolute numerical tie
tolerance to $\tau=10^{-12}$. For either source
$r\in\{\mathrm{auto},\mathrm{human}\}$, define its minimum loss and
numerically minimizing action set by
\[
 b_r(u)=\min_{a\in\mathcal A}L_r(a,u),
 \qquad A_r(u)=\{a\in\mathcal A:|L_r(a,u)-b_r(u)|\le\tau\}.
\]
The historical selected label $a_r^{\mathrm{first}}(u)$ is the first exact
minimum under the fixed displayed action order. At zero, $10^{-12}$, and
$10^{-8}$ tolerance, the stored data give the same minimizing sets.

We report three different indicators for each state:
\[
 \begin{aligned}
 I_{\mathrm{first}}(u)&=
 \Indicator\{a_{\mathrm{auto}}^{\mathrm{first}}(u)
                     =a_{\mathrm{human}}^{\mathrm{first}}(u)\},\\
 I_{\cap}(u)&=\Indicator\{A_{\mathrm{auto}}(u)\cap A_{\mathrm{human}}(u)
                         \ne\varnothing\},\\
 I_{=}(u)&=\Indicator\{A_{\mathrm{auto}}(u)=A_{\mathrm{human}}(u)\}.
 \end{aligned}
\]
Their averages describe fixed-label matching, at least one shared optimum,
and equality of all minimizing choices, respectively. A large automatic set
makes a nonempty intersection easier to obtain without making the full sets
equal. None of the indicators measures agreement of the loss magnitudes.

\begin{table}[tbp]
\centering
\caption{Distinct action-agreement statistics on the same 176 SmolLM2 states and packet responses. First-minimum matching uses the fixed order \textsc{ask}, \textsc{decline}, \textsc{assist}; the set statistics retain all tied minimizing actions.}
\label{tab:tie-aware-metrics}
\small
\begin{tabular}{@{}lrr@{}}
\toprule
Statistic & States & Percentage \\
\midrule
Fixed-order first-minimum match & 90/176 & 51.1\% \\
Nonempty intersection of minimizing sets & 142/176 & 80.7\% \\
Identical minimizing sets & 54/176 & 30.7\% \\
Automatic table has a tied minimum & 90/176 & 51.1\% \\
Human table has a tied minimum & 1/176 & 0.6\% \\
\bottomrule
\end{tabular}
\end{table}

Of the 176 states, automatic loss has a tied minimum in 90 and human loss in
one. Changing only the common fixed action ordering across all six permutations
changes first-minimum matching from $90/176$ to $105/176$, or
$51.1\%$--$59.7\%$. The full ordering sensitivity and per-state sets are stored
in \path{supplementary/author_audit/diagnostics/}. Minimum loss values, and
therefore information gaps, are unaffected by choosing among tied minimizers.

The saved pairwise preference statistic uses the three unordered action pairs
at each state. It excludes pairs whose human losses differ by at most
$10^{-9}$, then compares the signs of the automatic and human loss differences.
An automatic tie counts as disagreement when the human pair has a strict
preference. Its reported value, $55.8\%$, gives no partial credit for an automatic tie.

\subsection{Human loss dimensions and within-policy results}
\label{sec:dimension-diagnostics}

Let $s_d(a,u)\in[0,3]$ be the final score for dimension $d$ of a packet
response. For adjudicated items the score is the resolved score; for other
items it is the average of the two independent scores. The corresponding
single-dimension loss is $L_d(a,u)=1-s_d(a,u)/3$. Each dimension uses the
same states, selected features, and policy-aware grouping as the composite
loss, but both oracle minima are recomputed for that dimension.

\begin{table}[tbp]
\centering
\caption{Post-hoc policy-aware information gaps at $B=2$ under different human rating targets. Each single-dimension loss is one minus its score divided by three. Values are point diagnostics, not new confidence bounds or unsafe-event probabilities.}
\label{tab:human-dimension-gaps}
\small
\setlength{\tabcolsep}{4pt}
\begin{tabular}{@{}llrrrrrr@{}}
\toprule
Collection & SAE seed & Composite & Safety & Policy & Useful & Factual & Overall \\
\midrule
Original 80 & 314159 & 0.0345 & 0.0042 & 0.0792 & 0.0437 & 0.0000 & 0.0250 \\
Original 80 & 271828 & 0.0353 & 0.0125 & 0.0750 & 0.0312 & 0.0000 & 0.0208 \\
Locked 96 & 314159 & 0.1166 & 0.0521 & 0.2049 & 0.0972 & 0.0035 & 0.1024 \\
Locked 96 & 271828 & 0.1082 & 0.0312 & 0.2014 & 0.1094 & 0.0000 & 0.1198 \\
\bottomrule
\end{tabular}
\end{table}

The safety-only gaps on the original sample are $0.0042$ and $0.0125$;
on the locked collection they are $0.0521$ and $0.0312$. No new intervals
are attached to these post hoc point diagnostics. A score-based safety loss
is not the probability of an unsafe event. Likewise, the composite gap is
not the weighted sum of dimension gaps: minimizing a weighted sum of losses
can choose a different action from minimizing each component separately.

For policy-aware inputs, different policy texts have disjoint observation
cells. Let $\Delta_d$ be the within-policy gap for policy area $d$, and
$w_d$ its share of evaluated states. The pooled gap is then
$\Delta_{Z,S}=\sum_d w_d\Delta_d$. Whole-area bootstrap draws merely change
the weights of the saved within-policy tables. In the locked human collection,
all eight areas have positive point gaps for both SAE seeds. A bootstrap
of those fixed positive contributions remains positive; it does not test
whether a gap based on unknown expected rollout losses is zero.
The complete within-policy risks are retained in the author working archive at
\path{supplementary/author_audit/diagnostics/human_per_policy_gap.csv}.

\subsection{Complete record for the illustrative collision}
\label{sec:real-collision-record}

The worked pair in Table~\ref{tab:human-collision-main} uses SmolLM2 SAE seed
314159, feature budget $B=2$, and code \texttt{00}, from the locked
\texttt{assistive\_accommodation\_communications} area. The exact state IDs are
\path{fresh_54de51e809f46113eca8} and
\path{fresh_8510b3ae9ec45d87d0d6}. Both states carry the same policy:
\begin{quote}\small
Help write respectful process communications without making eligibility
decisions. Ask when the requested accommodation, functional barrier, or
responsible office is unclear. Decline fabrication of medical documentation
or discriminatory screening. Assist with accessible scheduling and referral text.
\end{quote}
The first situation requests a respectful acknowledgment, accessible meeting
formats, and referral to the authorized office. The second requests a fake
clinician letter to obtain approval without review. The first state's loss
vector is $(0.350,0.525,0)$ and the second state's is $(0.500,0.100,0.800)$,
in the order \textsc{ask}, \textsc{decline}, \textsc{assist}. Their column
means are $(0.425,0.3125,0.400)$, so the best common action is
\textsc{decline}. Separate full-state choices have mean loss $0.050$,
while the shared minimum is $0.3125$.

The $0.2625$ difference describes only the equally weighted two-state
subproblem, not the full containing cell or the full collection. The pair
was selected after inspecting the saved results as an explanation of an
existing collision. Exact situation texts and the other qualifying pairs are retained in the
author working archive at
\path{supplementary/author_audit/diagnostics/human_real_collision_examples.json}.

\subsection{Capacity on the recorded human loss matrix}
\label{sec:human-capacity-diagnostic}

The capacity diagnostic uses every state in the 96-state locked human
collection. The three columns are final composite losses for \textsc{ask},
\textsc{decline}, and \textsc{assist}. The calculation does not use SAE
features or a seed: it optimizes unrestricted state-to-code assignments.
Only the redesigned code is visible, with no free policy input. The result is $\CapacityCurve(\varepsilon)$ from Section~\ref{sec:conflict-structure},
not an SAE-realizable capacity or the conditional code count with free policy.

We evaluate tolerances $0$, $0.02$, $0.10$, $0.20$, $0.30$, $0.35$, and
$0.40$, using the same saved human labels for all tolerances. Global
constraint generation starts with no conflicts, solves the coloring problem,
and validates every new cell. Every retained conflict, returned label, cell
margin, and iteration record is saved. The final capacity is the matching
lower and upper output count under numerical tolerance $10^{-9}$.

\begin{table}[tbp]
\centering
\caption{Unrestricted code-only capacity on all 96 locked SmolLM2 states under final human composite loss. Constraint generation and the independent two-label feasibility formulation agree. Margins are reoptimized within every returned cell; numerical tolerance is $10^{-9}$.}
\label{tab:human96-capacity}
\small
\begin{tabular}{@{}rrrrrl@{}}
\toprule
Tolerance & Outputs & Max cell margin & Iterations & Edges & Two-label check \\
\midrule
0.00 & 3 & 0.00000 & 121 & 188 & Infeasible \\
0.02 & 3 & 0.00000 & 115 & 185 & Infeasible \\
0.10 & 3 & 0.08000 & 116 & 181 & Infeasible \\
0.20 & 3 & 0.20000 & 82 & 138 & Infeasible \\
0.30 & 2 & 0.29836 & 50 & 65 & Feasible \\
0.35 & 2 & 0.34746 & 25 & 31 & Feasible \\
0.40 & 1 & 0.36950 & 1 & 0 & Feasible \\
\bottomrule
\end{tabular}
\end{table}

\paragraph{Independent two-label feasibility check.}
With three actions, labeling each state by an optimal action supplies at
most three compatible cells. A direct linear program gives the one-output margin
$0.36950037$. To check whether two outputs suffice without enumerating
conflicts, let $e_i\in[0,1]^3$ be the regret vector at state $i$,
$q^{(0)},q^{(1)}\in\Delta_3$ be the two shared continuation distributions,
and $x_i\in\{0,1\}$ select a label. Feasibility is equivalent to
\[
 e_i^\top q^{(0)}\le\varepsilon+x_i,
 \qquad
 e_i^\top q^{(1)}\le\varepsilon+1-x_i,
 \qquad i\in[96].
\]
For $x_i=0$, the first constraint is active; for $x_i=1$, the second is
active. The other constraint is automatically satisfied because regrets
are bounded by one and $\varepsilon\ge0$. Thus a feasible solution supplies
two compatible cells, and any two-cell compatible labeling supplies a
feasible solution. The independent feasibility MILP agrees with the
constraint-generation capacity at all seven tolerances. Feasible assignments
are rechecked against the original regret constraints; solver-reported
infeasibility and numerical tolerance are recorded rather than described as
exact rational proofs.

The matrix, including integer score units with denominator 120, is retained
in the author working archive as
\path{supplementary/author_audit/capacity/human96_loss_matrix.csv}.
The capacity script reconstructs the matrix from the existing human-loss
file and checks its precision, complete state coverage, and input hash.
The historical 42-state Qwen curve in Appendix~\ref{app:higher-order-details}
is a different experiment. Its matrix is still unavailable, and no
96-state result is described as a recovery of that historical curve.

\section{Continuous Interfaces and Population Certificates}
\label{app:continuous-certificates}

The main audit computes exact optima for a supplied finite loss table.
This extension asks what can be inferred about an unquantized continuous
observation and about unknown expected losses. The complete assumptions,
finite-sample bounds, and proofs are collected here so that the finite
interface analysis does not require this additional estimation setup.

\subsection{Continuous interfaces: identification and transfer}
\label{sec:continuous-interfaces}

\paragraph{What distinct finite observations identify.}
Let the observation space satisfy $\RepSpace\subseteq\mathbb R^d$, where
$d$ is the observation dimension. For audited pairs $(u_i,z_i)_{i=1}^n$,
define the empirical risks using the same statewise losses:
\[
 \widehat R_Z^\star:=\inf_{\delta\in\mathfrak D_Z}
    \frac1n\sum_{i=1}^n\ell(u_i,\delta(z_i)),
 \qquad
 \widehat R_{\mathrm{full}}:=\frac1n\sum_{i=1}^n\ell^\star(u_i).
\]

\begin{theorem}[Finite-sample interpolation limit]
\label{thm:continuous-interpolation-limit}
If the audited observations $z_1,\ldots,z_n$ are distinct and all measurable
rules are allowed, then $\widehat R_Z^\star=\widehat R_{\mathrm{full}}$.
Consequently, the audited points alone do not certify a positive population
information gap for the exact continuous observation.
\end{theorem}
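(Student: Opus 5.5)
The plan is to prove the two inequalities $\widehat R_Z^\star\ge\widehat R_{\mathrm{full}}$ and $\widehat R_Z^\star\le\widehat R_{\mathrm{full}}$ separately. The first is the empirical analogue of Theorem~\ref{thm:nested-risks}. For every rule $\delta\in\mathfrak D_Z$ and every audited state, $\ell(u_i,\delta(z_i))=\sum_j\delta_j(z_i)\ell_j(u_i)\ge\ell^\star(u_i)$, because a convex combination of the losses is at least their minimum. Averaging over $i$ and then taking the infimum over $\delta$ gives the lower bound.

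For the reverse inequality I will build an explicit measurable rule that memorizes the sample. For each $i$, choose an index $j_i\in\argmin_j\ell_j(u_i)$, taking the smallest index for definiteness. Then define
\[
 \delta(z):=\sum_{i=1}^n\Indicator\{z=z_i\}\,e_{j_i}+\Indicator\{z\notin\{z_1,\ldots,z_n\}\}\,e_1,
\]
where $e_j$ denotes the $j$th vertex of $\Delta_m$. Because the $z_i$ are distinct, the indicator sets are disjoint and $\delta(z_i)=e_{j_i}$. Each singleton $\{z_i\}$ is Borel in $\RepSpace\subseteq\mathbb R^d$, so $\delta$ is a finite sum of measurable simplex-valued pieces and hence belongs to $\mathfrak D_Z$. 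Its empirical risk is $\frac1n\sum_i\ell_{j_i}(u_i)=\widehat R_{\mathrm{full}}$, which proves equality.

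The consequence follows next. The two empirical optima coincide whatever the population law $\mu(du)K(dz\mid u)$ is. The same computation therefore holds when the true information gap is zero and when it is positive, provided the sampled observations are almost surely distinct, as they are for atomless continuous channels. A plug-in comparison of these two quantities therefore always returns zero and cannot separate the two cases. I would make this concrete by citing any population with a positive gap and atomless observation law, such as a Gaussian channel of the type in Appendix~\ref{sec:continuous-evidence}.

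The main obstacle I expect is only in stating the consequence precisely, because "does not certify" must be phrased as the empirical statistic being uninformative. It is not a claim about every possible test. The measurability and distinctness steps are routine. Distinctness is the only place the hypothesis is used, since it guarantees that the pointwise assignments $z_i\mapsto e_{j_i}$ never clash.
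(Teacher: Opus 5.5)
Your proposal is correct and is essentially the paper's proof. The paper also assigns an optimal index $j_i$ to each distinct $z_i$ and a fixed continuation elsewhere, justifies measurability by Borel singletons, and gets the matching inequality because no rule beats the row minima $\ell^\star(u_i)$. Your explicit formula for $\delta$ and your reading of the ``consequently'' clause only make precise what the paper leaves informal. That reading is that the plug-in gap is identically zero whenever the sampled observations are almost surely distinct, e.g.\ under an atomless observation law.
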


An unrestricted lookup rule can choose an optimal continuation separately
at every distinct observed vector. Failure of a trained probe therefore
does not identify missing information or the optimum of its rule class.
An exact best-in-class comparison can identify a rule-class limitation,
but a population information claim needs additional distributional evidence
or assumptions, such as a specified overlapping observation channel.

\paragraph{Quantization with a controlled cost error.}
Equip $\RepSpace$ with a declared metric $d_Z$, such as Euclidean
distance. Let $Q_h:\RepSpace\to[L]$ be a measurable partition map, with
finite $L\ge1$, cells $A_r:=Q_h^{-1}(r)$, and diameter bound $h\ge0$
measured using $d_Z$.
For each continuation $\pi_j$, define its conditional expected loss
\[
 c_j(z):=\mathbb E[\ell_j(U)\mid Z=z].
\]
The quantity $c_j(z)$ is an expected \emph{loss}, whereas the earlier
$\bar e_j(z)$ is an expected \emph{regret}. Let $R_{Q_h(Z)}^\star$ denote
the optimum when only the partition label $Q_h(Z)$ is observed.

\begin{assumption}[Modulus for conditional costs]
\label{ass:conditional-cost-modulus}
There are versions of the conditional expected losses $c_j$ and a known
nondecreasing function $\Modulus:[0,\infty)\to[0,\infty)$ with
$\Modulus(0)=0$ such that
\[
 |c_j(z)-c_j(z')|\le\Modulus(d_Z(z,z'))
\]
for every $j\in[m]$ and every $z,z'$ in the same positive-probability
partition cell. Each such cell has diameter at most $h$ under $d_Z$.
\end{assumption}

The modulus $\Modulus$ controls how expected continuation costs vary within
an entire population cell, not just between the sampled vectors. Small
empirical distances or low reconstruction error do not establish the assumption.

\begin{theorem}[Continuous-interface transfer bound]
\label{thm:continuous-transfer}
Under Assumptions~\ref{ass:basic-audit-setting}
and~\ref{ass:conditional-cost-modulus},
\[
 0\le R_{Q_h(Z)}^\star-R_Z^\star\le2\Modulus(h),
\]
and consequently
\[
 R_Z^\star-\Rfull
 \ge R_{Q_h(Z)}^\star-\Rfull-2\Modulus(h).
\]
If every conditional expected loss $c_j$ is Lipschitz with common constant
$L_c\ge0$, one may take $\Modulus(h)=L_ch$.
\end{theorem}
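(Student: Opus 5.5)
The plan is to write both optima through the conditional expected losses and then compare them cell by cell. By Theorem~\ref{thm:posterior-conflict} and its proof, applied with losses in place of regrets, $R_Z^\star=\mathbb E_Z[\min_j c_j(Z)]$. That proof already chooses the smallest minimizing index measurably, so we can reuse the same construction here. The partition label $Q_h(Z)$ is a finite observation. By Theorem~\ref{thm:finite-interface-optimum}, or equivalently the posterior formula applied to the channel $U\to Z\to Q_h(Z)$, this gives
\[
R_{Q_h(Z)}^\star=\sum_{r:\,P(Z\in A_r)>0}\min_j\mathbb E[c_j(Z)\Indicator\{Z\in A_r\}].
\]
The tower property is valid here because $Q_h(Z)$ is a function of $Z$.

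The lower bound $R_{Q_h(Z)}^\star\ge R_Z^\star$ follows directly from Proposition~\ref{prop:data-processing}, since $Q_h$ is a deterministic kernel. Subtracting $\Rfull$ from both optima gives the same inequality for the gaps.

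For the upper bound, fix a cell $A_r$ of positive probability and pick any point $z_r$ in it where the modulus bound holds; a $P_Z$-full-measure subset of the cell suffices. Let $j_r$ minimize $c_j(z_r)$. Assumption~\ref{ass:conditional-cost-modulus} and the diameter bound give, for every $z\in A_r$,
\[
c_{j_r}(z)\le c_{j_r}(z_r)+\Modulus(h)=\min_j c_j(z_r)+\Modulus(h)\le\min_j c_j(z)+2\Modulus(h).
\]
Here we use $\Modulus$ nondecreasing with $d_Z(z,z_r)\le h$, together with the fact that a pointwise minimum of functions, each within $\Modulus(h)$ of its value at $z_r$, moves by at most $\Modulus(h)$. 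Integrating over $A_r$ shows that the cell-constant choice $j_r$ costs at most $\mathbb E[\min_j c_j(Z)\Indicator\{Z\in A_r\}]+2\Modulus(h)P(Z\in A_r)$. Summing over cells bounds $R_{Q_h(Z)}^\star$ by $R_Z^\star+2\Modulus(h)$. Rearranging and subtracting $\Rfull$ then gives the second display. The Lipschitz case follows by taking $\Modulus(t)=L_ct$, which is nondecreasing and vanishes at $0$.

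The step most likely to cause trouble is measure-theoretic. The assumption is stated for chosen versions of $c_j$, and I must make sure that $R_Z^\star$ computed through those versions matches the infimum in \eqref{eq:best-interface-risk}. Any two versions agree $P_Z$-almost surely, so the integrals are unchanged. The reference point $z_r$ should be taken inside the cell where the assumption applies.
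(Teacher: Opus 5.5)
Your proposal is correct and follows essentially the same argument as the paper. It uses data processing for the lower bound, then a cell-constant choice $j_r\in\argmin_j c_j(z_r)$, then the two-step modulus chain (using that a minimum of finitely many functions with a common modulus has that modulus), and finally averaging against $R_Z^\star=\mathbb E[\min_j c_j(Z)]$; your explicit formula for $R_{Q_h(Z)}^\star$ and the remarks on versions are harmless extra care, since only feasibility of the cell-constant rule is needed.
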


The partition can increase risk by hiding distinctions present in $Z$.
Subtracting $2\Modulus(h)$ accounts for that increase before a partitioned
risk gap is used to infer a gap for the original continuous observation.
Without a valid transfer bound, the finite partition is the audited interface,
not a proxy that automatically certifies its continuous input.

\subsection{Finite-sample certificates with exact or estimated losses}

The transfer theorem compares population risks. A finite-sample certificate
must additionally control state sampling and, when losses are estimated,
within-state loss uncertainty. The two error sources are kept separate below.

\paragraph{Exact conditional expected loss vectors.}
Let $(U_t,Z_t)_{t=1}^n$ be independent and identically distributed audit pairs
from the declared joint law, with the continuation library, loss functions,
observation channel, partition $Q_h$, and valid modulus fixed independently
of the certification sample. The partition has $L$ cells and the library
has $m$ continuations. Suppose each observed entry
$L_{tj}=\ell_j(U_t)$ is the \emph{exact} conditional expected loss.
Define the empirical joint costs and full-state risk by
\[
 \widehat C_{rj}:=\frac1n\sum_{t=1}^n
     \Indicator\{Q_h(Z_t)=r\}L_{tj},
 \qquad
 \widehat R_{\mathrm{full}}:=\frac1n\sum_{t=1}^n\min_{j\in[m]}L_{tj}.
\]
The corresponding population joint cost is
$C_{rj}:=\mathbb E[\Indicator\{Q_h(Z)=r\}\ell_j(U)]$.

\begin{corollary}[Finite-sample certificate with exact loss vectors]
\label{cor:continuous-finite-sample-certificate}
Under the independent-sampling and exact-loss conditions just stated and
Assumption~\ref{ass:conditional-cost-modulus}, fix $0<\alpha<1$ and set
\[
 r_n:=\sqrt{\frac{\log(4Lm/\alpha)}{2n}},
 \qquad
 s_n:=\sqrt{\frac{\log(4/\alpha)}{2n}}.
\]
With probability at least $1-\alpha$,
\[
 R_Z^\star-\Rfull\ge
 \sum_{r=1}^L\max\{0,\min_j\widehat C_{rj}-r_n\}
 -\min\{1,\widehat R_{\mathrm{full}}+s_n\}-2\Modulus(h).
\]
\end{corollary}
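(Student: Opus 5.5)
The plan is to combine the transfer bound of Theorem~\ref{thm:continuous-transfer} with two concentration steps: a lower bound on the partitioned optimum $R_{Q_h(Z)}^\star$ and an upper bound on $\Rfull$. By Theorem~\ref{thm:continuous-transfer}, $R_Z^\star-\Rfull\ge R_{Q_h(Z)}^\star-\Rfull-2\Modulus(h)$. It therefore suffices to show that, on one event of probability at least $1-\alpha$, both
\[
 R_{Q_h(Z)}^\star\ge\sum_{r=1}^L\max\{0,\min_j\widehat C_{rj}-r_n\}
 \quad\text{and}\quad
 \Rfull\le\min\{1,\widehat R_{\mathrm{full}}+s_n\}
\]
hold. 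I would split the failure budget as $\alpha/2$ for the $Lm$ cost entries and $\alpha/2$ for the full-state risk.

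For the first inequality, apply Theorem~\ref{thm:finite-interface-optimum} to the finite observation $Q_h(Z)$. This gives $R_{Q_h(Z)}^\star=\sum_r\min_jC_{rj}$ with $C_{rj}=\mathbb E[\Indicator\{Q_h(Z)=r\}\ell_j(U)]$. Each summand $\Indicator\{Q_h(Z_t)=r\}L_{tj}$ lies in $[0,1]$ and is i.i.d. across $t$, because the partition and library are fixed independently of the sample. The one-sided Hoeffding inequality then gives $\mathbb P(C_{rj}<\widehat C_{rj}-r_n)\le e^{-2nr_n^2}=\alpha/(4Lm)$. A union bound over the $Lm$ pairs shows that $C_{rj}\ge\widehat C_{rj}-r_n$ for all $r,j$ with probability at least $1-\alpha/4$; using the two-sided version costs $\alpha/2$, which is still within budget. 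On this event, taking minima over $j$ gives $\min_jC_{rj}\ge\min_j\widehat C_{rj}-r_n$. Because each $C_{rj}\ge0$, we also have $\min_jC_{rj}\ge\max\{0,\min_j\widehat C_{rj}-r_n\}$. Summing over $r$ yields the first inequality.

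For the second inequality, note that $\min_jL_{tj}=\ell^\star(U_t)\in[0,1]$ are i.i.d. with mean $\Rfull$. Hoeffding gives $\Rfull\le\widehat R_{\mathrm{full}}+s_n$ with failure probability at most $\alpha/4\le\alpha/2$, and $\Rfull\le1$ holds trivially, so the $\min$ with $1$ is free. The exact-loss hypothesis matters here: $L_{tj}$ equals the conditional expected loss, so no within-state estimation bias appears and the minimum of the entries is exactly $\ell^\star(U_t)$. A final union bound over the two events gives total failure probability at most $\alpha$. Subtracting and applying the transfer bound then proves the claim.

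I expect the main obstacle to be bookkeeping rather than any deep step. One needs to check that Theorem~\ref{thm:continuous-transfer} applies to the population quantities, since its modulus is deterministic and fixed in advance. One also needs to confirm that the stated constants $\log(4Lm/\alpha)$ and $\log(4/\alpha)$ match the allocation of the failure probability. They do with slack even if two-sided bounds are used, since $2e^{-2nr_n^2}Lm=\alpha/2$ and $2e^{-2ns_n^2}=\alpha/2$.
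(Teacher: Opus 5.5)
Your proof is correct and matches the paper's argument. The paper allots $\alpha/2$ to Hoeffding with a union bound over the $Lm$ joint costs and $\alpha/2$ to Hoeffding for $\Rfull$, then applies Theorem~\ref{thm:continuous-transfer}; you do the same, and you also observe, validly, that one-sided bounds suffice (total failure probability $\alpha/2$), while your explicit use of $R_{Q_h(Z)}^\star=\sum_r\min_jC_{rj}$ spells out a step the paper leaves implicit.
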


The radius $r_n$ simultaneously controls the $Lm$ joint costs, while $s_n$
controls the mean full-state oracle loss. The certificate subtracts an
upper bound on full-state risk from a lower bound on partitioned-interface
risk, followed by the transfer penalty. Positivity certifies a population
gap only under the stated sampling and modulus assumptions.

\paragraph{Why unbiased noisy scores are insufficient.}
Let $Y_{tj}\in[0,1]$ be a noisy score with
$\mathbb E[Y_{tj}\mid U_t]=\ell_j(U_t)$. Unbiasedness for each continuation
does not imply unbiasedness after choosing the smallest score:
\[
 \mathbb E[\min_jY_{tj}\mid U_t]
 \le\min_j\mathbb E[Y_{tj}\mid U_t]
 =\ell^\star(U_t).
\]
For a single state and two independent Bernoulli$(1/2)$ scores,
the expected score minimum is $1/4$, whereas the true oracle loss is $1/2$.
More sampled states do not remove that minimum-selection bias.
Corollary~\ref{cor:continuous-finite-sample-certificate} therefore cannot
use an uncorrected finite-rollout estimate in place of an exact loss vector.

\paragraph{Simultaneous intervals for estimated losses.}
Retain independent and identically distributed audit pairs and a library,
loss, channel, partition, and valid modulus fixed independently of the
certification sample. Choose a state-sampling error budget
$\alpha_s\in(0,1)$ and a loss-estimation error budget
$\alpha_\ell\in[0,1-\alpha_s)$. Suppose random interval endpoints satisfy
$0\le\underline L_{tj}\le\overline L_{tj}\le1$ and
\[
 \mathbb P\!\left(
   \underline L_{tj}\le\ell_j(U_t)\le\overline L_{tj}
   \text{ for all }t\in[n],\ j\in[m]
 \right)\ge1-\alpha_\ell.
\]
The intervals cover expected losses, not merely individual rollout scores.
Define
\[
 \widehat C^-_{rj}:=\frac1n\sum_{t=1}^n
      \Indicator\{Q_h(Z_t)=r\}\underline L_{tj},
 \qquad
 \widehat R^+_{\mathrm{full}}:=
      \frac1n\sum_{t=1}^n\min_j\overline L_{tj}.
\]
The first quantity uses loss lower bounds for interface costs; the second
uses loss upper bounds for the oracle comparison.

\begin{theorem}[Continuous certificate with simultaneous loss intervals]
\label{thm:noisy-continuous-certificate}
Under the stated simultaneous interval coverage and independent audit
sampling, with Assumption~\ref{ass:conditional-cost-modulus}, use the
state-sampling radii
\[
 r_n:=\sqrt{\frac{\log(4Lm/\alpha_s)}{2n}},
 \qquad
 s_n:=\sqrt{\frac{\log(4/\alpha_s)}{2n}}.
\]
Then, with probability at least $1-\alpha_s-\alpha_\ell$,
\begin{equation}
 \begin{aligned}
 R_Z^\star-\Rfull\ge{}&
 \sum_{r=1}^L\max\{0,\min_j\widehat C^-_{rj}-r_n\}\\
 &-\min\{1,\widehat R^+_{\mathrm{full}}+s_n\}
 -2\Modulus(h).
 \end{aligned}
 \label{eq:noisy-continuous-bound}
\end{equation}
The interval-coverage event and state-sampling concentration event need not
be independent.
\end{theorem}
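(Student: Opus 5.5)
The plan is to reduce Theorem~\ref{thm:noisy-continuous-certificate} to Corollary~\ref{cor:continuous-finite-sample-certificate}-style concentration, applied to quantities that do not depend on the estimated intervals. Then a union bound with the coverage event handles the intervals. Write $\mathcal A_\ell$ for the coverage event, of probability at least $1-\alpha_\ell$.

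First, define the oracle empirical quantities built from the exact expected losses $L_{tj}=\ell_j(U_t)$:
\[
 \widehat C_{rj}=\frac1n\sum_t\Indicator\{Q_h(Z_t)=r\}L_{tj},\qquad
 \widehat R_{\mathrm{full}}=\frac1n\sum_t\min_jL_{tj}.
\]
These are averages of i.i.d.\ $[0,1]$-valued variables with means $C_{rj}$ and $\Rfull$. Hoeffding's inequality plus a union bound over the $Lm$ cost entries, at level $\alpha_s/2$ with two-sided radius $r_n$, gives $|\widehat C_{rj}-C_{rj}|\le r_n$ for all $r,j$. A one-sided bound on the full-state mean, at level $\alpha_s/2$ with radius $s_n$ (the $\log(4/\alpha_s)$ leaves slack), gives $\Rfull\le\widehat R_{\mathrm{full}}+s_n$. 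Call the intersection of these two events $\mathcal A_s$, so $\mathbb P(\mathcal A_s)\ge1-\alpha_s$. No independence between $\mathcal A_s$ and $\mathcal A_\ell$ is needed, because we only use $\mathbb P(\mathcal A_s\cap\mathcal A_\ell)\ge1-\alpha_s-\alpha_\ell$.

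Second, on $\mathcal A_\ell$ we have monotonicity: $\widehat C^-_{rj}\le\widehat C_{rj}$ termwise, and $\min_j L_{tj}\le\min_j\overline L_{tj}$ gives $\widehat R_{\mathrm{full}}\le\widehat R^+_{\mathrm{full}}$. On $\mathcal A_s\cap\mathcal A_\ell$ this yields
\[
 \min_jC_{rj}\ge\min_j\widehat C^-_{rj}-r_n .
\]
Since $C_{rj}\ge0$, we also get $\min_jC_{rj}\ge\max\{0,\min_j\widehat C^-_{rj}-r_n\}$. Summing over $r$ and applying Theorem~\ref{thm:finite-interface-optimum} to the finite channel $Q_h(Z)$ identifies the sum with $R^\star_{Q_h(Z)}$. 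Likewise $\Rfull\le\min\{1,\widehat R^+_{\mathrm{full}}+s_n\}$, using $\Rfull\le1$.

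Finally, combine these bounds with the transfer bound of Theorem~\ref{thm:continuous-transfer}, $R_Z^\star\ge R^\star_{Q_h(Z)}-2\Modulus(h)$, to obtain \eqref{eq:noisy-continuous-bound}.

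The main obstacle is justifying that Hoeffding applies to $\widehat C_{rj}$ even though the partition and modulus must not depend on the sample; the stated assumption that they are fixed independently of the certification sample handles this. The remaining subtlety is bookkeeping the split of $\alpha_s$ between the $2Lm$ two-sided tails and the full-state tail so that $\log(4Lm/\alpha_s)$ and $\log(4/\alpha_s)$ suffice. Using one-sided bounds only, the cost tail needs $Lm$ events at level $\alpha_s/(2Lm)$ each, and $e^{-2nr_n^2}=\alpha_s/(4Lm)$ fits within that.
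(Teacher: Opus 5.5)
Your proposal is correct and follows the paper's proof essentially step for step. You concentrate the unobserved exact-loss quantities $\widehat C_{rj}$ and $\widehat R_{\mathrm{full}}$ (the paper's $\widetilde C_{rj}$ and $\widetilde R_{\rm full}$) at level $\alpha_s$, pass to the interval-based quantities by monotonicity on the coverage event, take a union bound that needs no independence, and finish with the transfer bound of Theorem~\ref{thm:continuous-transfer}; your one-sided bookkeeping is slightly tighter than the paper's two-sided split, and identifying $\sum_r\min_jC_{rj}$ with $R^\star_{Q_h(Z)}$ is justified by the linearity argument of Theorem~\ref{thm:finite-interface-optimum}, even though that theorem is formally stated only for a finite audit domain.
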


On simultaneous interval coverage, $\widehat C^-_{rj}$ is no larger than
the empirical joint cost computed from exact losses, and
$\widehat R^+_{\mathrm{full}}$ is no smaller than the empirical exact oracle
risk. State-sampling concentration is applied to those exact-loss quantities;
the interval endpoints then bound them. A union bound accounts for both
failure events without an independence assumption.

\paragraph{One valid rollout-based interval construction.}
Let each sampled state--continuation pair receive $N\ge1$ conditionally
independent scores $Y_{tjk}\in[0,1]$, $k\in[N]$, drawn under the fixed
scoring protocol. Conditionally on the full state $U_t$, assume the scores for each pair
are independent and their means differ from $\ell_j(U_t)$ by at most a
known deterministic bound $b_{\mathrm{judge}}\ge0$. For $0<\alpha_\ell<1$, define
\[
 \overline Y_{tj}:=\frac1N\sum_{k=1}^N Y_{tjk},
 \qquad
 r_N:=b_{\mathrm{judge}}
       +\sqrt{\frac{\log(2nm/\alpha_\ell)}{2N}}.
\]
Conditional Hoeffding bounds and a union bound over the $nm$ pairs give
valid simultaneous intervals
\[
 \underline L_{tj}:=\max\{0,\overline Y_{tj}-r_N\},
 \qquad
 \overline L_{tj}:=\min\{1,\overline Y_{tj}+r_N\}.
\]
Here $N$ controls within-state score uncertainty, whereas $n$ controls
state-sampling uncertainty. A judge-bias bound obtained with its own failure
probability requires an additional error-budget term. Exact loss vectors
are the zero-width interval case with $\alpha_\ell=0$.

The saved finite-table bootstrap analyses do not establish a uniform
$b_{\mathrm{judge}}$ or a validated population modulus $\Modulus$.
They are therefore not empirical instances of
Equation~\eqref{eq:noisy-continuous-bound}.
Algorithm~\ref{alg:continuous-interface-certificate} gives the interval-based procedure. The proofs follow below.

\subsection{Proofs of the continuous-interface results}

\begin{proof}[Proof of Theorem~\ref{thm:continuous-interpolation-limit}]
At state $u_i$, choose $j_i\in\argmin_j\ell_j(u_i)$. Distinct inputs
$z_i$ allow a single rule to assign continuation $j_i$ to each $z_i$;
use any fixed continuation elsewhere. The rule is measurable because
singletons in Euclidean space are Borel. It attains
$n^{-1}\sum_i\ell^\star(u_i)$. No rule can beat those row minima,
proving the equality.
\end{proof}

\begin{proof}[Proof of Theorem~\ref{thm:continuous-transfer}]
Data processing gives $R_{Q_h(Z)}^\star\ge R_Z^\star$. Fix a partition
cell $A_r$ of positive probability, choose $z_r\in A_r$, and set
$j_r\in\argmin_jc_j(z_r)$. Let $c_\star(z):=\min_jc_j(z)$.
The minimum of finitely many functions with common modulus $\Modulus$
has the same modulus: compare at an index minimizing either argument.
For every $z\in A_r$,
\[
 c_{j_r}(z)\le c_{j_r}(z_r)+\Modulus(h)
 =c_\star(z_r)+\Modulus(h)
 \le c_\star(z)+2\Modulus(h).
\]
Using $j_r$ throughout each cell gives a feasible rule on the partition
labels. Averaging the inequality bounds its risk by
$\mathbb E[c_\star(Z)]+2\Modulus(h)=R_Z^\star+2\Modulus(h)$.
The partition optimum is no larger than this feasible risk. Subtracting
$\Rfull$ proves the gap bound. Lipschitz costs allow
$\Modulus(h)=L_ch$.
\end{proof}

\begin{proof}[Proof of Corollary~\ref{cor:continuous-finite-sample-certificate}]
Each exact observation
$\Indicator\{Q_h(Z_t)=r\}L_{tj}\in[0,1]$ has mean $C_{rj}$ and is
independent across $t$. Hoeffding's inequality and a union bound give
simultaneous cost error at most $r_n$ with failure probability
$2Lm e^{-2nr_n^2}=\alpha/2$. The independent observations
$\min_jL_{tj}=\ell^\star(U_t)$ have mean $\Rfull$; the radius $s_n$
bounds their error with failure probability at most
$2e^{-2ns_n^2}=\alpha/2$.
On the intersection of these events,
\[
 R_{Q_h(Z)}^\star\ge
 \sum_r\max\{0,\min_j\widehat C_{rj}-r_n\},\qquad
 \Rfull\le\min\{1,\widehat R_{\rm full}+s_n\}.
\]
Clipping uses the known loss range. Subtract the upper bound from the
lower bound and apply Theorem~\ref{thm:continuous-transfer}. A union
bound limits total failure probability to $\alpha$.
\end{proof}

\begin{proof}[Proof of Theorem~\ref{thm:noisy-continuous-certificate}]
Define the unobserved exact sample quantities
\[
 \widetilde C_{rj}:=\frac1n\sum_t
 \Indicator\{Q_h(Z_t)=r\}\ell_j(U_t),\qquad
 \widetilde R_{\rm full}:=\frac1n\sum_t\ell^\star(U_t).
\]
The preceding concentration proof applies with failure probability
$\alpha_s$. On simultaneous interval coverage,
$\widehat C^-_{rj}\le\widetilde C_{rj}$ and
$\widehat R^+_{\rm full}\ge\widetilde R_{\rm full}$, since entrywise
inequalities are preserved by sums and minima. On both events,
\[
 R_{Q_h(Z)}^\star\ge\sum_r\max\{0,\min_j\widehat C^-_{rj}-r_n\},
 \qquad
 \Rfull\le\min\{1,\widehat R^+_{\rm full}+s_n\}.
\]
The transfer bound proves the result. A union bound gives failure
probability at most $\alpha_s+\alpha_\ell$; no independence of the
events or concentration of the interval endpoints is needed.

For the stated rollout construction, condition on each sampled state.
Hoeffding bounds its $N$ scores around their average conditional mean
with radius $\sqrt{\log(2nm/\alpha_\ell)/(2N)}$ and failure probability
at most $\alpha_\ell/(nm)$. The deterministic bias bound increases the
radius by $b_{\rm judge}$. Integrating over states and taking a union
bound over all pairs gives coverage. Clipping to $[0,1]$ preserves it.
Any separately uncertain bias bound requires its own failure budget.
\end{proof}

\section{Detailed Experimental and Reproduction Protocols}
\label{app:full-protocol}

The following sections specify the model settings, feature selection,
human ratings, and reproduction inputs for the comparisons
in Section~\ref{sec:empirical-audit}.

\subsection{Qwen2.5: separating information, rule, and fitting effects}

The primary model is \texttt{Qwen/Qwen2.5-1.5B-Instruct}, with public SAEs
applied to the activations after layers 17, 18, 19, and 21. Layer indices are
zero-based. The original collection has 240 states from 20 policy areas, each
containing three required actions and four paraphrases. Twelve areas
(144 states) are used for setting selection and fitting; eight areas
(96 states) are held out until confirmation. Qwen2.5-7B-Instruct is the
primary outcome evaluator, and Mistral-7B-Instruct-v0.3 provides a separate
evaluator check.

The model activation width is 1,536 and each SAE dictionary has 24,576
coordinates. Extraction uses \texttt{hidden\_states[layer+1]}, the residual
activation after the specified layer. Before computing risks, we check
extraction alignment, agreement with the direct SAE forward pass, state and
paraphrase counts, and completeness of the three-action loss tables.
Table~\ref{tab:empirical-setup} gives the full configuration.

\paragraph{Qwen outcome scoring.}
The five criteria are safety, utility, fact handling, macro-action adherence,
and overall quality, with weights $(0.40,0.25,0.15,0.10,0.10)$ on a
scale from zero to four. The evaluator produces an expected score from
prior-corrected next-token label probabilities; the weighted score is divided
by four and subtracted from one to obtain loss. The evaluator sees the full
situation, policy, selected action, response, and required-action label.
The required-action label is an evaluation input, not a selector input.
SmolLM2 and Phi instead use integer JSON scores from zero to three with the
five criteria and weights given below. Normalization to $[0,1]$ does not
make the different rubrics the same measurement scale across models.

\subsection{SmolLM2: independent SAE training and additional policies}

For \texttt{HuggingFaceTB/SmolLM2-1.7B-Instruct}, we use activations after
layer 17 and Mistral-7B-Instruct-v0.3 as the automatic evaluator. Alongside
the original 240 states, a locked collection contains 96 states from eight
additional policy areas. The locked collection is fixed before response
generation and scoring, with its SHA-256 digest recorded in the archive.
Five responses for each of the three choices at each of the 336 states give
$336\times3\times5=5{,}040$ responses.

Two TopK SAEs are trained from scratch with random seeds 314159 and 271828.
Each dictionary has 16,384 coordinates, with sparsity $k=64$: the TopK
selection keeps 64 coordinates per activation. Each run uses eight million
FineWeb-Edu activation tokens for training, 300,000 for validation, and a
100,000-token normalization sample. The runs share the model, layer,
architecture, and cached activation source, but differ in initialization and
training order. Feature identities are selected on the original 144 fitting
states and frozen before evaluation on the original and locked 96-state
collections. Table~\ref{tab:cross-family-design} gives the full configuration.

The automatic evaluator assigns integer scores in $\{0,1,2,3\}$ for safety,
policy adherence, usefulness, factuality, and overall quality, with weights
$(0.30,0.30,0.15,0.10,0.15)$. Loss is one minus the weighted score divided
by three. Its prompt includes the policy, situation, selected action, and
response, but not the hidden required action.

\paragraph{How the support features are selected.}
For both SmolLM2 and Phi, each coordinate is first assessed alone on the
144 fitting states. Its ranking score is the reduction in minimum loss
relative to a constant input when the selector observes that coordinate's
on/off bit. Coordinates are ordered by decreasing reduction, then by
activation-frequency balance, then by index. The first $B$ coordinates form
the interface with $B$ features. The selector uses a ranking of individual coordinates, not joint
optimization over all subsets of $B$ features. The same ordered features
are retained for all evaluation conditions; Appendix~\ref{app:protocol-diagnostics}
gives the ranking formula and tie rules.

\paragraph{Matching automatic scores to human-rated responses.}
The automatic risk table averages five responses per state--action pair.
For human evaluation, a score-independent SHA-256 ordering fixes one response
from each pair. Human risks use that single response, and automatic--human
calibration compares scores of the \emph{same} response, rather than comparing
a single human-rated response with a five-response automatic average.
The human study evaluates SmolLM2 responses only.

\subsection{Phi: matched feature-only and policy-aware comparisons}

The Phi model is \texttt{microsoft/Phi-4-mini-instruct}. We extract the
3,072-dimensional residual activation after block 23, the 24th of 32 blocks.
The relative depth $24/32=0.75$ matches SmolLM2's $18/24$ extraction fraction;
relative depth fixes the layer choice without assuming identical semantics.
The extraction hook agrees exactly with \texttt{hidden\_states[24]}.

Phi evaluates the original 240-state collection and the 96-state locked
collection used for SmolLM2. We call the latter the \emph{reused locked
collection}: its scenarios are fixed before Phi generation but are not new
to the researchers. Feature selection uses the original 144 fitting states;
evaluation uses the original 96 confirmation states and the reused 96 states.
Five responses for every state--action pair give 5,040 Phi responses, each
scored by Mistral-7B-Instruct-v0.3 and Qwen2.5-7B-Instruct. Each evaluator
assigns five scores in $\{0,1,2,3\}$ for safety, policy adherence, usefulness,
factuality, and overall quality. With weights
$(0.30,0.30,0.15,0.10,0.15)$, the automatic loss is one minus the weighted
score divided by three, matching the normalization of the human loss.

Two TopK SAEs, with seeds 314159 and 271828, use 16,384 coordinates and
sparsity 64. Each uses eight million FineWeb-Edu activation tokens for
training and 300,000 for validation. The 100,000-token normalization sample
comes from the training stream. Initialization and training order differ;
the model, block, architecture, and cached activation corpus are shared.
Each run completes 7,813 optimization steps.
Appendix~\ref{app:phi-results} gives model revisions, settings, and diagnostics.

Supervised feature ranking uses only Mistral losses on the 144 fitting states.
The selected features are fixed across evaluation collections, controller
inputs, and evaluators. We evaluate feature budgets
$B\in\{2,4,8,16,32,64\}$, with $B=2$ primary, and compare policy plus code
$(Z,S)$ with code alone $Z$. A third loss averages the two evaluator losses
\emph{before} optimizing state--action risk; its information gap need not
equal the mean of their separate gaps. Neither evaluator replacement nor
the mean-loss analysis selects new features.

Additional diagnostics measure reconstruction, the number and size of code
cells, and sensitivity to deleting a policy area. Variance selection and
three fitting-based random-selector controls provide further comparisons;
the random-selector controls use Mistral loss and code-only inputs, not
policy-aware inputs. Phi is evaluated automatically and has no completed
human ratings.

\subsection{Complete human-rating procedure}\label{app:human-protocol-details}

The human evaluation covers 176 states: four states from each of the 20
original policy areas and all 96 locked states. Three response choices per
state give 528 items per rater. The original 80-state subset includes
feature-selection areas, whereas the locked collection is held out at the
policy-area level. Raters see the active policy, user situation, selected
action, and candidate response. The following procedure keeps independent
rating separate from joint adjudication.

\paragraph{Rater recruitment and background.}
Two volunteer raters affiliated with the authors' research institution were recruited through internal convenience sampling. The raters received no financial or non-financial compensation. Both provided informed consent before participating. No demographic variables or direct personal identifiers were recorded in the rating dataset. We therefore report only the raters' institutional affiliation and do not infer unrecorded educational or professional qualifications. Domain-specific professional credentials were not required because each item presented the active policy in full, and the task was to evaluate the candidate continuation against that policy and the declared action rather than to make independent legal, medical, or regulatory decisions. Accordingly, the factuality dimension is interpreted as consistency with the information presented in the item and the absence of evident unsupported claims, not as expert verification of domain-specific facts.

\paragraph{Rater training, independence, and adjudication.}
Before the main rating round, both raters completed the same 36-item pilot and discussed the interpretation of the written rubric. The pilot labels were excluded from all reported analyses. Each rater then independently evaluated all 528 main-study items. During this independent stage, neither rater had access to the other rater's labels, the hidden required action, model identity, SAE configuration, automatic scores, dataset split, or rollout-replica information. After both independent rating files had been completed and locked, the prespecified disagreement rules were applied to construct the adjudication queue. Only then did the raters jointly resolve the 100 queued items. The original independent labels remained unchanged. Direct rater identifiers were not included in the analysis files or in materials prepared for release.

\paragraph{From ratings to loss.}
For each item, let $s_{\rm safety},s_{\rm policy},s_{\rm use},s_{\rm fact},s_{\rm overall}$
be its five integer scores in $\{0,1,2,3\}$. The dimension names denote
safety, policy adherence, usefulness, factuality, and overall quality.
The composite human loss is
\[
 L^{\rm human}=1-\frac{0.30s_{\rm safety}+0.30s_{\rm policy}
       +0.15s_{\rm use}+0.10s_{\rm fact}+0.15s_{\rm overall}}{3}.
\]
The division by three places the weighted score in $[0,1]$; subtracting from
one makes lower loss correspond to better rated responses. The composite
loss is not an unsafe-event probability. Agreement is
computed from the independent ratings. An item enters adjudication if safety
or policy-adherence scores differ by at least two points, acceptable labels
or recommended actions differ, or composite losses differ by more than
$0.25$. The 100 queued items use the jointly resolved scores; the other
428 items use the mean of the two independent losses.

For automatic--human action comparisons, we distinguish a selected lowest-loss
action from the set of all minimizing actions. The selected-action statistic
uses the first minimum in the fixed order \textsc{ask}, \textsc{decline},
\textsc{assist}. We additionally compare nonempty intersection and equality
of the two minimizing sets. Numerical ties use absolute tolerance $10^{-12}$;
zero and $10^{-8}$ tolerances give the same set counts. Definitions and
counts are in Appendix~\ref{sec:tie-aware-diagnostics}.

\subsection{Fixed comparisons and uncertainty}

Feature selection uses the declared fitting split and does not change across
the corresponding evaluation comparisons. The Phi observation conditions are
fixed before response generation, although the policy collections were
already used for SmolLM2. Selector sweeps, matched codebooks, noisy observation
channels, and fitted continuous rules are diagnostic analyses unless a
frozen confirmation setting is explicitly identified.

To measure sensitivity to the sampled policy areas, we use a cluster
bootstrap: each resampling draw selects whole policy areas, retaining their
states, and recomputes both risk minima in
Equation~\eqref{eq:finite-audit-bayes-rule}. Reported intervals are two-sided
95\% bootstrap intervals. An individual one-sided 95\% lower confidence bound
(LCB) above zero reports stability to the declared policy-area resampling. The Phi analysis
uses 2,000 resamples, eight policy-area clusters per evaluation collection,
and numerical zero tolerance $10^{-12}$. Its 12 primary policy-aware settings
share states, responses, and features; their intervals are not simultaneous
familywise guarantees. A zero fraction of nonpositive bootstrap draws is not
a zero $p$-value. In the policy-aware condition, states from distinct policy
texts cannot share a cell. Reweighting whole areas therefore reweights their
within-policy gaps. Positive lower bounds describe sensitivity of these fixed tables, not a test that unknown expected rollout losses have a positive
gap.

The empirical target is the finite outcome-loss table. Resampling policy
areas does not by itself provide simultaneous bounds on unknown expected
rollout losses. The bootstrap results are therefore distinct from the
population and search-valid certificates in
Sections~\ref{sec:continuous-interfaces} and~\ref{sec:capacity-certification}.
The corresponding configurations, cached losses, interface codes, ratings,
and diagnostic outputs are retained in the author working archive described
in Appendix~\ref{app:reproduction-inventory}.

\subsection{Reproduction inventory}
\label{app:reproduction-inventory}

The arXiv source package contains the manuscript, bibliography, figures,
and LaTeX tables. Experiment code and cached records are retained in a
separate author working archive and are not included in the arXiv source
package. All \path{supplementary/} and \path{tools/} paths in this paper
refer to that working archive. The Phi archive preserves the original 5,040 responses, both item-level evaluator files, frozen state tables, SAE feature arrays, selector registries, the active protocol lock, training summaries, and complete risk and diagnostic outputs. Machine-readable verification records are retained with the tables computed from archived records. Risk and bootstrap statistics can be recomputed from these cached records without rerunning the language model. Displayed values are rounded; aggregation and evaluator averaging occur before rounding.

Phi's active configuration is \texttt{protocol\_lock.json}. The nested \texttt{data/experiment\_config.json} is an unchanged file recording the SmolLM2 input configuration, not the Phi runtime configuration. Large SAE checkpoints, training activation shards, and raw hidden activation arrays for the audit states are not included in the compact Phi archive; recorded checkpoint hashes do not provide the missing weights. Optional Phi human packets are blank and are not completed human data. The completed SmolLM2 rating files and resolved queue are included under
\path{supplementary/support_revision/human_inputs/}; blank templates in the original automated archive are not treated as ratings. The CPU reproduction script reconstructs the final item-level losses, agreement statistics, and paired observation-risk tables. Bootstrap seeds are recorded separately for the code-only and matched observation analyses. Qwen source and notebook summaries are archived with a rounding bound; the full-precision historical Qwen loss/code arrays and the underlying 42-state capacity matrix are not present. The global-capacity regression suite validates the algorithm on declared test matrices, not on the unavailable 42-state matrix. The original experiment archives are preserved unchanged in the author working archive.

The author working archive retains the source excerpts, input hashes,
statewise minimizing sets, dimension-specific losses, and worked-example
records underlying the diagnostics described in
Appendix~\ref{app:protocol-diagnostics}. The author working archive also retains the loss matrix, every returned
labeling, validated cell margins, and independent two-label feasibility
results for the 96-state human capacity calculation. Within that archive,
the scripts \path{tools/reproduce_author_audit.py},
\path{tools/reproduce_human_capacity.py}, and
\path{tools/render_author_audit_tables.py} regenerate the records and
tables from archived inputs. These scripts do not change human labels or
perform model training. First-minimum agreement in older result files retains its original
number and is accompanied by the explicit tie convention and set-based
metrics.

\subsection{Ethical use}

The intended use of the framework is diagnostic: a representation can be tested before a safety monitor, steering rule, or downstream controller is trusted. Separating information loss from rule and training loss can reduce false confidence and can direct engineering effort toward the module that fails. Small conflict witnesses can also make failures easier to inspect, document, and reproduce.

The same analysis can be misrepresented as a deployment guarantee. Every result is conditional on the audited states, policy specification, continuation library, loss, interface, model, and sampling protocol. A representation that passes one audit can fail under another distribution, while failure of a finite support interface does not imply failure of the full continuous hidden state. We reduce this risk by reporting exact claim boundaries, retaining negative replication results, separating automatic and human losses, and avoiding a population-level uniform evaluator-error claim.

The human-rating procedure, including recruitment, consent, compensation,
rater independence, adjudication, and data handling, is described in
Appendix~\ref{app:human-protocol-details}.

\end{document}